\documentclass{article}
\usepackage[utf8]{inputenc}
\usepackage{amsthm,amsmath,amsfonts,amssymb,bm,dsfont,enumitem,natbib}
\usepackage{algorithm,algorithmic}
\usepackage[usenames]{color}
\usepackage{geometry}
\usepackage[colorlinks,
            linkcolor=red,
            anchorcolor=blue,
            citecolor=blue
            ]{hyperref}
\usepackage{mystyle}

\DeclareMathOperator*{\argmin}{arg\,min}
\DeclareMathOperator*{\argmax}{arg\,max}
\def\P{{\mathbb{P}}}
\def\mP{{\mathcal{P}}}
\def\E{{\mathbb{E}}}
\def\R{{\mathbb{R}}}
\def\N{{\mathbb{N}}}
\def\mN{{\mathcal{N}}}
\def\mT{{\mathcal{T}}}
\def\bu{{\mathbf{u}}}
\def\rF{{\mathrm{F}}}
\def\TV{{d_{\rm TV}}}

\def\Tr{{{\rm Tr}}}
\def\b{{[~]}}
\def\%#1\%{\begin{align}#1\end{align}}
\def\$#1\${\begin{align*}#1\end{align*}}

\title{On the Provable Advantage of Unsupervised Pretraining}

\author
{\normalsize
Jiawei Ge\thanks{equal contribution}~\thanks{Department of Operations Research and Financial Engineering, Princeton University; 
\texttt{\{jg5300,shangetang,\newline jqfan\}@princeton.edu}}
\qquad Shange Tang \footnotemark[1]~\footnotemark[2]
\qquad Jianqing Fan\footnotemark[2]
\qquad Chi Jin\thanks{Department of Electrical and Computer Engineering, Princeton University; 
\texttt{chij@princeton.edu}}
}
\date{}

\begin{document}

\maketitle

\begin{abstract}

Unsupervised pretraining, which learns a useful representation using a large amount of unlabeled data to facilitate the learning of downstream tasks, is a critical component of modern large-scale machine learning systems. Despite its tremendous empirical success, the rigorous theoretical understanding of why unsupervised pretraining generally helps remains rather limited---most existing results are restricted to particular methods or approaches for unsupervised pretraining with specialized structural assumptions. This paper studies a generic framework,
where the unsupervised representation learning task is specified by an abstract class of latent variable models $\Phi$ and the downstream task is specified by a class of prediction functions $\Psi$. We consider a natural approach of using Maximum Likelihood Estimation (MLE) for unsupervised pretraining and Empirical Risk Minimization (ERM) for learning downstream tasks. We prove that, under a mild ``informative'' condition, our algorithm achieves an excess risk of $\tilde{\mathcal{O}}(\sqrt{\mathcal{C}_\Phi/m} + \sqrt{\mathcal{C}_\Psi/n})$ for downstream tasks, where $\mathcal{C}_\Phi, \mathcal{C}_\Psi$ are complexity measures of function classes $\Phi, \Psi$, and $m, n$ are the number of unlabeled and labeled data respectively. Comparing to the baseline of $\tilde{\mathcal{O}}(\sqrt{\mathcal{C}_{\Phi \circ \Psi}/n})$ achieved by performing supervised learning using only the labeled data, our result rigorously shows the benefit of unsupervised pretraining when $m \gg n$ and $\mathcal{C}_{\Phi\circ \Psi} > \mathcal{C}_\Psi$. This paper further shows that our generic framework covers a wide range of approaches for unsupervised pretraining, including factor models, Gaussian mixture models, and contrastive learning.

\end{abstract}
\section{Introduction}
Unsupervised pretraining aims to efficiently use a large amount of unlabeled data to learn a useful representation that facilitates the learning of downstream tasks. This technique has been widely used in modern machine learning systems including computer vision \citep{caron2019unsupervised,dai2021up}, natural language processing \citep{radford2018improving, devlin2018bert,song2019mass} and speech processing \citep{schneider2019wav2vec,baevski2020wav2vec}. 
Despite its tremendous empirical success, it remains elusive why pretrained representations, which are learned without the information of downstream tasks, often help to learn the downstream tasks.

There have been several recent efforts trying to understand various approaches of unsupervised pretraining from theoretical perspectives, including language models \cite{saunshi2020mathematical, wei2021pretrained}, contrastive learning \cite{arora2019theoretical, tosh2021contrastive2, tosh2021contrastive, haochen2021provable, saunshi2022understanding}, and reconstruction-based self-supervised learning \cite{lee2021predicting}. While this line of works justifies the use of unsupervised pretraining in the corresponding regimes, many of them do not prove the advantage of unsupervised learning, in terms of sample complexity, even when compared to the naive baseline of performing supervised learning purely using the labeled data. Furthermore, these results only apply to particular approaches of unsupervised pretraining considered in their papers, and crucially rely on the specialized structural assumptions, which do not generalize beyond the settings they studied. Thus, we raise the following question:

\textbf{Can we develop a generic framework which provably explains the advantage of unsupervised pretraining?}

This paper answers the above question positively.  

We consider the generic setup where the data $x$ and its label $y$ are connected by an unobserved representation $z$.  Concretely, we assume $(x,z)$ is sampled from a latent variable model $\phi^* $ in an abstract class $\Phi$, and the distribution of label $y$ conditioned on representation $z$ is drawn from distributions $\psi^* $ in class $\Psi$. We considers a natural approach of using Maximum Likelihood Estimation (MLE) for unsupervised pretraining, which approximately learns the latent variable model $\phi^* $ using $m$ unlabeled data. We then use the results of representation learning and Empirical Risk Minimization (ERM) to learn the downstream predictor $\psi^* $ using $n$ labeled data. Investigating this generic setup allows us to bypass the limitation of prior works that are restrictied to the specific approaches for unsupervised pretraining.

We prove that, under a mild ``informative'' condition (Assumption \ref{invariance}), our algorithm achieves a excess risk of $\tilde{\mathcal{O}}(\sqrt{\mathcal{C}_\Phi/m} + \sqrt{\mathcal{C}_\Psi/n})$ for downstream tasks, where $\mathcal{C}_\Phi, \mathcal{C}_\Psi$ are complexity measures of function classes $\Phi, \Psi$, and $m, n$ are the number of unlabeled and labeled data respectively. Comparing to the baseline of $\tilde{\mathcal{O}}(\sqrt{\mathcal{C}_{\Phi \circ \Psi}/n})$ achieved by performing supervised learning using only the labeled data, our result rigorously shows the benefit of unsupervised pretraining when we have abundant unlabeled data $m \gg n$ and when the complexity of composite class $\mathcal{C}_{\Phi\circ \Psi}$ is much greater than the complexity of downstream task alone $\mathcal{C}_\Psi$.

Finally, this paper proves that our generic framework (including the ``informative'' condition) captures a wide range of setups for unsupervised pretraining, including (1) factor models with linear regression as downstream tasks; (2) Gaussian mixture models with classification as downstream tasks; and (3) Contrastive learning with linear regression as downstream tasks.

\subsection{Related work}

\paragraph{Applications and methods for unsupervised pretraining.}
Unsupervised pretraining has achieved tremendous success in image recognition \citep{caron2019unsupervised}, objective detection \citep{dai2021up}, natural language processing \citep{devlin2018bert,radford2018improving,song2019mass} and speech recognition \citep{schneider2019wav2vec,baevski2020wav2vec}. Two most widely-used pretraining approaches are (1) feature-based approaches \citep{brown1992class,mikolov2013distributed,melamud2016context2vec,peters1802deep}, which pretrains a model to extract representations and directly uses the pretrained representations as inputs for the downstream tasks; (2) fine-tuning based approaches, \citep[see, e.g.,][]{devlin2018bert}, which fine-tunes all the model parameters in the neighborhood of pretrained representations based on downstream tasks. \citet{erhan2010does} provides the first empirical understanding on the role of pretraining. They argue that pretraining serves as a form of regularization that effectively guides the learning of downstream tasks.



A majority of settings where pretraining is used fall into the category of semi-supervised learning \citep[see, e.g., ][]{zhu2005semi}, where a large amount of unlabeled data and a small amount of labeled data are observed during the training process. Semi-supervised learning methods aim to build a better predictor by efficiently utilizing the unlabeled data. Some traditional methods include: generative models \citep[e.g.][]{ratsaby1995learning}, low-density separation \citep{joachims1999transductive,lawrence2004semi,szummer2002information}, and graph-based methods \citep{belkin2006manifold}. While most works in this line propose new methods and show favorable empirical performance, they do not provide rigorous theoretical understanding on the benefit of unsupervised pretraining.

\paragraph{Theoretical understanding of unsupervised pretraining.}
Recent years witness a surge of theoretical results that provide explanations for various unsupervised pretraining methods that extract representations from unlabeled data. For example, \citep{saunshi2020mathematical,wei2021pretrained} considers pretraining vector embeddings in the language models, 
while \citep{arora2019theoretical,tosh2021contrastive2,tosh2021contrastive,haochen2021provable,saunshi2022understanding,lee2021predicting} consider several Self-Supervised Learning (SSL) approaches for pretraining. In terms of results, \citet{wei2021pretrained} shows that linear predictor on the top of pretrained languange model can recover their ground truth model; \citet{arora2019theoretical, saunshi2020mathematical,tosh2021contrastive2,tosh2021contrastive,saunshi2022understanding} show that the prediction loss of downstream task can be bounded by the loss of unsupervised pretraining tasks. These two lines of results do not prove the sample complexity advantage of unsupervised learning when compared to the baseline of performing supervised learning purely using the labeled data.


The most related results are \citet{lee2021predicting,haochen2021provable}, which explicitly show the sample complexity advantage of certain unsupervised pretraining methods. However, \citet{lee2021predicting} focuses on reconstruction-based SSL, and critically relies on a conditional independency assumption on the feature and its reconstruction conditioned on the label; 
\citet{haochen2021provable} considers contrastive learning, and their results relies on deterministic feature map and the spectral conditions of the normalized adjacency matrix.
Both results only apply to the specific setups and approaches of unsupervised pretraining in their papers, which do not apply to other setups in general (for instance, the three examples in Section \ref{factor_model}, \ref{gmm}, \ref{contrastive_learning}). On the contrary, this paper develops a generic framework for unsupervised pretraining using only abstract function classes, which applies to a wide range of setups.

\paragraph{Other approaches for representation learning.} There is another line of recent theoretical works that learn representation via multitask learning. \citet{Baxter_2000} provides generalization bounds for multitask transfer learning assuming a generative model and a shared representation among tasks. \citet{JMLR:v17:15-242} theoretically analyses a general method for learning representations from multitasks and illustrates their method in a linear feature setting. \citet{tripuraneni2021provable,du2020few} provide sample efficient algorithms that solve the problem of multitask linear regression. \citet{tripuraneni2020theory} further considers generic nonlinear feature representations and shows sample complexity guarantees for diverse training tasks. Their results differ from our work because they learn representations by supervised learning using labeled data of other tasks, while our work learns representations by unsupervised learning using unlabeled data.


\section{Problem Setup} \label{sec:prob_setup}

\paragraph{Notation.} We denote by $\P(x)$ and $p(x)$ the cumulative distribution function and the probability density function defined on $x\in\mathcal{X}$, respectively. We define $[n]=\{1,2,\ldots,n\}$. The cardinality of set $\mathcal{A}$ is denoted by $|\mathcal{A}|$. Let $\|\cdot\|_2$ be the $\ell_2$ norm of a vector or the spectral norm of a matrix. We denote by $\|\cdot\|_{\rF}$ the Frobenius norm of a matrix. For a matrix $M\in\R^{m\times n}$, we denote by $\sigma_{\min}(M)$ and $\sigma_{\max}(M)$ the smallest singular value and the largest singular value of $M$, respectively. For two probability distributions $\P_1$ and $\P_2$, we denote the Total Variation (TV) distance and the Hellinger distance between these two distributions by $\TV(\P_1,\P_2)$ and $H(\P_1,\P_2)$, respectively. 


We denote by $x\in\mathcal{X}$ and $y\in\mathcal{Y}$ the input data and the objective of the downstream tasks, respectively. Our goal is to predict $y$ using $x$. We assume that $x$ is connected to $y$ through an unobserved latent variable $z\in\mathcal{Z}$ (which is also considered as a representation of $x$). Given the latent variable $z$, the data $x$ and the objective $y$ are independent of each other. To incorporate a large class of real-world applications, such as contrastive learning, we consider the setup where learning can possibly have access to some side information $s\in\mathcal{S}$. We assume that $(x,s,z)\sim\P_{\phi^* }(x,s,z)$ and $y|z\sim\P_{\psi^* }(y|z)$, where $\P_{\phi^* }$ and $\P_{\psi^* }$ are distributions 
indexed by $\phi^* \in\Phi$ and $\psi^* \in\Psi$. It then holds that 
\$
\P_{\phi^* ,\psi^* }(x,y)=\int \P_{\phi^* }(x,z)\P_{\psi^* }(y|z)\, dz,
\$ 
which implies the probability distribution of $(x,y)$ depends on both $\phi^* $ and $\psi^* $. Our setting includes the special case where $y=f^* (z)+\varepsilon$. Function $f^* \in\mathcal{F}$ is the ground truth function  and $\varepsilon\sim\mathcal{N}(0,\sigma^2)$ is a Gaussian noise independent of $z$. In this case, the conditional random variable $y|z\sim\mathcal{N}(f^* (z),\sigma^2)$, whose probability distribution $\P_{f^* }(y|z)$ is parameterized by $f^* \in\mathcal{F}$.


Let $\ell(\cdot,\cdot)$ be a loss function. For any pair $(\phi,\psi)\in\Phi\times\Psi$, the optimal predictor $g_{\phi,\psi}$ is defined as follows,
\%\label{opt_est}
g_{\phi,\psi}\leftarrow\argmin_{g}\E_{\P_{\phi,\psi}}\big[\ell\big(g(x),y\big)\big],
\%
where the minimum is taken on all the possible functions and $\E_{\P_{\phi,\psi}}:=\E_{(x,y)\sim\P_{\phi,\psi}(x,y)}$. Our prediction function class is therefore given by
\$
\mathcal{G}_{\Phi,\Psi}:=\big\{ g_{\phi,\psi} \big| \phi \in \Phi, \psi \in \Psi \big\}.
\$
In particular, if $\ell(\cdot,\cdot)$ is the squared loss function, then the optimal predictor has a closed form solution $g_{\phi,\psi}(x)=\E_{\P_{\phi,\psi}}[y|x]$ and the prediction function class $\mathcal{G}_{\Phi,\Psi}=\{\E_{\P_{\phi,\psi}}[y|x]\,|\, \phi \in \Phi, \psi \in \Psi \}$.

Given an estimator pair  $(\hat\phi,\hat\psi)$, we define the excess risk with respect to loss $\ell(\cdot,\cdot)$ as
\%\label{error}
{\rm Error}_{\ell}(\hat\phi,\hat\psi):=\E_{\P_{\phi^* ,\psi^* }}\big[\ell\big(g_{\hat\phi,\hat\psi}(x),y\big)\big]-\E_{\P_{\phi^* ,\psi^* }}\big[\ell\big(g_{\phi^* ,\psi^* }(x),y\big)\big],
\%
where $\phi^* $ and $\psi^* $ are the ground truth parameters. By the definition of $g_{\phi^* ,\psi^* }$, we have ${\rm Error}(\hat\phi,\hat\psi)\geq0$. We aim to learn an estimator pair $(\hat\phi,\hat\psi)$ from data that achieves smallest order of the excess risk.


We consider the setting where the latent variable $z$ cannot be observed. Specifically, we are given many unlabeled data and its corresponding side information $\{x_i,s_i\}^m_{i=1}$ that are sampled i.i.d from an unknown distribution $\P_{\phi^* }(x,s)$ and only a few labeled data $\{x_j,y_j\}^n_{j=1}$ that are sampled i.i.d from an unknown distribution $\P_{\phi^* ,\psi^* }(x,y)$. Here we assume that the labeled data $\{x_j,y_j\}^n_{j=1}$ is independent with the unlabeled data $\{x_i,s_i\}^m_{i=1}$ with understanding $m\gg n$.

\paragraph{Learning algorithm.} We consider a natural learning algorithm consisting of two phases (Algorithm \ref{mle+erm}). In the unsupervised pretraining phase, we use MLE to estimate $\phi^* $ based on the unlabeled data $\{x_i,s_i\}^m_{i=1}$. In the downstream tasks learning phase, we use ERM to estimate $\psi^* $ based on pretrained $\hat\phi$ and the labeled data $\{x_j,y_j\}^{n}_{j=1}$. See algorithm \ref{mle+erm} for details.

\begin{algorithm}[t]
\caption{Two-Phase MLE+ERM}\label{mle+erm}
\begin{algorithmic}[1]
\STATE {\bf Input:} $\{x_i,s_i\}^m_{i=1}$, $\{x_j,y_j\}^n_{j=1}$
\STATE Use unlabeled data and its corresponding side information $\{x_i,s_i\}^m_{i=1}$ to learn $\hat\phi$ via MLE:
\begin{equation}\label{mle}
\hat\phi\leftarrow\argmax_{\phi\in\Phi}\sum^m_{i=1}\log p_{\phi}(x_i,s_i).
\end{equation}
\STATE Fix $\hat\phi$ and use labeled data $\{x_j,y_j\}^n_{j=1}$ to learn $\hat\psi$ via ERM:
\begin{equation}\label{erm}
\hat\psi\leftarrow\argmin_{\psi\in\Psi}\sum^n_{j=1}\ell\big(g_{\hat\phi,\psi}(x_j),y_j\big).
\end{equation}
\STATE {\bf Output:} $\hat\phi$ and $\hat\psi$.
\end{algorithmic}
\end{algorithm}

We remark that another natural learning algorithm in our setting is to use a two-phase MLE. To be specific, in the unsupervised pretraining phase, we use MLE to estimate $\phi^* $ based on the unlabeled data $\{x_i,s_i\}^m_{i=1}$ as \eqref{mle}. In the downstream tasks learning phase, we again use MLE to estimate $\psi^* $ based on pretrained $\hat\phi$ and the labeled data $\{x_j,y_j\}^{n}_{j=1}$. However, we can show that this two-phase MLE scheme fails in the worst case. See Appendix \ref{counter_example} for the details.


\paragraph{Complexity measures.} Sample complexity guarantee for Algorithm \ref{mle+erm} will be phrased in terms of three complexity measurements, i.e., bracketing number, covering number and the Rademacher complexity, which are defined as follows. We denote by $\mP_{\mathcal{X}}(\Phi)$ a set of parameterized density functions $p_{\phi}(x)$ defined on $x\in\mathcal{X}$
\$
\mP_{\mathcal{X}}(\Phi):=\{p_{\phi}(x)\,|\,\phi\in\Phi\},
\$
where $\phi\in\Phi$ is the parameter.

\begin{definition}[$\epsilon$-Bracket and Bracketing Number] \label{def:bracketing}
Let $\epsilon>0$. Under $\|\cdot\|_1$ distance, a set of functions $\mN_{\b}(\mP_{\mathcal{X}}(\Phi),\epsilon)$ is an $\epsilon$-bracket of $\mP_{\mathcal{X}}(\Phi)$ if for any $p_{\phi}(x)\in\mP_{\mathcal{X}}(\Phi)$, there exists a function $\bar p_{\phi}(x)\in\mN_{\b}(\mP_{\mathcal{X}}(\Phi),\epsilon)$ such that the following two properties hold:
\begin{itemize}
    \item $\bar p_{\phi}(x)\geq p_{\phi}(x),~\forall x\in\mathcal{X}$
    \item $\|\bar p_{\phi}(x)-p_{\phi}(x)\|_1=\int |\bar p_{\phi}(x)- p_{\phi}(x)|\, dx\leq \epsilon$
\end{itemize}
Note that $\bar p_{\phi}(x)$ need not to belong to $\mP_{\mathcal{X}}(\Phi)$. The bracketing number $N_{\b}(\mP_{\mathcal{X}}(\Phi),\epsilon)$ is the cardinality of the smallest $\epsilon$-bracket needed to cover $\mP_{\mathcal{X}}(\Phi)$. The entropy is defined as the logarithm of the bracketing number.
\end{definition}

To measure the complexity of a function class, we consider the covering number and the Rademacher complexity defined as follows.

\begin{definition}[$\epsilon$-Cover and Covering Number]
Let $\mathcal{F}$ be a function class and $(\mathcal{F},\|\cdot\|)$ be a metric space. For each $\epsilon>0$, a set of functions $\mathcal{N}(\mathcal{F},\epsilon,\|\cdot\|)$ is called an $\epsilon$-cover of $\mathcal{F}$ if for any $f\in\mathcal{F}$, there exists a function $g\in\mathcal{N}(\mathcal{F},\epsilon,\|\cdot\|)$ such that $\|f-g\|\leq \epsilon$. The covering number $N(\mathcal{F},\epsilon,\|\cdot\|)$ is defined as the cardinality of the smallest $\epsilon$-cover needed to cover $\mathcal{F}$.
\end{definition}

\begin{definition}[Rademacher Complexity] \label{def:Rademacher}
Suppose that $x_1,\ldots,x_n$ are sampled i.i.d from a probability distribution $\mathcal{D}$ defined on a set $\mathcal{X}$. Let $\mathcal{G}$ be a class of functions mapping from $\mathcal{X}$ to $\R$. The empirical Rademacher complexity of $\mathcal{G}$ is defined as follows,
\$
\hat R_n(\mathcal{G}):=\E_{\{\sigma_i\}_{i=1}^n \sim \text{Unif}\{\pm 1\}}\bigg[\sup_{g\in\mathcal{G}}\frac{2}{n}\sum^{n}_{i=1}\sigma_i g(x_i)\bigg],
\$
where $\{\sigma_i\}^n_{i=1}$ are independent random variables drawn from the Rademacher distribution and the expectation is taken over the randomness of $\{\sigma_i\}^n_{i=1}$. The Rademacher complexity of $\mathcal{G}$ is defined as
\$
R_n(\mathcal{G}):=\E_{\{x_i\}_{i=1}^n \sim \mathcal{D}}[\hat R_n(\mathcal{G})].
\$
\end{definition}

\section{Main Results}\label{main_results}
In this section, we first introduce a mild ``informative'' condition for unsupervised pretraining. We show this ``informative'' condition is necessary for pretraining to benefit downstream tasks. We then provide our main results---statistical guarantees for unsupervised pretraining and downstream tasks for Algorithm \ref{mle+erm}. Finally, in Section \ref{weak_informative_model}, we generalize our results to a more technical but weaker version of the ``informative'' condition, which turns out to be useful in capturing our third example of contrastive learning (Section \ref{contrastive_learning}).




\paragraph{Informative pretraining tasks.}
We first note that under our generic setup, unsupervised pretraining may not benefit downstream tasks at all in the worst case if no further conditions are assumed.
\begin{proposition} \label{prop:informative_necessary}
There exist classes $(\Phi, \Psi)$ as in Section \ref{sec:prob_setup} such that, regardless of unsupervised pretraining algorithms used, pretraining using unlabeled data provides no additional information towards learning predictor $g_{\phi^* , \psi^* }$.
\end{proposition}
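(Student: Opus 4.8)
The plan is to exhibit an explicit pair of classes $(\Phi,\Psi)$ for which the marginal distribution $\P_{\phi}(x,s)$ of the unlabeled data is identical for every $\phi \in \Phi$, while the conditional distribution $\P_{\phi,\psi}(x,y)$ — and hence the optimal predictor $g_{\phi,\psi}$ — genuinely depends on $\phi$. If the unlabeled data has a distribution that does not vary across $\Phi$, then no statistic computed from $\{x_i,s_i\}_{i=1}^m$ can carry any information about $\phi^*$, so any pretraining procedure is useless, regardless of how it is designed.

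Concretely, I would take $\mathcal{Z} = \{0,1\}$, let the latent $z$ be a fair coin independent of everything, and set $\mathcal{X} = \{0\}$ (or any singleton, possibly dropping side information $s$ entirely), so that $x$ is deterministic and $\P_\phi(x,s)$ is literally the same point mass for all $\phi$. Parametrize $\Phi = \{\phi_0,\phi_1\}$ by a "relabeling bit": under $\phi_b$ the observed $x\equiv 0$ but the latent code is $z$ versus $1-z$. Then take $\mathcal{Y} = \{0,1\}$ with $\P_\psi(y|z) = \mathds{1}\{y = z\}$ a single fixed $\psi$. Under $\phi_0$ the pair $(x,y)$ has $y$ uniform, and the Bayes predictor of $y$ given $x$ is the same as under $\phi_1$ — so to make $g_{\phi,\psi}$ actually depend on $\phi$ I instead let $\Psi$ contain two predictors matched to the two relabelings, or more cleanly, enlarge $\mathcal{X}$ to $\{0,1\}$ and let $x = z$ under $\phi_0$ but $x = 1-z$ under $\phi_1$ while keeping $\P_\phi(x)$ uniform on $\{0,1\}$ in both cases (so the unlabeled marginal is still identical), and take $\psi$ with $y = z$. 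Now $g_{\phi_0,\psi}(x) = x$ whereas $g_{\phi_1,\psi}(x) = 1-x$: the two ground-truth predictors are distinct, yet the unlabeled data is distributionally identical under $\phi_0$ and $\phi_1$. Hence the learner must distinguish $\phi_0$ from $\phi_1$ using labeled data alone.

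The key steps, in order: (i) fix the construction above and verify $\P_{\phi_0}(x,s) = \P_{\phi_1}(x,s)$ exactly, so $\{x_i,s_i\}_{i=1}^m$ is a sample whose law is independent of $\phi^*\in\Phi$; (ii) observe that therefore, for any (possibly randomized) map $\hat\phi = \mathcal{A}(\{x_i,s_i\}_{i=1}^m)$, the distribution of $\hat\phi$ is the same whether $\phi^* = \phi_0$ or $\phi^* = \phi_1$, i.e. the pretraining output is statistically independent of the true $\phi^*$; (iii) conclude that the pair $(\hat\phi,\hat\psi)$, and thus the excess risk ${\rm Error}_\ell(\hat\phi,\hat\psi)$, has the same law as if $\hat\phi$ were drawn from that fixed distribution with no data at all — so pretraining conveys no information about $g_{\phi^*,\psi^*}$ beyond what a data-free prior would. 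Formally one can phrase this as: there is no algorithm whose worst-case-over-$\Phi$ risk using the unlabeled sample beats the worst-case risk of the best fixed guess.

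The main obstacle is a modeling subtlety rather than a hard calculation: I must make the unlabeled marginal genuinely constant over $\Phi$ while keeping the induced predictor class $\mathcal{G}_{\Phi,\Psi}$ nontrivial in its $\phi$-dependence — a construction where $x$ is degenerate trivially kills the predictor dependence too, so the relabeling trick (identical marginal on $x$, but $x$ coupled to $z$ in two different ways) is the delicate point that needs care. A secondary point is phrasing the conclusion at the right level of generality: the statement quantifies over \emph{all} pretraining algorithms, so the argument should be the information-theoretic one (the unlabeled sample's law does not depend on $\phi^*$, hence by the data-processing inequality $\hat\phi \perp \phi^*$), not a lower bound tailored to MLE; I would state it via indistinguishability of the two hypotheses $\phi_0,\phi_1$ and note that any learner's behavior on the unlabeled data is a fixed measure, independent of which one is the truth.
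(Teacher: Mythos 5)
Your construction is correct and follows essentially the same logic as the paper's: fix the unlabeled marginal $\P_{\phi}(x)$ so it is identical across $\phi \in \Phi$, while the coupling between $x$ and the latent $z$ (and hence the optimal predictor $g_{\phi,\psi}$) genuinely depends on $\phi$. The paper instantiates this with $x \sim \mathcal{N}(0,I_d)$ and $z = Ax$ for $A \in \Phi$ — a continuous, linear version of your binary relabeling trick — but the indistinguishability argument you outline in steps (ii)--(iii), that any map of the unlabeled sample has a law independent of $\phi^*$, is exactly what makes either example work.
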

Consider the latent variable model $z=Ax$, where $x \sim \mathcal{N}(0,I_{d})$, $A\in \Phi$ is the parameter of the model. Then, no matter how many unlabeled $\{x_{i}\}$ we have, we can gain no information of $A$ from the data! In this case, unsupervised pretraining is not beneficial for any downstream task. 

Therefore, it's crucial to give an assumption that guarantees our unsupervised pretraining is informative. As a thought experiment, suppose that in the pretraining step, we find an exact density estimator $\hat{\phi}$ for the marginal distribution of $x,s$ , i.e., $p_{\hat{\phi}}(x,s)=p_{\phi^{*}}(x,s)$ holds for every $x,s$. We should expect that this estimator also fully reveals the relationship between $x$ and $z$, i.e., $p_{\hat{\phi}}(x,z)=p_{\phi^{*}}(x,z)$ holds for every $x,z$. Unfortunately, this condition does not hold in most practical setups and is often too strong. As an example, consider Gaussian mixture models, where $z \in [K]$ is the cluster that data point $x \in \mathbb{R}^{d}$ belongs to. Then in this case, it is impossible for us to ensure $p_{\hat{\phi}}(x,z)=p_{\phi^{*}}(x,z)$, since a permutation of $z$ makes no difference in the marginal distribution of $x$. However, notice that in many circumstances, a permutation of the class label will not affect the downstream task learning. In these cases, a permutation of the clusters is allowed. Motivated by this observation, we introduce the following informative assumption which allows certain ``transformation'' induced by the downstream task:


\begin{assumption}[$\kappa^{-1}$-informative condition]\label{invariance}
We assume that the model class $\Phi$ is $\kappa^{-1}$-informative with respect to a transformation group $\mathcal{T}_{\Phi}$. That is, for any $\phi\in\Phi$, there exists $T_1\in\mT_{\Phi}$ such that
\%\label{informative}
\TV\big(\P_{T_1\circ\phi}(x,z),\P_{\phi^* }(x,z)\big)\leq  \kappa \cdot\TV\big(\P_{\phi}(x,s),\P_{\phi^* }(x,s)\big).
\%
Here $\phi^* $ is the ground truth parameter. Furthermore, we assume that $\mathcal{T}_{\Phi}$ is induced by transformation group $\mathcal{T}_{\Psi}$ on $\Psi$, i.e., for any $T_1\in\mT_{\Phi}$, there exists $T_2\in\mT_{\Psi}$ such that for any $(\phi,\psi)\in\Phi\times\Psi$,
\%\label{101701}
\P_{\phi,\psi}(x,y)=\P_{T_1\circ\phi,T_2\circ\psi}(x,y).
\%
\end{assumption}

Under Assumption \ref{invariance}, if the pretrained $\hat{\phi}$ accurately estimates the marginal distribution of $x,s$ up to high accuracy, then it also reveals the correct relation between $x$ and representation $z$ up to some transformation $\mathcal{T}_{\Phi}$ which is allowed by the downstream task, which makes it possible to learn the downstream task using less labeled data. 

Proposition \ref{prop:informative_necessary} shows that the informative condition is necessary for pretraining to bring advantage since the counter example in the proposition is precisely $0$-informative. We will also show this informative condition is rich enough to capture a wide range of unsupervised pretraining methods in Section \ref{factor_model}, \ref{gmm}, \ref{contrastive_learning}, including factor models, Gaussian mixture models, and contrastive learning models.



\paragraph{Guarantees for unsupervised pretraining.}
Recall that $\mP_{\mathcal{X}\times\mathcal{S}}(\Phi):=\{p_{\phi}(x,s)\,|\,\phi\in\Phi\}$. We have the following guarantee for the MLE step (line 2) of Algorithm \ref{mle+erm}.

\begin{theorem}\label{tv_mle}
Let $\hat\phi$ be the maximizer defined in \eqref{mle}. Then, with probability at least $1-\delta$, we have
\$
\TV\big(\P_{\hat\phi}(x,s),\P_{\phi^* }(x,s)\big)\leq 3\sqrt{\frac{1}{m}\log\frac{N_{\b}(\mP_{\mathcal{X}\times\mathcal{S}}(\Phi),\frac{1}{m})}{\delta}},
\$
where $N_{\b}$ is the bracketing number as in Definition \ref{def:bracketing}.
\end{theorem}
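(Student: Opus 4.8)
The plan is to follow the classical bracketing-entropy argument for maximum likelihood estimation, controlling the Hellinger distance first and then converting to total variation via the inequality $\TV(\P_1,\P_2) \le \sqrt{2}\, H(\P_1,\P_2)$. Throughout, write $p^* := p_{\phi^*}(x,s)$ and $\hat p := p_{\hat\phi}(x,s)$, and let $\epsilon = 1/m$ be the bracket width. Let $\{\bar p_1,\dots,\bar p_N\}$ be a minimal $\epsilon$-bracket of $\mP_{\mathcal{X}\times\mathcal{S}}(\Phi)$ with $N = N_{\b}(\mP_{\mathcal{X}\times\mathcal{S}}(\Phi),1/m)$, so that for every $\phi$ there is an index $k$ with $p_\phi \le \bar p_k$ pointwise and $\|\bar p_k - p_\phi\|_1 \le \epsilon$.

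First I would set up the basic MLE inequality: since $\hat\phi$ maximizes the empirical log-likelihood over $\Phi \ni \phi^*$, we have $\sum_{i=1}^m \log(\hat p(x_i,s_i)/p^*(x_i,s_i)) \ge 0$, hence $\sum_i \log(\bar p_{\hat k}(x_i,s_i)/p^*(x_i,s_i)) \ge 0$ where $\bar p_{\hat k}$ is the upper bracket containing $\hat p$. Exponentiating the half-log-likelihood, this gives $\prod_{i=1}^m \sqrt{\bar p_{\hat k}(x_i,s_i)/p^*(x_i,s_i)} \ge 1$, so that $\max_{k \in [N]} \prod_{i=1}^m \sqrt{\bar p_k(x_i,s_i)/p^*(x_i,s_i)} \ge 1$. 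Next I would apply a union bound over the $N$ brackets together with Markov's inequality: for each fixed $k$, the random variable $\prod_i \sqrt{\bar p_k(x_i,s_i)/p^*(x_i,s_i)}$ has expectation (under the $p^*$-sampling) equal to $\left(\int \sqrt{\bar p_k p^*}\right)^m =: \rho_k^m$. A Chernoff/Markov bound then shows that with probability at least $1-\delta$, for all $k$ simultaneously, $\prod_i \sqrt{\bar p_k(x_i,s_i)/p^*(x_i,s_i)} \le (N/\delta)^{1/2} \rho_k^m$. Combining with the previous display forces $\rho_{\hat k}^m \ge (\delta/N)^{1/2}$, i.e. $\int \sqrt{\bar p_{\hat k}\, p^*}\, \ge (\delta/N)^{1/(2m)}$.

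Then I would convert this back to a Hellinger bound on $\hat p$ itself. Since $\bar p_{\hat k} \ge \hat p$ pointwise and $\int \bar p_{\hat k} \le \int \hat p + \epsilon = 1 + \epsilon$, one has $\int \sqrt{\hat p\, p^*} \ge \int \sqrt{\bar p_{\hat k}\, p^*} - \int |\sqrt{\bar p_{\hat k}} - \sqrt{\hat p}|\sqrt{p^*}$, and using $|\sqrt a - \sqrt b| \le \sqrt{|a-b|}$ followed by Cauchy–Schwarz, the error term is bounded by $\sqrt{\|\bar p_{\hat k} - \hat p\|_1} = \sqrt{\epsilon}$. Hence $\int \sqrt{\hat p\, p^*} \ge (\delta/N)^{1/(2m)} - \sqrt{\epsilon}$, which gives $H^2(\hat p, p^*) = 1 - \int\sqrt{\hat p\, p^*} \le 1 - (\delta/N)^{1/(2m)} + \sqrt{\epsilon}$. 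Using $1 - (\delta/N)^{1/(2m)} \le \frac{1}{2m}\log(N/\delta)$ and plugging in $\epsilon = 1/m$, one gets $H^2(\hat p, p^*) \lesssim \frac{1}{m}\log(N/\delta) + \frac{1}{\sqrt m}$, and finally $\TV(\hat p, p^*) \le \sqrt{2}\,H(\hat p,p^*)$ yields the claimed bound (tracking constants carefully to land the factor $3$).

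The main obstacle is the bracketing bookkeeping at two points: (i) making sure the union bound is taken over the finitely many \emph{brackets} rather than over the uncountable $\Phi$, which requires the pointwise-domination property $\bar p_k \ge p_\phi$ so that the empirical likelihood ratio of $\hat p$ is dominated by that of its bracket; and (ii) the passage from the bracket $\bar p_{\hat k}$ back to $\hat p$ in Hellinger distance, where the non-normalization of $\bar p_{\hat k}$ ($\int \bar p_{\hat k}$ may exceed $1$) must be handled — the $\sqrt{\epsilon}$ slack, which with $\epsilon = 1/m$ contributes only a lower-order $m^{-1/4}$ term inside the square root and is dominated once absorbed into constants. Everything else (Markov's inequality, the elementary inequalities on $\sqrt{\cdot}$, and $\TV \le \sqrt 2\, H$) is routine.
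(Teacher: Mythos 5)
Your overall strategy (bracketing $+$ Markov/union bound to control $1-\int\sqrt{\bar p_{\hat k}\,p^*}$, then bridging from the upper bracket $\bar p_{\hat k}$ back to $\hat p$) is the right one, and the paper follows the same blueprint. However, your chosen route through the Hellinger distance has a genuine gap at the bridging step. When you replace $\bar p_{\hat k}$ by $\hat p$ inside $\int\sqrt{\cdot\,p^*}$, the cost you pay is
\[
\int\bigl|\sqrt{\bar p_{\hat k}}-\sqrt{\hat p}\,\bigr|\sqrt{p^*}\;\le\;\sqrt{\|\bar p_{\hat k}-\hat p\|_1}\;\le\;\sqrt{\epsilon},
\]
so you end up with $H^2(\hat p,p^*)\lesssim \tfrac1m\log(N/\delta)+\sqrt\epsilon$. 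With $\epsilon=1/m$ this is $\tfrac1m\log(N/\delta)+m^{-1/2}$, and after $\TV\le\sqrt2\,H$ the slack becomes an additive $m^{-1/4}$, which is \emph{not} dominated by $\sqrt{\tfrac1m\log(N/\delta)}$ unless $\log(N/\delta)\gtrsim\sqrt m$. So it cannot be ``absorbed into constants'' as you claim, and the bound as stated (with $1/m$-brackets) does not follow from your argument.

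The paper avoids this by never passing through Hellinger when proving Theorem~\ref{tv_mle}: after bounding $1-\int\sqrt{\bar p_{\hat\phi}p^*}\le\frac1m\log(N/\delta)$, it controls $\bigl(\int|\bar p_{\hat\phi}-p^*|\bigr)^2$ directly by Cauchy--Schwarz with $\int(\sqrt{\bar p}+\sqrt{p^*})^2\le 2\epsilon+4$, and then bridges to $\hat p$ in $L^1$, where the cost is \emph{linear} in $\epsilon$ (of the form $(\epsilon+4)\epsilon$), not a square root. That is why $\epsilon=1/m$ suffices there. When the paper does want a Hellinger guarantee (Lemma~\ref{mle_hellinger}, used for Theorem~\ref{weak_error_bound}), it takes $\epsilon=1/m^2$ precisely so that $\sqrt\epsilon=1/m$ is lower order, and correspondingly states the bound in terms of $N_{\b}(\cdot,1/m^2)$. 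So your approach can be repaired, but it proves a different statement (with $1/m^2$-brackets), not Theorem~\ref{tv_mle} as written. One smaller bookkeeping slip: plain Markov on $\prod_i\sqrt{\bar p_k/p^*}$ (which has mean $\rho_k^m$) followed by a union bound gives $\prod_i\sqrt{\bar p_k/p^*}\le (N/\delta)\rho_k^m$, not $(N/\delta)^{1/2}\rho_k^m$; the exponent $1/2$ you wrote is not justified, and the correct version yields $1-\rho_{\hat k}\le\frac1m\log(N/\delta)$, matching the paper.
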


Theorem \ref{tv_mle} claims that the TV error in estimating the joint distribution of $(x, s)$ decreases as $\mathcal{O}(\mathcal{C}_\Phi/m)$ where $m$ is the number of unlabeled data, and $\mathcal{C}_\Phi = \log N_{\b}(\mP_{\mathcal{X}\times\mathcal{S}}(\Phi), 1/m)$ measures the complexity of learning the latent variable models $\Phi$. 
This result mostly follows from standard analysis of MLE \citep{van2000empirical}. We include the proof in Appendix \ref{main1} for completeness.  If the model is $\kappa^{-1}$-informative, Theorem \ref{tv_mle} further implies that with probability at least $1-\delta$,
\$
&\min_{\psi}\E_{\P_{\phi^* ,\psi^* }}\big[\ell\big(g_{\hat\phi,\psi}(x),y\big)\big]-\E_{\P_{\phi^* ,\psi^* }}\big[\ell\big(g_{\phi^* ,\psi^* }(x),y\big)\big]\leq 12\kappa L\sqrt{\frac{1}{m}\log\frac{N_{\b}(\mP_{\mathcal{X}\times\mathcal{S}}(\Phi),1/m)}{\delta}}.
\$
See Lemma \ref{error_dtv} for the details. This inequality claims that if we learn a \emph{perfect} downstream predictor using the estimated representation $\hat{\phi}$, excess risk is small.

\paragraph{Guarantees for downstream task learning.}
In practice, we can only learn an approximate downstream predictor using a small amount of labeled data. We upper bound the excess risk of Algorithm \ref{mle+erm} as follows.
\begin{theorem}\label{error_bound}
Let $\hat\phi$ and $\hat\psi$ be the outputs of Algorithm \ref{mle+erm}. Suppose that the loss function $\ell:\mathcal{Y}\times\mathcal{Y}\rightarrow \R$ is $L$-bounded and our model is $\kappa^{-1}$-informative. Then, with probability at least $1-\delta$, the excess risk of Algorithm \ref{mle+erm} is bounded as:
\$
{\rm Error}_{\ell}(\hat\phi,\hat\psi)&\leq 2\max_{\phi\in\Phi} R_n(\ell\circ \mathcal{G}_{\phi,\Psi})+12\kappa L\cdot\sqrt{\frac{1}{m}\log\frac{2N_{\b}(\mP_{\mathcal{X}\times\mathcal{S}}(\Phi),1/m)}{\delta}}+2L\cdot\sqrt{\frac{2}{n}\log\frac{4}{\delta}}.
\$
Here $R_n(\cdot)$ denotes the Rademacher complexity, and
\$
\ell\circ \mathcal{G}_{\phi,\Psi}:=\big\{\ell\big( g_{\phi,\psi}(x),y\big):\mathcal{X}\times\mathcal{Y}\rightarrow [-L,L]\,\big|\,\psi\in\Psi\big\}.
\$
\end{theorem}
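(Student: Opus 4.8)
The plan is to decompose the excess risk into a downstream estimation error and a pretraining approximation error, control the former by a textbook uniform-convergence (ERM) argument, and read off the latter from the consequence of Theorem~\ref{tv_mle} already recorded in the remark after it (namely Lemma~\ref{error_dtv}). Write $L(\psi):=\E_{\P_{\phi^* ,\psi^* }}\big[\ell\big(g_{\hat\phi,\psi}(x),y\big)\big]$ for the population risk of the predictor built from the pretrained $\hat\phi$ and a downstream parameter $\psi$, and $\hat L_n(\psi):=\frac1n\sum_{j=1}^n\ell\big(g_{\hat\phi,\psi}(x_j),y_j\big)$ for its empirical counterpart on the labeled data. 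Since ${\rm Error}_\ell(\hat\phi,\hat\psi)=L(\hat\psi)-\E_{\P_{\phi^* ,\psi^* }}[\ell(g_{\phi^* ,\psi^* }(x),y)]$, I split
\[ {\rm Error}_\ell(\hat\phi,\hat\psi)=\underbrace{\Big(L(\hat\psi)-\inf_{\psi\in\Psi}L(\psi)\Big)}_{(\mathrm{I})}+\underbrace{\Big(\inf_{\psi\in\Psi}L(\psi)-\E_{\P_{\phi^* ,\psi^* }}[\ell(g_{\phi^* ,\psi^* }(x),y)]\Big)}_{(\mathrm{II})}. \]
The whole argument is carried out conditionally on the unlabeled sample $\{x_i,s_i\}_{i=1}^m$ (hence on $\hat\phi$); because the labeled sample is independent of it and i.i.d.\ from $\P_{\phi^* ,\psi^* }$, every empirical-process estimate applied to the labeled data treats $\hat\phi$ as a fixed element of $\Phi$, and only at the end do I pass to the worst case over $\phi\in\Phi$.

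Term $(\mathrm{II})$ is exactly the quantity bounded in the remark following Theorem~\ref{tv_mle} (Lemma~\ref{error_dtv}): the MLE guarantee of Theorem~\ref{tv_mle}, the $\kappa^{-1}$-informative condition of Assumption~\ref{invariance} (which, via an allowed $T_1\in\mT_\Phi$, its induced $T_2\in\mT_\Psi$, the identity~\eqref{101701}, and the data-processing inequality for total variation, transports the pretraining TV error into a TV bound between $\P_{\hat\phi,\,T_2^{-1}\circ\psi^* }(x,y)$ and $\P_{\phi^* ,\psi^* }(x,y)$), and the fact that $g_{\phi,\psi}$ depends on $(\phi,\psi)$ only through $\P_{\phi,\psi}(x,y)$, together give, on an event of probability at least $1-\delta/2$ (running Theorem~\ref{tv_mle} at failure level $\delta/2$, which replaces $N_{\b}$ by $2N_{\b}$ inside the logarithm), the bound $(\mathrm{II})\le 12\kappa L\sqrt{\tfrac1m\log\tfrac{2N_{\b}(\mP_{\mathcal{X}\times\mathcal{S}}(\Phi),1/m)}{\delta}}$.

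For term $(\mathrm{I})$ I run the standard ERM analysis. Given $\eta>0$, pick $\psi_\eta\in\Psi$ with $L(\psi_\eta)<\inf_\psi L(\psi)+\eta$; using $\hat L_n(\hat\psi)\le\hat L_n(\psi_\eta)$ from the optimality of $\hat\psi$ in \eqref{erm} and adding and subtracting empirical risks gives $L(\hat\psi)-L(\psi_\eta)\le\sup_{\psi\in\Psi}\big(L(\psi)-\hat L_n(\psi)\big)+\sup_{\psi\in\Psi}\big(\hat L_n(\psi)-L(\psi)\big)$, and letting $\eta\downarrow0$ yields the same bound for $(\mathrm{I})$. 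Each of the two suprema is a one-sided deviation of a centered empirical process indexed by the class $\ell\circ\mathcal{G}_{\hat\phi,\Psi}\subseteq[-L,L]^{\mathcal{X}\times\mathcal{Y}}$: one-sided symmetrization against independent Rademacher signs (Definition~\ref{def:Rademacher}) bounds each of their expectations by $R_n(\ell\circ\mathcal{G}_{\hat\phi,\Psi})$, and since altering one labeled pair changes either supremum by at most $2L/n$, McDiarmid's inequality (with $\delta/4$ spent on each) shows that, on an event of probability at least $1-\delta/2$, both suprema are at most $R_n(\ell\circ\mathcal{G}_{\hat\phi,\Psi})+L\sqrt{(2/n)\log(4/\delta)}$. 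Since $\hat\phi\in\Phi$ I may upper bound $R_n(\ell\circ\mathcal{G}_{\hat\phi,\Psi})$ by $\max_{\phi\in\Phi}R_n(\ell\circ\mathcal{G}_{\phi,\Psi})$, so $(\mathrm{I})\le 2\max_{\phi\in\Phi}R_n(\ell\circ\mathcal{G}_{\phi,\Psi})+2L\sqrt{(2/n)\log(4/\delta)}$; a union bound over the $(\mathrm{I})$- and $(\mathrm{II})$-events then gives the claim.

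I do not expect a genuine obstacle: the substance is packaged in Lemma~\ref{error_dtv}, and the rest is routine. The two points that need care are (a) the transport step hidden in that lemma --- the only place where the group structure of $\mT_\Psi$ (so that the inverse transformation stays inside $\Psi$), the identity~\eqref{101701}, and the data-processing inequality for total variation are all used; and (b) the constant bookkeeping: estimating the two one-sided deviations separately (so the leading term is $2\max_{\phi}R_n$, not $4\max_\phi R_n$) and splitting the confidence level as $\delta/2$ for pretraining and $\delta/4+\delta/4$ for the two ERM deviations, which is what makes the logarithms come out as $\log(2N_{\b}/\delta)$ and $\log(4/\delta)$ exactly as stated.
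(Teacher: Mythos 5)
Your proof is correct and takes essentially the same route as the paper's: decompose the excess risk into an ERM estimation term (controlled by McDiarmid plus symmetrization, run conditionally on $\hat\phi$ so the labeled sample behaves as i.i.d.\ against a fixed $\phi$) and an approximation term $\inf_{\psi}L(\psi)-L^{*}$ (controlled by Lemma~\ref{error_dtv}, the $\kappa^{-1}$-informative transport step, and Theorem~\ref{tv_mle}), then union-bound. The only cosmetic difference is that the paper anchors the decomposition at an explicit comparator $\tilde\psi=\argmin_{\psi}\TV(\P_{\hat\phi,\psi},\P_{\phi^{*},\psi^{*}})$ whereas you use $\inf_{\psi}L(\psi)$ with a near-minimizer $\psi_{\eta}$; the two are interchangeable and your constant and $\delta$-budget bookkeeping matches the statement exactly.
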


Note that the Rademacher complexity of a function class can be bounded by its metric entropy. We then have the following corollary.
\begin{corollary}\label{rc_covering}
Under the same preconditions as Theorem \ref{error_bound}, we have:
\$
{\rm Error}_{\ell}(\hat\phi,\hat\psi)&\leq \tilde{c} \max_{\phi\in\Phi} L \sqrt{\frac{\log N (\ell\circ \mathcal{G}_{\phi,\Psi},L/\sqrt{n},\|\cdot\|_{\infty})}{n}}+2L\sqrt{\frac{2}{n}\log\frac{4}{\delta}}\notag\\
 &~+12\kappa L\sqrt{\frac{1}{m}\log\frac{2N_{\b}(\mP_{\mathcal{X}\times\mathcal{S}}(\Phi),1/m)}{\delta}}\notag,
\$
where $\tilde{c}$ is an absolute constant, $N(\mathcal{F}, \delta, \| \cdot \|_{\infty})$ is the $\delta-$covering number of function class $\mathcal{F}$ with respect to the metric $\| \cdot \|_{\infty}$.
\end{corollary}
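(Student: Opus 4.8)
The plan is to obtain Corollary \ref{rc_covering} as an immediate consequence of Theorem \ref{error_bound}: the two terms in that bound that scale like $\kappa L\sqrt{1/m}$ (the pretraining error) and like $L\sqrt{1/n}$ (the concentration term) are carried over verbatim, and it remains only to replace the Rademacher term $2\max_{\phi\in\Phi}R_n(\ell\circ\mathcal{G}_{\phi,\Psi})$ by a bound expressed through the $\|\cdot\|_{\infty}$ covering number of $\ell\circ\mathcal{G}_{\phi,\Psi}$. Thus the whole content of the corollary is the standard fact that, for a uniformly bounded function class, the Rademacher complexity is controlled by its metric entropy.

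To carry this out I would fix $\phi\in\Phi$, write $\mathcal{H}=\ell\circ\mathcal{G}_{\phi,\Psi}$ (a family of functions into $[-L,L]$), set the resolution $\gamma=L/\sqrt{n}$, and let $\mathcal{N}$ be a minimal $\gamma$-cover of $\mathcal{H}$ in $\|\cdot\|_{\infty}$, so $|\mathcal{N}|=N(\mathcal{H},L/\sqrt{n},\|\cdot\|_{\infty})$. For any realization of the sample and any $h\in\mathcal{H}$, choosing $h'\in\mathcal{N}$ with $\|h-h'\|_{\infty}\le\gamma$ gives $\frac{2}{n}\sum_{i=1}^{n}\sigma_i h(x_i)\le\frac{2}{n}\sum_{i=1}^{n}\sigma_i h'(x_i)+2\gamma$; taking the supremum over $h$ and the expectation over the Rademacher signs yields $\hat R_n(\mathcal{H})\le 2\gamma+\hat R_n(\mathcal{N})$. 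Since $\mathcal{N}$ is finite with all elements bounded by $L$, Massart's finite-class lemma gives $\hat R_n(\mathcal{N})\le 2L\sqrt{2\log|\mathcal{N}|/n}$, and taking expectation over the data yields
\[
R_n(\ell\circ\mathcal{G}_{\phi,\Psi})\le \frac{2L}{\sqrt{n}}+2L\sqrt{\frac{2\log N(\ell\circ\mathcal{G}_{\phi,\Psi},\,L/\sqrt{n},\,\|\cdot\|_{\infty})}{n}}.
\]
(Equivalently, one could invoke Dudley's entropy integral with lower cutoff $L/\sqrt{n}$ and diameter at most $2L$, and then bound the integrand using the monotonicity of the covering number in the radius together with the elementary fact that any empirical $L_2$ covering number is dominated by the corresponding $\|\cdot\|_{\infty}$ covering number; this gives the same estimate up to an absolute constant.)

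Substituting into Theorem \ref{error_bound} and taking the maximum over $\phi\in\Phi$, the Rademacher term is bounded by $4L/\sqrt{n}+4\max_{\phi\in\Phi}L\sqrt{2\log N(\ell\circ\mathcal{G}_{\phi,\Psi},L/\sqrt{n},\|\cdot\|_{\infty})/n}$; the additive $4L/\sqrt{n}$ is of no larger order than the terms already present and can be absorbed into the concentration term $2L\sqrt{(2/n)\log(4/\delta)}$ (or into the leading covering-number term whenever the restricted class is non-degenerate), so taking the absolute constant $\tilde{c}$ large enough produces exactly the stated inequality. I do not expect a genuine obstacle: the argument is routine, and the only points requiring care are the absorption of the $\mathcal{O}(L/\sqrt{n})$ remainder into $\tilde{c}$ and the passage from empirical-$L_2$ to sup-norm covering numbers.
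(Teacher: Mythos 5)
Your proposal is correct and follows essentially the same route as the paper: both start from Theorem \ref{error_bound} and bound the Rademacher term by a one-step discretization of $\ell\circ\mathcal G_{\phi,\Psi}$ at scale $L/\sqrt n$ followed by a finite-class maximal inequality. The only cosmetic difference is that the paper first passes to Gaussian complexity (via $R_n\lesssim G_n$) and cites Wainwright's one-step discretization bound (5.36), whereas you discretize the Rademacher process directly and invoke Massart's finite-class lemma; both yield $R_n(\ell\circ\mathcal G_{\phi,\Psi})\lesssim L/\sqrt n + L\sqrt{\log N(\ell\circ\mathcal G_{\phi,\Psi},L/\sqrt n,\|\cdot\|_\infty)/n}$, and both absorb the residual $L/\sqrt n$ into the entropy term under the (implicit, and shared with the paper) assumption that $\log N\ge 1$.
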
 
By Corollary \ref{rc_covering}, 
the excess risk of our Algorithm \ref{mle+erm} is approximately $\tilde{\mathcal{O}}(\sqrt{\mathcal{C}_\Phi/m} + \sqrt{\mathcal{C}_\Psi/n})$, where $\mathcal{C}_\Phi$ and $\mathcal{C}_\Psi$ are roughly the log bracketing number of class $\Phi$ and the log covering number of $\Psi$. Note that excess risk for the baseline algorithm that learns downstream task using only labeled data is $\tilde{\mathcal{O}}( \sqrt{\mathcal{C}_{\Phi \circ \Psi}/n})$, where $\mathcal{C}_{\Phi \circ \Psi}$ is the log covering number of composite function class $\Phi \circ \Psi$.  In many practical scenarios such as training a linear predictor on top of a pretrained deep neural networks, the complexity $\mathcal{C}_{\Phi \circ \Psi}$ is much larger than $\mathcal{C}_{\Psi}$. We also often have significantly more unlabeled data than labeled data ($m \gg n$). In these scenarios, our result rigorously shows the significant advantage of unsupervised pretraining compared to the baseline algorithm which directly performs supervised learning without using unlabeled data.
\subsection{Guarantees for weakly informative models}\label{weak_informative_model}


We introduce a relaxed version of Assumption \ref{invariance}, which allows us to capture a richer class of examples.
\begin{assumption}[$\kappa^{-1}$-weakly-informative condition]\label{weak_invariance}
We assume model $(\Phi, \Psi)$ is $\kappa^{-1}$-weakly-informative, that is, for any $\phi\in\Phi$, there exists $\psi\in\Psi$ such that
\%\label{weak_informative}
\TV\big(\P_{\phi,\psi}(x,y),\P_{\phi^* ,\psi^* }(x,y)\big)\leq \kappa \cdot H\big(\P_{\phi}(x,s),\P_{\phi^* }(x,s)\big).
\%
Here we denote by $\phi^{*}, \psi^{*}$ the ground truth parameters.
\end{assumption}

Assumption \ref{weak_invariance} relaxes Assumption \ref{invariance} by making two modifications: (i) replace the LHS of \eqref{informative} by the TV distance between the joint distribution of $(x, y)$; (ii) replace the TV distance on the RHS by the Hellinger distance. See more on the relation of two assumptions in Appendix \ref{relation}.


In fact, Assumption \ref{weak_invariance} is sufficient for us to achieve the same theoretical guarantee as that in Theorem \ref{error_bound}.

\begin{theorem}\label{weak_error_bound}
Theorem \ref{error_bound} still holds under the $\kappa^{-1}$-weakly-informative assumptions.
\end{theorem}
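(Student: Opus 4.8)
The plan is to retrace the proof of Theorem \ref{error_bound}, identifying the single place where the $\kappa^{-1}$-informative condition (Assumption \ref{invariance}) was invoked and replacing it with an argument based on the $\kappa^{-1}$-weakly-informative condition (Assumption \ref{weak_invariance}). Recall the three-term decomposition underlying Theorem \ref{error_bound}: the excess risk ${\rm Error}_\ell(\hat\phi,\hat\psi)$ splits into (i) an ERM generalization term controlled by $2\max_{\phi\in\Phi} R_n(\ell\circ\mathcal{G}_{\phi,\Psi})$, (ii) a concentration term $2L\sqrt{(2/n)\log(4/\delta)}$ from Hoeffding applied to the empirical risk of the comparator predictor, and (iii) an \emph{approximation} term measuring how well the best downstream predictor built on the pretrained $\hat\phi$ can do, i.e., $\min_\psi \E_{\P_{\phi^*,\psi^*}}[\ell(g_{\hat\phi,\psi}(x),y)] - \E_{\P_{\phi^*,\psi^*}}[\ell(g_{\phi^*,\psi^*}(x),y)]$. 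Only term (iii) uses the informative condition; terms (i) and (ii) are purely about the ERM step with $\hat\phi$ fixed and are untouched. So it suffices to re-prove the bound
\$
\min_{\psi}\E_{\P_{\phi^*,\psi^*}}\big[\ell(g_{\hat\phi,\psi}(x),y)\big]-\E_{\P_{\phi^*,\psi^*}}\big[\ell(g_{\phi^*,\psi^*}(x),y)\big]\leq 12\kappa L\sqrt{\tfrac{1}{m}\log\tfrac{2N_{\b}(\mP_{\mathcal{X}\times\mathcal{S}}(\Phi),1/m)}{\delta}}
\$
now assuming only Assumption \ref{weak_invariance} in place of Assumption \ref{invariance}.

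For the new argument I would proceed as follows. First, apply Assumption \ref{weak_invariance} with $\phi=\hat\phi$: there exists $\tilde\psi\in\Psi$ such that $\TV(\P_{\hat\phi,\tilde\psi}(x,y),\P_{\phi^*,\psi^*}(x,y))\leq\kappa\cdot H(\P_{\hat\phi}(x,s),\P_{\phi^*}(x,s))$. Next, bound the $\min_\psi$ term by the choice $\psi=\tilde\psi$, and bound the loss difference $\E_{\P_{\phi^*,\psi^*}}[\ell(g_{\hat\phi,\tilde\psi}(x),y)]-\E_{\P_{\phi^*,\psi^*}}[\ell(g_{\phi^*,\psi^*}(x),y)]$ by splitting it through the intermediate quantity $\E_{\P_{\hat\phi,\tilde\psi}}[\ell(g_{\hat\phi,\tilde\psi}(x),y)]$: the difference between the $\P_{\phi^*,\psi^*}$-expectation and the $\P_{\hat\phi,\tilde\psi}$-expectation of the same bounded function $\ell(g_{\hat\phi,\tilde\psi}(x),y)$ is at most $2L\cdot\TV(\P_{\hat\phi,\tilde\psi},\P_{\phi^*,\psi^*})$ since $\ell$ is $L$-bounded, and by optimality of $g_{\hat\phi,\tilde\psi}$ for the distribution $\P_{\hat\phi,\tilde\psi}$ we have $\E_{\P_{\hat\phi,\tilde\psi}}[\ell(g_{\hat\phi,\tilde\psi}(x),y)]\leq\E_{\P_{\hat\phi,\tilde\psi}}[\ell(g_{\phi^*,\psi^*}(x),y)]$, which in turn is at most $\E_{\P_{\phi^*,\psi^*}}[\ell(g_{\phi^*,\psi^*}(x),y)]+2L\cdot\TV(\P_{\hat\phi,\tilde\psi},\P_{\phi^*,\psi^*})$. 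Combining, the approximation term is at most $4L\cdot\TV(\P_{\hat\phi,\tilde\psi},\P_{\phi^*,\psi^*})\leq 4L\kappa\cdot H(\P_{\hat\phi}(x,s),\P_{\phi^*}(x,s))$. Finally, I would connect the Hellinger distance back to the MLE guarantee: the proof of Theorem \ref{tv_mle} (the standard bracketing-MLE argument in Appendix \ref{main1}) in fact controls the squared Hellinger distance $H^2(\P_{\hat\phi}(x,s),\P_{\phi^*}(x,s))$ by $\mathcal{O}\big(\tfrac{1}{m}\log\tfrac{N_{\b}}{\delta}\big)$ — indeed Hellinger control is what one gets first, and the stated TV bound follows from $\TV\leq \sqrt{2}\,H$ — so $H(\P_{\hat\phi}(x,s),\P_{\phi^*}(x,s))\leq 3\sqrt{\tfrac{1}{m}\log\tfrac{N_{\b}(\mP_{\mathcal{X}\times\mathcal{S}}(\Phi),1/m)}{\delta}}$ up to the same constants, giving the claimed bound on term (iii) (absorbing the factor-of-$2$ inside the log as in Theorem \ref{error_bound}). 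Substituting this into the unchanged decomposition reproduces exactly the bound of Theorem \ref{error_bound}.

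The main obstacle is conceptual rather than computational: under Assumption \ref{invariance} the proof of term (iii) used the transformation-group structure — the $T_1\circ\hat\phi$ gives a parameter whose $(x,z)$-law is close to $\phi^*$'s, and \eqref{101701} then transfers this to the $(x,y)$-law via a matching $T_2\in\mT_\Psi$, guaranteeing the comparator $\psi = T_2\circ\psi^*$ lies in $\Psi$. Under the weak condition there is no transformation group; instead the condition \emph{directly asserts} the existence of a good $\psi\in\Psi$ at the level of the $(x,y)$-distribution, which is precisely what the argument above exploits. So the work is to verify that the weak condition plugs into the same slot, and in particular to confirm that the bracketing-MLE analysis of Appendix \ref{main1} genuinely delivers a Hellinger (not merely TV) bound of the stated order — this is true for the standard proof but should be stated explicitly. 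One should also double-check the constant bookkeeping: the chain above produces a leading constant $4L\kappa\cdot 3 = 12\kappa L$ on the Hellinger term, matching Theorem \ref{error_bound} exactly, while terms (i) and (ii) are verbatim the same, so no constants degrade.
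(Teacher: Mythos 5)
Your proposal follows essentially the same route as the paper: keep the decomposition from Theorem~\ref{error_bound} unchanged, observe that Assumption~\ref{weak_invariance} directly supplies the comparator $\tilde\psi\in\Psi$ at the level of the $(x,y)$-law (so no transformation group is needed), bound the approximation term by $4\kappa L\cdot H(\P_{\hat\phi}(x,s),\P_{\phi^*}(x,s))$ via Lemma~\ref{error_dtv}, and then invoke a Hellinger-distance guarantee for the MLE step. This is exactly what Appendix~\ref{main4} does.

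One technical point that you flag but do not fully resolve deserves more care than ``this is true for the standard proof.'' The bracketing-MLE argument in the proof of Theorem~\ref{tv_mle} controls $1-\int\sqrt{\bar p_{\hat\phi}\,p_{\phi^*}}$, i.e.\ the squared Hellinger distance to the \emph{bracket} $\bar p_{\hat\phi}$, not to $p_{\hat\phi}$ itself. Passing from the bracket to the actual density costs a correction of order $\sqrt\epsilon$ at the level of squared Hellinger (whereas for TV the correction is of order $\epsilon$ after the Cauchy--Schwarz step), so the bracket scale $\epsilon$ must be tightened from $1/m$ to $1/m^2$ to keep $H^2 \lesssim \frac1m\log\frac{N_\b}{\delta}$. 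This is precisely Lemma~\ref{mle_hellinger}, and it means the resulting bound carries $N_\b(\mP_{\mathcal X\times\mathcal S}(\Phi),1/m^2)$ rather than $N_\b(\cdot,1/m)$ inside the logarithm, with a leading constant $4\sqrt{2}\,\kappa L$ rather than $12\kappa L$. The statement ``Theorem~\ref{error_bound} still holds'' is informal about constants, so this does not invalidate your plan, but a complete write-up should rerun the bracketing argument at scale $\epsilon=1/m^2$ rather than asserting that the $1/m$-scale TV proof delivers a Hellinger bound of the same form for free.
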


The proof of Theorem \ref{weak_error_bound} requires a stronger version of MLE guarantee than Theorem \ref{tv_mle}, which guarantees the closeness in terms of Hellinger distance. We leave the details in Appendix \ref{main4}.




\section{Pretraining via Factor Models}\label{factor_model}

High-dimensional data is very common in modern statistics and machine learning, and we often suffer from the curse of dimensionality when directly analyzing data in high-dimensional spaces. To tackle the problem, we usually assume that high-dimensional data has some low-dimensional structures. One of the widely studied models in this setting is the factor model, which models the high-dimensional measurements by low-rank plus sparse structures in data matrices to decorrelate the covariates. Learning this latent structure falls into the framework of unsupervised statistical learning. In this section, we instantiate our theoretical framework using the factor model with linear regression as a downstream task. We rigorously show how unsupervised pretraining can help reduce sample complexity in this case.

\paragraph{Model Setup.}
Factor model \citep[see, e.g.,][]{lawley1971factor,bai2002determining,forni2005generalized,fan2021robust} is widely used in finance, computational biology, and sociology, where the high-dimensional measurements are strongly correlated. For the latent variable model, we consider the factor model with standard Gaussian components, which is defined as follows.
\begin{definition}[Factor Model]
Suppose that we have $d$-dimensional random vector $x$, whose dependence is driven by $r$ factors $z$. The factor model assumes 
\$
x=B^* z+\mu,
\$
where $B^* $ is a $d\times r$ factor loading matrix. Here $\mu\sim N(0, I_d)$ is the idiosyncratic component that is uncorrelated with the common factor $z\sim N(0,I_r)$. We assume that the ground truth parameters $B^* \in\mathcal{B}$, where $\mathcal{B}:=\{B\in\R^{d\times r}\,|\,\|B\|_2\leq D\}$ for some $D>0$.
\end{definition}

For the downstream task, we assume that the latent factors $z$ influence on the response $y$ in a similar manner as on $x$ and hence consider the following linear regression problem
\$
y = \beta^{* T}z+\nu,
\$
where $\nu\sim N(0,\varepsilon^2)$  is a Gaussian noise that is uncorrelated with the factor $z$ and the idiosyncratic component $\mu$. We assume that the ground truth parameters $\beta^* \in\mathcal{C}$, where $\mathcal{C}:=\{\beta\in\R^r\,|\,\|\beta\|_2\leq D\}$ for some $D>0$. The latent variable model (i.e., $\Phi$) and the the prediction class (i.e.,$\Psi$) are then represented by $\mathcal{B}$ and $\mathcal{C}$, respectively. In the sequel, we consider the case where no side information is available, i.e., we only have access to i.i.d unlabeled data $\{x_i\}^m_{i=1}$ and i.i.d labeled data $\{x_j,y_j\}^n_{j=1}$.

For regression models, it is natural to consider the squared loss function $\ell (x,y):=(y-x)^2$.
Then, the optimal predictor $g_{B,\beta}$ under the distribution $\P_{B,\beta}$ has the following closed form solution,
\$
g_{B,\beta}(x)=\E_{\P_{B,\beta}}[y\,|\,x]=\beta^{T}B^{T}(BB^T+ I_d)^{-1}x.
\$
And the excess risk is now defined as
\$
{\rm Error}_{\ell}(\hat B,\hat\beta):=\E_{\P_{B^* ,\beta^* }}\big[\big(y-g_{\hat B,\hat\beta}(x)\big)^2\big]-\E_{\P_{B^* ,\beta^* }}\big[\big(y-g_{B^* ,\beta^* }(x)\big)^2\big].
\$


\paragraph{Informative condition.}
We first show that Assumption \ref{invariance} holds for the factor model with linear regression as downstream tasks. The idea of the factor model is to learn a low-dimensional representation $z$, where a rotation over $z$ is allowed since in the downstream task, we can also rotate $\beta$ to adapt to the rotated $z$.

\begin{lemma}\label{factor_ti}
Factor model with linear regression as downstream tasks is $\kappa^{-1}$-informative, where
\$
\kappa=\frac{c_1(\sigma^* _{\max}+1)^4}{(\sigma^* _{\min})^3}.
\$
Here $c_1$ is some absolute constants, $\sigma^* _{\max}$ and $\sigma^* _{\min}$ are the largest and smallest singular value of $B^* $, respectively.
\end{lemma}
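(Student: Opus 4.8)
The plan is to verify Assumption \ref{invariance} for the factor model by taking $\mathcal{T}_{\Phi}$ to be the orthogonal group $O(r)$ acting on the loading matrix via $B \mapsto BR^{T}$ (equivalently rotating the latent $z \mapsto Rz$), with the induced transformation on $\Psi$ being $\beta \mapsto R\beta$. The invariance equation \eqref{101701} is the easy half: since $z \sim \mathcal{N}(0,I_r)$ is rotation-invariant, one computes that under $(B,\beta) \mapsto (BR^{T}, R\beta)$ the pair $(x,y) = (B^{*}z + \mu, \beta^{*T}z + \nu)$ has exactly the same joint law, so $\P_{B,\beta}(x,y) = \P_{BR^{T}, R\beta}(x,y)$. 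This just needs the observation that $x$ and $y$ are jointly Gaussian with mean zero and covariance determined by $BB^{T}$, $B\beta$, $\beta^{T}\beta$, all of which are invariant under $B \mapsto BR^{T}$, $\beta \mapsto R\beta$.

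The substantive part is the informative inequality \eqref{informative}: given arbitrary $B \in \mathcal{B}$, I must produce a rotation $R$ such that
\%\label{plan:target}
\TV\big(\P_{BR^{T}}(x,z), \P_{B^{*}}(x,z)\big) \le \kappa \cdot \TV\big(\P_{B}(x), \P_{B^{*}}(x)\big).
\%
Here the right side is a TV distance between two centered Gaussians $\mathcal{N}(0, BB^{T}+I_d)$ and $\mathcal{N}(0, B^{*}B^{*T}+I_d)$, and the left side is a TV distance between two centered Gaussians on $(x,z) \in \mathbb{R}^{d+r}$ with covariances $\begin{psmallmatrix} BB^{T}+I_d & BR^{T} \\ RB^{T} & I_r\end{psmallmatrix}$ and $\begin{psmallmatrix} B^{*}B^{*T}+I_d & B^{*} \\ B^{*T} & I_r\end{psmallmatrix}$. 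The strategy is: (i) lower-bound the RHS TV distance by (a constant times) $\|BB^{T} - B^{*}B^{*T}\|_{\rF}$ — or some comparable matrix discrepancy — using standard Gaussian-TV estimates, with the constant depending on $\sigma^{*}_{\min}, \sigma^{*}_{\max}$ through the conditioning of the covariance matrices; (ii) upper-bound the LHS TV distance by (a constant times) the Frobenius distance between the two $(d+r)\times(d+r)$ covariance matrices, again via Gaussian-TV estimates; (iii) choose $R$ to be the orthogonal factor from the polar decomposition relating $B$ and $B^{*}$ — concretely, if $B^{*T}B = U\Sigma V^{T}$ is an SVD then take $R = VU^{T}$ (or the analogous Procrustes solution) — so that $\|BR^{T} - B^{*}\|_{\rF}$ is controlled by $\|BB^{T} - B^{*}B^{*T}\|_{\rF}$ via a Davis–Kahan / matrix-perturbation argument. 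Chaining (i)–(iii) gives \eqref{plan:target} with $\kappa$ of the stated form $c_1(\sigma^{*}_{\max}+1)^4/(\sigma^{*}_{\min})^3$.

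I expect the main obstacle to be step (iii) combined with getting the exponents in $\kappa$ exactly right: the map $B \mapsto BB^{T}$ loses information about the "angular" part of $B$, so bounding $\|BR^{T} - B^{*}\|_{\rF}$ in terms of $\|BB^{T} - B^{*}B^{*T}\|_{\rF}$ requires a perturbation bound for the polar factor, which naturally introduces a factor $1/\sigma_{\min}$; one must be careful because $B$ ranges over all of $\mathcal{B}$ and could be (nearly) rank-deficient, so the bound must really be driven by $\sigma^{*}_{\min}$, not $\sigma_{\min}(B)$ — this forces a short argument that when $\|BB^{T} - B^{*}B^{*T}\|_{\rF}$ is not small the inequality is trivial (both TV distances are $O(1)$ and $\kappa \ge$ const), and when it is small, $B$ is automatically well-conditioned. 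The Gaussian-TV sandwich bounds in steps (i) and (ii) are essentially routine (e.g. via Hellinger or KL and Pinsker, using formulas for Gaussians), but tracking how the conditioning constants $\sigma^{*}_{\min}, \sigma^{*}_{\max}$ enter, and making sure the lower bound in (i) and the upper bound in (ii) are compatible enough to cancel, is where the care lies. I would also double-check whether working through the Hellinger distance (as in the weakly-informative Assumption \ref{weak_invariance} machinery) gives cleaner constants, but since the lemma asserts the stronger Assumption \ref{invariance}, the TV route above is the one to carry out.
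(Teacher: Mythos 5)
Your plan is essentially the paper's proof: the transformation group, the invariance check, the RHS lower bound via the Gaussian covariance-TV estimate (Lemma~\ref{tv_norm_2}), and the Davis--Kahan-style perturbation bound for the aligning rotation (which the paper carries out in Lemmas~\ref{D-K} and~\ref{yuxin}, bounding $\|B\hat O - B^*\|_{\rF}$ by a constant times $\|BB^T - B^*B^{*T}\|_{\rF}$) all appear exactly as you describe. One place where the paper takes a cleaner path than your step~(ii): rather than comparing the joint $(d+r)\times(d+r)$ covariance matrices of $(x,z)$, the paper conditions on $z$. Since $z$ has the same marginal under both models and the conditionals $x\mid z$ are Gaussians with identical covariance $I_d$ and shifted means $B\hat O z$ versus $B^*z$, the joint TV reduces to $\E_z[\TV(\mN(B\hat O z,I_d),\mN(B^*z,I_d))] \le \min\{1,\E\|(B\hat O - B^*)z\|_2\} = \min\{1,\|B\hat O - B^*\|_{\rF}\}$ by Lemma~\ref{tv_norm}. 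This avoids inverting or conditioning on the full joint covariance, which you would need to control separately. Your worry about $\sigma_{\min}(B)$ versus $\sigma^*_{\min}$ is correctly resolved: the version of Davis--Kahan / Wedin used here measures the eigengap in the \emph{reference} matrix $M^* = B^*B^{*T}$, so the bound $\|UO - U^*\|_{\rF} \lesssim \|M-M^*\|_{\rF}/(\sigma^*_{\min})^2$ holds unconditionally in $B$, and the trichotomy you sketch (large versus small $\|BB^T-B^*B^{*T}\|_{\rF}$) is then handled automatically by the $\min\{1,\cdot\}$ truncation on both sides rather than by a separate case analysis on the conditioning of $B$.
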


\paragraph{Theoretical results.}
Recall that in Theorem \ref{error_bound}, we assume a $L$-bounded loss function to guarantee the performance of Algorithm \ref{mle+erm}. Thus, instead of directly applying Algorithm \ref{mle+erm} to the squared loss function, we consider Algorithm \ref{mle+erm} with truncated squared loss, i.e.,
\begin{equation}\label{eq:truncated_loss}
\tilde{\ell} (x,y) := (y-x)^{2}\cdot \mathds{1}_{\{(y-x)^2\leq L\}} + L \cdot \mathds{1}_{\{(y-x)^2 > L\}}.
\end{equation}
Here $L$ is a carefully chosen truncation level. To be more specific, in the first phase, we still use MLE to learn an estimator $\hat B$ as that in line 2 of Algorithm \ref{mle+erm}. In the second phase, we apply ERM to the truncated squared loss to learn an estimator $\hat\beta$, i.e.,
\$
\hat\beta\leftarrow\argmin_{\beta\in\mathcal{C}}\sum^n_{j=1}\tilde{\ell}\big(g_{\hat B,\beta}(x_j),y_j\big).
\$
We then have the following theoretical guarantee.
\begin{theorem}\label{factor_main}
We consider Algorithm \ref{mle+erm} with truncated squared loss \eqref{eq:truncated_loss}
with $L=(D^{2}+1)^{3}\log n$. Let $\hat B, \hat\beta$ be the outputs of Algorithm \ref{mle+erm}. Then, for factor models with linear regression as downstream tasks, with probability at least $1-\delta$, the excess risk can be bounded as follows,
\$
{\rm Error}_{\ell}(\hat B,\hat\beta)\leq \Tilde{\mathcal{O}}\bigg(\kappa L \sqrt{\frac{dr}{m}}+L\sqrt{\frac{r}{n}}\bigg),
\$
where $D$ is defined in the sets ${\cal B}$ and ${\cal C}$, and $\kappa$ is specified in Lemma \ref{factor_ti}.
Here $\tilde{\mathcal{O}}(\cdot)$ omits absolute constants and the polylogarithmic factors in $m, d, r, D, 1/\delta$.
\end{theorem}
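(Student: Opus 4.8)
The plan is to follow the proof of Theorem~\ref{error_bound}, instantiating each of its three terms for the factor model, and to pay the price of one extra truncation argument to cope with the unbounded squared loss.

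\textbf{Reduction to the truncated loss.} For any $B\in\mathcal B,\beta\in\mathcal C$, under $\P_{B^* ,\beta^* }$ the residual $y-g_{B,\beta}(x)$ is a centered linear functional of the jointly Gaussian vector $(z,\mu,\nu)$; using $\|B\|_2,\|\beta\|_2\le D$, $I_d\preceq BB^\top+I_d$, and $O(1)$ noise variances, one checks that its variance is at most $(D^2+1)^3$, so $(y-g_{B,\beta}(x))^2$ has a $\chi^2_1$-type tail with scale $(D^2+1)^3$. With $L=(D^2+1)^3\log n$ — the cubic power chosen exactly so that $L$ dominates this worst-case variance by a $\log n$ factor — we get $\sup_{B\in\mathcal B,\beta\in\mathcal C}\E_{\P_{B^* ,\beta^* }}[(y-g_{B,\beta}(x))^2\mathds 1_{\{(y-g_{B,\beta}(x))^2>L\}}]\le\tilde{\mathcal O}(L/\sqrt n)$; a smaller $L$ would make this tail $n^{-\Omega(1)}$ (with a $D$-dependent exponent) rather than $\tilde{\mathcal O}(n^{-1/2})$. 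Since $\tilde\ell\le\ell$ pointwise, ${\rm Error}_\ell(\hat B,\hat\beta)\le\big(\E_{\P_{B^* ,\beta^* }}[\tilde\ell(g_{\hat B,\hat\beta}(x),y)]-\E_{\P_{B^* ,\beta^* }}[\tilde\ell(g_{B^* ,\beta^* }(x),y)]\big)+\tilde{\mathcal O}(L/\sqrt n)$, and $\tilde\ell$ is $L$-bounded, so it suffices to bound the bracketed $\tilde\ell$-excess risk. This I do via the ERM decomposition underlying Theorem~\ref{error_bound}: (i) a uniform-convergence/ERM gap over $\mathcal G_{\hat B,\mathcal C}=\{g_{\hat B,\beta}:\beta\in\mathcal C\}$, bounded by $2\max_{B\in\mathcal B}R_n(\tilde\ell\circ\mathcal G_{B,\mathcal C})+2L\sqrt{2\log(4/\delta)/n}$; and (ii) an approximation-error term $\min_\beta\E_{\P_{B^* ,\beta^* }}[\tilde\ell(g_{\hat B,\beta}(x),y)]-\E_{\P_{B^* ,\beta^* }}[\tilde\ell(g_{B^* ,\beta^* }(x),y)]$, which by the $\kappa^{-1}$-informative property of the factor model (Lemma~\ref{factor_ti}, with $\mathcal T_\Phi:B\mapsto BO$ and $\mathcal T_\Psi:\beta\mapsto O^\top\beta$ for orthogonal $O$) together with Lemma~\ref{error_dtv} and Theorem~\ref{tv_mle} is at most $12\kappa L\sqrt{\log(2N_{\b}(\mP_{\mathcal X}(\mathcal B),1/m)/\delta)/m}$, up to an additional $\tilde{\mathcal O}(L/\sqrt n)$ correction coming from the fact that $g_{B,\beta}$ is the squared-loss rather than the $\tilde\ell$-Bayes predictor (easily absorbed since $\ell$ and $\tilde\ell$ agree outside the rare tail event, or alternatively by bounding (ii) directly through the squared excess risk $\E_{\P_{B^* }}[(g_{\hat BO,\beta^* }(x)-g_{B^* ,\beta^* }(x))^2]$).

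\textbf{The two complexity quantities.} For the bracketing entropy of $\mP_{\mathcal X}(\mathcal B)=\{\mathcal N(0,BB^\top+I_d):\|B\|_2\le D\}$, the bounds $I_d\preceq BB^\top+I_d\preceq(D^2+1)I_d$ and $\|BB^\top-B'B'^\top\|_2\le2D\|B-B'\|_2$ make the square-root densities $\mathrm{poly}(D)$-Lipschitz in $B$, so the $dr$-dimensional parameter ball gives $\log N_{\b}(\mP_{\mathcal X}(\mathcal B),1/m)=\tilde{\mathcal O}(dr)$ by the standard bracketing bound for smooth parametric families \citep{van2000empirical}, hence a term $\tilde{\mathcal O}(\kappa L\sqrt{dr/m})$. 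For the Rademacher term, write $g_{B,\beta}(x)=\beta^\top z_B(x)$ with the posterior-mean feature $z_B(x):=B^\top(BB^\top+I_d)^{-1}x\in\R^r$; the key observation is that $\E_{x\sim\P_{B^* }}\|z_B(x)\|_2^2=\Tr\!\big(B^\top(BB^\top+I_d)^{-1}(B^* B^{* \top}+I_d)(BB^\top+I_d)^{-1}B\big)\lesssim rD^2(D^2+1)$ is \emph{independent of $d$}, because the matrix inside the trace has rank at most $r$. Consequently the Rademacher complexity of the linear class $\{x\mapsto\beta^\top z_B(x):\|\beta\|_2\le D\}$ on $n$ samples is $\lesssim\mathrm{poly}(D)\sqrt{r/n}$, and since $\tilde\ell(\cdot,y)$ is $L$-bounded and $2\sqrt L$-Lipschitz, the contraction lemma for Rademacher complexity yields $\max_{B\in\mathcal B}R_n(\tilde\ell\circ\mathcal G_{B,\mathcal C})\lesssim\sqrt L\cdot\mathrm{poly}(D)\sqrt{r/n}=\tilde{\mathcal O}(L\sqrt{r/n})$.

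\textbf{Assembly and main obstacle.} Collecting the pieces gives ${\rm Error}_\ell(\hat B,\hat\beta)\le\tilde{\mathcal O}(\kappa L\sqrt{dr/m})+\tilde{\mathcal O}(L\sqrt{r/n})+\tilde{\mathcal O}(L/\sqrt n)=\tilde{\mathcal O}\big(\kappa L\sqrt{dr/m}+L\sqrt{r/n}\big)$, which is the claim. I expect the truncation argument of the second paragraph to be the main obstacle: the truncation level must be chosen so that simultaneously $\tilde\ell$ is bounded by exactly the $L$ that enters the complexity terms \emph{and} the induced bias is of strictly lower order than $\sqrt{r/n}$ (not merely $n^{-\Omega(1)}$), which is precisely what forces $L=(D^2+1)^3\log n$ and requires the uniform sub-exponential control of the residuals over all of $\mathcal B\times\mathcal C$, as well as reconciling the squared-loss-Bayes predictor $g_{B,\beta}$ with the $L$-bounded-loss framework of Theorem~\ref{error_bound}; the remaining work — the $\tilde{\mathcal O}(dr)$ bracketing entropy of the Gaussian family and the $d$-free bound on $\|z_B(x)\|_2^2$ — is comparatively routine once the right quantities are identified.
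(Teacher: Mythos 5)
Your proposal follows essentially the same route as the paper: reduce to the $L$-bounded truncated loss, invoke Theorem~\ref{error_bound}, and then instantiate the bracketing-entropy and Rademacher terms for the Gaussian factor model. Two points are worth noting. First, you flag the reconciliation of the squared-loss Bayes predictor $g_{B,\beta}(x)=\E_{\P_{B,\beta}}[y\mid x]$ with the $L$-bounded-loss framework as a potential obstacle and propose to ``absorb'' it either by noting $\ell$ and $\tilde\ell$ agree outside a rare event or by a direct squared-excess-risk argument; the paper resolves this cleanly and exactly via Lemma~\ref{same_prediction}, which shows that because the conditional law of $y$ given $x$ is Gaussian (hence symmetric about its mean) and $\tilde\ell$ is symmetric, the truncated-loss Bayes predictor \emph{equals} $\E_{\P_{B,\beta}}[y\mid x]$ for every truncation level, so no correction term is needed at all. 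Your hand-wave ``$\ell$ and $\tilde\ell$ agree outside a rare event'' by itself would not establish that the minimizers coincide, so you would indeed need to pursue one of your fallback arguments; the paper's symmetry observation short-circuits this entirely. Second, your Rademacher bound via $\E\|z_B(x)\|_2^2\lesssim rD^2(D^2+1)$ and Jensen is a mild improvement on the paper's route, which instead bounds $\E\sup_j\|z_B(x_j)\|_2$ by $\sqrt{\Sigma}$ times $\E\sup_j\|u_j\|_2\lesssim\sqrt{\log n}+\sqrt r$ for i.i.d.\ $u_j\sim\mathcal N(0,I_r)$; both give $\tilde{\mathcal O}(L\sqrt{r/n})$ after contraction, so this is cosmetic. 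The bracketing-entropy step you cite as standard is proved in the paper via an explicit eigenvalue discretization (Lemma~\ref{factor_bn}), and the truncation-bias computation you sketch matches Lemma~\ref{truncation} with $c_2=(D^2+1)^3$. Overall the proposal is correct and aligned with the paper modulo the cleaner handling of the Bayes-predictor identity.
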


Notice that the rate we obtain in Theorem \ref{factor_main} is not optimal for this specific task: by the nature of squared loss, if we consider a direct $d-$dimensional linear regression (from $x$ to $y$) with $n$ data, we can usually achieve the fast rate, where excess risk decreases as $\tilde{\mathcal{O}}(d/n)$. To fill this gap, we consider Algorithm \ref{mle+erm} with $\Phi=\R^{d\times r}$ and $\Psi=\R^{r}$ and denote $D:=\max\{\|B^* \|_2,\|\beta^* \|_2\}$. Following a more refined analysis other than using a uniform concentration technique (which is suitable for general problems but not optimal in this specific task), we achieve the following theoretical guarantee with a sharper risk rate: 

\begin{theorem}[Fast rate] \label{factor_fast_rate}
Let $\hat B, \hat\beta$ be the outputs of Algorithm \ref{mle+erm}. Then, if $m \gtrsim (D^{2}+1)^{2} d \log (1/\delta)$, $n \gtrsim (D^{2}+1)^{2} r \log (1/\delta)$, for factor models with linear regression as downstream tasks, with probability at least $1-\delta$, the excess risk can be bounded as follows, 
\begin{align*}
{\rm Error}_{\ell}(\hat B,\hat\beta) \leq \mathcal{O} \bigg((D^{2}+1)^{6}(D^{4}+\sigma_{\min}^{* -4})\frac{d \log (1/\delta)}{m}+ (D^{2}+1)^{2}\frac{r \log(4/\delta)}{n }\bigg).
\end{align*}
Here $\mathcal{O}(\cdot)$ omits some absolute constants.
\end{theorem}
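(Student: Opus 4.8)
The plan is to use the special structure of the squared loss to turn the excess risk into a weighted Euclidean distance between coefficient vectors, then split that distance into a downstream \emph{estimation} error (handled by a fast-rate least-squares argument) and a pretraining \emph{approximation} error (handled by a refined covariance-estimation bound that beats the generic bracketing rate). Unlike the proof of Theorem~\ref{factor_main}, which routes through the generic guarantee of Theorem~\ref{error_bound} and the $\kappa^{-1}$-informative property of Lemma~\ref{factor_ti}, here both pieces are bounded directly, which is what removes the square root from the rate. Concretely, since $\ell$ is the squared loss and $g_{B^*,\beta^*}(x)=\E_{\P_{B^*,\beta^*}}[y\mid x]$ under the true model, the bias--variance identity gives ${\rm Error}_{\ell}(\hat B,\hat\beta)=\E_{x\sim N(0,\Sigma^*)}[(g_{\hat B,\hat\beta}(x)-g_{B^*,\beta^*}(x))^2]$ with $\Sigma^*:=B^*B^{*T}+I_d$; and since every $g_{B,\beta}$ is linear, $g_{B,\beta}(x)=\theta_{B,\beta}^Tx$ with $\theta_{B,\beta}=(BB^T+I_d)^{-1}B\beta$, this equals $\|\Sigma^{*1/2}(\hat\theta-\theta^*)\|_2^2$ where $\theta^*=\theta_{B^*,\beta^*}$ and $\hat\theta=\theta_{\hat B,\hat\beta}$. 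I would then introduce the population-optimal downstream coefficient given the pretrained $\hat B$, $\bar\theta=\theta_{\hat B,\bar\beta}$ with $\bar\beta=\argmin_\beta\E[(y-\theta_{\hat B,\beta}^Tx)^2]$; since $\{\theta_{\hat B,\beta}:\beta\in\R^r\}$ is exactly the column space of $\hat B$, $\bar\theta$ is the $\Sigma^*$-orthogonal projection of $\theta^*$ onto ${\rm col}(\hat B)$. Using that the labeled and unlabeled samples are independent (so I may condition on $\hat B$), the triangle inequality splits ${\rm Error}_{\ell}(\hat B,\hat\beta)\le 2\|\Sigma^{*1/2}(\hat\theta-\bar\theta)\|_2^2+2\|\Sigma^{*1/2}(\bar\theta-\theta^*)\|_2^2$, the first being the ERM estimation error, the second the representation approximation error.

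\emph{Estimation error (fast $r/n$ rate).} Conditional on $\hat B$, the ERM step is ordinary least squares of $y$ on the $r$-dimensional feature $\tilde x:=\hat B^T\Sigma_{\hat B}^{-1}x$ with $\Sigma_{\hat B}:=\hat B\hat B^T+I_d$. Since $(y,x)$ is jointly Gaussian and $\tilde x$ is linear in $x$, $\E[y\mid\tilde x]$ is exactly linear, so this is a well-specified linear regression with Gaussian design and sub-Gaussian residual of variance $\lesssim{\rm Var}(y)=\|\beta^*\|_2^2+\varepsilon^2\lesssim D^2+1$. The standard high-probability analysis of OLS then gives, once $n\gtrsim r+\log(1/\delta)$, that $(\hat\beta-\bar\beta)^T\E[\tilde x\tilde x^T](\hat\beta-\bar\beta)\lesssim(D^2+1)(r+\log(1/\delta))/n$. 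A one-line computation shows $\E[\tilde x\tilde x^T]=\hat B^T\Sigma_{\hat B}^{-1}\Sigma^*\Sigma_{\hat B}^{-1}\hat B$ and hence that this quantity equals $\|\Sigma^{*1/2}(\hat\theta-\bar\theta)\|_2^2$ exactly, producing the $O((D^2+1)^2 r\log(1/\delta)/n)$ term.

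\emph{Approximation error (refined MLE bound).} Here I abandon the generic bracketing-number MLE guarantee and use that the factor-model MLE is essentially PCA: writing $S=\tfrac1m\sum_i x_ix_i^T$ with ordered eigenpairs $(\lambda_j,u_j)$, the maximizer of the Gaussian log-likelihood over $\{BB^T+I_d\}$ is $\Sigma_{\hat B}=\sum_{j\le r}\max(\lambda_j,1)u_ju_j^T+\sum_{j>r}u_ju_j^T$, so ${\rm col}(\hat B)$ is the top-$r$ eigenspace of $S$. Sub-Gaussian sample-covariance concentration yields $\|S-\Sigma^*\|_2\lesssim(D^2+1)\sqrt{d\log(1/\delta)/m}$ with probability $1-\delta$ --- crucially scaling with $d$, not $dr$, which is the source of the improved rate --- and the assumed lower bound on $m$ makes this smaller than the index-$r$ eigengap $(\sigma_{\min}^{*})^2$ of $\Sigma^*$. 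A Davis--Kahan bound then controls $\|\sin\Theta({\rm col}(\hat B),{\rm col}(B^*))\|\lesssim\|S-\Sigma^*\|_2/(\sigma_{\min}^{*})^2$, and since $\Sigma^*\succeq I_d$ the map $W\mapsto\Sigma^{*1/2}W$ has condition number $\le\sqrt{D^2+1}$, so the principal angles in the $\Sigma^*$-metric obey the same bound. Finally $\theta^*=B^*(B^{*T}B^*+I_r)^{-1}\beta^*\in{\rm col}(B^*)$ with $\|\theta^*\|_2\le D^2$, so the approximation error --- the squared $\Sigma^*$-distance of $\theta^*$ to ${\rm col}(\hat B)$ --- is at most $\|\Sigma^{*1/2}\theta^*\|_2^2$ times that squared sine, which after collecting the $(D^2+1)$-, $D$- and $\sigma_{\min}^{*}$-factors is $\lesssim(D^2+1)^6(D^4+\sigma_{\min}^{*-4})\,d\log(1/\delta)/m$. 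Combining this with the estimation-error term gives the theorem.

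\emph{Expected main obstacle.} The delicate part is entirely the approximation-error step: one must (i) establish the explicit PCA/flooring form of the factor-analysis MLE, (ii) prove an \emph{operator-norm} deviation bound for $\Sigma_{\hat B}$ that does not pay the $\sqrt r$ factor a Frobenius or bracketing argument would incur, and (iii) propagate the conditioning of $B^*$ through the eigenspace perturbation and the change of metric --- which is where the $\sigma_{\min}^{*}$- and $D$-dependence, and the lower bounds on $m,n$ that keep all perturbations in the benign regime, genuinely enter. By contrast, the fast $r/n$ rate for the downstream step is routine once one recognizes that conditioning on $\hat B$ yields a well-specified linear regression whose design-metric coincides with the $\Sigma^*$-metric appearing in the excess risk.
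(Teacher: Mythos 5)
Your proposal is correct and rests on the same three pillars as the paper's proof: the closed-form OLS structure of the ERM step, the explicit PCA/flooring characterization of the factor-model MLE (the paper derives this via Ruhe's trace inequality in the proof of Lemma~\ref{Delta_lemma}), and an operator-norm covariance concentration bound combined with Davis--Kahan to obtain the $d/m$ scaling. The small differences are in the bookkeeping, and each buys a little. You split the error through the \emph{population} intermediate point $\bar\theta$ (the $\Sigma^*$-projection of $\theta^*$ onto $\operatorname{col}(\hat B)$), so the approximation term depends only on $m$ and the estimation term is a textbook well-specified Gaussian OLS bound; the paper instead decomposes the OLS estimator algebraically into an \emph{empirical}-projection bias plus a variance term (Lemmas~\ref{bias_term}, \ref{variance_term}), which pushes the $n\gtrsim r\log(1/\delta)$ precondition into the bias analysis. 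You apply Davis--Kahan directly to $\hat\Sigma$ versus $\Sigma^*$, reading off the eigengap $(\sigma^*_{\min})^2$, whereas the paper applies it to $\hat C^T\hat C$ versus $C^{*T}C^*$, reading the gap off $\lambda_r(C^{*T}C^*)$; your route is more direct and yields an equally valid (slightly tighter in some regimes) bound. You also keep the exact identity ${\rm Error}_\ell=\|\Sigma^{*1/2}(\hat\theta-\theta^*)\|_2^2$ rather than upper-bounding by $\|\Sigma^*\|\,\|\hat\theta-\theta^*\|_2^2$ at the outset, which trims a factor of $\|\Sigma^*\|$ from the estimation term. One imprecision worth noting: the stated lower bound $m\gtrsim(D^2+1)^2d\log(1/\delta)$ does not by itself force $\|\hat\Sigma-\Sigma^*\|$ below the eigengap $(\sigma^*_{\min})^2$ (no lower bound on $\sigma^*_{\min}$ is assumed), but this is harmless: the Davis--Kahan $\sin\Theta$ inequality holds regardless, and when $\sigma^*_{\min}$ is tiny the resulting blowup is exactly the $(D^4+\sigma^{*-4}_{\min})$ factor that the theorem statement carries, which makes the bound vacuous but still valid.
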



Theorem \ref{factor_fast_rate} shows the benefit of unsupervised pretraining in the following sense. Assuming $D$ and $\sigma^{*}_{\min}$ are both constants. The price paid for learning the loading matrix is $\Tilde{\mathcal{O}}(d/m) $, which is small when $m$ is very large. Notice that, since $x$ is a $d$-dimensional vector, the usual linear regression will have a risk of $\Tilde{\mathcal{O}}(d/n)$. In the risk bound provided by Theorem \ref{factor_fast_rate},  the risk related to $n$ scales as $\Tilde{\mathcal{O}}(r/n)$. Usually, the factor is assumed to be low-dimensional compared with the input ($d \gg r$). Then when $m \gg n$, the risk bound $\Tilde{\mathcal{O}}(d/m + r/n)$ is much better than $\Tilde{\mathcal{O}}(d/n)$.


\section{Pretraining via Gaussian Mixture Models}\label{gmm}
In this section, we show how pretraining using Gaussian Mixture Models (GMMs) can benefit the downstream classification tasks, under our theoretical framework.

\paragraph{Model setup.}
For the latent variable model, we consider a $d$-dimensional GMM with $K$ components and equal weights. To be specific, the latent variable $z$ that represents the cluster is sampled uniformly from $[K]$. In each cluster, the data is sampled from a standard Gaussian distribution, i.e., $x|z=i\sim\mN(u^* _i,I_d)$ for any $i\in[K]$. It then holds that
\begin{equation*}
x\sim\sum^K_{i=1}\frac{1}{K}\mN(u^* _i,I_d).
\end{equation*}
We denote by $\mathcal{U}$ the parameter space with each element consisting of $K$ centers ($d$-dimensional vectors). 

We assume that the set of parameters $\mathcal{U}$ satisfies the normalization condition---there exists $D>0$ such that for any $\mathbf{u}=\{u_i\}^K_{i=1}\in\mathcal{U}$, we have $\|u_i\|_2\leq D\sqrt{d\log K},~\forall i\in[K]$. We further assume the ground-truth centers $\{u^* _i\}^K_{i=1}\in\mathcal{U}$ satisfy the following separation condition.

\begin{assumption}[Separation condition]\label{gmm_separation}
The true parameters $\{u^* _i\}^K_{i=1}\in\mathcal{U}$ satisfies
\$
\|u^* _i-u^* _j\|_2\geq 100\sqrt{d\log K},~\forall i\neq j.
\$
\end{assumption}

For the downstream task, we consider the binary classification problems with label $y\in\{0,1\}$. We denote by $\Psi$ the set of $2^K$ classifiers such that for each $\psi\in\Psi$, and any $i \in [K]$, we have either $\P_{\psi}(y=1|z=i)=1-\varepsilon$ or $\P_{\psi}(y=0|z=i)=1-\varepsilon$, 
where $\varepsilon$ represents the noise. Then, the latent variable model and the prediction class are represented by $\mathcal{U}$ and $\Psi$, respectively. In the sequel, we consider the case where no side information is available, i.e., we only have access to i.i.d unlabeled data $\{x_i\}^m_{i=1}$ and i.i.d labeled data $\{x_j,y_j\}^n_{j=1}$. For classification problems, it is natural to consider the $0-1$ loss function $\ell(x,y):=\mathds{1}_{\{x\neq y\}}$ which is bounded by $1$.



\paragraph{Informative condition.} We prove that Assumption \ref{invariance} for the above model. We have the following guarantee.

\begin{lemma}\label{gmm_ti} 
Let $\tilde{\mathcal{U}} = \{\bu\in\mathcal{U} ~|~ \TV(p_{\bu}(x),p_{\bu^* }(x))\leq 1/(4K)\}$.
Under Assumption \ref{gmm_separation}, GMMs with parameters in $\tilde{\mathcal{U}}$ is $\mathcal{O}(1)$-informative with respect to the transformation group induced by downstream classification tasks.
\end{lemma}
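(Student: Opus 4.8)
The plan is to verify Assumption \ref{invariance} with $\kappa$ an absolute constant, the relevant transformation group being the symmetric group $S_K$. Concretely, I would let $\mathcal{T}_\Phi=S_K$ act on the centers by $(\pi\circ\bu)_i:=u_{\pi^{-1}(i)}$ and let $\mathcal{T}_\Psi=S_K$ act on a classifier $\psi$ by relabeling its cluster-to-label rule, $\P_{\pi\circ\psi}(y\mid z=i):=\P_\psi(y\mid z=\pi^{-1}(i))$. The compatibility identity \eqref{101701} is then immediate, since $\P_{\bu,\psi}(x,y)=\frac1K\sum_{i\in[K]}p_{\mathcal{N}(u_i,I_d)}(x)\,\P_\psi(y\mid z=i)$ and applying $\pi$ to both arguments only reindexes this sum; moreover $\bu^*\in\tilde{\mathcal{U}}$ and $\tilde{\mathcal{U}}$ is $S_K$-invariant (a permutation changes neither $p_\bu$ nor the norm bound defining $\mathcal{U}$). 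Since $\P_\bu(x,z=i)=\frac1K p_{\mathcal{N}(u_i,I_d)}(x)$, the left-hand side of \eqref{informative} is
\[\TV\big(\P_{\pi\circ\bu}(x,z),\P_{\bu^*}(x,z)\big)=\frac1K\sum_{i\in[K]}t_i,\qquad t_i:=\TV\big(\mathcal{N}(u_{\pi^{-1}(i)},I_d),\mathcal{N}(u^*_i,I_d)\big),\]
so the whole content of the lemma is to produce, for every $\bu\in\tilde{\mathcal{U}}$, a permutation $\pi$ with $\frac1K\sum_i t_i\le\kappa\,\TV(p_\bu,p_{\bu^*})$.

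Next I would build the matching $\pi$ out of the hypothesis $\TV(p_\bu,p_{\bu^*})\le 1/(4K)$ that defines $\tilde{\mathcal{U}}$. By Assumption \ref{gmm_separation} and Gaussian concentration of $\|X-u\|_2$ around $\sqrt d$, the balls $R_i:=B(u^*_i,c_1\sqrt{d\log K})$ are pairwise disjoint and each carries all but an $e^{-\Omega(d\log K)}$ fraction of the mass of $\mathcal{N}(u^*_i,I_d)$, so $p_{\bu^*}(R_i)\ge\frac1K\big(1-e^{-\Omega(d\log K)}\big)$; the closeness of the marginals then gives $p_\bu(R_i)\ge\frac1{2K}$, hence $\sum_j\mathcal{N}(u_j,I_d)(R_i)\ge\frac12$ and some component $u_j$ puts mass $\ge\frac1{2K}$ on $R_i$. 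A tail bound then forces $\|u_j-u^*_i\|_2\le\rho:=c_2\sqrt{d\log K}$ (a center at distance $>\rho$ from $u^*_i$ would put mass $\le e^{-\Omega(d\log K)}\ll\frac1{2K}$ on $R_i$). Choosing the constants so that $2\rho<100\sqrt{d\log K}$, Assumption \ref{gmm_separation} prevents any single component from being within $\rho$ of two distinct true centers, so the sets $S_i:=\{j:\|u_j-u^*_i\|_2\le\rho\}$ are nonempty and pairwise disjoint; with $K$ of them among exactly $K$ components each is a singleton, and $i\mapsto S_i$ defines a bijection $\pi^{-1}$ with $\|u_{\pi^{-1}(i)}-u^*_i\|_2\le\rho$ for all $i$.

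For the final step I would relate the two TV distances using the near-block structure of a well-separated mixture. Pick disjoint balls $R'_i:=B(u^*_i,c_3\sqrt{d\log K})$ with $c_3>c_2$ (so that $R'_i$ captures the matched component $\mathcal{N}(u_{\pi^{-1}(i)},I_d)$ up to an $e^{-\Omega(d\log K)}$ fraction of its mass) but still $2c_3<100$. On $R'_i$, every Gaussian other than $\mathcal{N}(u_{\pi^{-1}(i)},I_d)$ and $\mathcal{N}(u^*_i,I_d)$ is negligible, and crucially each off-diagonal difference $\mathcal{N}(u_{\pi^{-1}(i')},I_d)-\mathcal{N}(u^*_{i'},I_d)$ with $i'\ne i$ integrates over $R'_i$ to at most $e^{-\Omega(d\log K)}\,t_{i'}$: it is bounded pointwise on $R'_i$ by $\|u_{\pi^{-1}(i')}-u^*_{i'}\|_2$ times a factor exponentially small in the squared distance $\Omega(d\log K)$ from $R'_i$ to those centers, and $\|u_{\pi^{-1}(i')}-u^*_{i'}\|_2\lesssim\sqrt{d\log K}\,t_{i'}$; similarly $\int_{R'_i}|\mathcal{N}(u_{\pi^{-1}(i)},I_d)-\mathcal{N}(u^*_i,I_d)|\ge 2t_i-e^{-\Omega(d\log K)}t_i\ge t_i$. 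Restricting $\|p_\bu-p_{\bu^*}\|_1$ to the disjoint balls $R'_i$ then yields
\[\TV(p_\bu,p_{\bu^*})\ge\frac12\sum_{i\in[K]}\Big(\frac1K t_i-\frac1K e^{-\Omega(d\log K)}\sum_{i'}t_{i'}\Big)\ge\frac14\cdot\frac1K\sum_{i\in[K]}t_i,\]
using $K\,e^{-\Omega(d\log K)}\le\frac12$ for $K\ge2$; this is \eqref{informative} with $\kappa=4$.

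The step I expect to be the main obstacle is the second one, the combinatorial construction of $\pi$. It requires selecting several radius scales — $c_1$ for the concentration of the true components, $\rho=c_2\sqrt{d\log K}$ for the implication ``a component carrying $\Omega(1/K)$ mass in $R_i$ lies near $u^*_i$'', and $c_3$ for ``the matched component's mass is captured by $R'_i$'' — so that concentration, the pigeonhole bound $p_\bu(R_i)\ge\frac1{2K}$, the disjointness of the $S_i$ via $2\rho<100\sqrt{d\log K}$, and the ordering $c_2<c_3$ with $2c_3<100$ all hold at once; this is precisely where both the separation constant $100$ in Assumption \ref{gmm_separation} and the threshold $1/(4K)$ defining $\tilde{\mathcal{U}}$ get used, and getting the constants mutually compatible is the fiddly part. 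A secondary but important subtlety in the third step is that the off-diagonal errors must be controlled \emph{multiplicatively} (proportionally to $t_{i'}$), not merely as a tiny additive term, so that the bound does not degrade when $\sum_i t_i$ is itself exponentially small; this works because the difference of two nearby Gaussians evaluated far from both of their centers is small on two grounds simultaneously, the distance and the proximity of the centers.
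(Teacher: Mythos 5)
Your proposal tracks the paper's proof almost exactly: the matching permutation you construct from the hypothesis $\TV(p_{\bu},p_{\bu^*})\le 1/(4K)$ is the content of the paper's Lemma \ref{initial_match}, your localized lower bound on the diagonal term corresponds to Lemma \ref{lowerbound}, your multiplicative control of the off-diagonal terms corresponds to Lemma \ref{upperbound}, and the final step converts between $t_i=\TV(\mathcal{N}(u_{\pi^{-1}(i)},I_d),\mathcal{N}(u^*_i,I_d))$ and $\min\{\|u^*_i-u_i\|_2,1\}$ via the two-sided bound in Lemma \ref{tv_norm}. The one technical deviation is that you lower-bound the diagonal term by ``global TV minus tail mass outside $R'_i$,'' which forces you into the multiplicative tail estimate you correctly flag as the delicate point, whereas the paper's Lemma \ref{lowerbound} sidesteps this by integrating directly over a cap-shaped region $S\subset\Omega^*_i$ where $P^*_i>P_i$, giving a lower bound $\frac{1}{200}\min\{\|u^*_i-u_i\|_2,1\}$ without any need to track how the mixture's TV mass distributes across space.
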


\paragraph{Theoretical results}
We have the following theoretical guarantee.
\begin{theorem}\label{gmm_main}
Let $\hat\bu,\hat\psi$ be the outputs of Algorithm \ref{mle+erm}. Suppose that Assumption \ref{gmm_separation} holds and $m=\tilde{\Omega}(dK^3)$. Then, for the Gaussian mixture model with classification as downstream tasks, with probability at least $1-\delta$, the excess risk can be bounded as follows,
\begin{equation*}
{\rm Error}_{\ell}(\hat\bu,\hat\psi)\leq\tilde{\mathcal{O}}\bigg(\sqrt{\frac{dK}{m}}+\sqrt{\frac{K}{n}}\bigg),
\end{equation*}
Here $\tilde{\mathcal{O}}(\cdot)$ omits some constants and the polylogarithmic factors in $m, d, K, D, 1/\delta$.
\end{theorem}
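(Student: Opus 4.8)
The plan is to instantiate the general bound of Corollary~\ref{rc_covering} (equivalently Theorem~\ref{error_bound}) with the specific complexity measures of the Gaussian mixture setup, after checking that the informative condition needed to invoke that corollary is available. First I would note that Lemma~\ref{gmm_ti} only establishes the $\mathcal{O}(1)$-informative condition on the restricted set $\tilde{\mathcal{U}} = \{\bu\in\mathcal{U} : \TV(p_\bu(x),p_{\bu^*}(x))\le 1/(4K)\}$, so the first step is to argue that, under the sample-size assumption $m=\tilde\Omega(dK^3)$, the MLE output $\hat\bu$ lands in $\tilde{\mathcal{U}}$ with high probability. This follows from Theorem~\ref{tv_mle}: the TV error of MLE is $\tilde{\mathcal{O}}(\sqrt{\mathcal{C}_\Phi/m})$, and the log bracketing entropy of a $d$-dimensional GMM with $K$ centers bounded by $D\sqrt{d\log K}$ scales as $\tilde{\mathcal{O}}(dK)$, so the TV error is $\tilde{\mathcal{O}}(\sqrt{dK/m})$, which is below $1/(4K)$ precisely when $m=\tilde\Omega(dK^3)$. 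Hence $\hat\bu\in\tilde{\mathcal{U}}$ and we may legitimately apply the informative condition with $\kappa=\mathcal{O}(1)$.

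Next I would bound the two complexity terms appearing in Corollary~\ref{rc_covering}. The pretraining term is $\kappa L\sqrt{\frac{1}{m}\log N_{\b}(\mP_{\mathcal{X}}(\mathcal{U}),1/m)}$; with $L=1$ (the $0$--$1$ loss), $\kappa=\mathcal{O}(1)$, and $\log N_{\b} = \tilde{\mathcal{O}}(dK)$, this contributes $\tilde{\mathcal{O}}(\sqrt{dK/m})$. The downstream term is $L\sqrt{\frac{1}{n}\log N(\ell\circ\mathcal{G}_{\bu,\Psi},1/\sqrt n,\|\cdot\|_\infty)}$; here the key observation is that $\Psi$ is a \emph{finite} class of size $2^K$ (each of the $K$ clusters is independently assigned a majority label), so the covering number is at most $2^K$ and its log is $\mathcal{O}(K)$, uniformly over $\bu$. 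This yields $\tilde{\mathcal{O}}(\sqrt{K/n})$. Adding the $2L\sqrt{\frac{2}{n}\log(4/\delta)}$ term, also $\tilde{\mathcal{O}}(\sqrt{1/n})$, and combining gives the claimed bound ${\rm Error}_\ell(\hat\bu,\hat\psi)\le\tilde{\mathcal{O}}(\sqrt{dK/m}+\sqrt{K/n})$.

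The step I expect to be the main obstacle is the bracketing entropy estimate $\log N_{\b}(\mP_{\mathcal{X}}(\mathcal{U}),\epsilon) = \tilde{\mathcal{O}}(dK\log(1/\epsilon))$ for the mixture class with the stated norm bound on the centers. One must build an explicit $\epsilon$-bracket: cover each center $u_i$ (which lives in a Euclidean ball of radius $D\sqrt{d\log K}$) by an $\eta$-net in $\ell_2$, of size $(\mathrm{poly}(d,K,D)/\eta)^d$, then argue that replacing each center by its net point perturbs the mixture density in $L^1$ by $\mathcal{O}(K\eta)$ times a dimension-dependent factor, and finally convert this $L^1$ control into a genuine two-sided bracket by inflating with a Gaussian-tail envelope (using the normalization condition to control the tails, which is where $\sqrt{d\log K}$ enters). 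Taking logs gives $dK\log(\mathrm{poly}(d,K,D)/\epsilon)$, i.e.\ $\tilde{\mathcal{O}}(dK)$ at $\epsilon=1/m$. A secondary technical point is making the ``$\hat\bu\in\tilde{\mathcal{U}}$'' and ``informative condition holds'' events hold simultaneously with the ERM concentration event, which is handled by a union bound over the two high-probability events (pretraining and downstream), exactly as in the proof of Theorem~\ref{error_bound}. Everything else is a direct substitution into Corollary~\ref{rc_covering} together with the finiteness $|\Psi|=2^K$.
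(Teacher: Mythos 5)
Your proposal is correct and follows essentially the same route as the paper's proof: use $m = \tilde\Omega(dK^3)$ with Theorem~\ref{tv_mle} and the bracketing bound (Lemma~\ref{gmm_bn}, $\log N_{\b} = \tilde{\mathcal{O}}(dK)$) to place $\hat\bu$ in $\tilde{\mathcal{U}}$ so Lemma~\ref{gmm_ti} applies, then invoke the generic excess-risk bound with the finiteness $|\Psi| = 2^K$ to get $\tilde{\mathcal{O}}(\sqrt{K/n})$ for the downstream term. The only cosmetic difference is that you route the downstream complexity through Corollary~\ref{rc_covering} (a $\|\cdot\|_\infty$-covering bound, with covering number at most $2^K$) whereas the paper uses Theorem~\ref{error_bound} directly together with Massart's finite-class lemma (Lemma~\ref{gmm_rc}); these yield the same $\tilde{\mathcal{O}}(\sqrt{K/n})$.
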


Theorem \ref{gmm_main} shows the power of unsupervised pretraining under this setting in the following sense: Note that the number of parameters of a GMM is $dK$, therefore if we directly do classification without unsupervised pretraining, the risk will scale as $\tilde{\mathcal{O}}(\sqrt{dK/n})$. When $d$ is large and $m \gg n$, we achieve a better risk bound than supervised learning that only uses the labeled data.


\section{Pretraining via Contrastive Learning}\label{contrastive_learning}

For human beings, when given many pictures of different animals, we are able to infer which pictures show the same animals even if we do not have any prior knowledge about the animals. In this process, we inadvertently learn a representation for each picture that can be used to capture the similarity between different pictures. Contrastive learning mimics the way human learns. To be more specific, based on positive and negative pairs, contrastive learning learns to embed data into some space where similar sample pairs stay close to each other and dissimilar ones are far apart. In this section, we show how pretraining (learning the embedding function) can benefit the downstream linear regression tasks under our theoretical framework.

\paragraph{Model setup.}
In the setting of contrastive learning, we assume that $x$ and $x'$ are sampled independently from the same distribution $\P(x)$. The similarity between $x$ and $x'$ is captured by a representation function $f_{\theta^* }:\mathcal{X}\rightarrow \R^{r}$ in the following sense, 
\$
&\P(t=1\,|\,x,x')=\frac{1}{1+e^{-f_{\theta^* }(x)^Tf_{\theta^* }(x')}},\notag\\
&\P(t=-1\,|\,x,x')=\frac{1}{1+e^{f_{\theta^* }(x)^Tf_{\theta^* }(x')}}.
\$
Here $t$ is a random variable that labels the similarity between $x$ and $x'$. If the data pair $(x,x')$ is similar, then $t$ tends to be $1$. If the data pair $(x,x')$ is not similar (negative samples), then $t$ tends to be $-1$. We assume $(x,x',t)\sim\P_{f_{\theta^* }}(x,x',t)$. Here, $(x',t)$ can be viewed as side information. The latent variable $z$ is defined as $z:=f_{\theta^* }(x)+\mu$, where $\mu\sim\mN(0,I_r)$ is a Gaussian noise that is uncorrelated with $x$. We denote $(x,z)\sim\P_{f_{\theta^* }}(x,z)$.

For the downstream task, we consider the following linear regression problem
\$
y=\beta^{* T}z+\nu,
\$
where $\nu\sim\mN(0,1)$ is a Gaussian noise. We assume that the true parameters $\theta^* \in\Theta$ and $\beta^* \in\mathcal{B}$, which satisfy a standard normalization assumption, i.e., $\|f_{\theta}(x)\|_2\leq 1$ for any $\theta\in\Theta$ and $x\in\mathcal{X}$ and $\|\beta\|_2\leq D$ for any $\beta\in\mathcal{B}$. We have access to i.i.d unlabeled data $\{x_i,x'_i,t_i\}^{m}_{i=1}$ and i.i.d labeled data $\{x_j,y_j\}^n_{j=1}$. Here $(x'_i,t_i)$ is the side information corresponding to $x_i$.


In the sequel, we consider the squared loss function $\ell(x,y):=(y-x)^2$. We use the same form of truncated squared loss as in \eqref{eq:truncated_loss}.




\paragraph{Weakly informative condition.}
We first prove that the above model satisfies Assumption \ref{weak_invariance}:

\begin{lemma}\label{contrastive_ti}
Contrastive learning with linear regression as downstream tasks is $\kappa^{-1}$-weakly-informative, where
\$
\kappa=c_3\cdot\sqrt{\frac{1}{\sigma_{\min}(\E[f_{\theta^* }(x)f_{\theta^* }(x)^{T}])}}.
\$
Here $c_3$ is an absolute constant.
\end{lemma}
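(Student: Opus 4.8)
The plan is to verify the defining inequality of the $\kappa^{-1}$-weakly-informative condition: for any $\theta\in\Theta$ we must produce $\beta\in\mathcal{B}$ with
\%\label{eq:goal_plan}
\TV\big(\P_{f_\theta,\beta}(x,y),\P_{f_{\theta^* },\beta^* }(x,y)\big)\leq \kappa\cdot H\big(\P_{f_\theta}(x,x',t),\P_{f_{\theta^* }}(x,x',t)\big).
\%
The key structural observation is that the contrastive model is \emph{identifiable up to an orthogonal transformation}: the side-information likelihood $\P_{f_\theta}(x,x',t)$ depends on $\theta$ only through the Gram-type quantity $f_\theta(x)^Tf_\theta(x')$, so if this likelihood is close to that of $\theta^*$ then $f_\theta(x)^Tf_\theta(x')\approx f_{\theta^* }(x)^Tf_{\theta^* }(x')$ for $\P(x)\otimes\P(x')$-typical pairs, which in turn forces $f_\theta(x)\approx R f_{\theta^* }(x)$ for some orthogonal $R$ (pinned down by matching inner products). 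Since $z=f_\theta(x)+\mu$ with $\mu\sim\mathcal N(0,I_r)$ independent of $x$, applying the rotation $R$ to $z$ is equivalent to replacing $f_{\theta^* }$ by $f_\theta$ and simultaneously replacing $\beta^*$ by $\beta:=R\beta^*$ (which lies in $\mathcal{B}$ because $\|R\beta^*\|_2=\|\beta^*\|_2\le D$). This is exactly the transformation that the weakly-informative condition is designed to absorb, and it is the choice of $\beta$ we will use.

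The main steps, in order, are: (i) expand the chosen joint law $\P_{f_\theta,\beta}(x,y)=\int \P_{f_\theta}(x,z)\P_\beta(y|z)\,dz$ and, using $z|x\sim\mathcal N(f_\theta(x),I_r)$ and $y|z\sim\mathcal N(\beta^Tz,1)$, note that $(x,y)$ under $\P_{f_\theta,\beta}$ has $y|x\sim\mathcal N\big(\beta^Tf_\theta(x),\,1+\|\beta\|_2^2\big)$ while $x$ keeps its fixed marginal $\P(x)$; hence $\TV$ between the two joint laws reduces to a one-dimensional Gaussian-mean comparison pointwise in $x$, giving $\TV(\P_{f_\theta,\beta}(x,y),\P_{f_{\theta^* },\beta^* }(x,y))\lesssim \E_{x}\big|\beta^Tf_\theta(x)-\beta^{*T}f_{\theta^* }(x)\big|$ (the variance matches since $\|\beta\|_2=\|\beta^*\|_2$). (ii) Bound $\E_x|\beta^Tf_\theta(x)-\beta^{*T}f_{\theta^* }(x)|=\E_x|\beta^{*T}(R^Tf_\theta(x)-f_{\theta^* }(x))|\le D\cdot\big(\E_x\|R^Tf_\theta(x)-f_{\theta^* }(x)\|_2^2\big)^{1/2}$ by Cauchy–Schwarz. (iii) Relate $\E_x\|R^Tf_\theta(x)-f_{\theta^* }(x)\|_2^2$ to a mismatch in inner products: expanding the square, this quantity equals $2\,\E_{x,x'}\big[f_{\theta^* }(x)^Tf_{\theta^* }(x') - f_{\theta^* }(x)^TR^Tf_\theta(x')\big]$-type cross terms, and choosing $R$ as the minimizer of a Procrustes problem one gets $\E_x\|R^Tf_\theta(x)-f_{\theta^* }(x)\|_2^2\lesssim \sigma_{\min}(\E[f_{\theta^* }(x)f_{\theta^* }(x)^T])^{-1}\cdot \E_{x,x'}\big(f_\theta(x)^Tf_\theta(x') - f_{\theta^* }(x)^Tf_{\theta^* }(x')\big)^2$ — this is where the $\sigma_{\min}$ factor in $\kappa$ enters, by inverting the second-moment matrix of $f_{\theta^* }$. (iv) Finally, control $\E_{x,x'}\big(f_\theta(x)^Tf_\theta(x') - f_{\theta^* }(x)^Tf_{\theta^* }(x')\big)^2$ by the squared Hellinger distance $H^2\big(\P_{f_\theta}(x,x',t),\P_{f_{\theta^* }}(x,x',t)\big)$: since $t|x,x'$ is a Bernoulli-type label whose logit is exactly $f_\theta(x)^Tf_\theta(x')$, the squared Hellinger distance between the two conditional laws of $t$ is, up to constants, the squared difference of the logits (both logits are bounded by $1$ in absolute value by the normalization $\|f_\theta\|_2\le1$, so the logistic link is bi-Lipschitz on this range), and then $H^2$ of the joint $(x,x',t)$ dominates $\E_{x,x'}H^2(\text{of }t\mid x,x')$ because the $(x,x')$ marginal is shared.

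Chaining (i)–(iv) yields \eqref{eq:goal_plan} with $\kappa=c_3\,\sigma_{\min}(\E[f_{\theta^* }(x)f_{\theta^* }(x)^T])^{-1/2}$, matching the statement. The step I expect to be the main obstacle is (iii): turning an average agreement of \emph{inner products} $f_\theta(x)^Tf_\theta(x')$ into an average agreement of the \emph{vectors} $f_\theta(x)$ (after an optimal orthogonal alignment $R$). This requires showing that the Gram operator is well-conditioned on the relevant subspace — concretely that $\E[f_{\theta^* }(x)f_{\theta^* }(x)^T]\succ 0$, which is precisely the quantitative content hidden in the constant $\kappa$ — and handling the fact that $R$ must be chosen uniformly well for all $x$ simultaneously rather than per-point; a Procrustes/SVD argument on the cross-covariance $\E[f_\theta(x)f_{\theta^* }(x)^T]$ should make this rigorous, but it is the delicate part. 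Steps (i), (ii), (iv) are routine Gaussian and logistic-link computations.
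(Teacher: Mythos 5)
Your proposal is correct and follows essentially the same route as the paper: produce an orthogonal alignment $O$ from the SVD of the cross-covariance $\E[f_\theta(x)f_{\theta^*}(x)^T]$ (Procrustes), relate the vector discrepancy $\E\|Of_\theta(x)-f_{\theta^*}(x)\|^2$ to the inner-product discrepancy $\E(f_\theta(x)^Tf_\theta(x')-f_{\theta^*}(x)^Tf_{\theta^*}(x'))^2$ with a loss of $\sigma_{\min}(\E[f_{\theta^*}f_{\theta^*}^T])$ (the paper's Lemma~\ref{ti_1}), and lower-bound $H^2$ by that inner-product discrepancy via the bi-Lipschitzness of the logistic link on $[-1,1]$. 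The only cosmetic difference is in your steps (i)--(ii): the paper uses the data-processing inequality to reduce $\TV$ on $(x,y)$ to $\TV$ on $(x,z)$ and thereby never sees $\beta$ in the TV bound, whereas you compute the conditional $y\mid x\sim\mathcal N(\beta^Tf_\theta(x),1+\|\beta\|_2^2)$ directly; note that as literally written your step (ii) produces a stray factor of $D$ from $\|\beta^*\|_2\le D$, and you need the $1/\sqrt{1+\|\beta^*\|_2^2}$ normalization (the variance matching you flag in (i)) to cancel it so the resulting $\kappa$ is $D$-free as claimed.
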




\paragraph{Theoretical results.}
We define a set of density functions $\mP_{\mathcal{X}\times\mathcal{S}}(\mathcal{F}_{\theta}):=\{p_{f_{\theta}}(x,x',t)\,|\,\theta\in\Theta\}$. We then have the following theoretical guarantee.

\begin{theorem}\label{contrastive_main}
We consider Algorithm \ref{mle+erm} with truncated squared loss \eqref{eq:truncated_loss}
where $L=36(D^2+1)\log n$. Let $\hat\theta, \hat\beta$ be the outputs of Algorithm \ref{mle+erm}. Then, for contrastive learning with linear regression as downstream tasks, with probability at least $1-\delta$, the excess risk can be bounded as follows,
\begin{align*}
{\rm Error}_{\ell}(\hat \theta,\hat\beta)\leq\Tilde{\mathcal{O}}\bigg(\kappa L\sqrt{\frac{\log N_{\b}\big(\mP_{\mathcal{X}\times\mathcal{S}}(\mathcal{F}_{\theta}),1/m^2\big)}{m}}+L\sqrt{\frac{1}{n}}\bigg),
\end{align*}
where $L=36(D^2+1)\log n$ and $\kappa$ is specified in Lemma \ref{contrastive_ti}.
Here $\tilde{\mathcal{O}}(\cdot)$ omits some constants and the polylogarithmic factors in $1/\delta$.
\end{theorem}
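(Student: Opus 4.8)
The plan is to transfer the excess risk from the true squared loss to the bounded truncated loss $\tilde\ell$, apply the generic guarantee of Theorem~\ref{weak_error_bound} (available here because Lemma~\ref{contrastive_ti} verifies the $\kappa^{-1}$-weakly-informative condition), and then evaluate its three terms on the contrastive instance. For the transfer, decompose
\[
{\rm Error}_{\ell}(\hat\theta,\hat\beta)={\rm Error}_{\tilde\ell}(\hat\theta,\hat\beta)+\big(\E_{\ell}[g_{\hat\theta,\hat\beta}]-\E_{\tilde\ell}[g_{\hat\theta,\hat\beta}]\big)+\big(\E_{\tilde\ell}[g_{\theta^*,\psi^*}]-\E_{\ell}[g_{\theta^*,\psi^*}]\big),
\]
where $\E_{\ell}[g]:=\E_{\P_{\theta^*,\psi^*}}[\ell(g(x),y)]$ and similarly for $\tilde\ell$. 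The last bracket is $\le 0$ since $\tilde\ell\le\ell$ pointwise. The middle bracket equals $\E[((y-\hat\beta^{\top}f_{\hat\theta}(x))^2-L)_+]$; since $y-\hat\beta^{\top}f_{\hat\theta}(x)=(\beta^{*\top}f_{\theta^*}(x)-\hat\beta^{\top}f_{\hat\theta}(x))+(\beta^{*\top}\mu+\nu)$ has a mean part bounded by $2D$ and a Gaussian part of variance $\le D^2+1$, the squared residual is sub-exponential with parameter $O(D^2+1)$ uniformly over $(\hat\theta,\hat\beta)\in\Theta\times\mathcal B$, so with $L=36(D^2+1)\log n$ this tail expectation is $O((D^2+1)n^{-c})$ for a large absolute constant $c$ and is negligible against $L/\sqrt n$.

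It therefore suffices to bound ${\rm Error}_{\tilde\ell}(\hat\theta,\hat\beta)$. Since $\tilde\ell$ takes values in $[0,L]$ and the model is $\kappa^{-1}$-weakly-informative, Theorem~\ref{weak_error_bound} gives, with probability $\ge 1-\delta$,
\[
{\rm Error}_{\tilde\ell}(\hat\theta,\hat\beta)\le 2\max_{\theta\in\Theta}R_n(\tilde\ell\circ\mathcal G_{\theta,\mathcal B})+12\kappa L\sqrt{\tfrac1m\log\tfrac{2N_{\b}(\mP_{\mathcal X\times\mathcal S}(\mathcal F_\theta),1/m^2)}{\delta}}+2L\sqrt{\tfrac2n\log\tfrac4\delta}.
\]
The middle term is already the claimed unlabeled-data contribution: unlike in Theorem~\ref{error_bound}, the bracketing radius is $1/m^2$, which is exactly the scale at which the Hellinger-distance MLE bound underlying Theorem~\ref{weak_error_bound} (Appendix~\ref{main4}) controls $H(\P_{\hat\theta}(x,s),\P_{\theta^*}(x,s))$, and $\kappa$ is supplied by Lemma~\ref{contrastive_ti}. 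For the Rademacher term, under $\P_{\theta,\beta}$ we have $z=f_\theta(x)+\mu$ and $y=\beta^{\top}z+\nu$, so the squared-loss-optimal predictor is the linear functional $g_{\theta,\beta}(x)=\E[y\mid x]=\beta^{\top}f_\theta(x)$ with $\|f_\theta(x)\|_2\le1$ and $\|\beta\|_2\le D$; a standard bounded-linear-class estimate yields $R_n(\mathcal G_{\theta,\mathcal B})\le 2D/\sqrt n$ with no dependence on $r$, and since $\tilde\ell$ is $2\sqrt L$-Lipschitz in its first argument, the Ledoux--Talagrand contraction inequality gives $R_n(\tilde\ell\circ\mathcal G_{\theta,\mathcal B})\le 4\sqrt L\,D/\sqrt n\le L/\sqrt n$, using $L\ge 36D^2$. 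Combining the three displays yields ${\rm Error}_{\ell}(\hat\theta,\hat\beta)\le\tilde{\mathcal O}(\kappa L\sqrt{\log N_{\b}(\mP_{\mathcal X\times\mathcal S}(\mathcal F_\theta),1/m^2)/m}+L/\sqrt n)$, with $\tilde{\mathcal O}$ absorbing polylog-in-$1/\delta$ factors and the $n^{-c}$ truncation remainder.

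I expect the routine parts to be the Rademacher bound (contraction plus a bounded linear class) and, granting Theorem~\ref{weak_error_bound}, substituting $\kappa$ from Lemma~\ref{contrastive_ti}. The main obstacles are twofold. First, the Hellinger-type MLE guarantee with the $1/m^2$ bracketing scale that powers Theorem~\ref{weak_error_bound}: relaxing TV to Hellinger in Assumption~\ref{weak_invariance} is what makes this go through, since the downstream comparison only tolerates Hellinger (not TV) control of the marginal estimation error. Second, the truncation bookkeeping: one must check that $L=36(D^2+1)\log n$ is simultaneously large enough that the uniform sub-exponential tail $\E[((y-g)^2-L)_+]$ is $o(1/\sqrt n)$ over all of $\Theta\times\mathcal B$ --- including for the closed-form squared-loss predictor $g_{\theta,\beta}$ used inside the ERM step, which need not be the Bayes predictor for $\tilde\ell$ and hence requires a short extra argument when invoking the weakly-informative/MLE step --- yet small enough that the $\sqrt L$ inflation from the Lipschitz constant keeps the labeled-data term at order $L/\sqrt n$.
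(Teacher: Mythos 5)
Your plan matches the paper's proof step for step: decompose ${\rm Error}_\ell$ into the truncation remainder plus ${\rm Error}_{\tilde\ell}$, bound the remainder by the Gaussian tail of $y-g_{\theta,\beta}(x)$ (mean $\le 2D$, variance $\le D^2+1$) using $L=36(D^2+1)\log n$ as in Lemma~\ref{contrastive_lemma1}, invoke Theorem~\ref{weak_error_bound} via Lemma~\ref{contrastive_ti}, and bound the Rademacher term by contraction ($\tilde\ell$ is $2\sqrt L$-Lipschitz) composed with the linear-class bound $R_n(\mathcal G_{\theta,\mathcal B})\le 2D/\sqrt n$ of Lemma~\ref{factor_rc}. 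The subtlety you flag about the $\tilde\ell$-optimal predictor agreeing with the closed-form squared-loss predictor $\beta^\top f_\theta(x)$ is exactly what the paper resolves through Lemma~\ref{same_prediction} (the argument carries over since $y\mid x$ is Gaussian here too), so your proposal is correct and essentially identical to the paper's.
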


Note that the excess risk of directly training with labeled data strongly depends on the complexity of the function class $\mathcal{F}_{\theta}$. In the case that $m\gg n$, the excess risk of Theorem \ref{contrastive_main} scales as $\tilde{O}(\sqrt{1/n})$, which beats the pure supervised learning if the complexity of $\mathcal{F}_{\theta}$ is quite large. Thus, the utility of unsupervised pretraining is revealed for contrastive learning.

\section{Conclusions}
This paper proposes a generic theoretic framework for explaining the statistical benefits of unsupervised pretraining.
We study the natural scheme of using MLE for unsupervised pretraining and ERM for downstream task learning. We identify a natural ``informative'' condition, under which our algorithm achieves an excess risk bound that significantly improves over the baseline achieved by purely supervised learning in the typical practical regimes.
We further instantiate our theoretical framework with three concrete approaches for unsupervised pretraining and provide corresponding guarantees.

\newpage
\bibliography{reference}

\begin{thebibliography}{49}
\providecommand{\natexlab}[1]{#1}
\providecommand{\url}[1]{\texttt{#1}}
\expandafter\ifx\csname urlstyle\endcsname\relax
  \providecommand{\doi}[1]{doi: #1}\else
  \providecommand{\doi}{doi: \begingroup \urlstyle{rm}\Url}\fi

\bibitem[Agarwal et~al.(2020)Agarwal, Kakade, Krishnamurthy, and
  Sun]{agarwal2020flambe}
Agarwal, A., Kakade, S., Krishnamurthy, A., and Sun, W.
\newblock Flambe: Structural complexity and representation learning of low rank
  mdps.
\newblock \emph{Advances in neural information processing systems},
  33:\penalty0 20095--20107, 2020.

\bibitem[Arora et~al.(2019)Arora, Khandeparkar, Khodak, Plevrakis, and
  Saunshi]{arora2019theoretical}
Arora, S., Khandeparkar, H., Khodak, M., Plevrakis, O., and Saunshi, N.
\newblock A theoretical analysis of contrastive unsupervised representation
  learning.
\newblock \emph{arXiv preprint arXiv:1902.09229}, 2019.

\bibitem[Baevski et~al.(2020)Baevski, Zhou, Mohamed, and
  Auli]{baevski2020wav2vec}
Baevski, A., Zhou, Y., Mohamed, A., and Auli, M.
\newblock wav2vec 2.0: A framework for self-supervised learning of speech
  representations.
\newblock \emph{Advances in Neural Information Processing Systems},
  33:\penalty0 12449--12460, 2020.

\bibitem[Bai \& Ng(2002)Bai and Ng]{bai2002determining}
Bai, J. and Ng, S.
\newblock Determining the number of factors in approximate factor models.
\newblock \emph{Econometrica}, 70\penalty0 (1):\penalty0 191--221, 2002.

\bibitem[Baxter(2000)]{Baxter_2000}
Baxter, J.
\newblock A model of inductive bias learning.
\newblock \emph{Journal of Artificial Intelligence Research}, 12:\penalty0
  149--198, mar 2000.
\newblock \doi{10.1613/jair.731}.
\newblock URL \url{https://doi.org/10.1613%2Fjair.731}.

\bibitem[Belkin et~al.(2006)Belkin, Niyogi, and Sindhwani]{belkin2006manifold}
Belkin, M., Niyogi, P., and Sindhwani, V.
\newblock Manifold regularization: A geometric framework for learning from
  labeled and unlabeled examples.
\newblock \emph{Journal of machine learning research}, 7\penalty0 (11), 2006.

\bibitem[Brown et~al.(1992)Brown, Della~Pietra, Desouza, Lai, and
  Mercer]{brown1992class}
Brown, P.~F., Della~Pietra, V.~J., Desouza, P.~V., Lai, J.~C., and Mercer,
  R.~L.
\newblock Class-based n-gram models of natural language.
\newblock \emph{Computational linguistics}, 18\penalty0 (4):\penalty0 467--480,
  1992.

\bibitem[Caron et~al.(2019)Caron, Bojanowski, Mairal, and
  Joulin]{caron2019unsupervised}
Caron, M., Bojanowski, P., Mairal, J., and Joulin, A.
\newblock Unsupervised pre-training of image features on non-curated data.
\newblock In \emph{Proceedings of the IEEE/CVF International Conference on
  Computer Vision}, pp.\  2959--2968, 2019.

\bibitem[Chen et~al.(2021)Chen, Chi, Fan, and Ma]{9584021}
Chen, Y., Chi, Y., Fan, J., and Ma, C.
\newblock 2021.

\bibitem[Dai et~al.(2021)Dai, Cai, Lin, and Chen]{dai2021up}
Dai, Z., Cai, B., Lin, Y., and Chen, J.
\newblock Up-detr: Unsupervised pre-training for object detection with
  transformers.
\newblock In \emph{Proceedings of the IEEE/CVF conference on computer vision
  and pattern recognition}, pp.\  1601--1610, 2021.

\bibitem[Davis \& Kahan(1970)Davis and Kahan]{davis1970rotation}
Davis, C. and Kahan, W.~M.
\newblock The rotation of eigenvectors by a perturbation. iii.
\newblock \emph{SIAM Journal on Numerical Analysis}, 7\penalty0 (1):\penalty0
  1--46, 1970.

\bibitem[Devlin et~al.(2018)Devlin, Chang, Lee, and Toutanova]{devlin2018bert}
Devlin, J., Chang, M.-W., Lee, K., and Toutanova, K.
\newblock Bert: Pre-training of deep bidirectional transformers for language
  understanding.
\newblock \emph{arXiv preprint arXiv:1810.04805}, 2018.

\bibitem[Devroye et~al.(2018)Devroye, Mehrabian, and Reddad]{devroye2018total}
Devroye, L., Mehrabian, A., and Reddad, T.
\newblock The total variation distance between high-dimensional gaussians.
\newblock \emph{arXiv preprint arXiv:1810.08693}, 2018.

\bibitem[Du et~al.(2020)Du, Hu, Kakade, Lee, and Lei]{du2020few}
Du, S.~S., Hu, W., Kakade, S.~M., Lee, J.~D., and Lei, Q.
\newblock Few-shot learning via learning the representation, provably.
\newblock \emph{arXiv preprint arXiv:2002.09434}, 2020.

\bibitem[Erhan et~al.(2010)Erhan, Courville, Bengio, and
  Vincent]{erhan2010does}
Erhan, D., Courville, A., Bengio, Y., and Vincent, P.
\newblock Why does unsupervised pre-training help deep learning?
\newblock In \emph{Proceedings of the thirteenth international conference on
  artificial intelligence and statistics}, pp.\  201--208. JMLR Workshop and
  Conference Proceedings, 2010.

\bibitem[Fan et~al.(2021)Fan, Wang, Zhong, and Zhu]{fan2021robust}
Fan, J., Wang, K., Zhong, Y., and Zhu, Z.
\newblock Robust high dimensional factor models with applications to
  statistical machine learning.
\newblock \emph{Statistical science: a review journal of the Institute of
  Mathematical Statistics}, 36\penalty0 (2):\penalty0 303, 2021.

\bibitem[Forni et~al.(2005)Forni, Hallin, Lippi, and
  Reichlin]{forni2005generalized}
Forni, M., Hallin, M., Lippi, M., and Reichlin, L.
\newblock The generalized dynamic factor model: one-sided estimation and
  forecasting.
\newblock \emph{Journal of the American statistical association}, 100\penalty0
  (471):\penalty0 830--840, 2005.

\bibitem[HaoChen et~al.(2021)HaoChen, Wei, Gaidon, and Ma]{haochen2021provable}
HaoChen, J.~Z., Wei, C., Gaidon, A., and Ma, T.
\newblock Provable guarantees for self-supervised deep learning with spectral
  contrastive loss.
\newblock \emph{Advances in Neural Information Processing Systems},
  34:\penalty0 5000--5011, 2021.

\bibitem[Jin et~al.(2019)Jin, Netrapalli, Ge, Kakade, and Jordan]{jin2019short}
Jin, C., Netrapalli, P., Ge, R., Kakade, S.~M., and Jordan, M.~I.
\newblock A short note on concentration inequalities for random vectors with
  subgaussian norm.
\newblock \emph{arXiv preprint arXiv:1902.03736}, 2019.

\bibitem[Joachims et~al.(1999)]{joachims1999transductive}
Joachims, T. et~al.
\newblock Transductive inference for text classification using support vector
  machines.
\newblock In \emph{Icml}, volume~99, pp.\  200--209, 1999.

\bibitem[Lawley \& Maxwell(1971)Lawley and Maxwell]{lawley1971factor}
Lawley, D.~N. and Maxwell, A.~E.
\newblock Factor analysis as a statistical method.
\newblock 1971.

\bibitem[Lawrence \& Jordan(2004)Lawrence and Jordan]{lawrence2004semi}
Lawrence, N. and Jordan, M.
\newblock Semi-supervised learning via gaussian processes.
\newblock \emph{Advances in neural information processing systems}, 17, 2004.

\bibitem[Ledoux \& Talagrand(2013)Ledoux and Talagrand]{ledoux1991probability}
Ledoux, M. and Talagrand, M.
\newblock \emph{Probability in Banach Spaces: isoperimetry and processes}.
\newblock Springer Science \& Business Media, 2013.

\bibitem[Lee et~al.(2021)Lee, Lei, Saunshi, and Zhuo]{lee2021predicting}
Lee, J.~D., Lei, Q., Saunshi, N., and Zhuo, J.
\newblock Predicting what you already know helps: Provable self-supervised
  learning.
\newblock \emph{Advances in Neural Information Processing Systems},
  34:\penalty0 309--323, 2021.

\bibitem[Liu et~al.(2022)Liu, Chung, Szepesvari, and Jin]{pmlr-v178-liu22f}
Liu, Q., Chung, A., Szepesvari, C., and Jin, C.
\newblock When is partially observable reinforcement learning not scary?
\newblock In Loh, P.-L. and Raginsky, M. (eds.), \emph{Proceedings of Thirty
  Fifth Conference on Learning Theory}, volume 178 of \emph{Proceedings of
  Machine Learning Research}, pp.\  5175--5220. PMLR, 02--05 Jul 2022.

\bibitem[Ma et~al.(2018)Ma, Wang, Chi, and Chen]{ma2018implicit}
Ma, C., Wang, K., Chi, Y., and Chen, Y.
\newblock Implicit regularization in nonconvex statistical estimation: Gradient
  descent converges linearly for phase retrieval and matrix completion.
\newblock In \emph{International Conference on Machine Learning}, pp.\
  3345--3354. PMLR, 2018.

\bibitem[Marshall et~al.(2011)Marshall, Olkin, and Arnold]{marshall11}
Marshall, A.~W., Olkin, I., and Arnold, B.~C.
\newblock \emph{Inequalities: Theory of Majorization and its Applications},
  volume 143.
\newblock Springer, second edition, 2011.
\newblock \doi{10.1007/978-0-387-68276-1}.

\bibitem[Maurer et~al.(2016)Maurer, Pontil, and
  Romera-Paredes]{JMLR:v17:15-242}
Maurer, A., Pontil, M., and Romera-Paredes, B.
\newblock The benefit of multitask representation learning.
\newblock \emph{Journal of Machine Learning Research}, 17\penalty0
  (81):\penalty0 1--32, 2016.
\newblock URL \url{http://jmlr.org/papers/v17/15-242.html}.

\bibitem[Melamud et~al.(2016)Melamud, Goldberger, and
  Dagan]{melamud2016context2vec}
Melamud, O., Goldberger, J., and Dagan, I.
\newblock context2vec: Learning generic context embedding with bidirectional
  lstm.
\newblock In \emph{Proceedings of the 20th SIGNLL conference on computational
  natural language learning}, pp.\  51--61, 2016.

\bibitem[Mikolov et~al.(2013)Mikolov, Sutskever, Chen, Corrado, and
  Dean]{mikolov2013distributed}
Mikolov, T., Sutskever, I., Chen, K., Corrado, G.~S., and Dean, J.
\newblock Distributed representations of words and phrases and their
  compositionality.
\newblock \emph{Advances in neural information processing systems}, 26, 2013.

\bibitem[Peter et~al.(2018)Peter, Neumann, Iyyer, Gardner, Clark, Lee, and
  Zettlemoyer]{peters1802deep}
Peter, M.~E., Neumann, M., Iyyer, M., Gardner, M., Clark, C., Lee, K., and
  Zettlemoyer, L.
\newblock Deep contextualized word representations.
\newblock \emph{arXiv preprint arXiv:1802.05365}, 2018.

\bibitem[Radford et~al.(2018)Radford, Narasimhan, Salimans, Sutskever,
  et~al.]{radford2018improving}
Radford, A., Narasimhan, K., Salimans, T., Sutskever, I., et~al.
\newblock Improving language understanding by generative pre-training.
\newblock 2018.

\bibitem[Ratsaby \& Venkatesh(1995)Ratsaby and Venkatesh]{ratsaby1995learning}
Ratsaby, J. and Venkatesh, S.~S.
\newblock Learning from a mixture of labeled and unlabeled examples with
  parametric side information.
\newblock In \emph{Proceedings of the eighth annual conference on Computational
  learning theory}, pp.\  412--417, 1995.

\bibitem[Saunshi et~al.(2020)Saunshi, Malladi, and
  Arora]{saunshi2020mathematical}
Saunshi, N., Malladi, S., and Arora, S.
\newblock A mathematical exploration of why language models help solve
  downstream tasks.
\newblock \emph{arXiv preprint arXiv:2010.03648}, 2020.

\bibitem[Saunshi et~al.(2022)Saunshi, Ash, Goel, Misra, Zhang, Arora, Kakade,
  and Krishnamurthy]{saunshi2022understanding}
Saunshi, N., Ash, J., Goel, S., Misra, D., Zhang, C., Arora, S., Kakade, S.,
  and Krishnamurthy, A.
\newblock Understanding contrastive learning requires incorporating inductive
  biases.
\newblock \emph{arXiv preprint arXiv:2202.14037}, 2022.

\bibitem[Schmitt(1992)]{schmitt1992perturbation}
Schmitt, B.~A.
\newblock Perturbation bounds for matrix square roots and pythagorean sums.
\newblock \emph{Linear algebra and its applications}, 174:\penalty0 215--227,
  1992.

\bibitem[Schneider et~al.(2019)Schneider, Baevski, Collobert, and
  Auli]{schneider2019wav2vec}
Schneider, S., Baevski, A., Collobert, R., and Auli, M.
\newblock wav2vec: Unsupervised pre-training for speech recognition.
\newblock \emph{arXiv preprint arXiv:1904.05862}, 2019.

\bibitem[Song et~al.(2019)Song, Tan, Qin, Lu, and Liu]{song2019mass}
Song, K., Tan, X., Qin, T., Lu, J., and Liu, T.-Y.
\newblock Mass: Masked sequence to sequence pre-training for language
  generation.
\newblock \emph{arXiv preprint arXiv:1905.02450}, 2019.

\bibitem[Szummer \& Jaakkola(2002)Szummer and Jaakkola]{szummer2002information}
Szummer, M. and Jaakkola, T.
\newblock Information regularization with partially labeled data.
\newblock \emph{Advances in Neural Information processing systems}, 15, 2002.

\bibitem[Tosh et~al.(2021{\natexlab{a}})Tosh, Krishnamurthy, and
  Hsu]{tosh2021contrastive}
Tosh, C., Krishnamurthy, A., and Hsu, D.
\newblock Contrastive learning, multi-view redundancy, and linear models.
\newblock In \emph{Algorithmic Learning Theory}, pp.\  1179--1206. PMLR,
  2021{\natexlab{a}}.

\bibitem[Tosh et~al.(2021{\natexlab{b}})Tosh, Krishnamurthy, and
  Hsu]{tosh2021contrastive2}
Tosh, C., Krishnamurthy, A., and Hsu, D.
\newblock Contrastive estimation reveals topic posterior information to linear
  models.
\newblock \emph{J. Mach. Learn. Res.}, 22:\penalty0 281--1, 2021{\natexlab{b}}.

\bibitem[Tripuraneni et~al.(2020)Tripuraneni, Jordan, and
  Jin]{tripuraneni2020theory}
Tripuraneni, N., Jordan, M., and Jin, C.
\newblock On the theory of transfer learning: The importance of task diversity.
\newblock \emph{Advances in Neural Information Processing Systems},
  33:\penalty0 7852--7862, 2020.

\bibitem[Tripuraneni et~al.(2021)Tripuraneni, Jin, and
  Jordan]{tripuraneni2021provable}
Tripuraneni, N., Jin, C., and Jordan, M.
\newblock Provable meta-learning of linear representations.
\newblock In \emph{International Conference on Machine Learning}, pp.\
  10434--10443. PMLR, 2021.

\bibitem[Van~de Geer(2000)]{van2000empirical}
Van~de Geer, S.
\newblock \emph{Empirical Processes in M-estimation}, volume~6.
\newblock Cambridge university press, 2000.

\bibitem[Vershynin(2018)]{vershynin2018high}
Vershynin, R.
\newblock \emph{High-dimensional probability: An introduction with applications
  in data science}, volume~47.
\newblock Cambridge university press, 2018.

\bibitem[Wainwright(2019)]{wainwright2019high}
Wainwright, M.~J.
\newblock \emph{High-dimensional statistics: A non-asymptotic viewpoint},
  volume~48.
\newblock Cambridge University Press, 2019.

\bibitem[Wei et~al.(2021)Wei, Xie, and Ma]{wei2021pretrained}
Wei, C., Xie, S.~M., and Ma, T.
\newblock Why do pretrained language models help in downstream tasks? an
  analysis of head and prompt tuning.
\newblock \emph{Advances in Neural Information Processing Systems},
  34:\penalty0 16158--16170, 2021.

\bibitem[Zhang(2006)]{Zhang_2006}
Zhang, T.
\newblock From {$\varepsilon$}-entropy to {KL}-entropy: Analysis of minimum
  information complexity density estimation.
\newblock \emph{The Annals of Statistics}, 34\penalty0 (5), oct 2006.

\bibitem[Zhu(2005)]{zhu2005semi}
Zhu, X.~J.
\newblock Semi-supervised learning literature survey.
\newblock 2005.

\end{thebibliography}
\bibliographystyle{icml2023}

\newpage
\appendix
\section{Proofs for Section \ref{main_results}}

In Section \ref{main1}, we prove Theorem \ref{tv_mle}, which gives a TV distance guarantee for the MLE step in Algorithm \ref{mle+erm}. Our proof is inspired by \citet{van2000empirical,Zhang_2006}, and largely follows \citet{agarwal2020flambe,pmlr-v178-liu22f}. In Section \ref{main2}, we prove Theorem \ref{error_bound} that guarantees the performance of Algorithm \ref{mle+erm} by upper bounding the excess risk. The proof relies on the fact that the labeled data $\{x_j,y_j\}^n_{j=1}$ are independent of the unlabeled data $\{x_i,s_i\}^m_{i=1}$. In Section \ref{main3}, we prove Corollary \ref{rc_covering} based on the analysis of Gaussian complexity. In Section \ref{main4}, we prove Theorem \ref{weak_error_bound} by first showing that the MLE step in Algorithm \ref{mle+erm} actually guarantees an upper bound on the Hellinger distance, which is stronger than the TV distance guarantee mentioned in Theorem \ref{tv_mle}.

\subsection{Proofs for Theorem \ref{tv_mle}}\label{main1}

In the sequel, we prove Theorem \ref{tv_mle}.

\begin{proof}[Proof of Theorem \ref{tv_mle}]
For notation simplicity, we denote $\bm{x}:=(x,s)$.
Recall that we define $\mP_{\mathcal{X} \times \mathcal{S}}(\Phi):=\{p_{\phi}(x,s)\,|\,\phi\in\Phi\}$. Let $\mN_{\b}(\mP_{\mathcal{X} \times \mathcal{S}}(\Phi),\epsilon)$ be the smallest $\epsilon$-bracket of $\mP_{\mathcal{X} \times \mathcal{S}}(\Phi)$. We have $|\mN_{\b}(\mP_{\mathcal{X} \times \mathcal{S}}(\Phi),\epsilon)|=N_{\b}(\mP_{\mathcal{X} \times \mathcal{S}}(\Phi),\epsilon)$, where $N_{\b}(\mP_{\mathcal{X} \times \mathcal{S}}(\Phi),\epsilon)$ is the bracketing number of $\mP_{\mathcal{X} \times \mathcal{S}}(\Phi)$. By Markov inequality and Boole's inequality, it holds with probability at least $1-\delta$ that for all $\bar p_{\phi}(\bm{x})\in\mN_{\b}(\mP_{\mathcal{X} \times \mathcal{S}}(\Phi),\epsilon)$
\%
\frac{1}{2}\sum^m_{i=1}\log\frac{\bar p_{\phi}(\bm{x}_i)}{p_{\phi^* }(\bm{x}_i)}\leq\log\E\Big[e^{\frac{1}{2}\sum^m_{i=1}\log\frac{\bar p_{\phi}(\bm{x}_i)}{p_{\phi^* }(\bm{x}_i)}}\Big]+\log\frac{N_{\b}(\mP_{\mathcal{X} \times \mathcal{S}}(\Phi),\epsilon)}{\delta}.
\%
Note that $\hat\phi$ is the maximizer of the likelihood function, i.e.
\$
\hat\phi\leftarrow\argmax_{\phi\in\Phi}\sum^m_{i=1}\log p_{\phi}(\bm{x}_i),
\$
which implies 
\%
\frac{1}{2}\sum^m_{i=1}\log\frac{\bar p_{\hat\phi}(\bm{x}_i)}{p_{\phi^* }(\bm{x}_i)}\geq 0.
\%
Then we have with probability at least $1-\delta$ that
\%
0&\leq\log\E\Big[e^{\frac{1}{2}\sum^m_{i=1}\log\frac{\bar p_{\hat\phi}(\bm{x}_i)}{p_{\phi^* }(\bm{x}_i)}}\Big]+\log\frac{N_{\b}(\mP_{\mathcal{X} \times \mathcal{S}}(\Phi),\epsilon)}{\delta},\notag\\
&=m\log\E\bigg[\sqrt{\frac{\bar p_{\hat\phi}(\bm{x})}{p_{\phi^* }(\bm{x})}}\bigg]+\log\frac{N_{\b}(\mP_{\mathcal{X} \times \mathcal{S}}(\Phi),\epsilon)}{\delta},\notag\\
&=m\log\int \sqrt{\bar p_{\hat\phi}(\bm{x})p_{\phi^* }(\bm{x})} \,d\bm{x} +\log\frac{N_{\b}(\mP_{\mathcal{X} \times \mathcal{S}}(\Phi),\epsilon)}{\delta},\notag\\
&\leq m\bigg(\int \sqrt{\bar p_{\hat\phi}(\bm{x})p_{\phi^* }(\bm{x})} \,d\bm{x} -1\bigg)+\log\frac{N_{\b}(\mP_{\mathcal{X} \times \mathcal{S}}(\Phi),\epsilon)}{\delta},
\%
where the last inequality follows from the fact that $\log x\leq x-1$. By rearranging the terms, we have
\%\label{092101}
1-\int \sqrt{\bar p_{\hat\phi}(\bm{x})p_{\phi^* }(\bm{x})} \,d\bm{x} \leq \frac{1}{m}\log\frac{N_{\b}(\mP_{\mathcal{X} \times \mathcal{S}}(\Phi),\epsilon)}{\delta}.
\% 

By the definition of bracket, we obtain
\$
\int \bar p_{\hat\phi}(\bm{x})d\bm{x}=\int (\bar p_{\hat\phi}(\bm{x})-p_{\hat\phi}(\bm{x}))d\bm{x}+\int  p_{\hat\phi}(\bm{x})d\bm{x}\leq \epsilon+1,
\$
which implies 
\%\label{092102}
&\int \Big(\sqrt{\bar p_{\hat\phi}(\bm{x})}-\sqrt{p_{\phi^* }(\bm{x})}\Big)^2\, d\bm{x}\leq 2\bigg(1-\int\sqrt{\bar p_{\hat\phi}(\bm{x})p_{\phi^* }(\bm{x})}d\bm{x}\bigg)+\epsilon
\%
and 
\%\label{092103}
\int\Big(\sqrt{\bar p_{\hat\phi}(\bm{x})}+\sqrt{p_{\phi^* }(\bm{x})}\Big)^2\, d\bm{x}\leq 2\int\bar p_{\hat\phi}(\bm{x})+p_{\phi^* }(\bm{x})\, d\bm{x}\leq 2\epsilon+4.
\%
Combining \eqref{092101} and \eqref{092102}, we show that
\%\label{092104}
\int\Big(\sqrt{\bar p_{\hat\phi}(\bm{x})}-\sqrt{p_{\phi^* }(\bm{x})}\Big)^2\, d\bm{x}\leq \frac{2}{m}\log\frac{N_{\b}(\mP_{\mathcal{X} \times \mathcal{S}}(\Phi),\epsilon)}{\delta}+\epsilon.
\%
By Cauchy-Schwarz inequality, it then holds that
\%\label{092105}
\bigg(\int|\bar p_{\hat\phi}(\bm{x})-p_{\phi^* }(\bm{x})|\, d\bm{x}\bigg)^2&\leq \int\Big(\sqrt{\bar p_{\hat\phi}(\bm{x})}+\sqrt{p_{\phi^* }(\bm{x})}\Big)^2\, d\bm{x}\cdot\int\Big(\sqrt{\bar p_{\hat\phi}(\bm{x})}-\sqrt{p_{\phi^* }(\bm{x})}\Big)^2\, d\bm{x},\notag\\
&\leq (2\epsilon+4)\cdot\bigg(\frac{2}{m}\log\frac{N_{\b}(\mP_{\mathcal{X} \times \mathcal{S}}(\Phi),\epsilon)}{\delta}+\epsilon\bigg),
\%
where the last inequality follows from \eqref{092103} and \eqref{092104}. Note that
\%\label{092106}
&\bigg(\int| p_{\hat\phi}(\bm{x})-p_{\phi^* }(\bm{x})|\, d\bm{x}\bigg)^2-\bigg(\int|\bar p_{\hat\phi}(\bm{x})-p_{\phi^* }(\bm{x})|\, d\bm{x}\bigg)^2\notag\\
&=\bigg(\int | p_{\hat\phi}(\bm{x})-p_{\phi^* }(\bm{x})|+| \bar p_{\hat\phi}(\bm{x})-p_{\phi^* }(\bm{x})|\,d\bm{x}\bigg)\cdot\bigg(\int | p_{\hat\phi}(\bm{x})-p_{\phi^* }(\bm{x})|-| \bar p_{\hat\phi}(\bm{x})-p_{\phi^* }(\bm{x})|\,d\bm{x}\bigg)\notag\\
&\leq\bigg(\int | p_{\hat\phi}(\bm{x})-p_{\phi^* }(\bm{x})|+| \bar p_{\hat\phi}(\bm{x})-p_{\phi^* }(\bm{x})|\,d\bm{x}\bigg)\cdot\int | p_{\hat\phi}(\bm{x})- \bar p_{\hat\phi}(\bm{x})|\,d\bm{x}\notag\\
&\leq (\epsilon+4)\cdot\epsilon.
\%
Adding \eqref{092105} and \eqref{092106} together, we have
\%
\bigg(\int| p_{\hat\phi}(\bm{x})-p_{\phi^* }(\bm{x})|\, d\bm{x}\bigg)^2\leq (2\epsilon+4)\cdot\bigg(\frac{2}{m}\log\frac{N_{\b}(\mP_{\mathcal{X} \times \mathcal{S}}(\Phi),\epsilon)}{\delta}+\epsilon\bigg)+(\epsilon+4)\cdot\epsilon,
\%
which implies 
\%
\TV\big(\P_{\hat\phi}(\bm{x}),\P_{\phi^* }(\bm{x})\big)&=\frac{1}{2}\int| p_{\hat\phi}(\bm{x})-p_{\phi^* }(\bm{x})|\, d\bm{x}\notag\\
&\leq\frac{1}{2} \sqrt{(2\epsilon+4)\cdot\bigg(\frac{2}{m}\log\frac{N_{\b}(\mP_{\mathcal{X} \times \mathcal{S}}(\Phi),\epsilon)}{\delta}+\epsilon\bigg)+(\epsilon+4)\cdot\epsilon}.
\%
Setting $\epsilon=1/m$, we have with probability at least $1-\delta$ that
\%
\TV\big(\P_{\hat\phi}(\bm{x}),\P_{\phi^* }(\bm{x})\big)&\leq\frac{1}{2} \sqrt{\bigg(\frac{2}{m}+4\bigg)\cdot\bigg(\frac{2}{m}\log\frac{N_{\b}(\mP_{\mathcal{X} \times \mathcal{S}}(\Phi),1/m)}{\delta}+\frac{1}{m}\bigg)+\bigg(\frac{1}{m}+4\bigg)\cdot\frac{1}{m}}\notag\\
&\leq 3\cdot\sqrt{\frac{1}{m}\log\frac{N_{\b}(\mP_{\mathcal{X} \times \mathcal{S}}(\Phi),1/m)}{\delta}}.
\%
Thus, we prove Theorem \ref{tv_mle}.
\end{proof}

\subsection{Proofs for Theorem \ref{error_bound}}\label{main2}
Before proving the theorem, we first present some useful results that will be used in the proof of Theorem \ref{error_bound}. Lemma \ref{error_rademacher} upper bounds the difference between empirical loss and population loss by an application of bounded difference inequality and a standard symmetrization argument. Lemma \ref{error_dtv} relates excess risks with the total variation distance between probability distributions. For notation simplicity, we denote $\E_{(x,y)\sim\P_{\phi,\psi}(x,y)}$ by $\E_{\phi,\psi}$ in the following. We further denote by $\E$ the expectation taken over the  ground truth parameter, i.e., $\E:=\E_{(x,y)\sim\P_{\phi^* ,\psi^* }(x,y)}$.

\begin{lemma} \label{error_rademacher}
Suppose that $\ell(\cdot,\cdot)$ is a $L$-bounded loss function. For any given $\phi \in \Phi$, with probability at least $1-\delta$, 
\%
\sup_{\psi \in \Psi}\bigg|\mathbb{E} [\ell (g_{\phi, \psi}(x),y)] - \frac{1}{n}\sum_{j=1}^{n} \ell (g_{\phi, \psi}(x_{j}),y_{j}) \bigg| \leq R_{n} (\ell \circ \mathcal{G}_{\phi, \Psi}) + L\sqrt{\frac{2\log (2/\delta)}{n}},
\%
where $R_{n} (\ell \circ \mathcal{G}_{\phi, \Psi})$ is the Rademacher complexity of the function class $\ell \circ \mathcal{G}_{\phi, \Psi}$ defined in Theorem \ref{error_bound}.
\end{lemma}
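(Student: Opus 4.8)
The plan is to prove Lemma~\ref{error_rademacher} via the standard recipe for uniform convergence of empirical means over a bounded function class: bounded-differences concentration around the supremum's expectation, followed by symmetrization to turn that expectation into a Rademacher complexity. Throughout, fix $\phi\in\Phi$ and work only with the labeled data $\{(x_j,y_j)\}_{j=1}^n$, which are i.i.d.\ from $\P_{\phi^*,\psi^*}$; the estimator $\hat\phi$ plays no role here since $\phi$ is arbitrary but fixed, and in particular the supremum is over $\psi\in\Psi$ only.

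First I would define the functional
\$
G(x_1,y_1,\dots,x_n,y_n):=\sup_{\psi\in\Psi}\Big|\E[\ell(g_{\phi,\psi}(x),y)]-\frac1n\sum_{j=1}^n\ell(g_{\phi,\psi}(x_j),y_j)\Big|,
\$
and check that it satisfies a bounded-differences condition: changing one coordinate $(x_j,y_j)$ changes each inner average by at most $2L/n$ because $\ell\in[-L,L]$, so by the triangle inequality (sup of absolute values is $1$-Lipschitz in each summand) $G$ changes by at most $2L/n$. McDiarmid's inequality then gives, with probability at least $1-\delta/2$,
\$
G\le \E[G]+L\sqrt{\frac{2\log(2/\delta)}{n}}.
\$
(Using $\delta/2$ here leaves room; a cleaner route just applies McDiarmid once with $\delta$ and gets the constant in the statement. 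I'd present whichever yields exactly the claimed bound $L\sqrt{2\log(2/\delta)/n}$ — McDiarmid with failure probability $\delta$ and per-coordinate bound $c_j=2L/n$ gives deviation $\sqrt{\sum c_j^2 \log(1/\delta)/2}=L\sqrt{2\log(1/\delta)/n}$; matching the stated $\log(2/\delta)$ suggests they first split off a $\delta/2$ elsewhere or absorb a factor, so I would just carry the constant through carefully.)

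Next I would bound $\E[G]$ by symmetrization. Introducing an independent ghost sample $\{(x_j',y_j')\}$, writing the population expectation as the expectation of the ghost empirical average, applying Jensen to pull the sup inside, and then inserting Rademacher signs $\sigma_j$ (valid because $(x_j,y_j)$ and $(x_j',y_j')$ are exchangeable and $\ell$ is bounded so all expectations are finite), I get
\$
\E[G]\le 2\,\E_{\{x_j,y_j\},\sigma}\Big[\sup_{\psi\in\Psi}\frac1n\Big|\sum_{j=1}^n\sigma_j\,\ell(g_{\phi,\psi}(x_j),y_j)\Big|\Big].
\$
Here I should be slightly careful: Definition~\ref{def:Rademacher} uses $\frac2n\sum\sigma_i g(x_i)$ without absolute value, so the ``$2$'' in their normalization absorbs the symmetrization factor, and removing the absolute value is justified by the usual trick of noting the class $\ell\circ\mathcal G_{\phi,\Psi}$ together with its negation, or simply that the symmetrized quantity with $|\cdot|$ is at most twice the one without plus a vanishing term — but matching their exact constant ($R_n$ with the built-in $2/n$, no extra factor) means the symmetrization factor of $2$ is already inside $R_n$, so $\E[G]\le R_n(\ell\circ\mathcal G_{\phi,\Psi})$ after identifying terms, using that $\ell\circ\mathcal G_{\phi,\Psi}$ maps into $[-L,L]$ as stated in Theorem~\ref{error_bound}. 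Combining the two displays proves the lemma.

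The main obstacle is bookkeeping rather than conceptual: getting the constants to land exactly on $R_n(\ell\circ\mathcal G_{\phi,\Psi})+L\sqrt{2\log(2/\delta)/n}$ requires being consistent about (i) the nonstandard $2/n$ normalization in Definition~\ref{def:Rademacher}, which already includes the symmetrization factor, and (ii) whether the absolute value around the deviation costs an extra factor or is handled by closure of the function class under negation. The probabilistic content — bounded differences plus symmetrization — is entirely routine, and I would not expect any difficulty from the structure of $g_{\phi,\psi}$ since we only ever use that $\ell(g_{\phi,\psi}(\cdot),\cdot)$ is a fixed $[-L,L]$-valued function indexed by $\psi$.
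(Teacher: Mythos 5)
Your overall strategy---McDiarmid's bounded-differences inequality followed by ghost-sample symmetrization---is the same one the paper uses, and your per-coordinate bounded-difference computation ($2L/n$) is correct. The problem is the step where you claim $\mathbb{E}[G]\le R_n(\ell\circ\mathcal{G}_{\phi,\Psi})$, which you flag as a bookkeeping concern about whether the absolute value costs a factor. It is not bookkeeping; it is a genuine gap. Symmetrizing the absolute-value supremum $G$ gives only $\mathbb{E}[G]\le 2\,\mathbb{E}\bigl[\sup_{\psi}\bigl|\tfrac1n\sum_j\sigma_j\ell(g_{\phi,\psi}(x_j),y_j)\bigr|\bigr]$, and this is \emph{not} bounded by $R_n(\ell\circ\mathcal{G}_{\phi,\Psi})=\mathbb{E}\bigl[\sup_\psi\tfrac2n\sum_j\sigma_j\ell\bigr]$, because the paper's Definition \ref{def:Rademacher} has no absolute value and the class $\ell\circ\mathcal{G}_{\phi,\Psi}$ is not assumed closed under negation. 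A concrete failure: take $\Psi=\{\psi_0\}$ a singleton. Then $R_n=\mathbb{E}\bigl[\tfrac2n\sum_j\sigma_j\ell(g_{\phi,\psi_0}(x_j),y_j)\bigr]=0$ since the $\sigma_j$ are mean-zero and independent of the data, while $\mathbb{E}[G]=\mathbb{E}\bigl|\mathbb{E}[\ell]-\tfrac1n\sum_j\ell\bigr|>0$ whenever $\ell(g_{\phi,\psi_0}(x),y)$ is not a.s.\ constant. So your intermediate inequality $\mathbb{E}[G]\le R_n$ is simply false, even though the lemma's final conclusion is still true (the $L\sqrt{2\log(2/\delta)/n}$ term absorbs the slack in the singleton case).

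The paper sidesteps this by never symmetrizing an absolute value. It applies McDiarmid separately to each one-sided deviation $\sup_\psi\bigl(\mathbb{E}[\ell]-\tfrac1n\sum_j\ell\bigr)$ and $\sup_\psi\bigl(\tfrac1n\sum_j\ell-\mathbb{E}[\ell]\bigr)$, each with failure budget $\delta/2$ (this is precisely where your puzzling $\log(2/\delta)$ comes from), bounds each one-sided expectation by $R_n$ via the standard symmetrization with no absolute values, and then takes a union bound to get the two-sided statement. To repair your proof, replace the single application to $G$ with this two-sided split; your bounded-difference calculation and ghost-sample argument then carry over verbatim to each side.
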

\begin{proof}[Proof of Lemma \ref{error_rademacher}]
First notice that, 
when a pair $(x_{j},y_{j})$ changes, since $\ell$ is $L$-bounded, the random variable 
\%
\sup_{\psi \in \Psi} \bigg( \mathbb{E} [\ell (g_{\phi, \psi}(x),y)] - \frac{1}{n}\sum_{j=1}^{n} \ell (g_{\phi, \psi}(x_{j}),y_{j}) \bigg )
\%
can change by no more than $2L/n$. McDiarmid’s inequality implies that with probability at least $1-\delta/2$,
\begin{align}
& \sup_{\psi \in \Psi} \bigg ( \mathbb{E}[\ell (g_{\phi, \psi}(x),y)] - \frac{1}{n}\sum_{j=1}^{n} \ell (g_{\phi, \psi}(x_{j}),y_{j}) \bigg ) \notag \\
& \leq \mathbb{E} \bigg[\sup_{\psi \in \Psi} \bigg ( \mathbb{E}[\ell (g_{\phi, \psi}(x),y)] - \frac{1}{n}\sum_{j=1}^{n} \ell (g_{\phi, \psi}(x_{j}),y_{j})\bigg) \bigg]+ L \sqrt{\frac{2\log (2/\delta)}{n}}.
\end{align}
Let $\{x_{j}',y_{j}'\}_{j=1}^{n}$ be independent copies of $\{x_j,y_j\}^{n}_{j=1}$ and $\{\sigma_{j}\}_{j=1}^{n}$ be i.i.d. Rademacher random variables. Using the standard symmetrization technique, we have
\begin{align}
& \mathbb{E} \bigg[\sup_{\psi \in \Psi}\bigg( \mathbb{E}[\ell (g_{\phi, \psi}(x),y)] - \frac{1}{n}\sum_{j=1}^{n} \ell (g_{\phi, \psi}(x_{j}),y_{j}) \bigg) \bigg] \notag \\
& = \mathbb{E} \bigg[\sup_{\psi \in \Psi} \mathbb{E} \bigg[\frac{1}{n}\sum_{j=1}^{n} \ell (g_{\phi, \psi}(x_{j}'),y_{j}') - \frac{1}{n}\sum_{j=1}^{n} \ell (g_{\phi, \psi}(x_{j}),y_{j}) \bigg| \{x_{j},y_{j}\}_{j=1}^{n} \bigg] \bigg] \notag \\
& \leq \mathbb{E} \bigg[\sup_{\psi \in \Psi} \bigg( \frac{1}{n}\sum_{j=1}^{n} \ell (g_{\phi, \psi}(x_{j}'),y_{j}') - \frac{1}{n}\sum_{j=1}^{n} \ell (g_{\phi, \psi}(x_{j}),y_{j}) \bigg) \bigg] \notag \\
& \leq  \mathbb{E} \bigg[\sup_{\psi \in \Psi}  \frac{1}{n}\sum_{j=1}^{n} \sigma_{j}\bigg( \ell (g_{\phi, \psi}(x_{j}'),y_{j}') -  \ell (g_{\phi, \psi}(x_{j}),y_{j}) \bigg) \bigg] \notag \\
&\leq 2 \mathbb{E} \bigg[\sup_{\psi \in \Psi}  \frac{1}{n}\sum_{j=1}^{n} \sigma_{j} \ell (g_{\phi, \psi}(x_{j}),y_{j}) \bigg] \notag \\
& = R_{n} (\ell \circ \mathcal{G}_{\phi, \Psi}).
\end{align}
Therefore, with probability at least $1-\delta/2$, 
\%
\sup_{\psi \in \Psi}\bigg( \mathbb{E} [\ell (g_{\phi, \psi}(x),y)] - \frac{1}{n}\sum_{j=1}^{n} \ell (g_{\phi, \psi}(x_{j}),y_{j}) \bigg) \leq R_{n} (\ell \circ \mathcal{G}_{\phi, \Psi}) + L\sqrt{\frac{2\log (2/\delta)}{n}}
\%
Similarly, with probability at least $1-\delta/2$, 
\%
\sup_{\psi \in \Psi}\bigg( \frac{1}{n}\sum_{j=1}^{n} \ell (g_{\phi, \psi}(x_{j}),y_{j}) - \mathbb{E} [\ell (g_{\phi, \psi}(x),y)] \bigg) \leq R_{n} (\ell \circ \mathcal{G}_{\phi, \Psi}) + L\sqrt{\frac{2\log (2/\delta)}{n}}
\%
Combine these together,  we prove Lemma \ref{error_rademacher}. 
\end{proof}

\begin{lemma} \label{error_dtv}
Suppose that $\ell(\cdot,\cdot)$ is a $L$-bounded loss function. Then, it holds for any $\phi \in \Phi, \psi \in \Psi$ that
\begin{align} 
\mathbb{E}[\ell (g_{\phi, \psi}(x),y)] - \mathbb{E} [\ell (g_{\phi^{*},\psi^{*}}(x),y)] \leq 4L \cdot d_\mathrm{TV} (\P_{\phi,\psi}(x,y),\P_{\phi^{*},\psi^{*}}(x,y)).
\end{align}

\end{lemma}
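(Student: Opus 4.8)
The plan is to rewrite the excess-risk difference, which is an expectation against the ground-truth law $\P_{\phi^* ,\psi^* }$, as a telescoping sum of three terms: one that vanishes because $g_{\phi,\psi}$ is the Bayes predictor under $\P_{\phi,\psi}$, and two that are controlled by the total-variation distance because $\ell$ is bounded. Write $\P^* := \P_{\phi^* ,\psi^* }(x,y)$ and $\P := \P_{\phi,\psi}(x,y)$ for brevity, and set $h := \ell(g_{\phi,\psi}(\cdot),\cdot)$ and $h^* := \ell(g_{\phi^* ,\psi^* }(\cdot),\cdot)$; by $L$-boundedness of $\ell$ (consistent with $\ell\circ\mathcal{G}_{\phi,\Psi}$ mapping into $[-L,L]$), both $h$ and $h^*$ are functions $\mathcal{X}\times\mathcal{Y}\to[-L,L]$.

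First I would insert the decomposition
\[
\E_{\P^* }[h] - \E_{\P^* }[h^*] = \big(\E_{\P^* }[h] - \E_{\P}[h]\big) + \big(\E_{\P}[h] - \E_{\P}[h^*]\big) + \big(\E_{\P}[h^*] - \E_{\P^* }[h^*]\big).
\]
The middle term is nonpositive: by the definition \eqref{opt_est} of $g_{\phi,\psi}$ as the minimizer of $\E_{\P_{\phi,\psi}}[\ell(g(x),y)]$ over all measurable $g$, we have $\E_{\P}[\ell(g_{\phi,\psi}(x),y)]\le \E_{\P}[\ell(g_{\phi^* ,\psi^* }(x),y)]$, i.e.\ $\E_{\P}[h]-\E_{\P}[h^*]\le 0$. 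This is the one step where it matters that $g_{\phi,\psi}$ is optimal under $\P_{\phi,\psi}$ rather than under $\P^* $, and it is the conceptual crux of the argument.

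For the first and third terms I would invoke the elementary inequality $\big|\E_{\P}[f]-\E_{\P^* }[f]\big| = \big|\int f\,(p-p^*)\big| \le \|f\|_\infty \int |p-p^*| = 2\|f\|_\infty\,\TV(\P,\P^* )$, valid for any bounded measurable $f$, using $\TV(\P,\P^* )=\tfrac12\int|p-p^*|$. Applying it with $f=h$ and with $f=h^*$, each having sup-norm at most $L$, bounds each of these two terms by $2L\,\TV(\P_{\phi,\psi},\P_{\phi^* ,\psi^* })$. Adding the three contributions yields
\[
\E[\ell(g_{\phi,\psi}(x),y)] - \E[\ell(g_{\phi^* ,\psi^* }(x),y)] \le 4L\,\TV\big(\P_{\phi,\psi}(x,y),\P_{\phi^* ,\psi^* }(x,y)\big),
\]
which is the claim.

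There is no serious obstacle here; the only care needed is bookkeeping of which distribution each expectation is taken against, and making sure the optimality from \eqref{opt_est} is applied with respect to $\P_{\phi,\psi}$ so that the middle term is indeed nonpositive.
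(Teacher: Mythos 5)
Your proposal is correct and follows the paper's proof essentially verbatim: the same three-term telescoping decomposition through $\E_{\P_{\phi,\psi}}$, the same observation that the middle term is nonpositive by optimality of $g_{\phi,\psi}$ under $\P_{\phi,\psi}$, and the same $2L\cdot\TV$ bound on the two remaining terms via boundedness of $\ell$.
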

\begin{proof}[Proof of Lemma \ref{error_dtv}]
\begin{align}\label{092701}
&\mathbb{E}_{\phi^{*}, \psi^{*}} [\ell (g_{\phi, \psi}(x),y)] - \mathbb{E}_{\phi^{*},\psi^{*}} [\ell (g_{\phi^{*},\psi^{*}}(x),y)] \notag \\ 
&=\mathbb{E}_{\phi^{*}, \psi^{*}} [\ell (g_{\phi, \psi}(x),y)] - \mathbb{E}_{\phi, \psi} [\ell (g_{\phi, \psi}(x),y)] \notag \\
&\quad+ \mathbb{E}_{\phi, \psi} [\ell (g_{\phi, \psi}(x),y)] - \mathbb{E}_{\phi, \psi} [\ell (g_{\phi^{*}, \psi^{*}}(x),y)] \notag \\
&\quad+ \mathbb{E}_{\phi, \psi} [\ell (g_{\phi^{*}, \psi^{*}}(x),y)] - \mathbb{E}_{\phi^{*},\psi^{*}} [\ell (g_{\phi^{*},\psi^{*}}(x),y)].
\end{align}
First notice that, by definition of $g_{\phi,\psi}$, 
\%\label{092702}
\mathbb{E}_{\phi, \psi} [\ell (g_{\phi, \psi}(x),y)] - \mathbb{E}_{\phi, \psi} [\ell (g_{\phi^{*}, \psi^{*}}(x),y)]  \leq 0.
\%
For the other two terms, based on the fact that $\ell$ is $L$-bounded, we have
\begin{align}\label{092703}
&|\mathbb{E}_{\phi^{*}, \psi^{*}} [\ell (g_{\phi, \psi}(x),y)] - \mathbb{E}_{\phi, \psi} [\ell (g_{\phi, \psi}(x),y)]| \notag \\ 
&= \left| \int \ell (g_{\phi, \psi}(x),y) p_{\phi_{*}, \psi_{*}} (x,y) \mathrm{d}x\mathrm{d}y - \int \ell (g_{\phi, \psi}(x),y) p_{\phi, \psi} (x,y) \mathrm{d}x\mathrm{d}y \right| \notag \\
&= \left| \int \ell (g_{\phi, \psi}(x),y) (p_{\phi_{*}, \psi_{*}} (x,y)-p_{\phi, \psi} (x,y)) \mathrm{d}x\mathrm{d}y \right| \notag \\
& \leq \int |\ell (g_{\phi, \psi}(x),y)| |(p_{\phi_{*}, \psi_{*}} (x,y)-p_{\phi, \psi} (x,y))| \mathrm{d}x\mathrm{d}y \notag \\
& \leq \int L |(p_{\phi_{*}, \psi_{*}} (x,y)-p_{\phi, \psi} (x,y))| \mathrm{d}x\mathrm{d}y \notag \\
&= 2L \cdot d_\mathrm{TV} (P_{\phi,\psi}(x,y),P_{\phi^{*},\psi^{*}}(x,y)).
\end{align}
Similarly, it holds that
\%\label{092704}
|\mathbb{E}_{\phi^{*}, \psi^{*}} [\ell (g_{\phi^{*}, \psi^{*}}(x),y)] - \mathbb{E}_{\phi, \psi} [\ell (g_{\phi^{*}, \psi^{*}}(x),y)]| \leq 2L \cdot d_\mathrm{TV} (P_{\phi,\psi}(x,y),P_{\phi^{*},\psi^{*}}(x,y)).
\%
Combining \eqref{092701}, \eqref{092702}, \eqref{092703} and \eqref{092704}, we obtain
\begin{align}
&\mathbb{E}_{\phi^{*}, \psi^{*}} [\ell (g_{\phi, \psi}(x),y)] - \mathbb{E}_{\phi^{*},\psi^{*}} [\ell (g_{\phi^{*},\psi^{*}}(x),y)] \leq 4L \cdot d_\mathrm{TV} (P_{\phi,\psi}(x,y),P_{\phi^{*},\psi^{*}}(x,y)).
\end{align}
\end{proof}

With Lemma \ref{error_rademacher} and Lemma \ref{error_dtv}, we are able to state our proofs for Theorem \ref{error_bound} in the following. The main idea of the proof is decomposing the risk. And a key observation is that the labeled data $\{x_j,y_j\}^{n}_{j=1}$ are independent of the pretrained $\hat\phi$, which is learned from the unlabeled data $\{x_i\}^m_{i=1}$.
\begin{proof}[Proof of Theorem  \ref{error_bound}] 
Let 
\begin{align}
\tilde{\psi} := \argmin_{\psi \in \Psi} d_\mathrm{TV} (P_{\hat{\phi},\psi}(x,y),P_{\phi^{*},\psi^{*}}(x,y)).
\end{align}
And for any $\phi \in \Phi, \psi \in \Psi$, we define
\%
\Delta_{\phi,\psi}:= \mathbb{E}_{\phi^{*}, \psi^{*}} [\ell (g_{\phi, \psi}(x),y)] - \frac{1}{n}\sum_{j=1}^{n} \ell (g_{\phi, \psi}(x_{j}),y_{j}).
\%
Recall that the excess risk is defined in \eqref{error}. It then holds that
\begin{align}\label{092705}
\operatorname{Error}_{\ell}(\hat{\phi}, \hat{\psi}) &= \mathbb{E}_{\phi^{*}, \psi^{*}} [\ell (g_{\hat{\phi}, \hat{\psi}}(x),y)] - \mathbb{E}_{\phi^{*},\psi^{*}} [\ell (g_{\phi^{*},\psi^{*}}(x),y)] \notag \\
&= \mathbb{E}_{\phi^{*}, \psi^{*}} [\ell (g_{\hat{\phi}, \hat{\psi}}(x),y)] - \frac{1}{n}\sum_{j=1}^{n} \ell (g_{\hat{\phi}, \hat{\psi}}(x_{j}),y_{j}) \notag \\
&\quad+ \frac{1}{n}\sum_{j=1}^{n} \ell (g_{\hat{\phi}, \hat{\psi}}(x_{j}),y_{j}) - \frac{1}{n}\sum_{j=1}^{n} \ell (g_{\hat{\phi}, \tilde{\psi}}(x_{j}),y_{j}) \quad (\leq 0, \text{ by ERM in Algorithm \ref{mle+erm}}) \notag \\
&\quad+ \frac{1}{n}\sum_{j=1}^{n} \ell (g_{\hat{\phi}, \tilde{\psi}}(x_{j}),y_{j}) - \mathbb{E}_{\phi^{*}, \psi^{*}} [\ell (g_{\hat{\phi}, \tilde{\psi}}(x),y)] \notag \\
&\quad+ \mathbb{E}_{\phi^{*}, \psi^{*}} [\ell (g_{\hat{\phi}, \tilde{\psi}}(x),y)] - \mathbb{E}_{\phi^{*},\psi^{*}} [\ell (g_{\phi^{*},\psi^{*}}(x),y)] \notag \\
& \leq\Delta_{\hat{\phi}, \hat{\psi}} - \Delta_{\hat{\phi}, \tilde{\psi}} + \mathbb{E}_{\phi^{*}, \psi^{*}} [\ell (g_{\hat{\phi}, \tilde{\psi}}(x),y)] - \mathbb{E}_{\phi^{*},\psi^{*}} [\ell (g_{\phi^{*},\psi^{*}}(x),y)].
\end{align}
By lemma \ref{error_dtv}, we have
\begin{align}\label{092706}
&\mathbb{E}_{\phi^{*}, \psi^{*}} [\ell (g_{\hat{\phi}, \tilde{\psi}}(x),y)] - \mathbb{E}_{\phi^{*},\psi^{*}} [\ell (g_{\phi^{*},\psi^{*}}(x),y)] \notag \\
&\leq 4L \cdot d_\mathrm{TV} (P_{\hat{\phi},\tilde{\psi}}(x,y),P_{\phi^{*},\psi^{*}}(x,y)) \notag \\
&= 4L \cdot \min_{\psi \in \Psi} d_\mathrm{TV} (P_{\hat{\phi},\psi}(x,y),P_{\phi^{*},\psi^{*}}(x,y)) \quad (\text{by definition of } \tilde{\psi}) \notag \\
& \leq 4\kappa L  \cdot d_\mathrm{TV} (P_{\hat{\phi}}(x,s),P_{\phi^{*}}(x,s)).
\end{align}
The last line holds, since by Assumption \ref{invariance}, for any $\hat{\phi} \in\Phi$, we choose $T_1$ that satisfies \eqref{informative} and $T_2$ that satisfies \eqref{101701}. Let $\psi=T^{-1}_2\circ\psi^* $. It then holds that
\begin{align}
 \min_{\psi \in \Psi} d_\mathrm{TV} (P_{\hat{\phi},\psi}(x,y),P_{\phi^{*},\psi^{*}}(x,y)) &\leq 
\TV\big(\P_{\hat{\phi},\psi}(x,y),\P_{\phi^* ,\psi^* }(x,y)\big) \notag \\
&=\TV\big(\P_{T_1\circ\hat{\phi},\psi^* }(x,y),\P_{\phi^* ,\psi^* }(x,y)\big) \notag \\
&\leq\TV\big(\P_{T_1\circ\hat{\phi}}(x,z),\P_{\phi^* }(x,z)\big) \notag \\
&\leq \kappa \cdot\TV\big(\P_{\hat{\phi}}(x,s),\P_{\phi^* }(x,s)\big).
\end{align}
Combining \eqref{092705} and \eqref{092706}, we have
\% \label{decomposition}
\operatorname{Error}_{\ell}(\hat{\phi}, \hat{\psi}) \leq \Delta_{\hat{\phi}, \hat{\psi}} - \Delta_{\hat{\phi}, \tilde{\psi}} + 4\kappa L  \cdot d_\mathrm{TV} (P_{\hat{\phi}}(x,s),P_{\phi^{*}}(x,s)).
\%

We define the following events
\%
D:= \left\{ d_\mathrm{TV} (P_{\hat{\phi}}(x,s),P_{\phi^{*}}(x,s)) \leq 3 \sqrt{\frac{1}{m} \log \frac{2N(\mathcal{P}_{\mathcal{X} \times \mathcal{S}}(\Phi), 1 / m)}{\delta}} \right\}
\%
and 
\%
R:= \left\{ \sup_{\psi \in \Psi}|\Delta_{\hat{\phi},\psi}| \leq R_{n} (\ell \circ \mathcal{G}_{\hat{\phi}, \Psi}) + L\sqrt{\frac{2\log (4/\delta)}{n}}  \right\}.
\%
It holds that
\begin{align}\label{092710}
\P(D \cap R) & = \mathbb{E} [\mathds{1}_{D \cap R} ]= \mathbb{E} [ \mathbb{E}[ \mathds{1}_{D} \mathds{1}_{R} | \hat{\phi} ] ] = \mathbb{E} [ \mathds{1}_{D} \mathbb{E}[ \mathds{1}_{R} | \hat{\phi} ] ] = \mathbb{E} [\mathds{1}_{D} \mathbb{P}(R | \hat{\phi} ) ],
\end{align}
where the third equation follows from the fact that $D$ is $\hat{\phi}$-measurable. Note that $\{x_{j},y_{j}\}_{j=1}^{n}$ is independent of $\hat{\phi}$. By Lemma \ref{error_rademacher}, for any given $\hat{\phi}$, with probability at least $1-\delta/2$, 
\%
\sup_{\psi \in \Psi}|\Delta_{\hat{\phi},\psi}| \leq R_{n} (\ell \circ \mathcal{G}_{\hat{\phi}, \Psi}) + L\sqrt{\frac{2\log (4/\delta)}{n}},
\%
i.e.,
\%\label{092708}
\P(R|\hat{\phi}) \geq 1-\delta/2.
\%
By Lemma \ref{tv_mle}, with probability at least $1-\delta/2$, the output of the first step of our algorithm $\hat{\phi}$, satisfies
\%
d_\mathrm{TV} (P_{\hat{\phi}}(x,s),P_{\phi^{*}}(x,s)) \leq 3\sqrt{\frac{1}{m} \log \frac{2N(\mathcal{P}_{\mathcal{X} \times \mathcal{S}}(\Phi), 1 / m)}{\delta}}
\%
i.e.,
\%\label{092709}
\P(D) \geq 1-\delta/2.
\%
By \eqref{092710}, \eqref{092708} and \eqref{092709}, we have
\begin{align}
\P(D \cap R) \geq (1-\delta/2)^{2} \geq 1-\delta.
\end{align}
Then, under event $D\cap R$, by our decomposition (\ref{decomposition}), we have
\begin{align}
\operatorname{Error}_{\ell}(\hat{\phi}, \hat{\psi}) &\leq \Delta_{\hat{\phi}, \hat{\psi}} - \Delta_{\hat{\phi}, \tilde{\psi}} + 4\kappa L  \cdot d_\mathrm{TV} (P_{\hat{\phi}}(x,s),P_{\phi^{*}}(x,s)) \notag \\
&\leq 2 \sup_{\psi \in \Psi}|\Delta_{\hat{\phi},\psi}|+ 4\kappa L  \cdot d_\mathrm{TV} (P_{\hat{\phi}}(x,s),P_{\phi^{*}}(x,s)) \notag \\
&\leq 2 R_{n} (\ell \circ \mathcal{G}_{\hat{\phi}, \Psi}) + 2L\sqrt{\frac{2\log (4/\delta)}{n}} + 12\kappa  L \sqrt{\frac{1}{m} \log \frac{2N(\mathcal{P}_{\mathcal{X} \times \mathcal{S}}(\Phi), 1 / m)}{\delta}} \notag \\
&\leq 2\max_{\phi \in \Phi} R_{n} (\ell \circ \mathcal{G}_{\phi, \Psi}) + 2L\sqrt{\frac{2\log (4/\delta)}{n}} + 12\kappa L \sqrt{\frac{1}{m} \log \frac{2N(\mathcal{P}_{\mathcal{X} \times \mathcal{S}}(\Phi), 1 / m)}{\delta}}.
\end{align}
Thus, we prove Theorem \ref{error_bound}.

\end{proof}

\subsection{Proofs for Corollary \ref{rc_covering}}\label{main3}
In the following, we give the proof of Corollary \ref{rc_covering}, which is based on the analysis of Gaussian complexity.
\begin{proof}
By Theorem \ref{error_bound}, we have
\begin{align}\label{092713}
{\rm Error}_{\ell}(\hat\phi,\hat\psi)\leq 2\max_{\phi\in\Phi} R_n(\ell\circ \mathcal{G}_{\phi,\Psi})+2L\cdot\sqrt{\frac{2}{n}\log\frac{4}{\delta}}+12\kappa L\cdot\sqrt{\frac{1}{m}\log\frac{2N_{\b}(\mP_{\mathcal{X} \times \mathcal{S}}(\Phi),1/m)}{\delta}}.   
\end{align}
Therefore, it remains to bound the Rademacher complexity term. By \cite{ledoux1991probability}, the Rademacher complexity is upper bounded by the Gaussian complexity, i.e.,
\begin{align}\label{092712}
R_{n}(\mathcal{F}) \leq c \cdot G_{n}(\mathcal{F}) = c\cdot\mathbb{E}\hat{G_{n}}(\mathcal{F}),
\end{align}
where $c$ is some absolute constants. Here $G_{n}(\mathcal{F})$ is the Gaussian complexity,  and it's empirical version is defined as
\begin{align}
\hat{G_{n}}(\mathcal{F}):=\mathbb{E}_{g_i} \bigg[\sup_{f \in \mathcal{F}}\bigg|\frac{2}{n}\sum_{i=1}^{n}g_{i}f(x_{i})\bigg|\,\bigg|\,x_{1}, \cdots, x_{n}\bigg]  
\end{align}
where $g_{1}, \cdots, g_{n}$ are i.i.d. $\mathcal{N}(0,1)$ random variables.
By (5.36) in \cite{wainwright2019high}, we have
\begin{align}\label{092711}
\hat{G_{n}}( {\ell \circ \mathcal{G}_{\phi, \Psi}})
& \leq\frac{1}{\sqrt{n}}\cdot \min_{\delta \in [0,L]}\bigg\{\delta \sqrt{n} + 2L \sqrt{\log N(\ell \circ \mathcal{G}_{\phi, \Psi},\delta, \|\cdot\|_{\infty})}\bigg\}   \notag \\
& \leq\frac{1}{\sqrt{n}}\bigg( L+ 2L \sqrt{\log N(\ell \circ \mathcal{G}_{\phi, \Psi},L/\sqrt{n}, \|\cdot\|_{\infty})}\bigg) \quad \text{(Take }\delta=L/\sqrt{n})\notag\\
&\leq 3L \sqrt{\frac{\log N(\ell \circ \mathcal{G}_{\phi, \Psi},L/\sqrt{n}, \|\cdot\|_{\infty})}{n}}.
\end{align}
Combining \eqref{092712} and \eqref{092711}, we obtain
\%\label{092714}
R_{n}(\ell \circ \mathcal{G}_{\phi, \Psi})\leq 3cL \sqrt{\frac{\log N(\ell \circ \mathcal{G}_{\phi, \Psi},L/\sqrt{n}, \|\cdot\|_{\infty})}{n}}.
\%

By \eqref{092713} and \eqref{092714}, we finish the proof.
\end{proof}

\subsection{Proofs for Theorem \ref{weak_error_bound}}\label{main4}
In this section, we first show the relation of Assumption \ref{invariance} and Assumption \ref{weak_invariance}. We then show that the MLE step in line 2 of Algorithm \ref{mle+erm} guarantees an upper bound on the Hellinger distance $H(\P_{\hat\phi}(x,s),\P_{\phi^* }(x,s))$. Then, using the same techniques as that in the proof of Theorem \ref{error_bound}, we prove Theorem \ref{weak_error_bound}.

\subsubsection{Relation of Assumption \ref{invariance} and Assumption \ref{weak_invariance}}\label{relation}

Assumption \ref{weak_invariance} is actually a relaxation of Assumption \ref{invariance}. To see this, by Assumption \ref{invariance}, for any $\phi\in\Phi$, we choose $T_1$ that satisfies \eqref{informative} and $T_2$ that satisfies \eqref{101701}. Let $\psi=T^{-1}_2\circ\psi^* $. It then holds that
\$
&\TV\big(\P_{\phi,\psi}(x,y),\P_{\phi^* ,\psi^* }(x,y)\big)\notag\\
&=\TV\big(\P_{T_1\circ\phi,\psi^* }(x,y),\P_{\phi^* ,\psi^* }(x,y)\big)\\
&\leq\TV\big(\P_{T_1\circ\phi}(x,z),\P_{\phi^* }(x,z)\big)\notag\\
&\leq \kappa \cdot\TV\big(\P_{\phi}(x,s),\P_{\phi^* }(x,s)\big).
\$
Note that the TV distance can be upper bounded by the Hellinger distance. Thus, Assumption \ref{invariance} directly implies Assumption \ref{weak_invariance}.

\subsubsection{Hellinger Distance Guarantee}
Suppose that $\hat\phi$ is the output of the MLE step in Algorithm \ref{mle+erm}, which satisfies 
\%
\hat\phi\leftarrow \argmax_{\phi\in\Phi}\sum^{m}_{i=1}\log p_{\phi}(x_i,s_i).
\%
We have the following theoretical guarantee on the Hellinger distance between $\P_{\hat\phi}(x,s)$ and $\P_{\phi^* }(x,s)$.
\begin{lemma}\label{mle_hellinger}
Let $\hat\phi$ be the output of Algorithm \ref{mle+erm}. It then holds that with probability at least $1-\delta$ that
\%
H\big(\P_{\hat\phi}(x,s),\P_{\phi^* }(x,s)\big)\leq \sqrt{\frac{2}{m}\log\frac{N_{\b}\big(\mP_{\mathcal{X}\times\mathcal{S}}(\Phi),1/m^2\big)}{\delta}},
\%
where we denote $\mP_{\mathcal{X}\times\mathcal{S}}(\Phi):=\{p_{\phi}(x,s)\,|\,\phi\in\Phi\}$.
\end{lemma}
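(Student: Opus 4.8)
The plan is to reuse the bracketing argument from the proof of Theorem~\ref{tv_mle} verbatim up to its Hellinger-affinity bound, then to keep that bound rather than relaxing it to total variation, at the finer bracket scale $\epsilon = 1/m^2$. Writing $\bm{x}:=(x,s)$ and taking $\mN_{\b}(\mP_{\mathcal{X}\times\mathcal{S}}(\Phi),\epsilon)$ to be a minimal $\epsilon$-bracket with upper functions $\bar p_\phi\ge p_\phi$ obeying $\int(\bar p_\phi-p_\phi)\,d\bm{x}\le\epsilon$, exactly the computation in the proof of Theorem~\ref{tv_mle} (Markov's inequality applied to $\exp(\tfrac12\sum_{i=1}^m\log\tfrac{\bar p_\phi(\bm{x}_i)}{p_{\phi^*}(\bm{x}_i)})$, union bound over the $N_{\b}$ bracket elements, MLE optimality giving $\tfrac12\sum_i\log\tfrac{\bar p_{\hat\phi}(\bm{x}_i)}{p_{\phi^*}(\bm{x}_i)}\ge 0$, and the inequality $\log t\le t-1$) yields, with probability at least $1-\delta$, the analogue of \eqref{092101}:
\[
1 - \int\sqrt{\bar p_{\hat\phi}(\bm{x})\,p_{\phi^*}(\bm{x})}\,d\bm{x} \;\le\; \frac1m\log\frac{N_{\b}(\mP_{\mathcal{X}\times\mathcal{S}}(\Phi),\epsilon)}{\delta}.
\]

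The only new content is the passage from the upper bracket $\bar p_{\hat\phi}$ back to the genuine density $p_{\hat\phi}$, carried out directly in terms of the Hellinger affinity. Using $\int(\bar p_{\hat\phi}-p_{\hat\phi})\,d\bm{x}\le\epsilon$, the elementary bound $(\sqrt a-\sqrt b)^2\le|a-b|$, and Cauchy--Schwarz against $\sqrt{p_{\phi^*}}$ (whose integral is $1$), one gets
\[
\int\sqrt{\bar p_{\hat\phi}\,p_{\phi^*}}\,d\bm{x} - \int\sqrt{p_{\hat\phi}\,p_{\phi^*}}\,d\bm{x} \;\le\; \Bigl(\int(\sqrt{\bar p_{\hat\phi}}-\sqrt{p_{\hat\phi}})^2\,d\bm{x}\Bigr)^{1/2} \;\le\; \sqrt{\epsilon}.
\]
Combining the two displays, recalling that $H^2(\P_{\hat\phi}(x,s),\P_{\phi^*}(x,s)) = 1 - \int\sqrt{p_{\hat\phi}\,p_{\phi^*}}\,d\bm{x}$ is the squared Hellinger distance, and substituting $\epsilon = 1/m^2$ so that $\sqrt\epsilon = 1/m$, gives
\[
H^2\bigl(\P_{\hat\phi}(x,s),\P_{\phi^*}(x,s)\bigr) \;\le\; \frac1m\log\frac{N_{\b}(\mP_{\mathcal{X}\times\mathcal{S}}(\Phi),1/m^2)}{\delta} + \frac1m.
\]
Since the statement is vacuous unless $\log(N_{\b}/\delta)\ge 1$, the additive $1/m$ is absorbed into the main term, and taking square roots yields the claimed bound $H(\P_{\hat\phi}(x,s),\P_{\phi^*}(x,s)) \le \sqrt{\tfrac2m\log(N_{\b}(\mP_{\mathcal{X}\times\mathcal{S}}(\Phi),1/m^2)/\delta)}$.

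I expect the main obstacle to be purely the bookkeeping around the upper-bracket functions $\bar p_\phi$, which are not probability densities: one must check that the relevant moment generating function is finite and factorizes over the i.i.d.\ sample (both follow from $\int\bar p_\phi\le 1+\epsilon$ via Cauchy--Schwarz), and one must keep $\bar p_{\hat\phi}$ appearing only inside integrals of the form $\int\sqrt{\bar p_{\hat\phi}\,p_{\phi^*}}$ so that no Hellinger distance is ever evaluated at a non-measure. The one genuinely substantive difference from Theorem~\ref{tv_mle} is the choice of bracket scale: here the bracket error enters the final transfer step as $\sqrt\epsilon$ rather than $\epsilon$, so a quadratically finer bracket ($1/m^2$ instead of $1/m$) is needed to keep the overhead at the $O(1/m)$ level. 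This matching is exactly what makes the Hellinger rate align with the square root of the TV rate of Theorem~\ref{tv_mle}, which is precisely what the proof of Theorem~\ref{weak_error_bound} requires.
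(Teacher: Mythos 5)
Your proof is correct and follows essentially the same route as the paper's: reuse the bracketing argument from Theorem~\ref{tv_mle} to bound $1-\int\sqrt{\bar p_{\hat\phi}\,p_{\phi^*}}$, transfer from $\bar p_{\hat\phi}$ to $p_{\hat\phi}$ via Cauchy--Schwarz and the bound $(\sqrt a - \sqrt b)^2 \le |a-b|$ at a cost of $\sqrt{\epsilon}$, then take $\epsilon=1/m^2$. The paper's Lemma~\ref{mle_hellinger} proof is precisely this sequence of steps, including the same implicit absorption of the additive $1/m$ into $\frac{1}{m}\log(N_{\b}/\delta)$.
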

\begin{proof}[Proof of Lemma \ref{mle_hellinger}]
For notation simplicity, we denote $\bm{x}:=(x,s)$. Let $\epsilon>0$. Similar to the proof of Theorem \ref{tv_mle}, we obtain with probability at least $1-\delta$
\%\label{101508}
1-\int\sqrt{\bar{p}_{\hat\phi}(\bm{x})p_{\phi^* }(\bm{x})}\,d\bm{x}\leq \frac{1}{m}\log\frac{N_{\b}\big(\mP_{\mathcal{X}\times\mathcal{S}}(\Phi),\epsilon\big)}{\delta}.
\%
Here $\bar{p}_{\hat\phi}(\bm{x})\in\mN_{\b}(\mP_{\mathcal{X}\times\mathcal{S}}(\Phi),\epsilon)$ that satisfies $\bar{p}_{\hat\phi}(\bm{x})\geq p_{\phi^* }(\bm{x})$ for any $\bm{x}$ and 
\%
\int \bar{p}_{\hat\phi}(\bm{x})-p_{\phi^* }(\bm{x})\,d\bm{x}\leq \epsilon.
\%
Note that
\%\label{101509}
&1-\int\sqrt{{p}_{\hat\phi}(\bm{x})p_{\phi^* }(\bm{x})}\,d\bm{x}-\bigg(1-\int\sqrt{\bar{p}_{\hat\phi}(\bm{x})p_{\phi^* }(\bm{x})}\,d\bm{x}\bigg)\notag\\
&=\int\Big(\sqrt{\bar{p}_{\hat\phi}(\bm{x})}-\sqrt{{p}_{\hat\phi}(\bm{x})}\Big)\sqrt{{p}_{\phi^* }(\bm{x})}\,d\bm{x}\notag\\
&\leq \sqrt{\int\Big(\sqrt{\bar{p}_{\hat\phi}(\bm{x})}-\sqrt{{p}_{\hat\phi}(\bm{x})}\Big)^2\,d\bm{x}}\notag\\
&=\sqrt{\int \bar{p}_{\hat\phi}(\bm{x})+{p}_{\hat\phi}(\bm{x})-2\sqrt{\bar{p}_{\hat\phi}(\bm{x}){p}_{\hat\phi}(\bm{x})} \,d\bm{x}}\notag\\
&\leq \sqrt{\int \bar{p}_{\hat\phi}(\bm{x})-{p}_{\hat\phi}(\bm{x})\,d\bm{x}}\notag\\
&\leq \sqrt{\epsilon}.
\%
Here the first inequality follows from Cauchy-Schwarz inequality and the second follows from the fact that $\sqrt{\bar{p}_{\hat\phi}(\bm{x}){p}_{\hat\phi}(\bm{x})}\geq{p}_{\hat\phi}(\bm{x})$. By \eqref{101508} and \eqref{101509}, we have
\%
1-\int\sqrt{{p}_{\hat\phi}(\bm{x})p_{\phi^* }(\bm{x})}\,d\bm{x}\leq \sqrt{\epsilon}+\frac{1}{m}\log\frac{N_{\b}\big(\mP_{\mathcal{X}\times\mathcal{S}}(\Phi),\epsilon\big)}{\delta},
\%
which implies that
\%
H^2\big(\P_{\hat{\phi}}(\bm{x}),\P_{\phi^{*}}(\bm{x})\big)=1-\int\sqrt{{p}_{\hat\phi}(\bm{x})p_{\phi^* }(\bm{x})}\,d\bm{x}\leq \sqrt{\epsilon}+\frac{1}{m}\log\frac{N_{\b}\big(\mP_{\mathcal{X}\times\mathcal{S}}(\Phi),\epsilon\big)}{\delta}.
\%
Set $\epsilon=1/m^2$. We have
\%
H^2\big({\P}_{\hat\phi}(x,s),\P_{\phi^* }(x,s)\big)\leq \frac{2}{m}\log\frac{N_{\b}\big(\mP_{\mathcal{X}\times\mathcal{S}}(\Phi),1/m^2\big)}{\delta}.
\%
\end{proof}

\subsubsection{Proof of Theorem \ref{weak_error_bound}}
With Lemma \ref{mle_hellinger} in hand, we are ready to prove Theorem \ref{weak_error_bound}.
\begin{proof}[Proof of Theorem \ref{weak_error_bound}]
Let $\hat\phi$ be the output of the MLE step in Algorithm \ref{mle+erm}. And for any $\phi \in \Phi, \psi \in \Psi$, we define
\%
\Delta_{\phi,\psi}:= \mathbb{E}_{\phi^{*}, \psi^{*}} [\ell (g_{\phi, \psi}(x),y)] - \frac{1}{n}\sum_{j=1}^{n} \ell (g_{\phi, \psi}(x_{j}),y_{j}).
\%
Following the same arguments as that in the proof of Theorem \ref{error_bound}, we have with probability at least $1-\delta$,
\%\label{111501}
H\big(\P_{\hat{\phi}}(x,s),\P_{\phi^{*}}(x,s)\big) \leq  \sqrt{\frac{2}{m} \log \frac{2N(\mathcal{P}_{\mathcal{X} \times \mathcal{S}}(\Phi), 1 / m^2)}{\delta}}
\%
and 
\%\label{111502}
\sup_{\psi \in \Psi}|\Delta_{\hat{\phi},\psi}| \leq R_{n} (\ell \circ \mathcal{G}_{\hat{\phi}, \Psi}) + L\sqrt{\frac{2\log (4/\delta)}{n}}.
\%
Moreover, as mentioned in \eqref{092705}, we have
\begin{align}\label{111504}
\operatorname{Error}_{\ell}(\hat{\phi},\hat\psi) &\leq\Delta_{\hat{\phi}, \hat{\psi}} - \Delta_{\hat{\phi}, \tilde{\psi}} + \mathbb{E}_{\phi^{*}, \psi^{*}} [\ell (g_{\hat{\phi}, \tilde{\psi}}(x),y)] - \mathbb{E}_{\phi^{*},\psi^{*}} [\ell (g_{\phi^{*},\psi^{*}}(x),y)]\notag\\
&\leq 2R_{n} (\ell \circ \mathcal{G}_{\hat{\phi}, \Psi}) + 2L\sqrt{\frac{2\log (4/\delta)}{n}}\notag\\
&\quad+\mathbb{E}_{\phi^{*},\psi^{*}} [\ell (g_{\phi^{*},\psi^{*}}(x),y)]-\mathbb{E}_{\phi^{*},\psi^{*}} [\ell (g_{\phi^{*},\psi^{*}}(x),y)],
\end{align}
where $\tilde{\psi} := \argmin_{\psi \in \Psi} d_\mathrm{TV} (\P_{\hat{\phi},\psi}(x,y),\P_{\phi^{*},\psi^{*}}(x,y))$ and the second inequality follows from \eqref{111502}. By lemma \ref{error_dtv}, we have
\begin{align}\label{111503}
&\mathbb{E}_{\phi^{*}, \psi^{*}} [\ell (g_{\hat{\phi}, \tilde{\psi}}(x),y)] - \mathbb{E}_{\phi^{*},\psi^{*}} [\ell (g_{\phi^{*},\psi^{*}}(x),y)] \notag \\
&\leq 4L \cdot d_\mathrm{TV} (\P_{\hat{\phi},\tilde{\psi}}(x,y),\P_{\phi^{*},\psi^{*}}(x,y)) \notag \\
&= 4L \cdot \min_{\psi \in \Psi} d_\mathrm{TV} (\P_{\hat{\phi},\psi}(x,y),\P_{\phi^{*},\psi^{*}}(x,y)) \quad (\text{by definition of } \tilde{\psi}) \notag \\
& \leq_{1)} 4\kappa L  \cdot H (\P_{\hat{\phi}}(x,s),\P_{\phi^{*}}(x,s))\notag\\
&\leq_{2)} 4\kappa L\sqrt{\frac{2}{m} \log \frac{2N(\mathcal{P}_{\mathcal{X} \times \mathcal{S}}(\Phi), 1 / m^2)}{\delta}},
\end{align}
where 1) follows from Assumption \ref{weak_invariance} and 2) follows from \eqref{111501}.
Combining \eqref{111504} and \eqref{111503}, we have
\%
\operatorname{Error}_{\ell}(\hat{\phi},\hat\psi)&\leq 2R_{n} (\ell \circ \mathcal{G}_{\hat{\phi}, \Psi}) + 2L\sqrt{\frac{2\log (4/\delta)}{n}}+4\kappa L\sqrt{\frac{2}{m} \log \frac{2N(\mathcal{P}_{\mathcal{X} \times \mathcal{S}}(\Phi), 1 / m^2)}{\delta}}\notag\\
&\leq 2\max_{\phi\in\Phi}R_{n} (\ell \circ \mathcal{G}_{{\phi}, \Psi}) + 2L\sqrt{\frac{2\log (4/\delta)}{n}}+4\kappa L\sqrt{\frac{2}{m} \log \frac{2N(\mathcal{P}_{\mathcal{X} \times \mathcal{S}}(\Phi), 1 / m^2)}{\delta}}.
\%
\end{proof}
\section{Proofs for Section \ref{factor_model}}

In Section \ref{factor1}, by analysing the total variation distance between two high-dimensional Gaussians and applying the Davis-Kahan theorem, we show that factor model with linear regression as downstream tasks has $\kappa$-transferability (Lemma \ref{factor_ti}), where $\kappa$ depends on the largest and smallest singular value of the ground truth parameter $B^* $. In Section \ref{factor2} and Section \ref{factor3}, we prove two lemmas that will be used in the proof of Theorem \ref{factor_main}. To be specific, in Section \ref{factor2}, we upper bound the bracketing number of the set $\mP(\mathcal{B})$ by using $\epsilon$-discretization (Lemma \ref{factor_bn}). In Section \ref{factor3}, we prove Lemma \ref{factor_rc}, which will be used to upper bound the Rademacher complexity of the function class $\ell\circ\mathcal{G}_{B,\mathcal{C}}$. In Section \ref{factor4}, we prove Theorem \ref{factor_main}.
Finally, in Section \ref{fast_rate}, we provide a refined analysis for proving Theorem \ref{factor_fast_rate}.

\subsection{Proofs for Lemma \ref{factor_ti}}\label{factor1}
First of all, we present some useful lemmas that will be used in the proof of Lemma \ref{factor_ti}. Given two high-dimensional Gaussians, we can bound their  total variation distance as follows.
\begin{lemma}[Theorem 1.2 and Proposition 2.1 in \cite{devroye2018total}]\label{tv_norm}
Suppose that $d>1$. Let $\mu_1\neq\mu_2\in\R^d$. Then, we have
\$
\frac{1}{200}\leq \frac{\TV\big(\mN(\mu_1,I_d),\mN(\mu_2,I_d)\big)}{\min\{1,\|\mu_1-\mu_2\|_2\}}\leq 1.
\$
\end{lemma}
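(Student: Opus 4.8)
The plan is to reduce the $d$-dimensional bound to a one-dimensional Gaussian computation using the affine invariance of total variation, and then obtain the two-sided estimate by elementary calculus; this is essentially the argument underlying the cited result.

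First I would exploit the fact that $\TV$ is unchanged when an invertible affine map is applied simultaneously to both measures. Writing $r := \|\mu_1 - \mu_2\|_2 > 0$, choose an orthogonal matrix $U$ with $U(\mu_2 - \mu_1) = r e_1$ and set $b := -U\mu_1$; the map $x \mapsto Ux + b$ pushes $\mN(\mu_1, I_d)$ forward to $\mN(0, I_d)$ and $\mN(\mu_2, I_d)$ to $\mN(r e_1, I_d)$, hence $\TV(\mN(\mu_1, I_d), \mN(\mu_2, I_d)) = \TV(\mN(0, I_d), \mN(r e_1, I_d))$. These two product measures coincide in coordinates $2, \dots, d$ (both equal $\mN(0, I_{d-1})$) and differ only in the first coordinate, so by the identity $\TV(P_1 \otimes Q, P_2 \otimes Q) = \TV(P_1, P_2)$ (factor the common density out of $\tfrac12 \int |p_1 q - p_2 q|$) we arrive at $\TV(\mN(\mu_1, I_d), \mN(\mu_2, I_d)) = \TV(\mN(0,1), \mN(r,1))$.

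Next I would compute the one-dimensional quantity exactly: the densities $\phi(x)$ and $\phi(x-r)$ cross at $x = r/2$, and integrating their positive difference gives $\TV(\mN(0,1), \mN(r,1)) = \P(|Z| \le r/2)$ with $Z \sim \mN(0,1)$. For the upper bound, this probability is at most $1$ and also at most $r \cdot \phi(0) = r/\sqrt{2\pi} \le r$, so the numerator is $\le \min\{1, r\}$ and the ratio is $\le 1$. For the lower bound I would split into regimes: if $r \le 1$ then $\min\{1,r\} = r$ and $\P(|Z| \le r/2) = \int_{-r/2}^{r/2}\phi \ge r\,\phi(1/2) = \tfrac{r}{\sqrt{2\pi}} e^{-1/8} \ge r/200$; if $r > 1$ then $\min\{1,r\} = 1$ and, since $r \mapsto \P(|Z|\le r/2)$ is increasing, $\P(|Z| \le r/2) \ge \P(|Z| \le 1/2) = 2\Phi_{\mathcal{N}}(1/2) - 1 \ge 1/200$, where $\Phi_{\mathcal{N}}$ denotes the standard normal CDF. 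Combining the two regimes yields $\TV(\mN(\mu_1,I_d),\mN(\mu_2,I_d)) \ge \tfrac{1}{200}\min\{1, \|\mu_1 - \mu_2\|_2\}$.

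The argument has essentially no hard step: all the content is in the affine-invariance reduction, which is the standard device for comparing equal-covariance Gaussians, after which only elementary inequalities remain. The one point requiring mild care is arranging the lower bound as a two-regime estimate so that a single universal constant (the generous $1/200$ of Devroye et al.) works simultaneously for small and large $\|\mu_1 - \mu_2\|_2$; note that the hypothesis $d > 1$ is not essential once the problem has been pushed down to one dimension.
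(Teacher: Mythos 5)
The paper does not prove this lemma; it is stated as an import from Devroye, Mehrabian, and Reddad (2018), Theorem 1.2 and Proposition 2.1, and no proof appears in the appendix. You have therefore supplied a self-contained argument where the paper merely cites.

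Your proof is correct. The affine-invariance reduction ($\TV$ is preserved under simultaneous pushforward by an invertible affine map), the factorization $\TV(P_1 \otimes Q, P_2 \otimes Q) = \TV(P_1, P_2)$ for a common factor, the exact identity $\TV(\mN(0,1),\mN(r,1)) = \P(|Z| \le r/2)$ via the crossing point at $r/2$, and the two-regime lower bound using $r\,\phi(1/2) = \tfrac{r}{\sqrt{2\pi}}e^{-1/8} \approx 0.35\,r$ for $r \le 1$ and $2\Phi_{\mathcal{N}}(1/2) - 1 \approx 0.38$ for $r>1$ are all sound, and in fact deliver constants far better than the generous $1/200$ in the statement. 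Devroye et al.\ obtain their general two-sided bounds in a more uniform way (Proposition 2.1 handles the equal-covariance mean-shift case directly, and Theorem 1.2 is the harder unequal-covariance estimate underlying the companion Lemma \ref{tv_norm_2}); for the mean-shift-only case treated here, your elementary dimension reduction is the natural route and is essentially what any short proof of Proposition 2.1 would do. Your closing observation that the hypothesis $d>1$ is superfluous for this mean-shift statement is also correct---it is a carryover from the covariance result, where Devroye et al.\ do need $d>1$.
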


\begin{lemma}[Theorem 1.1 in \cite{devroye2018total}]\label{tv_norm_2}
Suppose that $d>1$. Let $\mu\in\R^d$ and $\Sigma_1\neq\Sigma_2$ be positive definite $d\times d$ matrices. Then, we have
\$
\frac{1}{100}\leq \frac{\TV\big(\mN(\mu,\Sigma_1),\mN(\mu,\Sigma_2)\big)}{\min\{1,\|\Sigma^{-1/2}_1\Sigma_2\Sigma^{-1/2}_1-I_d\|_{\rF}\}}\leq \frac{3}{2}.
\$
\end{lemma}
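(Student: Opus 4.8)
The plan is to prove this from scratch (it is Theorem~1.1 of \cite{devroye2018total}). First I would \textbf{reduce to the centered diagonal case}. Total variation is invariant under the affine bijection $x\mapsto\Sigma_1^{-1/2}(x-\mu)$, which sends $\mathcal N(\mu,\Sigma_1)$ to $\mathcal N(0,I_d)$ and $\mathcal N(\mu,\Sigma_2)$ to $\mathcal N(0,\Sigma)$ with $\Sigma:=\Sigma_1^{-1/2}\Sigma_2\Sigma_1^{-1/2}$, and it leaves $\|\Sigma_1^{-1/2}\Sigma_2\Sigma_1^{-1/2}-I_d\|_{\rF}=\|\Sigma-I_d\|_{\rF}$ unchanged; diagonalizing $\Sigma=U\Lambda U^{\top}$ and using rotation invariance, the claim becomes: for $P:=\mathcal N(0,I_d)$, $Q:=\mathcal N(0,\Lambda)$ with $\Lambda=\operatorname{diag}(\lambda_1,\dots,\lambda_d)$, $\lambda_i>0$ not all equal to $1$, and $s:=\|\Lambda-I_d\|_{\rF}=(\sum_i(\lambda_i-1)^2)^{1/2}$, one has $\tfrac1{100}\min\{1,s\}\le\TV(P,Q)\le\tfrac32\min\{1,s\}$. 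I will repeatedly use that $P$ and $Q$ are product measures over the coordinates.

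\textbf{Upper bound.} Since $\TV\le1$ always, it suffices to show $\TV(P,Q)\le s$. The Hellinger affinity factorizes over coordinates: writing $p,q$ for the densities,
\[
\rho:=\int\sqrt{p\,q}=\prod_{i=1}^d\Big(\frac{4\lambda_i}{(1+\lambda_i)^2}\Big)^{1/4}=\prod_{i=1}^d(1-t_i^2)^{1/4},\qquad t_i:=\frac{\lambda_i-1}{\lambda_i+1}.
\]
Because $\lambda_i+1\ge1$ we have $t_i^2\le(\lambda_i-1)^2$, and with $a_i:=1-(1-t_i^2)^{1/2}$ we get $a_i\le t_i^2$ from $\sqrt{1-u}\ge1-u$, so
\[
\rho^2=\prod_{i=1}^d(1-a_i)\ge1-\sum_{i=1}^d a_i\ge1-\sum_{i=1}^d t_i^2\ge1-\sum_{i=1}^d(\lambda_i-1)^2=1-s^2.
\]
Combining with the elementary bound $\TV(P,Q)\le\sqrt{1-\rho^2}$ (one line of Cauchy--Schwarz) gives $\TV(P,Q)\le s$, hence $\TV(P,Q)\le\min\{1,s\}\le\tfrac32\min\{1,s\}$.

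\textbf{Lower bound.} This is the harder direction, and I would split on whether the perturbation is concentrated on few coordinates. (i) If some $\lambda_i\notin[\tfrac12,2]$, marginalizing onto that coordinate gives $\TV(P,Q)\ge\TV(\mathcal N(0,1),\mathcal N(0,\lambda_i))$, which exceeds a universal constant $c_1>\tfrac1{100}$ (the one-dimensional TV between mean-zero Gaussians is increasing in $|\log\lambda_i|$; evaluate at $\lambda_i\in\{\tfrac12,2\}$), while the hypothesis forces $s\ge\tfrac12$, so the claim holds. (ii) If all $\lambda_i\in[\tfrac12,2]$, I test with the quadratic statistic $T(x):=\sum_i(\lambda_i-1)(x_i^2-1)$, for which $\mathbb E_PT=0$, $\mathbb E_QT=s^2$, $\operatorname{Var}_PT=2s^2$, and $\operatorname{Var}_QT\le8s^2$ (using $\lambda_i\le2$), and I use $\TV(P,Q)\ge\TV(T_{\#}P,T_{\#}Q)$. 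If $s$ is above a fixed constant, Chebyshev applied to $T$ under $P$ and under $Q$ separates the two pushforwards by a universal amount. If $s$ is small, I split once more: if a single coordinate is dominant, say $|\lambda_{i^\ast}-1|\ge\varepsilon s$, I reduce to that coordinate and invoke the elementary one-dimensional bound $\TV(\mathcal N(0,1),\mathcal N(0,1+u))\ge c_0|u|$ for $|u|\le1$ (proved with the bounded test function $\operatorname{sign}(x^2-1)$ and the mean value theorem applied to $\Phi$); if no coordinate is dominant, a Lyapunov--Berry--Esseen estimate shows $T_{\#}P$ and $T_{\#}Q$ are within $O\big(\max_i|\lambda_i-1|/s\big)$ in Kolmogorov distance of Gaussians with means $0$ and $s^2$ and comparable variances $\Theta(s^2)$, and since the mean gap $s^2$ is $\Theta(1)$ standard deviations times $s$, comparing the CDFs at $0$ yields $\TV(T_{\#}P,T_{\#}Q)\gtrsim s$.

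\textbf{Main obstacle.} The crux is the small-$s$ part of case (ii): every divergence that admits a product formula (Hellinger, $\chi^2$, KL) scales like $s^2$ under an infinitesimal covariance perturbation, whereas $\TV$ scales like $s$, so one genuinely has to capture the \emph{first-order} behavior of TV; the quadratic test statistic does this, but the dominant-versus-spread threshold must be chosen carefully so that the Berry--Esseen error stays uniformly below the $\Theta(s)$ signal, and the constants must be tracked through all the cases. This bookkeeping is exactly the content of \cite{devroye2018total} and is where the explicit constants $\tfrac1{100}$ and $\tfrac32$ come from; alternatively one may simply cite that reference.
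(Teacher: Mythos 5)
The paper does not reprove this lemma; it simply cites Theorem~1.1 of \cite{devroye2018total}, so there is no internal argument to compare against. Your reduction to $\mN(0,I_d)$ versus $\mN(0,\Lambda)$ and the Hellinger-affinity computation for the upper bound are both correct and complete (the chain $\rho^2\ge 1-s^2$, $\TV\le\sqrt{1-\rho^2}\le s$, $\TV\le 1$ in fact improves the stated constant $3/2$ to $1$).

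The lower-bound sketch has a concrete gap in the small-$s$ regime that your dominant-versus-spread dichotomy cannot close, and it is worth recording precisely. Let $\tau$ be the threshold you place on $\max_i|\lambda_i-1|$. In the dominant branch, marginalizing to the single coordinate $i^*$ and invoking the one-dimensional estimate gives only $\TV\gtrsim|\lambda_{i^*}-1|\ge\tau$, so concluding $\TV\gtrsim s$ forces $\tau\gtrsim s$. In the spread branch, the Lyapunov ratio of $T=\sum_i(\lambda_i-1)(x_i^2-1)$ is of order $\max_i|\lambda_i-1|/s$, while the Kolmogorov separation between the two approximating Gaussians (means $0$ and $s^2$, standard deviations $\Theta(s)$) is only $\Theta(s)$; for the Berry--Esseen error to be dominated you therefore need $\max_i|\lambda_i-1|/s\lesssim s$, that is $\tau\lesssim s^2$. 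For $s<1$ the two requirements $\tau\gtrsim s$ and $\tau\lesssim s^2$ are incompatible, so a single split on $\max_i|\lambda_i-1|$ cannot deliver the lower bound; the actual argument in \cite{devroye2018total} needs an additional device (grouping intermediate-scale coordinates into a block, or an iterative multi-scale reduction), which you gesture at but do not supply. You correctly flag this as the main obstacle and offer the citation as a fallback, so this is an honest sketch rather than a wrong proof --- but as written the constant $1/100$ is not established, and for the paper's purposes the citation is the right call. Your upper-bound derivation could be retained as a short self-contained remark.
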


Recall that we define $\mathcal{B}:=\{B\in\R^{d\times r}\,|\,\|B\|_2\leq D\}$. Let $B\in\mathcal{B}$ and $B^* $ be the ground truth parameter. We denote by $\sigma^* _{\max}$ and $\sigma^* _{\min}$ the largest and smallest singular value of $B^* $, respectively. Moreover, we denote the singular value decomposition of $B$ and $B^* $ by $B=U\Sigma V$ and $B^* =U^* \Sigma^*  V^* $, respectively. Here $\Sigma, \Sigma^* \in\R^{r\times r}$ are diagonal matrices and $U,U^* \in\R^{d\times r}$, $V,V^* \in\R^{r\times d}$ are matrices with orthogonal columns. Let 
\%\label{092401}
M:=BB^T=U\Lambda U^T,\quad M^* :=B^* B^{* T}=U^* \Lambda^*  U^{* T}, 
\%
where $\Lambda:=\Sigma\Sigma^T$ and $\Lambda^{*}:=\Sigma^* \Sigma^{* T}$. We define
\%\label{092402}
O:=\argmin_{O\in\mathcal{O}^{r\times r}}\|UO-U^* \|_{\rF}.
\%
Then, we have the following lemmas.
\begin{lemma}\label{D-K}
For $M,M^* $ defined in \eqref{092401} and $O$ defined in \eqref{092402}, there exists some absolute constants $c>1$ such that
\$
\|UO-U^* \|_{\rF}\leq \frac{c}{(\sigma^* _{\min})^2}\|M-M^* \|_{\rF}.
\$
Here $\sigma^* _{\min}$ is the smallest singular value of the true parameter $B^* $.
\end{lemma}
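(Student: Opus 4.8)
The plan is to view Lemma~\ref{D-K} as a direct application of the Davis--Kahan $\sin\Theta$ theorem in its orthogonal-Procrustes form. The first step is to read off the two spectra. Since $B^*\in\R^{d\times r}$, the symmetric PSD matrix $M^*=B^*B^{*T}=U^*\Lambda^*U^{*T}$ has eigenvalues equal to the squared singular values of $B^*$, i.e.\ $(\sigma^*_{1})^2\ge(\sigma^*_{2})^2\ge\cdots\ge(\sigma^*_{r})^2=(\sigma^*_{\min})^2$, together with $d-r$ further eigenvalues equal to $0$; its top-$r$ eigenspace is exactly $\mathrm{Col}(U^*)$. Likewise $M=BB^T=U\Lambda U^T$, and by construction of the SVD $B=U\Sigma V$ the matrix $U$ spans a top-$r$ eigenspace of $M$. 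If $\sigma^*_{\min}=0$ the claimed right-hand side is infinite and there is nothing to prove, so we assume $\sigma^*_{\min}>0$; then $M^*$ has rank exactly $r$, so the eigengap between its $r$-th and $(r+1)$-th eigenvalues is $\lambda_r(M^*)-\lambda_{r+1}(M^*)=(\sigma^*_{\min})^2-0=(\sigma^*_{\min})^2$. (The boundary case $d=r$ is trivial, since then $\mathrm{Col}(U)=\mathrm{Col}(U^*)=\R^d$ and $O=U^TU^*$ already gives $\|UO-U^*\|_{\rF}=0$.)

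With the eigengap identified, I would apply the Procrustes form of Davis--Kahan (the Yu--Wang--Samworth variant, which measures the gap of the \emph{reference} matrix $M^*$ and therefore needs no smallness assumption on $\|M-M^*\|_{\rF}$): taking $M^*$ as the reference, $M$ as the perturbation, and their top-$r$ invariant subspaces spanned by $U^*$ and $U$, it produces an orthogonal $\widetilde O\in\mathcal{O}^{r\times r}$ with $\|U\widetilde O-U^*\|_{\rF}\le \tfrac{2\sqrt 2\,\|M-M^*\|_{\rF}}{\lambda_r(M^*)-\lambda_{r+1}(M^*)}=\tfrac{2\sqrt 2}{(\sigma^*_{\min})^2}\,\|M-M^*\|_{\rF}$. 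Since $O$ is defined in \eqref{092402} as the exact Procrustes minimizer over $\mathcal{O}^{r\times r}$, we get $\|UO-U^*\|_{\rF}\le\|U\widetilde O-U^*\|_{\rF}$, and the lemma follows with $c=2\sqrt 2>1$.

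Should one prefer not to invoke the Procrustes form directly, the same estimate reassembles from two standard ingredients. First, the $\sin\Theta$ theorem (again in the form using the gap of $M^*$) gives $\|\sin\Theta(U,U^*)\|_{\rF}\le 2\|M-M^*\|_{\rF}/(\sigma^*_{\min})^2$. Second, writing the principal angles between $\mathrm{Col}(U)$ and $\mathrm{Col}(U^*)$ as $\theta_1,\ldots,\theta_r\in[0,\pi/2]$, the orthogonal-Procrustes identity gives $\min_{O'\in\mathcal{O}^{r\times r}}\|UO'-U^*\|_{\rF}^2=2\sum_i(1-\cos\theta_i)$, which is at most $2\sum_i(1-\cos\theta_i)(1+\cos\theta_i)=2\|\sin\Theta(U,U^*)\|_{\rF}^2$ because $\cos\theta_i\ge 0$; hence $\|UO-U^*\|_{\rF}\le\sqrt 2\,\|\sin\Theta(U,U^*)\|_{\rF}$, and chaining the two bounds reproduces $c=2\sqrt 2$.

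I do not expect a genuine obstacle here, as every step is a known matrix-perturbation fact; the two points that need care are (i) correctly pinning the operative eigengap to the drop from $(\sigma^*_{\min})^2$ down to $0$ — which is what puts $(\sigma^*_{\min})^2$ (rather than some interior spectral gap of $B^*$) in the denominator, and which relies on $\mathrm{rank}(M^*)=r$ so that $\lambda_{r+1}(M^*)=0$ — and (ii) carrying the constant through the $\sin\Theta$-to-Procrustes passage so that it emerges as an absolute constant independent of $d$ and $r$. A final remark: when $B$ is rank-deficient, $U$ is only one admissible top-$r$ eigenbasis of $M$, but this is harmless, since the Davis--Kahan estimate holds for any such choice (and in that regime $\|M-M^*\|_{\rF}$ is already of order $(\sigma^*_{\min})^2$ or larger, so the bound is loose anyway).
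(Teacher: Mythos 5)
Your proposal is correct and follows the same route as the paper, whose proof is a one-line citation to the Davis--Kahan theorem. You correctly pin the operative eigengap as $\lambda_r(M^*)-\lambda_{r+1}(M^*)=(\sigma^*_{\min})^2$ (using that $M^*$ has rank exactly $r$) and invoke the Procrustes form of Davis--Kahan (the Yu--Wang--Samworth variant, which uses the reference spectrum and hence needs no smallness hypothesis on $\|M-M^*\|_{\rF}$), yielding the claimed bound with $c=2\sqrt{2}$.
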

\begin{proof}
An application of Davis-Kahan Theorem \citep{davis1970rotation}.
\end{proof}
\begin{lemma}\label{yuxin}
For $M,M^* $ defined in \eqref{092401} and $O$ defined in \eqref{092402}, there exists some absolute constants $c$ such that
\$
\|\Lambda^{1/2}O-O\Lambda^{* 1/2}\|_{\rF}\leq \frac{4c(\sigma^* _{\max})^2}{(\sigma^* _{\min})^3}\|M-M^* \|_{\rF}.
\$
Here $\sigma^* _{\min}$ is the smallest singular value of the true parameter $B^* $.
\end{lemma}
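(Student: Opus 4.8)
The plan is to bound the target in two moves: first reduce the ``square-root level'' quantity $\|\Lambda^{1/2}O - O\Lambda^{* 1/2}\|_{\rF}$ to the ``squared level'' quantity $\|\Lambda O - O\Lambda^{*}\|_{\rF}$, and then control the latter by $\|M - M^{*}\|_{\rF}$ via a short algebraic splitting together with the eigenvector perturbation estimate of Lemma~\ref{D-K}. For the first move, recall that $\Lambda=\Sigma\Sigma^{T}$ and $\Lambda^{*}=\Sigma^{*}\Sigma^{* T}$ are diagonal (see \eqref{092401}), so $\Lambda^{1/2}=\Sigma$ and $\Lambda^{* 1/2}=\Sigma^{*}$ are diagonal with entries the singular values $\sigma_{i}$ of $B$ and $\sigma^{*}_{j}$ of $B^{*}$ (we assume $\sigma^{*}_{\min}>0$, which the statement implicitly requires). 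Then the $(i,j)$ entry of $\Lambda^{1/2}O-O\Lambda^{* 1/2}$ equals $(\sigma_{i}-\sigma^{*}_{j})O_{ij}$ while that of $\Lambda O - O\Lambda^{*}$ equals $(\sigma_{i}+\sigma^{*}_{j})(\sigma_{i}-\sigma^{*}_{j})O_{ij}$; since $\sigma_{i}+\sigma^{*}_{j}\ge \sigma^{*}_{\min}>0$ for all $i,j$, dividing entrywise gives
\$
\|\Lambda^{1/2}O - O\Lambda^{* 1/2}\|_{\rF}\le \frac{1}{\sigma^{*}_{\min}}\|\Lambda O - O\Lambda^{*}\|_{\rF}.
\$
Equivalently, $X:=\Lambda^{1/2}O-O\Lambda^{* 1/2}$ solves the Sylvester equation $\Lambda^{1/2}X+X\Lambda^{* 1/2}=\Lambda O-O\Lambda^{*}$, whose operator has all singular values $\ge\sigma^{*}_{\min}$. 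Crucially, this bound needs only a lower bound on the singular values of $B^{*}$, not of $B$.

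For the second move, using $MU=U\Lambda$, $M^{*}U^{*}=U^{*}\Lambda^{*}$ and $U^{T}U=U^{* T}U^{*}=I_{r}$, I would verify the identity
\$
\Lambda O - O\Lambda^{*} = U^{T}M(UO-U^{*}) + U^{T}(M-M^{*})U^{*} + (U^{T}U^{*}-O)\Lambda^{*}.
\$
The middle term is at most $\|M-M^{*}\|_{\rF}$ since $U,U^{*}$ have orthonormal columns; the first is at most $\|M\|_{2}\|UO-U^{*}\|_{\rF}$; the last is at most $(\sigma^{*}_{\max})^{2}\|U^{T}U^{*}-O\|_{\rF}$. For the two eigenvector-alignment factors, Lemma~\ref{D-K} gives $\|UO-U^{*}\|_{\rF}\le \frac{c}{(\sigma^{*}_{\min})^{2}}\|M-M^{*}\|_{\rF}$, and decomposing $U^{*}=U(U^{T}U^{*})+(I-UU^{T})U^{*}$ into its components in $\mathrm{col}(U)$ and $\mathrm{col}(U)^{\perp}$ yields $\|U^{T}U^{*}-O\|_{\rF}\le\|UO-U^{*}\|_{\rF}$ by Pythagoras (this step does not even need $O$ to be orthogonal). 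Finally, in the regime where this lemma is used ($\hat B$ already close to $B^{*}$, as delivered by the MLE step), we have $\|M\|_{2}\le\|M^{*}\|_{2}+\|M-M^{*}\|_{2}\lesssim (\sigma^{*}_{\max})^{2}$, so all three terms are $O\big(\frac{(\sigma^{*}_{\max})^{2}}{(\sigma^{*}_{\min})^{2}}\|M-M^{*}\|_{\rF}\big)$; combining with the first display yields the stated bound with an absolute constant in place of $4c$.

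The only genuinely delicate point is the reduction in the first move: observing that the square-root commutator is recovered from the squared commutator by dividing entrywise by $\sigma_{i}+\sigma^{*}_{j}$, which stays bounded below by $\sigma^{*}_{\min}$ no matter what $B$ is --- precisely the quantity that unsupervised pretraining does not pin down. Everything else is routine matrix-norm bookkeeping, the one caveat being that the $(\sigma^{*}_{\max})^{2}$ factor in front of $\|M\|_{2}$ in the first term presumes $\|M-M^{*}\|_{\rF}$ is already controlled (true in the regime of interest; alternatively one may use the crude bound $\|M\|_{2}=\|B\|_{2}^{2}\le D^{2}$, at the cost of a constant depending on $D$ in the final combination).
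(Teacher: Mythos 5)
Your first step is correct and is actually a genuine simplification over the paper's: you prove the reduction
\[
\|\Lambda^{1/2}O - O\Lambda^{*1/2}\|_{\rF} \le \frac{1}{\sigma^{*}_{\min}}\|\Lambda O - O\Lambda^{*}\|_{\rF}
\]
directly from the entrywise identity $(\Lambda O - O\Lambda^{*})_{ij} = (\sigma_i+\sigma^{*}_j)\,(\Lambda^{1/2}O - O\Lambda^{*1/2})_{ij}$ together with $\sigma_i+\sigma^{*}_j\ge\sigma^{*}_{\min}$, where the paper instead invokes Lemma~2.1 of \citet{schmitt1992perturbation}; the two routes yield the same intermediate bound, since $\|\Lambda O - O\Lambda^{*}\|_{\rF}=\|O^{T}\Lambda O-\Lambda^{*}\|_{\rF}$ for orthogonal $O$.

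Your second step has a gap. The identity
\[
\Lambda O - O\Lambda^{*} = U^{T}M(UO - U^{*}) + U^{T}(M-M^{*})U^{*} + (U^{T}U^{*} - O)\Lambda^{*}
\]
is algebraically correct, but the first term carries the factor $\|M\|_{2}=\|B\|_{2}^{2}$, which is \emph{not} controlled by $(\sigma^{*}_{\max})^{2}$ uniformly over $B\in\mathcal{B}$. You acknowledge this, but neither workaround establishes the lemma as stated: assuming $\|M-M^{*}\|_{\rF}$ is already small imports a closeness hypothesis the lemma does not carry (and which the paper cannot afford, since Lemma~\ref{factor_ti} needs this bound for \emph{every} $B\in\mathcal{B}$ to verify Assumption~\ref{invariance} for all $\phi\in\Phi$), and replacing $\|M\|_{2}$ by $D^{2}$ turns $(\sigma^{*}_{\max})^{2}$ into $D^{2}$, which can be arbitrarily larger, so $c$ would no longer be an absolute constant.

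The fix is to telescope in the opposite order: swap $M$ for $M^{*}$ \emph{before} propagating the $U$-factors, so every surviving product involves only $\|M^{*}\|_{2}=(\sigma^{*}_{\max})^{2}$. Writing $\Lambda O=U^{T}MUO$ and $O\Lambda^{*}=OU^{*T}M^{*}U^{*}$, one has
\[
\Lambda O - O\Lambda^{*} = U^{T}(M-M^{*})UO + U^{T}M^{*}(UO-U^{*}) + (U^{T}-OU^{*T})M^{*}U^{*},
\]
whose three Frobenius norms are at most $\|M-M^{*}\|_{\rF}$, $\|M^{*}\|_{2}\|UO-U^{*}\|_{\rF}$, and $\|M^{*}\|_{2}\|U^{T}-OU^{*T}\|_{\rF}=\|M^{*}\|_{2}\|UO-U^{*}\|_{\rF}$ (by orthogonality of $O$). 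This is exactly the bound the paper derives via its own telescoping of $O^{T}U^{T}MUO - U^{*T}M^{*}U^{*}$, and Lemma~\ref{D-K} then closes the argument unconditionally.
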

\begin{proof}[Proof of Lemma \ref{yuxin}]
Our proof is inspired by \cite{ma2018implicit}. By Lemma 2.1 in \cite{schmitt1992perturbation}, we have
\%\label{092403}
\|\Lambda^{1/2}O-O\Lambda^{* 1/2}\|_{\rF}\leq \frac{1}{\sqrt{\sigma_{\min}(M^* )}}\|O^{T}\Lambda O-\Lambda^* \|_{\rF}=\frac{1}{\sigma^* _{\min}}\|O^{T}\Lambda O-\Lambda^* \|_{\rF}.
\%
Note that $\Lambda=U^TMU$ and $\Lambda^* =U^{* T}M^* U^* $. Thus, we have
\%\label{092404}
\|O^{T}\Lambda O-\Lambda^* \|_{\rF}&=\|O^TU^TMUO-U^{* T}M^* U^* \|_{\rF}\notag\\
&\leq \|O^TU^TMUO-O^TU^TM^* UO\|_{\rF}+\|O^TU^TM^* UO-U^{* T}M^* UO\|_{\rF}\notag\\
&\quad+\|U^{* T}M^* UO-U^{* T}M^* U^* \|_{\rF}\notag\\
&\leq \|M-M^* \|_{\rF}+2\|M^* \|_2\|UO-U^* \|_{\rF}\notag\\
&\leq \|M-M^* \|_{\rF}+2c\bigg(\frac{\sigma^* _{\max}}{\sigma^* _{\min}}\bigg)^2\|M-M^* \|_{\rF}\notag\\
&\leq 4c\bigg(\frac{\sigma^* _{\max}}{\sigma^* _{\min}}\bigg)^2\|M-M^* \|_{\rF},
\%
where the third inequality follows from Lemma \ref{D-K}. Combing \eqref{092403} and \eqref{092404}, we have
\$
\|\Lambda^{1/2}O-O\Lambda^{* 1/2}\|_{\rF}\leq \frac{4c(\sigma^* _{\max})^2}{(\sigma^* _{\min})^3}\|M-M^* \|_{\rF}.
\$

\end{proof}

Now we are ready to prove Lemma \ref{factor_ti}.

\begin{proof}[Proof of Lemma \ref{factor_ti}]
Let $\mathcal{O}^{r\times r}:=\{O\in\R^{r\times r}\,|\, OO^T=O^TO=I_r\}$. First of all, we show that for any $(B,\beta,O)\in\mathcal{B}\times\mathcal{C}\times\mathcal{O}$, it holds that $\P_{B,\beta}(x,y)=\P_{BO,O^T\beta}(x,y)$. This can be easily seen by the following observation,
\$
\P_{BO,O^T\beta}\sim\mN\bigg(0,
  \begin{bmatrix}
    BO(BO)^{T} & BOO^{T}\beta\\
    \beta^TOO^TB^T &  (O^{T}\beta)^{T}O^{T}\beta
  \end{bmatrix}
\bigg)=\mN\bigg(0,
  \begin{bmatrix}
    BB^T & B\beta\\
    \beta^TB^T &  \beta^{T}\beta
  \end{bmatrix}
  \bigg)\sim\P_{B,\beta}.
\$

By Lemma \ref{D-K}, it holds for some constant $c>1$ that
\%\label{092302}
\|UO-U^* \|_{\rF}\leq \frac{c}{(\sigma^* _{\min})^2}\|BB^T-B^*  B^{* T}\|_{\rF}.
\%
By Lemma \ref{yuxin}, it holds for some constant $c>1$ that
\%\label{092301}
\|\Sigma O-O\Sigma^* \|_{\rF}\leq \frac{4c(\sigma^* _{\max})^2}{(\sigma^* _{\min})^3}\|BB^T-B^*  B^{* T}\|_{\rF}.
\%
Let $\hat O:=V^{-1}OV^* \in\mathcal{O}^{r\times r}$. By \eqref{092302} and \eqref{092301}, we have
\%\label{092303}
\|B\hat O-B^* \|_{\rF}&=\|U\Sigma OV^* -U^* \Sigma^* V^* \|_{\rF}\notag\\
&\leq \|U\Sigma O-U^* \Sigma^* \|_{\rF}\notag\\
&\leq \|U\Sigma O-UO\Sigma^* \|_{\rF}+\|UO\Sigma^* -U^* \Sigma^* \|_{\rF}\notag\\
&\leq \|\Sigma O-O\Sigma^* \|_{\rF}+\|UO-U^* \|_{\rF}\|\Sigma^* \|_2\notag\\
&\leq c\cdot \bigg(\frac{4(\sigma^* _{\max})^2}{(\sigma^* _{\min})^3}+\frac{\sigma^* _{\max}}{(\sigma^* _{\min})^2}\bigg)\cdot\|BB^T-B^*  B^{* T}\|_{\rF}\notag\\
&\leq \frac{5c(\sigma^* _{\max})^2}{(\sigma^* _{\min})^3}\cdot\|BB^T-B^*  B^{* T}\|_{\rF}.
\%
Note that
\%\label{092304}
\TV\big(\P_{B\hat O}(x,z),\P_{B^* }(x,z)\big)&=\int|p_{B\hat O}(x\,|\,z)-p_{B^* }(x\,|\,z)|p(z)\,dxdz\notag\\
&=\int\TV\big(\mN(B\hat O z,I_d),\mN(B^*  z,I_d)\big)p(z)\,dz\notag\\
&\leq \int \min\{1, \|B\hat O z-B^* z\|_2\}p(z)\,dz\notag\\
&\leq \min\big\{1, \E[\|B\hat O z-B^* z\|_2]\big\},
\%
where the first inequality follows from Lemma \ref{tv_norm}. We can show that
\%\label{092305}
\E[\|B\hat O z-B^* z\|_2]&\leq \Big(\E\big[\|B\hat O z-B^* z\|^2_2\big]\Big)^{1/2}\notag\\
&=\Big(\E\big[z^T(B\hat O-B^* )^{T}(B\hat O-B^* )z\big]\Big)^{1/2}\notag\\
&=\Big(\E\big[{\rm Tr}\big((B\hat O-B^* )^{T}(B\hat O-B^* )zz^T\big)\big]\Big)^{1/2}\notag\\
&=\Big({\rm Tr}\big((B\hat O-B^* )^{T}(B\hat O-B^* )\big)\Big)^{1/2}\notag\\
&=\|B\hat O-B^* \|_{\rF}.
\%
By \eqref{092303}, \eqref{092304} and \eqref{092305}, it holds that
\%\label{092309}
&\TV\big(\P_{B\hat O}(x,z),\P_{B^* }(x,z)\big)\notag\\
&\quad\leq \min\big\{1,\|B\hat O-B^* \|_{\rF}\big\}\notag\\
&\quad\leq \min\bigg\{1, \frac{5c(\sigma^* _{\max})^2}{(\sigma^* _{\min})^3}\cdot\|BB^T-B^*  B^{* T}\|_{\rF}\bigg\}\notag\\
&\quad\leq \frac{5c(\sigma^* _{\max})^2}{(\sigma^* _{\min})^3}\cdot \big((\sigma^* _{\max})^2+1\big)\cdot\min\bigg\{1,\frac{\|BB^T-B^* B^{* T}\|_{\rF}}{(\sigma^* _{\max})^2+1}\bigg\},
\%
where the last inequality follows from $c>1$ and
\$
\frac{(\sigma^* _{\max})^2+1}{\sigma^* _{\min}}\geq\frac{2\sigma^* _{\max}}{\sigma^* _{\min}}>1.
\$
By Lemma \ref{tv_norm_2}, we have 
\%\label{092306}
&\TV(p_{B}(x),p_{B^* }(x))\notag\\
&\quad\geq \frac{1}{100}\min\big\{1,\|(B^* B^{* T}+I_d)^{-1/2}(BB^T-B^* B^{* T})(B^* B^{* T}+I_d)^{-1/2}\|_{\rF}\big\}.
\%
Note that
\%\label{092307}
&\|(B^* B^{* T}+I_d)^{-1/2}(BB^T-B^* B^{* T})(B^* B^{* T}+I_d)^{-1/2}\|_{\rF}\notag\\
&\quad\geq \frac{\|BB^T-B^* B^{* T}\|_{\rF}}{\|B^* B^{* T}+I_d\|_2}\geq\frac{\|BB^T-B^* B^{* T}\|_{\rF}}{(\sigma^* _{\max})^2+1}.
\%
Thus, by \eqref{092306} and \eqref{092307}, it holds that
\%\label{092308}
\TV(p_{B}(x),p_{B^* }(x))\geq \frac{1}{100}\min\bigg\{1,\frac{\|BB^T-B^* B^{* T}\|_{\rF}}{(\sigma^* _{\max})^2+1}\bigg\}
\%

Finally, by \eqref{092309} and \eqref{092308}, we have
\$
\TV\big(\P_{B\hat O}(x,z),\P_{B^* }(x,z)\big)&\leq \frac{500c(\sigma^* _{\max})^2\big((\sigma^* _{\max})^2+1\big)}{(\sigma^* _{\min})^3}\cdot\TV(p_{B}(x),p_{B^* }(x))\notag\\
&\leq\frac{500c(\sigma^* _{\max}+1)^4}{(\sigma^* _{\min})^3}\cdot\TV(p_{B}(x),p_{B^* }(x)).
\$

\end{proof}

\subsection{Bracketing Number}\label{factor2}
By an application of $\epsilon$-discretization technique, we upper bound the bracketing number of $\mP(\mathcal{B})$ as follows.

\begin{lemma}\label{factor_bn}
Let $\mP_{\mathcal{X}} (\mathcal{B}):=\{\mN (0,BB^T+ I_d)\,|\, B\in\mathcal{B}\}$, where $\mathcal{B}=\{B\in\R^{d\times r}\,|\,\|B\|_2\leq D\}$ for some $D>0$. Then the entropy can be bounded as follows,
\$
\log N_{\b}(\mP_{\mathcal{X}}(\mathcal{B}),1/m)\leq 4dr\log\big(24mdr(D^2+1)\big).
\$
\end{lemma}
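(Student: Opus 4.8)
The plan is a standard $\epsilon$-discretization of the parameter set $\mathcal B$, carried out so as to exploit the fact that Definition~\ref{def:bracketing} only demands a \emph{one-sided} (upper) envelope $\bar p_\phi \ge p_\phi$. The key structural observation is that a centered Gaussian density is monotone in its covariance with respect to the Loewner order: if $\Sigma_0 \preceq \Sigma \preceq \Sigma_1$ then $\det\Sigma \ge \det\Sigma_0$ and $\Sigma^{-1}\succeq\Sigma_1^{-1}$, so $\mathcal N(0,\Sigma)$ is dominated pointwise by $(2\pi)^{-d/2}(\det\Sigma_0)^{-1/2}\exp(-\tfrac12 x^\top\Sigma_1^{-1}x)$. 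Hence it suffices to put a fine grid on $\mathcal B$, uniformly sandwich the covariance $\Sigma_B := BB^\top + I_d$ over a small Frobenius ball around each grid point, and use the sandwiched Gaussian as the upper bracket element.

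Concretely, first I would fix a resolution $\rho>0$ and cover $\mathcal B\subseteq[-D,D]^{d\times r}$ (valid since $|B_{kl}|\le\|B\|_2\le D$) by the grid $G$ of matrices with entries in $(\rho/\sqrt{dr})\mathbb Z\cap[-D,D]$, keeping only those within Frobenius distance $\rho$ of $\mathcal B$; then $|G|\le(2D\sqrt{dr}/\rho+1)^{dr}$, entrywise rounding shows every $B\in\mathcal B$ has a witness $B_i\in G$ with $\|B-B_i\|_{\rF}\le\rho$, and every $B_i\in G$ obeys $\|B_i\|_2\le D+\rho$. For such a $B_i$ and any $C$ with $\|C-B_i\|_{\rF}\le\rho$, I would bound $\|\Sigma_C-\Sigma_{B_i}\|_2 = \|CC^\top - B_iB_i^\top\|_2\le(\|C\|_2+\|B_i\|_2)\|C-B_i\|_2\le 5D\rho =: \bar\Delta$ (assuming $\rho\le D$), giving $\Sigma_{B_i}-\bar\Delta I_d\preceq\Sigma_C\preceq\Sigma_{B_i}+\bar\Delta I_d$. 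With $\Sigma_0 := \Sigma_{B_i}-\bar\Delta I_d \succ 0$ and $\Sigma_1 := \Sigma_{B_i}+\bar\Delta I_d$, set $\bar p_i(x):=(2\pi)^{-d/2}(\det\Sigma_0)^{-1/2}\exp(-\tfrac12 x^\top\Sigma_1^{-1}x)$. Then $p_B\le\bar p_i$ pointwise for every $B$ witnessed by $B_i$, and $\int\bar p_i = (\det\Sigma_1/\det\Sigma_0)^{1/2} = \prod_j\big((\mu_j+\bar\Delta)/(\mu_j-\bar\Delta)\big)^{1/2}\le e^{2d\bar\Delta}$, where $\mu_j\ge 1$ are the eigenvalues of $\Sigma_{B_i}$. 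Hence $\|\bar p_i-p_B\|_1=\int\bar p_i-1\le e^{2d\bar\Delta}-1\le 4d\bar\Delta = 20dD\rho$ provided $2d\bar\Delta\le 1$. Taking $\rho = 1/(20dDm)$ makes the bracket width at most $1/m$ while keeping all side conditions ($\rho\le D$, $\bar\Delta<1$, $2d\bar\Delta\le1$) valid for $d,m\ge1$, so $\{\bar p_i\}_{B_i\in G}$ is a $(1/m)$-bracket of $\mathcal P_{\mathcal X}(\mathcal B)$; therefore $N_{[]}(\mathcal P_{\mathcal X}(\mathcal B),1/m)\le|G|\le(40dD^2m\sqrt{dr}+1)^{dr}$, and taking logs and bounding crudely ($d^{3/2}r^{1/2}\le(dr)^2$, $41D^2\le(D^2+1)^4\cdot24^4/\cdots$, $m\le m^4$) yields $\log N_{[]}(\mathcal P_{\mathcal X}(\mathcal B),1/m)\le 4dr\log\big(24mdr(D^2+1)\big)$. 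The case $D<1$ is identical with $D$ replaced by $1$ in $\bar\Delta$.

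The main obstacle is the middle step: producing a \emph{single} pointwise upper envelope, not depending on $B$, that is valid over the whole $\rho$-neighborhood of $B_i$ and whose excess $L_1$-mass is controlled. A naive alternative --- a Lipschitz-in-parameter estimate on the densities combined with the crude pointwise bound $p_B(x)\lesssim(2\pi)^{-d/2}\exp(-\|x\|^2/2(D^2+1))$ --- would throw away the normalizing determinant and, after integration, cost a factor $(D^2+1)^{d/2}$, producing entropy of order $d^2r$ rather than $dr$. The covariance-sandwich construction together with the determinant-ratio estimate $\det\Sigma_1/\det\Sigma_0\le e^{4d\bar\Delta}$ is precisely what keeps the entropy linear in $dr$, and it is also where the one-sidedness of Definition~\ref{def:bracketing} is used (a genuine two-sided bracket would require the matching lower envelope built from $\Sigma_1,\Sigma_0$ interchanged, which the same argument supplies but which is not needed here).
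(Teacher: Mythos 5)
Your proof is correct, and it takes a genuinely different route from the paper's.

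The paper works directly with the precision matrix $\Sigma^{-1} = U\Lambda U^{T} + I_{d}$: it rounds the entries of the $d\times r$ orthonormal factor $U$ and the $r$ eigenvalues of $\Lambda$ downward, so that the rounded precision $\overline{\Sigma^{-1}}$ satisfies $\overline{\Sigma^{-1}} \preceq \Sigma^{-1}$, and it sets the envelope to be a rescaled Gaussian with precision $\overline{\Sigma^{-1}}$. The crucial gain there is that the rescaling constant $c = \sqrt{|\Sigma^{-1}|/|\overline{\Sigma^{-1}}|}$ involves only the $r$ rounded eigenvalues (the remaining $d-r$ eigenvalues are exactly $1$ and are left untouched), so the bracket width comes out as $\mathcal{O}((D^{2}+1)r\epsilon)$, linear in $r$ rather than $d$. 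You instead grid the loading matrix $B$ in Frobenius norm, sandwich $\Sigma_{B}$ between $\Sigma_{B_{i}} \pm \bar\Delta I_{d}$, and build the envelope by combining the lower bound in the determinant and the upper bound in the exponent. The price is that the uniform shift $\pm\bar\Delta I_{d}$ perturbs all $d$ eigenvalues, giving a width $\mathcal{O}(d\bar\Delta) = \mathcal{O}(dD\rho)$ --- a factor $d/r$ worse than the paper's resolution-to-width trade-off. Both routes give a grid count of the form $(\text{resolution})^{-dr}$, so the extra factor $d/r$ only shifts the argument of the final logarithm, which is absorbed by the generous constant $4$ and the looseness of $24mdr(D^{2}+1)$.

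What each buys: the paper's eigendecomposition argument is sharper in exploiting the ``low-rank plus identity'' structure and would matter if one cared about the constant in front of the $\log$; your sandwich argument is more elementary --- it never touches the eigendecomposition --- and, as you note, generalizes verbatim to any Gaussian family whose covariances are uniformly bounded below in Loewner order, with the dimension-dependent excess $e^{\mathcal{O}(d\bar\Delta)}-1$ being the cost of that generality. Both correctly use the one-sidedness of Definition~\ref{def:bracketing}: the envelope need only dominate, and both control $\int\bar p - 1$ via a determinant ratio. One small bookkeeping remark: your spectral-norm bound actually gives $\|\Sigma_{C}-\Sigma_{B_{i}}\|_{2}\le\rho(2D+\rho)\le 3D\rho$ under $\rho\le D$, so $\bar\Delta = 5D\rho$ is slack but harmless.
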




\begin{proof}[Proof of Lemma \ref{factor_bn}]
We consider a set of Gaussian distribution
\$
\mP_{\mathcal{X}}(\mathcal{B}):=\bigg\{p_{\Sigma}(x)=\frac{1}{\sqrt{(2\pi)^d|\Sigma|}}e^{-\frac{1}{2}x^{T}\Sigma^{-1}x}\,\bigg|\, \Sigma=BB^{T}+I_d, B\in\mathcal{B}\bigg\},
\$
where $\mathcal{B}=\{B\in\R^{d\times r}\,|\,\|B\|_2\leq D\}$. Note that 
\%
\lambda_{\max}(\Sigma^{-1})=\big(\lambda_{\min}(\Sigma)\big)^{-1}=1,~\lambda_{\min}(\Sigma^{-1})=\big(\lambda_{\max}(\Sigma)\big)^{-1}\geq \frac{1}{D^2+1}.
\%
Here we denote by $\lambda_{\max}(\Sigma^{-1})$ and $\lambda_{\min}(\Sigma^{-1})$ the largest eigenvalue and the smallest eigenvalue of $\Sigma^{-1}$, respectively. Our goal is to find a $1/m$-bracket $\mN_{\b}(\mP_{\mathcal{X}}(\mathcal{B}),1/m)$ of $\mP_{\mathcal{X}}(\mathcal{B})$. In other words, for any $p_{\Sigma}(x)\in\mP_{\mathcal{X}}(\mathcal{B})$, we need to define $\bar p_{\Sigma}(x)\in\mN_{\b}(\mP_{\mathcal{X}}(\mathcal{B}),1/m)$ such that
\begin{itemize}
    \item $\bar p_{\Sigma}(x)\geq p_{\Sigma}(x),~\forall x\in\R^d$
    \item $\int |\bar p_{\Sigma}(x)-p_{\Sigma}(x)|\,dx\leq 1/m$.
\end{itemize}
Note that rank$(BB^{T})=r<d$ and $ \Sigma=BB^{T}+I_d$. Thus, the eigendecomposition of $\Sigma^{-1}$ has the following form
\%
\Sigma^{-1}=V 
  \begin{bmatrix}
    \lambda_1 & & & & &\\
    & \ddots & & & &\\
    & & \lambda_{r} & & &\\
    & & & 1 & &\\
    & & & & \ddots &\\
    & & & & & 1
  \end{bmatrix}
  V^{T}
  =U
  \begin{bmatrix}
    \lambda_1-1 & &\\
    & \ddots &\\
    & & \lambda_r-1
  \end{bmatrix}
  U^{T}+I_d,
\%
where $VV^T=V^{T}V=I_d$ and $U\in\R^{d\times r}$ is the first $r$ columns of $V$. For notation simplicity, we denote
\$
\Lambda:=  
  \begin{bmatrix}
    \lambda_1-1 & &\\
    & \ddots &\\
    & & \lambda_r-1
  \end{bmatrix}.
\$
Thus, we have $\Sigma^{-1}=U\Lambda U^{T}+I_d$. For some fixed $0<\epsilon\leq (D^2+1)^{-1}/2$ (which we will choose later), if $\lambda_i\in[k\epsilon, (k+1)\epsilon)$ for some $k\in\mathbb{Z}$, we define $\bar \lambda_i:=(k-1)\epsilon$. Note that $\lambda_i\geq \lambda_{\min}(\Sigma^{-1})\geq (D^2+1)^{-1}$. Thus, it holds that $k\geq 2$ and $\bar \lambda_i=(k-1)\epsilon\geq\epsilon>0$. Moreover, we have $\epsilon\leq\lambda_i-\bar \lambda_i\leq 2\epsilon$. We define
\$
\bar{\Lambda}:=  
  \begin{bmatrix}
    \bar{\lambda}_1-1 & &\\
    & \ddots &\\
    & & \bar{\lambda_r}-1
  \end{bmatrix}.
\$
For the matrix $U=(u_{i,j})\in\R^{d\times r}$, if $u_{i,j}\in[\frac{k\epsilon}{3\sqrt{dr}},\frac{(k+1)\epsilon}{3\sqrt{dr}})$ for some $k\in\mathbb{Z}$, we define $\bar{u}_{i,j}:=\frac{k\epsilon}{3\sqrt{dr}}$ and $\bar{U}:=(\bar{u}_{i,j})\in\R^{d\times r}$. It then holds that
\%
\|U-\bar{U}\|_2\leq \|U-\bar{U}\|_{F}=\sqrt{\sum_{i,j}|u_{i,j}-\bar{u}_{i,j}|^2}\leq \sqrt{dr}\cdot\frac{\epsilon}{3\sqrt{dr}}=\frac{\epsilon}{3}.
\%
We define
\%
\overline{\Sigma^{-1}}:=\bar{U}\bar{\Lambda}\bar{U}^{T}+I_d.
\%
Note that $(D^2+1)^{-1}\leq \lambda_i\leq 1$ and $|u_{i,j}|\leq 1$. Thus, we totally have 
\%\label{092209}
\bigg(\frac{1-(D^2+1)^{-1}}{\epsilon}\bigg)^{r}\cdot\bigg(\frac{6\sqrt{dr}}{\epsilon}\bigg)^{dr}=\bigg(\frac{D^2}{(D^2+1)\epsilon}\bigg)^r\cdot\bigg(\frac{6\sqrt{dr}}{\epsilon}\bigg)^{dr}
\%
many $\bar{\Sigma}^{-1}$. Note that for any $\|x\|_2=1$, we have
\$
x^{T}(\Sigma^{-1}-\overline{\Sigma^{-1}})x&=x^{T}(U^T\Lambda U-\bar{U}\bar{\Lambda}\bar{U}^{T})x\\
&=x^{T}U^{T}(\Lambda-\bar{\Lambda})Ux+x^T(U-\bar{U})^T\bar{\Lambda}(U+\bar{U})x\\
&\geq \lambda_{\min}(\Lambda-\bar{\Lambda})-\|(U-\bar{U})^T\bar{\Lambda}(U+\bar{U})\|_2\\
&\geq \lambda_{\min}(\Lambda-\bar{\Lambda})-\|U-\bar{U}\|_2\cdot\|\bar{\Lambda}(U+\bar{U})\|_2\\
&\geq \epsilon-3\bigg(2\epsilon+\frac{D^2}{D^2+1}\bigg)\|U-\bar{U}\|_2\\
&\geq \epsilon-3\bigg(2\epsilon+\frac{D^2}{D^2+1}\bigg)\cdot\frac{\epsilon}{3}\geq0,
\$
where the third inequality follows from
\$
\|\bar{\Lambda}(U+\bar{U})\|_2\leq \|\bar{\Lambda}\|_2\|U+\bar{U}\|_2\leq \bigg(2\epsilon+1-\frac{1}{D^2+1}\bigg)\cdot\bigg(2+\frac{\epsilon}{3}\bigg)\leq 3\bigg(2\epsilon+\frac{D^2}{D^2+1}\bigg).
\$
and the last inequality follows from our assumption $\epsilon\leq (D^2+1)^{-1}/2$. Thus, for any $x\in\R^{d}$, it holds that
\%\label{092208}
x^{T}(\Sigma^{-1}-\overline{\Sigma^{-1}})x\geq 0.
\%

We consider $\bar{p}_{\Sigma}(x)$ of the following form
\$
\bar{p}_{\Sigma}(x)=c\sqrt{\frac{|\overline{\Sigma^{-1}}|}{(2\pi)^d}}e^{-\frac{1}{2}x^{T}\overline{\Sigma^{-1}}x}.
\$
By \eqref{092208}, we have:  $\bar{p}_{\Sigma}(x)\geq {p}_{\Sigma}(x)$ holds for any $x\in\R^d$ if and only if
\$
c\geq\sqrt{\frac{|\Sigma^{-1}|}{|\overline{\Sigma^{-1}}|}}=\sqrt{\frac{\lambda_1\ldots\lambda_r}{\bar\lambda_1\ldots\bar\lambda_r}}.
\$
Note that
\$
\frac{\lambda_i}{\bar\lambda_i}\leq \frac{(k+1)\epsilon}{(k-1)\epsilon}=1+\frac{2}{k-1}\leq 1+\frac{4}{k}\leq 1+4(D^2+1)\epsilon,
\$
where the second inequality follows from $k\geq2$ and the last inequality follows from $k\epsilon\geq (D^2+\sigma^2)^{-1}$. We then obtain that
\$
\sqrt{\frac{\lambda_1\ldots\lambda_r}{\bar\lambda_1\ldots\bar\lambda_r}}\leq \big(1+4(D^2+1)\epsilon\big)^{r/2}.
\$
Let $c=(1+4(D^2+1)\epsilon)^{r/2}$. It then holds that
\$
c\geq \sqrt{\frac{\lambda_1\ldots\lambda_r}{\bar\lambda_1\ldots\bar\lambda_r}},
\$
which implies $\bar{p}_{\Sigma}(x)\geq {p}_{\Sigma}(x)$ holds for any $x\in\R^d$.
Note that
\$
\int |\bar p_{\Sigma}(x)-p_{\Sigma}(x)|\,dx=c-1= (1+4(D^2+1)\epsilon)^{r/2}-1\leq4(D^2+1)\epsilon r,
\$
where the last inequality follow from $(1+x)^{r/2}\leq 1+rx$ for $x\leq r^{-1}$. Let 
\%\label{092210}
\epsilon=\frac{1}{4(D^2+1)m r}.
\%
We have
\$
\int |\bar p_{\Sigma}(x)-p_{\Sigma}(x)|\,dx\leq4(D^2+1)\epsilon r=\frac{1}{m}.
\$
By \eqref{092209} and \eqref{092210}, we show that
\$
N_{\b}(\mP_{\mathcal{X}}(\mathcal{B}),1/m)\leq(4rmD^2)^r\cdot\big(24rm(D^2+1)\sqrt{dr}\big)^{dr},
\$
which implies 
\$
\log N_{\b}(\mP_{\mathcal{X}}(\mathcal{B}),1/m)\leq 4dr\log\big(24mdr(D^2+1)\big).
\$
\end{proof}
\subsection{Rademacher Complexity}\label{factor3}


Note that for fixed $B$ the prediction function class 
\$
\mathcal{G}_{B,\mathcal{C}}:=\big\{g_{B,\beta}(x)=\beta^{T}B^{T}(BB^T+\sigma^2 I_d)^{-1}x\,\big|\,\beta\in\mathcal{C}\big\}
\$
belongs to a linear hypothesis class. For a linear hypothesis class $\mathcal{H}$, we can bound its empirical Rademacher complexity as follows.

\begin{lemma}\label{factor_rc}
For a linear hypothesis class $\mathcal{H}=\{h_{\beta}(x)=\beta^{T}x\,|\,\beta\in\R^r, \|\beta\|_2\leq D\}$, where $x\in\R^r$ and $\|x\|_2\leq X$, the empirical Rademacher complexity can be bounded as follows,
\$
\hat R_n(\mathcal{H})\leq \frac{2DX}{\sqrt{n}}.
\$
\end{lemma}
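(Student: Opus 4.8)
The plan is to unfold the definition of the empirical Rademacher complexity (Definition~\ref{def:Rademacher}), pull the linear functional outside the supremum by invoking the self-duality of the $\ell_2$-norm, and then control the expected norm of the resulting Rademacher sum by a second-moment (Jensen) argument. This is the standard route for linear predictor classes, so I expect the proof to be short.

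First I would write, directly from the definition,
\[
\hat R_n(\mathcal{H}) = \mathbb{E}_{\sigma}\Bigl[\sup_{\|\beta\|_2\le D}\frac{2}{n}\sum_{i=1}^n\sigma_i\,\beta^T x_i\Bigr] = \frac{2}{n}\,\mathbb{E}_{\sigma}\Bigl[\sup_{\|\beta\|_2\le D}\Bigl\langle \beta,\ \sum_{i=1}^n\sigma_i x_i\Bigr\rangle\Bigr].
\]
The inner supremum over the ball $\{\|\beta\|_2\le D\}$ is attained by taking $\beta$ parallel to $\sum_i\sigma_i x_i$, so by Cauchy--Schwarz it equals $D\,\|\sum_{i=1}^n\sigma_i x_i\|_2$, giving $\hat R_n(\mathcal{H}) = \tfrac{2D}{n}\,\mathbb{E}_{\sigma}\|\sum_{i=1}^n\sigma_i x_i\|_2$. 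Then I would apply Jensen's inequality to the concave map $t\mapsto\sqrt{t}$ to obtain $\mathbb{E}_\sigma\|\sum_i\sigma_i x_i\|_2\le\bigl(\mathbb{E}_\sigma\|\sum_i\sigma_i x_i\|_2^2\bigr)^{1/2}$, expand the squared norm as $\sum_{i,j}\sigma_i\sigma_j\,x_i^T x_j$, and use that the $\sigma_i$ are independent with $\mathbb{E}[\sigma_i\sigma_j]=\mathbf{1}\{i=j\}$ to kill the off-diagonal terms. What remains is $\sum_{i=1}^n\|x_i\|_2^2\le nX^2$ by the hypothesis $\|x_i\|_2\le X$, and substituting back yields $\hat R_n(\mathcal{H})\le\tfrac{2D}{n}\sqrt{nX^2}=\tfrac{2DX}{\sqrt{n}}$, as claimed.

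There is no substantive obstacle here; this is the textbook Rademacher bound for linear functions. The only steps requiring a moment of care are checking that $\sup_{\|\beta\|_2\le D}\langle\beta,v\rangle$ is \emph{exactly} $D\|v\|_2$ (the equality case of Cauchy--Schwarz), and that the Rademacher variables are genuinely i.i.d.\ so the cross terms vanish in expectation; the factor of $2$ throughout merely tracks the $2/n$ normalization used in Definition~\ref{def:Rademacher}.
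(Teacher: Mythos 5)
Your proof is correct and follows essentially the same route as the paper: pull the linear functional outside the supremum, bound by Cauchy--Schwarz to get $D\,\mathbb{E}_\sigma\|\sum_i\sigma_i x_i\|_2$, apply Jensen to pass to the second moment, expand and use $\mathbb{E}[\sigma_i\sigma_j]=\mathbf{1}\{i=j\}$, then bound $\sum_i\|x_i\|_2^2\le nX^2$. The only cosmetic difference is that you note the Cauchy--Schwarz step is an equality (supremum attained), whereas the paper only uses the inequality direction; this changes nothing.
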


\begin{proof}[Proof of Lemma \ref{factor_rc}]
Note that
\$
\hat R_n(\mathcal{H})&=\frac{2}{n}\E_{\sigma_i}\bigg[\sup_{\|\beta\|_2\leq D}\sum^n_{i=1}\sigma_i\cdot \beta^{T}x_i\bigg]=\frac{2}{n}\E_{\sigma_i}\bigg[\sup_{\|\beta\|_2\leq D}\beta^{T}\bigg(\sum^n_{i=1}\sigma_ix_i\bigg)\bigg]\\
&\leq \frac{2}{n}\E_{\sigma_i}\bigg[\sup_{\|\beta\|_2\leq D}\|\beta\|_2\bigg\|\sum^n_{i=1}\sigma_ix_i\bigg\|_2\bigg]\leq\frac{2D}{n}\E_{\sigma_i}\bigg[\sqrt{\sum_{i,j}\sigma_i\sigma_j x^T_ix_j}\bigg].
\$
By Jensen's inequality, we then have
\$
\hat R_n(\mathcal{H})\leq\frac{2D}{n}\E_{\sigma_i}\bigg[\sqrt{\sum_{i,j}\sigma_i\sigma_j x^T_ix_j}\bigg]\leq\frac{2D}{n}\sqrt{E_{\sigma_i}\bigg[\sum_{i,j}\sigma_i\sigma_j x^T_ix_j\bigg]}=\frac{2D}{n}\sqrt{\sum^n_{i=1} \|x_i\|^2}\leq \frac{2DX}{\sqrt{n}}.
\$

\end{proof}
\subsection{Proofs for Theorem \ref{factor_main}}\label{factor4}
In this section, we verify the utility of Algorithm \ref{mle+erm} by proving Theorem \ref{factor_main}. Recall that the truncated squared loss is defined as 
\begin{align}
\tilde{\ell} (x,y) := (y-x)^{2} \mathbb{I}_{\{(y-x)^2\leq L\}} + L \cdot \mathbb{I}_{\{(y-x)^2 > L\}},
\end{align}
which is $L-$bounded and $2\sqrt{L}-$Lipschitz w.r.t. the first argument. Before proving Theorem \ref{factor_main}, we need to state some core lemmas. 
Recall the definition of $g_{B,\beta}(x)$: 
\begin{align}
g_{B,\beta}(x):=\argmin_{g}\mathbb{E}_{B,\beta} [\ell(g(x),y)].
\end{align}
Since $\ell$ is the squared loss, it's obvious that 
\begin{align}
g_{B, \beta}(x):=\argmin_{g}\mathbb{E}_{B,\beta}[\ell(g(x),y)]=\mathbb{E}_{\P_{B, \beta}(x, y)}[y \mid x]=\beta^{T}B^{T}(BB^{T}+I_{d})^{-1}x.    
\end{align}
The next lemma shows that the optimal predictor under the squared loss $\ell$ and the truncated squared loss $\tilde{\ell}$ stays the same.
\begin{lemma} \label{same_prediction}
We denote by $\tilde{g}_{B,\beta}$ the optimal predictor under truncated squared loss, i.e.,
\%
\tilde{g}_{B,\beta}\leftarrow\argmin_{g}\mathbb{E}_{B,\beta}[\tilde{\ell}(g(x),y)].
\%
It then holds that
\begin{align}
\tilde{g}_{B,\beta}(x)=\mathbb{E}_{\P_{B, \beta}(x, y)}[y \mid x]={g}_{B,\beta}(x).
\end{align}
\end{lemma}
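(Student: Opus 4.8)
The plan is to reduce the optimization over predictors $g$ to a pointwise optimization over $x$, and then exploit that under $\P_{B,\beta}$ the conditional law of $y$ given $x$ is Gaussian. First I would write, for any measurable $g$,
\[
\E_{B,\beta}\big[\tilde\ell(g(x),y)\big]=\E_{x}\Big[\,\psi_x\big(g(x)\big)\Big],\qquad \psi_x(a):=\E_{B,\beta}\big[\tilde\ell(a,y)\,\big|\,x\big],
\]
so that it suffices to show that for ($\P_{B,\beta}$-almost) every $x$ the one-dimensional function $a\mapsto\psi_x(a)$ is minimized at $a^\star(x):=\E_{B,\beta}[y\mid x]$; since $a^\star(\cdot)$ is measurable (indeed affine in $x$, by the closed form already recorded), it follows that $\tilde g_{B,\beta}(x)=a^\star(x)=g_{B,\beta}(x)$.

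Next, because $x=B z+\mu$ and $y=\beta^{\top}z+\nu$ with $z,\mu,\nu$ jointly Gaussian, the pair $(x,y)$ is jointly Gaussian under $\P_{B,\beta}$, hence $y\mid x\sim\mathcal N\!\big(m,\tau^2\big)$ with $m=a^\star(x)$ and some variance $\tau^2\ge 0$ independent of $x$. If $\tau^2=0$ the claim is trivial. For $\tau^2>0$, set $\rho(t):=\min(t^2,L)$ and $h(a):=\E\big[\rho(Y-a)\big]$ for $Y\sim\mathcal N(m,\tau^2)$; a shift reduces to $m=0$, which also makes $h$ even. I would then differentiate under the integral sign — justified by dominated convergence since $\rho$ is $2\sqrt L$-Lipschitz, the non-smooth points $|t-a|=\sqrt L$ being negligible — and fold the integral to obtain
\[
h'(a)=-2\int_0^{\sqrt L} u\,\big(\gamma(u+a)-\gamma(a-u)\big)\,du,
\]
where $\gamma$ is the density of $\mathcal N(0,\tau^2)$. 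For $a>0$ and $u>0$ one has $|u+a|>|a-u|$, so unimodality of $\gamma$ gives $\gamma(u+a)<\gamma(a-u)$, whence $h'(a)>0$; by evenness $h'(a)<0$ for $a<0$. Thus $h$ attains its unique minimum at $a=0$, i.e.\ $a=m=a^\star(x)$, completing the pointwise step and hence the lemma via the first paragraph.

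The point that requires the most care is that $\rho(t)=\min(t^2,L)$ is \emph{not} convex, so one cannot identify the minimizer of $\E[\rho(Y-a)]$ by a Jensen-type argument; in fact the naive symmetrized bound $\rho(y-a)+\rho(y+a)\ge 2\rho(y)$ fails pointwise in general. The resolution is to push the monotonicity all the way to the derivative $h'$ and invoke the monotone decay in $|\cdot|$ of the Gaussian density rather than convexity of the loss. (Alternatively, writing each sublevel set $\{\rho(\cdot)\le s\}$ as a symmetric interval, the same conclusion follows from Anderson's lemma applied to the Gaussian law of $y-m$ given $x$.) Beyond this, only routine verifications remain: justifying differentiation under the integral and the measurable-selection reduction above.
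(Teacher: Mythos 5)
Your proof is correct and follows essentially the same route as the paper's: condition on $x$, exploit that $y\mid x$ is Gaussian so the problem becomes a one-dimensional minimization, differentiate the conditional expected truncated loss, fold the integral over $[-\sqrt L,\sqrt L]$, and conclude from the monotone decay of the Gaussian density in $|\cdot|$ that the derivative changes sign only at the conditional mean. The only cosmetic differences are your preliminary shift to $m=0$ and the remark that Anderson's lemma gives an alternative finish, neither of which changes the substance.
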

\begin{proof}[Proof of Lemma \ref{same_prediction}]
Notice that, the distribution (under parameter $B,\beta$) of $y$ given $x$ is a Gaussian distribution with mean $\mu=\mathbb{E}_{P_{B, \beta}(x, y)}[y \mid x]$ and variance $v^{2}$ (which is of no importance). We define function $f$ as
\begin{align}
f(a )&:=\mathbb{E}_{B,\beta}[\tilde{\ell}(a,y) \mid x]\notag \\
&=  \int_{a-\sqrt{L}}^{a+\sqrt{L}}(y-a)^{2}  \frac{1}{v\sqrt{2 \pi}}e^{-\frac{(y-\mu)^{2}}{2v^{2}}}\mathrm{d}y +  \int_{a+\sqrt{L}}^{+\infty} L \frac{1}{v\sqrt{2 \pi}}e^{-\frac{(y-\mu)^{2}}{2v^{2}}}\mathrm{d}y \notag\\
&\quad+ \int_{-\infty}^{a-\sqrt{L}} L \frac{1}{v\sqrt{2 \pi}}e^{-\frac{(y-\mu)^{2}}{2v^{2}}}\mathrm{d}y.
\end{align}
Then, it holds that
\begin{align}
f'(a)&=\frac{L}{v\sqrt{2\pi}}e^{-\frac{(a-\mu+\sqrt{L})^{2}}{2v^{2}}} - \frac{L}{v\sqrt{2\pi}}e^{-\frac{(a-\mu-\sqrt{L})^{2}}{2v^{2}}} + \int_{a-\sqrt{L}}^{a+\sqrt{L}}2(a-y)\frac{1}{v\sqrt{2 \pi}}e^{-\frac{(y-\mu)^{2}}{2v^{2}}}\mathrm{d}y \notag \\
&\quad - \frac{L}{v\sqrt{2\pi}}e^{-\frac{(a-\mu+\sqrt{L})^{2}}{2v^{2}}} + \frac{L}{v\sqrt{2\pi}}e^{-\frac{(a-\mu-\sqrt{L})^{2}}{2v^{2}}} \notag \\
& = \int_{a-\sqrt{L}}^{a+\sqrt{L}}2(a-y)\frac{1}{v\sqrt{2 \pi}}e^{-\frac{(y-\mu)^{2}}{2v^{2}}}\mathrm{d}y \notag \\
&= \int_{a-\sqrt{L}}^{a}2(a-y)\frac{1}{v\sqrt{2 \pi}}e^{-\frac{(y-\mu)^{2}}{2v^{2}}}\mathrm{d}y + \int_{a}^{a+\sqrt{L}}2(a-y)\frac{1}{v\sqrt{2 \pi}}e^{-\frac{(y-\mu)^{2}}{2v^{2}}}\mathrm{d}y \notag \\
&= \int_{0}^{\sqrt{L}}2z \frac{1}{v\sqrt{2 \pi}}e^{-\frac{(a-z-\mu)^{2}}{2v^{2}}}\mathrm{d}z - \int_{0}^{\sqrt{L}}2z \frac{1}{v\sqrt{2 \pi}}e^{-\frac{(a+z-\mu)^{2}}{2v^{2}}}\mathrm{d}z \notag \\
&= \int_{0}^{\sqrt{L}} \frac{2z}{v\sqrt{2 \pi}}(e^{-\frac{(a-z-\mu)^{2}}{2v^{2}}}-e^{-\frac{(a+z-\mu)^{2}}{2v^{2}}}) \mathrm{d}z.
\end{align}
Notice that for $z\in [0,\sqrt{L}]$,
\begin{align}
e^{-\frac{(a-z-\mu)^{2}}{2v^{2}}}-e^{-\frac{(a+z-\mu)^{2}}{2v^{2}}}>0 \text{ when } a>\mu,
\end{align}
\begin{align}
e^{-\frac{(a-z-\mu)^{2}}{2v^{2}}}-e^{-\frac{(a+z-\mu)^{2}}{2v^{2}}}<0 \text{ when } a<\mu.
\end{align}
Therefore, we have $f'(a)<0$ when $a<\mu$, $f'(a)>0$ when $a>\mu$, which implies that $a=\mu$ is the unique minimizer of $f(a)$, i.e.,
\begin{align}
\tilde{g}_{B,\beta}(x)=\mathbb{E}_{\P_{B, \beta}(x, y)}[y \mid x]={g}_{B,\beta}(x).
\end{align}
\end{proof}

The following lemma shows that the truncation has no significant influence on the excess risk.
\begin{lemma} \label{truncation}
There exist $c_{2}=(D^{2}+1)^3$, such that 
\begin{align}
\operatorname{Error}_{\ell}(\hat{B}, \hat{\beta}) \leq  
\mathbb{E}_{B^{*},\beta^{*}}[\tilde{\ell} (g_{\hat{B},\hat{\beta}}(x),y)] - \mathbb{E}_{B^{*},\beta^{*}}[\tilde{\ell} (g_{B^{*},\beta^{*}}(x),y)] + \sqrt{\frac{2Lc_{2}}{\pi}}e^{-\frac{L}{2c_{2}}}.
\end{align}
\end{lemma}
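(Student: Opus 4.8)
The plan is to split the squared-loss excess risk into a truncated-loss part and a ``tail'' part, and then to show the tail part is exactly the stated exponentially small quantity. Write $c_2 := (D^2+1)^3$ and note first that $\tilde\ell \le \ell$ pointwise, directly from \eqref{eq:truncated_loss}; in particular $\mathbb{E}_{B^*,\beta^*}[\tilde\ell(g_{B^*,\beta^*}(x),y)] \le \mathbb{E}_{B^*,\beta^*}[\ell(g_{B^*,\beta^*}(x),y)]$. Adding and subtracting the truncated-loss terms,
\begin{align*}
\operatorname{Error}_{\ell}(\hat B,\hat\beta)
&= \Big(\mathbb{E}_{B^*,\beta^*}[\tilde\ell(g_{\hat B,\hat\beta}(x),y)] - \mathbb{E}_{B^*,\beta^*}[\tilde\ell(g_{B^*,\beta^*}(x),y)]\Big)
+ \mathbb{E}_{B^*,\beta^*}\big[(\ell-\tilde\ell)(g_{\hat B,\hat\beta}(x),y)\big] \\
&\quad + \Big(\mathbb{E}_{B^*,\beta^*}[\tilde\ell(g_{B^*,\beta^*}(x),y)] - \mathbb{E}_{B^*,\beta^*}[\ell(g_{B^*,\beta^*}(x),y)]\Big),
\end{align*}
and since the last bracket is $\le 0$, it remains only to show $\mathbb{E}_{B^*,\beta^*}\big[(\ell-\tilde\ell)(g_{\hat B,\hat\beta}(x),y)\big] \le \sqrt{2Lc_2/\pi}\,e^{-L/(2c_2)}$.

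Next I would identify the residual distribution. By the closed form recalled just above, $g_{\hat B,\hat\beta}(x) = \hat\beta^T\hat B^T(\hat B\hat B^T+I_d)^{-1}x =: Wx$ is linear in $x$, and under $\P_{B^*,\beta^*}$ the pair $(x,y)$ is jointly Gaussian (a linear image of $(z,\mu,\nu)$), so the residual $\xi := y - g_{\hat B,\hat\beta}(x)$ is mean-zero Gaussian, $\xi\sim\mathcal{N}(0,v^2)$ with $v^2 = \mathbb{E}_{B^*,\beta^*}[\xi^2]$. Plugging in $x = B^*z+\mu$, $y = \beta^{*T}z+\nu$ gives $v^2 = \|\beta^* - W B^*\|_2^2 + \|W\|_2^2 + \varepsilon^2$. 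Since the singular values of $\hat B^T(\hat B\hat B^T+I_d)^{-1}$ are of the form $s/(1+s^2)\le\tfrac12$ (for $s$ a singular value of $\hat B$), we get $\|W\|_2\le \tfrac12\|\hat\beta\|_2 \le \tfrac{D}{2}$, which together with $\|B^*\|_2,\|\beta^*\|_2\le D$ yields $v^2 \le c_2$. Also, directly from the definitions, $(\ell-\tilde\ell)(g_{\hat B,\hat\beta}(x),y) = (\xi^2-L)^+$.

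It therefore suffices to bound $\mathbb{E}[(\xi^2-L)^+]$ for $\xi\sim\mathcal{N}(0,v^2)$ with $v^2\le c_2\le L$ (the last inequality holding in the regime $L = c_2\log n$, $n\ge e$, used in Theorem \ref{factor_main}). Writing the expectation as a Gaussian integral over $|t|>\sqrt L$ and integrating by parts in $\int t^2 e^{-t^2/(2v^2)}\,dt$,
\begin{align*}
\mathbb{E}[(\xi^2-L)^+] = \frac{2}{v\sqrt{2\pi}}\Big(v^2\sqrt L\,e^{-L/(2v^2)} + (v^2-L)\int_{\sqrt L}^{\infty} e^{-t^2/(2v^2)}\,dt\Big) \le \frac{2v\sqrt L}{\sqrt{2\pi}}\,e^{-L/(2v^2)} = \sqrt{\frac{2v^2L}{\pi}}\,e^{-L/(2v^2)},
\end{align*}
where we dropped the nonpositive term $(v^2-L)\int_{\sqrt L}^\infty e^{-t^2/(2v^2)}\,dt$. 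Finally $s\mapsto \sqrt{s}\,e^{-L/(2s)}$ is increasing on $(0,\infty)$, so substituting the upper bound $v^2\le c_2$ gives $\mathbb{E}[(\xi^2-L)^+]\le \sqrt{2Lc_2/\pi}\,e^{-L/(2c_2)}$, which closes the argument.

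The only step requiring genuine care is the uniform second-moment bound $v^2\le c_2$: it must hold for \emph{every} $(\hat B,\hat\beta)$ the algorithm can output, which is fine because the MLE and ERM steps in Algorithm \ref{mle+erm} return iterates in $\mathcal B$ and $\mathcal C$ respectively, and it rests on the contraction $\|\hat B^T(\hat B\hat B^T+I_d)^{-1}\|_2\le \tfrac12$ together with the standing norm bounds (the regression noise $\varepsilon$ being absorbed into $c_2$). Everything else is a one-line Gaussian tail computation.
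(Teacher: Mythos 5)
Your proof is correct and follows essentially the same route as the paper's: the identical three-term decomposition (differing only in the order the terms are written), recognition that the residual $y-g_{\hat B,\hat\beta}(x)$ is mean-zero Gaussian with variance bounded by $c_2$, discarding the non-positive term after integrating by parts, and monotonicity of $s\mapsto \sqrt{s}\,e^{-L/(2s)}$. Your variance bound via $\sigma_{\max}\bigl(\hat B^T(\hat B\hat B^T+I_d)^{-1}\bigr)\le \tfrac12$ is a slightly cleaner packaging than the paper's term-by-term expansion of the quadratic form, but it is a cosmetic difference rather than a genuinely different argument.
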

\begin{proof}[Proof of Lemma \ref{truncation}]
\begin{align}\label{092720}
\operatorname{Error}_{\ell}(\hat{B}, \hat{\beta}) &= \mathbb{E}_{B^{*},\beta^{*}}[\ell (g_{\hat{B},\hat{\beta}}(x),y)] - \mathbb{E}_{B^{*},\beta^{*}}[\ell (g_{B^{*},\beta^{*}}(x),y)] \notag \\
& =  \mathbb{E}_{B^{*},\beta^{*}}[\ell (g_{\hat{B},\hat{\beta}}(x),y)] - \mathbb{E}_{B^{*},\beta^{*}}[\tilde{\ell} (g_{\hat{B},\hat{\beta}}(x),y)]\notag \\
& \quad +\mathbb{E}_{B^{*},\beta^{*}}[\tilde{\ell} (g_{\hat{B},\hat{\beta}}(x),y)] - \mathbb{E}_{B^{*},\beta^{*}}[\tilde{\ell} (g_{B^{*},\beta^{*}}(x),y)] \notag \\
& \quad + \mathbb{E}_{B^{*},\beta^{*}}[\tilde{\ell} (g_{B^{*},\beta^{*}}(x),y)] - \mathbb{E}_{B^{*},\beta^{*}}[\ell (g_{B^{*},\beta^{*}}(x),y)] \quad (\leq 0 \text{ since } \tilde{\ell} \leq \ell ) \notag \\
& \leq \sup_{B,\beta} \{ \mathbb{E}_{B^{*},\beta^{*}}[\ell (g_{B,\beta}(x),y)] - \mathbb{E}_{B^{*},\beta^{*}}[\tilde{\ell} (g_{B,\beta}(x),y)] \} \notag \\ 
& \quad + \mathbb{E}_{B^{*},\beta^{*}}[\tilde{\ell} (g_{\hat{B},\hat{\beta}}(x),y)] - \mathbb{E}_{B^{*},\beta^{*}}[\tilde{\ell} (g_{B^{*},\beta^{*}}(x),y)]
\end{align}
For the first term, we have
\begin{align}\label{092721}
& \quad \sup_{B,\beta} \{ \mathbb{E}_{B^{*},\beta^{*}}[\ell (g_{B,\beta}(x),y)] - \mathbb{E}_{B^{*},\beta^{*}}[\tilde{\ell} (g_{B,\beta}(x),y)] \} \notag \\
&= \sup_{B,\beta} \{\mathbb{E}_{B^{*},\beta^{*}} ((g_{B,\beta}(x)-y)^{2}-L) \mathds{1}_{\{(g_{B,\beta}(x)-y)^{2} \geq L\}}  \}.
\end{align}
Notice that
\begin{align}
g_{B,\beta}(x)-y = \beta^{T}B^{T}(BB^{T}+I_{d})^{-1}x-y \sim \mathcal{N}(0, \lambda^{2}),
\end{align}
where 
\begin{align}
\lambda^{2}&=  Var_{B^{*},\beta^{*}} [g_{B,\beta}(x)-y]  \notag \\
&=\mathbb{E}_{B^{*},\beta^{*}} (\beta^{T}B^{T}(BB^{T}+I_{d})^{-1}x-y)^{2} \notag \\
&= \epsilon^{2}+\beta^{T}B^{T}(BB^{T}+I_{d})^{-1} (B^{*}B^{* T}+I_{d})(BB^{T}+I_{d})^{-1}B\beta \notag \\
&\quad + \beta^{* T}\beta^{*}- 2\beta^{T}B^{T}(BB^{T}+I_{d})^{-1}B^{*}\beta^{*} \notag \\
& \leq \epsilon^{2}+\beta^{* T}\beta^{*} + \|(BB^{T}+I_{d})^{-1}\|_{2}^{2} \cdot \|B^{*}B^{* T}+I_{d} \|_{2} \cdot \|B\beta\|_{2}^{2} \notag \\
&\quad + 2 \|(BB^{T}+I_{d})^{-1}\|_{2} \cdot \|B^{*}\beta^{*}\|_{2} \cdot  \|B\beta\|_{2} \notag \\
& \leq \epsilon^{2}+ \beta^{* T}\beta^{*} + D^{4} \|B^{*}B^{* T}+I_{d} \|_{2} + 2D^{2} \|B^{*}\beta^{*}\|_{2} \notag \\
& \leq 1+ D^{2} + D^{4}(D^{2}+1)+2D^{4} \notag \\
& \leq c_{2}.
\end{align}
Therefore 
\begin{align}\label{092722}
& \quad \sup_{B,\beta} \{\mathbb{E}_{B^{*},\beta^{*}} ((g_{B,\beta}(x)-y)^{2}-L) \mathds{1}_{\{(g_{B,\beta}(x)-y)^{2} \geq L\}}  \}   \notag \\
&= \sup_{\lambda} 2\int_{\sqrt{L}}^{+\infty}\frac{1}{\lambda\sqrt{2\pi}}(x^{2}-L)e^{-\frac{x^{2}}{2\lambda^{2}}} \mathrm{d}x \notag \\
&= 2\sup_{\lambda} \bigg\{ -\frac{\lambda}{\sqrt{2\pi}}xe^{-\frac{x^{2}}{2\lambda^{2}}} \bigg|_{\sqrt{L}}^{+\infty} + (\lambda^{2}-L) \int_{\sqrt{L}}^{+\infty}\frac{1}{\lambda\sqrt{2\pi}}e^{-\frac{x^{2}}{2\lambda^{2}}}  \mathrm{d}x \bigg\} \notag \\
& = 2\sup_{\lambda} \bigg\{ \sqrt{\frac{L}{2\pi}} \lambda e^{-\frac{L}{2\lambda^{2}}} + (\lambda^{2}-L) \int_{\sqrt{L}}^{+\infty}\frac{1}{\lambda\sqrt{2\pi}}e^{-\frac{x^{2}}{2\lambda^{2}}}  \mathrm{d}x \bigg\} \notag \\
& \leq 2\sup_{\lambda} \bigg\{ \sqrt{\frac{L}{2\pi}} \lambda e^{-\frac{L}{2\lambda^{2}}} \bigg\} \quad (\text{since }L \geq c_{2} \geq \lambda^{2}) \notag \\
&= \sqrt{\frac{2L c_{2}}{\pi}} e^{-\frac{L}{2c_{2}}}.
\end{align}
The last equation holds since $\lambda e^{-\frac{L}{2\lambda^{2}}}$ monotone increases w.r.t. $\lambda$, and $\lambda \leq \sqrt{c_{1}}$. Combining \eqref{092720}, \eqref{092721} and \eqref{092722}, we finish the proof.
\end{proof}

Now we are ready to prove Theorem \ref{factor_main}.
\begin{proof}[Proof of Theorem \ref{factor_main}]
Note that $\tilde{l}$ is $L-$bounded. By Lemma \ref{same_prediction},  we can apply Theorem \ref{error_bound} to $\tilde{l}$, which gives
\begin{align} \label{apply_main}
&\mathbb{E}_{B^{*},\beta^{*}}[\tilde{\ell} (g_{\hat{B},\hat{\beta}}(x),y)] - \mathbb{E}_{B^{*},\beta^{*}}[\tilde{\ell} (g_{B^{*},\beta^{*}}(x),y)] \notag \\
&\leq 2 \max _{B \in \mathcal{B}} R_n\left(\tilde{\ell} \circ \mathcal{G}_{B , \mathcal{C}}\right)+L \cdot \sqrt{\frac{2}{n} \log \frac{4}{\delta}}+12 \kappa L \cdot \sqrt{\frac{1}{m} \log \frac{2N_{\b}(\mathcal{P}_{\mathcal{X}}(\mathcal{B}), 1 / m)}{\delta}} . 
\end{align}
Here $\kappa={c_{1}(\sigma_{max}^{*}+1)^{4}}/{\sigma_{min}^{*3}}$ is the transferability defined in Lemma \ref{factor_ti}.

By Lemma \ref{factor_bn}, we have
\begin{align} \label{bracketing_bound}
\log N_{\b}(\mathcal{P}(\mathcal{B}), 1 / m) \leq 4dr \log (24mdr(D^{2}+1)).
\end{align}

Since $\tilde{l}$ is $2\sqrt{L}-$Lipschitz w.r.t. the first argument, the contraction principle (Theoerem 4.12 in \cite{ledoux1991probability}) gives
\begin{align}
R_n\left(\tilde{\ell} \circ \mathcal{G}_{B , \mathcal{C}}\right) \leq 2\sqrt{L} R_n\left(\mathcal{G}_{B , \mathcal{C}}\right).
\end{align}
Therefore it remains to bound $R_n\left(\mathcal{G}_{B , \mathcal{C}}\right).$ By Lemma \ref{factor_rc}, for fixed $B$,
\begin{align}\label{092730}
R_n\left(\mathcal{G}_{B, \mathcal{C}}\right) &= \mathbb{E}_{\{x_{j}\}_{j=1}^{n}} \mathbb{E}_{\{\sigma_{j}\}_{j=1}^{n}}  [\sup_{\beta}\frac{2}{n}\sum_{j=1}^{n} \sigma_{j} g_{B,\beta}(x_{j})] \notag \\
&= \mathbb{E}_{\{x_{j}\}_{j=1}^{n}} \mathbb{E}_{\{\sigma_{j}\}_{j=1}^{n}} [\sup_{\beta}\frac{2}{n}\sum_{j=1}^{n} \sigma_{j} \beta^{T} B^{T} (BB^{T}+I_{d})^{-1} x_{j}] \notag \\
& \leq \mathbb{E}_{\{x_{j}\}_{j=1}^{n}} [\frac{2D}{\sqrt{n}} \sup_{j} \|B^{T} (BB^{T}+I_{d})^{-1} x_{j}\|_{2}]  \quad \text{(By Lemma \ref{factor_rc}, since }\|\beta\|_2\leq D ) \notag \\
&= \frac{2D}{\sqrt{n}} \mathbb{E}_{\{x_{j}\}_{j=1}^{n}} [ \sup_{j} \|B^{T} (BB^{T}+I_{d})^{-1} x_{j}\|_{2}].
\end{align}
Note that $x_{j}\sim \mathcal{N}(0,B^{*}B^{* T}+ I_{d})$. Therefore $B^{T} (BB^{T}+I_{d})^{-1} x_{j} \sim \mathcal{N}(0,\Sigma)$, where
\begin{align}
\Sigma:= B^{T} (BB^{T}+I_{d})^{-1}  (B^{*}B^{* T}+ I_{d}) (BB^{T}+I_{d})^{-1} B.
\end{align}
Thus, we have
\begin{align}
\Sigma^{-\frac{1}{2}} B^{T} (BB^{T}+I_{d})^{-1} x_{j} \sim \mathcal{N}(0,I_{r}).
\end{align}
Let $u_{j}:=\Sigma^{-\frac{1}{2}} B^{T} (BB^{T}+I_{d})^{-1} x_{j}$, then
\begin{align}\label{092731}
&\mathbb{E}_{\{x_{j}\}_{j=1}^{n}} [ \sup_{j} \|B^{T} (BB^{T}+I_{d})^{-1} x_{j}\|_{2}] \notag \\
&= \mathbb{E}_{\{x_{j}\}_{j=1}^{n}} [ \sup_{j} \|\Sigma^{\frac{1}{2} }u_{j}\|_{2}] \notag \\
&\leq \mathbb{E}_{\{x_{j}\}_{j=1}^{n}} [  \sup_{j} \|\Sigma^{\frac{1}{2}}\|_{2} \| u_{j}\|_{2}] \notag \\ &\leq \sup \|\Sigma^{\frac{1}{2}}\|_{2}\mathbb{E}_{\{x_{j}\}_{j=1}^{n}} [  \sup_{j}  \| u_{j}\|_{2}] .
\end{align}
By the Theorem 3.1.1 in \cite{vershynin2018high}, $\|u_{j}\|-\sqrt{r}$ is $c_{4}-$subGaussian for some absolute constant $c_{4}$. Therefore, for any $t>0$,
\begin{align}
e^{\mathbb{E}[t \sup_{j}\|u_{j}\|_{2}]} &\leq \mathbb{E} [e^{t \sup_{j}\|u_{j}\|_{2}}] \quad \text{(by Jensen's inequality)}\notag \\ 
& \leq \sum_{j=1}^{n} \mathbb{E} [e^{t \|u_{j}\|_{2}}] \notag \\
& = \sum_{j=1}^{n} \mathbb{E} [e^{t \|u_{j}\|_{2}-\sqrt{r}}]e^{t\sqrt{r}} \notag \\
& \leq \sum_{j=1}^{n} e^{\frac{t^{2}}{2}c_{4}}e^{t\sqrt{r}} \notag \\
&= n e^{t\sqrt{r}+\frac{t^{2}}{2}c_{4}}.
\end{align}
Taking log on both sides, we have
\begin{align}
\mathbb{E}[\sup_{j}\|u_{j}\|_{2}]    \leq \frac{\log n}{t} + \sqrt{r} + \frac{t}{2}c_{4},
\end{align}
which holds for any $t>0$. Take $t=\sqrt{\frac{2 \log n}{c_{4}}}$, we get
\begin{align} \label{092732}
\mathbb{E}[\sup_{j}\|u_{j}\|_{2}]    \leq \sqrt{2c_{4} \log n} + \sqrt{r}.    
\end{align}
Note that
\begin{align}\label{092733}
\|\Sigma\|_{2}&=\|B^{T} (BB^{T}+I_{d})^{-1}  (B^{*}B^{* T}+ I_{d}) (BB^{T}+I_{d})^{-1} B\|_{2} \notag \\
& \leq \|B\|_{2}^{2} \cdot \|(BB^{T}+I_{d})^{-1}\|_{2}^{2} \cdot \|B^{*}B^{* T}+ I_{d}\| \notag \\
& \leq (D^{2} +1)^{2},
\end{align}
i.e., $\sup \|\Sigma^{\frac{1}{2}}\|_{2} \leq (D^{2} +1)$.
Combining \eqref{092730}, \eqref{092731}, \eqref{092732} and \eqref{092733}, we have
\begin{align}
R_n\left(\mathcal{G}_{\phi, \Psi}\right) &\leq \frac{2D}{\sqrt{n}} \mathbb{E}_{\{x_{j}\}_{j=1}^{n}} [ \sup_{j} \|B^{T} (BB^{T}+I_{d})^{-1} x_{j}\|_{2}] \notag    \\
&\leq \frac{2D}{\sqrt{n}} \sup \|\Sigma^{\frac{1}{2}}\|_{2}\mathbb{E}_{\{x_{j}\}_{j=1}^{n}} [  \sup_{j}  \| u_{j}\|_{2}] \notag \\
& \leq \frac{2D}{\sqrt{n}}  (D^{2} +1) (\sqrt{2c_{4} \log n} + \sqrt{r}) ,
\end{align}
which implies
\begin{align} \label{R_bound}
\max_{\phi \in \Phi}R_n\left(\tilde{\ell} \circ \mathcal{G}_{\phi, \Psi}\right) \leq 2\sqrt{L} \max_{\phi in \Phi} R_n\left(\mathcal{G}_{\phi, \Psi}\right) \leq 2\sqrt{L}\frac{2D}{\sqrt{n}}  (D^{2} +1) (\sqrt{2c_{4} \log n} + \sqrt{r})
\end{align}
We are now ready to bound the excess risk.
By Lemma \ref{truncation}, we have
\begin{align}
\operatorname{Error}_{\ell}(\hat{B}, \hat{\beta}) &\leq  
\mathbb{E}_{B^{*},\beta^{*}}[\tilde{\ell} (g_{\hat{B},\hat{\beta}}(x),y)] - \mathbb{E}_{B^{*},\beta^{*}}[\tilde{\ell} (g_{B^{*},\beta^{*}}(x),y)] + \sqrt{\frac{2Lc_{2}}{\pi}}e^{-\frac{L}{2c_{2}}} \notag \\
& \leq 2 \max _{\phi \in \Phi} R_n\left(\tilde{\ell} \circ \mathcal{G}_{\phi, \Psi}\right)+L \cdot \sqrt{\frac{2}{n} \log \frac{4}{\delta}}\notag\\
&\quad+12 \kappa L \cdot \sqrt{\frac{1}{m} \log \frac{2N_{\b}(\mathcal{P}_{\mathcal{X}}(\mathcal{B}), 1 / m)}{\delta}} + \sqrt{\frac{2Lc_{2}}{\pi}}e^{-\frac{L}{2c_{2}}} \notag \\
& \leq 4\sqrt{L}\frac{2D}{\sqrt{n}}  (D^{2} +1) (\sqrt{2c_{4} \log n} + \sqrt{r})+L \cdot \sqrt{\frac{2}{n} \log \frac{4}{\delta}}\notag \\
&\quad+ 12 \kappa L \sqrt{\frac{1}{m}(4dr \log (24mdr(D^{2}+1)) + \log(2/\delta))}  + \sqrt{\frac{2Lc_{2}}{\pi}}e^{-\frac{L}{2c_{2}}},
\end{align}
where the second inequality follows from \eqref{apply_main} and the last inequality follows from \eqref{bracketing_bound}, \eqref{R_bound}. Here $c_4$ is an absolute constant. Note that $c_2=(D^2+1)^3$ and $L=c_{2}\log n$. Thus, we have
\begin{align}
\operatorname{Error}_{\ell}(\hat{B}, \hat{\beta}) &\leq 8\sqrt{2c_{4}}L \sqrt{\frac{1}{n}} + 8L\sqrt{\frac{r}{n}} +L \cdot \sqrt{\frac{2}{n} \log \frac{4}{\delta}}\notag \\ 
&\quad + 12 \kappa L \sqrt{\frac{1}{m}(4dr \log (24mdr(D^{2}+1)) + \log(2/\delta))}  + L\sqrt{\frac{2}{\pi n}}  \notag \\
& \leq \tilde{\mathcal{O}}\bigg(\kappa L \sqrt{\frac{dr}{m}}+ L\sqrt{\frac{r}{n}}\bigg),
\end{align}
where $L=(D^2+1)^3\log n$ and $\kappa={c_{1}(\sigma_{max}^{*}+1)^{4}}/{\sigma_{min}^{*3}}$ for some absolute constants $c_1$.



\end{proof}

\subsection{Proofs for Theorem \ref{factor_fast_rate}} \label{fast_rate}
In this section, we provide a refined analysis for proving Theorem \ref{factor_fast_rate}. First notice that we can rewrite our model (without $z$) as 
\begin{align}
y=\beta^{* T}C^{*}x+w,   
\end{align}
where $\beta^{*} \in \mathbb{R}^{r \times 1}$, $C^{*}=B^{* T}(B^{*}B^{* T}+I_{d})^{-1}\in \mathbb{R}^{r \times d}$, $x \sim N(0, B^{*}B^{* T}+I_{d})$, $w \sim N(0,\epsilon^{2}+\|\beta^{*}\|_{2}^{2}-\beta^{* T}B^{* T}(B^{*}B^{* T}+I_{d})^{-1}B^{*}\beta^{*})$. Here $w$ and $x$ are independent. Therefore we can write our data as 
\begin{align}
Y=XC^{* T}\beta^{*} + W, 
\end{align}
where $Y=(y_1,\cdots,y_n)^{T}\in \mathbb{R}^{n \times 1}$, $X=(x_{1},\cdots,x_{n})^{T}\in \mathbb{R}^{n \times d}$, $W=(w_1,\cdots,w_n)^{T}\in \mathbb{R}^{n \times 1}$. 

In the first step (MLE), we obtain an estimator $\hat{B}$ and the corresponding estimator $\hat{C}=\hat{B}^{T}(\hat{B}\hat{B}^{T}+I_{d})^{-1}$. Then our estimator $\hat{\beta}$ for the second step (ERM) is given by 
\begin{align}
\hat{\beta} &= \argmin_{\beta} \|Y-X\hat{C}^{T}\beta\|_{2}^{2} \notag \\
&= ((X\hat{C}^{T})^{T}(X\hat{C}^{T}))^{-1}(X\hat{C}^{T})^{T}Y \notag \\
&= (\hat{C}X^{T}X\hat{C}^{T})^{-1}\hat{C}X^{T}Y.
\end{align}
Then our risk is given by
\begin{align}
{\rm Error}_{\ell}(\hat B,\hat\beta)&= \mathbb{E}_{\P_{B^* ,\beta^* }(x,y)}\big[\big(y-g_{\hat B,\hat\beta}(x)\big)^2\big]-\mathbb{E}_{\P_{B^* ,\beta^* }(x,y)}\big[\big(y-g_{B^* ,\beta^* }(x)\big)^2\big] \notag \\
&= \mathbb{E}[(\beta^{* T}C^{*}x + w - \hat{\beta}^{T}\hat{B}^{T}(\hat{B}\hat{B}^{T}+ I_d)^{-1}x)^{2}] - \mathbb{E}[w^{2}] \notag \\
&= \mathbb{E} [(\beta^{* T}C^{*}x - \hat{\beta}^{T}\hat{C}x)^2] \notag \\
&= (\beta^{* T}C^{*}-\hat{\beta}^{T}\hat{C}) (B^{*}B^{* T}+I_{d})     (\beta^{* T}C^{*}-\hat{\beta}^{T}\hat{C})^{T} \notag \\
& \leq \|B^{*}B^{* T}+I_{d}\|_{2} \|\hat{C}^{T}\hat{\beta}- C^{* T}\beta^{*}\|_{2}^{2}
\end{align}
Our goal is to bound $\|\hat{C}^{T}\hat{\beta}- C^{* T}\beta^{*}\|_{2}^{2}$. Consider the SVD of $C^{* T}$ and $\hat{C}^{T}$, i.e., $C^{* T}=U^{*}\Lambda^{*}V^{* T}$, $\hat{C}^{T}=\hat{U}\hat{\Lambda}\hat{V}^T$. Then, we have
\begin{align}
& \quad \hat{C}^{T}\hat{\beta}- C^{* T}\beta^{*} \notag \\  
&= \hat{C}^{T}(\hat{C}X^{T}X\hat{C}^{T})^{-1}\hat{C}X^{T}Y-C^{* T}\beta^{*} \notag \\ 
& = \hat{C}^{T}(\hat{C}X^{T}X\hat{C}^{T})^{-1}\hat{C}X^{T}(X C^{* T} \beta^{*} + W)-C^{* T}\beta^{*} \notag \\
& = (\hat{C}^{T}(\hat{C}X^{T}X\hat{C}^{T})^{-1}\hat{C}X^{T}X C^{* T} - C^{* T})\beta^{*} + \hat{C}^{T}(\hat{C}X^{T}X\hat{C}^{T})^{-1}\hat{C} X^{T} W \notag \\
&= (\hat{U}(\hat{U}^{T}X^{T}X\hat{U})^{-1}\hat{U}^{T}X^{T}X U^{*} - U^{*})\Lambda^{*}V^{* T} \beta^{*} + \hat{U}(\hat{U}^{T}X^{T}X\hat{U})^{-1}\hat{U}^{T}X^{T} W.
\end{align}
Therefore 
\begin{align} \label{bias_variance_decom}
\|\hat{C}^{T}\hat{\beta}- C^{* T}\beta^{*}\|_{2}^{2} &\leq 2 \|(\hat{U}(\hat{U}^{T}X^{T}X\hat{U})^{-1}\hat{U}^{T}X^{T}X U^{*} - U^{*})\|_{2}^{2} \|\Lambda^{*}\|^2_2 \|\beta^{*}\|_{2}^{2}   \notag \\
& \quad + 2\|\hat{U}(\hat{U}^{T}X^{T}X\hat{U})^{-1}\hat{U}^{T}X^{T} W\|_{2}^{2}
\end{align}
We give two lemmas for bounding the related terms. The first lemma considers the bias term: 
\begin{lemma} \label{bias_term}
Let $\Sigma :=B^{*}B^{* T}+I_{d} $. If $n \gtrsim \|\Sigma\|^{2} r \log (1/\delta)$, then with probability at least $1-\delta$, 
\begin{align}
\|(\hat{U}(\hat{U}^{T}X^{T}X\hat{U})^{-1}\hat{U}^{T}X^{T}X U^{*} - U^{*})\|_{2}^{2} \leq \mathcal{O} (\|\Sigma\|^{2} \Delta^{2}),    
\end{align}
where $\Delta=dist(\hat{U},U^{*}):=\|\hat{U}\hat{U}^{T}-U^{*}U^{* T} \| $. 
\end{lemma}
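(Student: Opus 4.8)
The plan is to exploit the independence between the pretrained subspace and the downstream design, reduce the claim to an oblique-projection error bound for a fixed orthonormal $U$, decompose that error into a component inside $\mathrm{col}(U)$ and one orthogonal to it, and control both via a single $O(r)$-dimensional Gaussian covariance concentration.

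First I would observe that $\hat U$ is a deterministic function of $\hat B$ and hence of the unlabeled sample only, so it is independent of $X$ (formed from the labeled sample); it therefore suffices to prove the bound for an arbitrary \emph{fixed} $d\times r$ matrix $U$ with $U^\top U=I_r$, the random case following by conditioning on $\hat U$ (alternatively by a union bound over an $\epsilon$-net of the Grassmannian, which is what the factor $r$ in the sample-size condition pays for). Fix such a $U$, set $P:=UU^\top$, $P^{\perp}:=I_d-P$, and recall the subspace-distance inequality $\|P^{\perp}U^*\|\le\|UU^\top-U^*U^{*\top}\|=\Delta$ (an equality when $\|\cdot\|$ is the spectral norm). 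Writing $N:=U(U^\top X^\top X U)^{-1}U^\top X^\top X U^*-U^*$, splitting $U^*=PU^*+P^{\perp}U^*$ and using the identity $U^\top X^\top X U^*-(U^\top X^\top X U)(U^\top U^*)=U^\top X^\top X P^{\perp}U^*$ gives $N=U(U^\top X^\top X U)^{-1}U^\top X^\top X\,P^{\perp}U^*-P^{\perp}U^*$; since $U$ has orthonormal columns, the triangle inequality yields $\|N\|_2\le\|(U^\top X^\top X U)^{-1}\|_2\,\|U^\top X^\top X P^{\perp}U^*\|_2+\Delta$. Keeping the operator norm here, rather than $\|\cdot\|_{\rF}$, is what keeps the final bound free of dimension factors.

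It remains to bound the two factors in this product. I would pick $W\in\R^{d\times r'}$, $r\le r'\le 2r$, with orthonormal columns spanning $\mathrm{col}(U)+\mathrm{col}(P^{\perp}U^*)$, so that $U=WS_1$ and $P^{\perp}U^*=WS_2$ with $S_1^\top S_1=I_r$ and $\|S_2\|_2\le\Delta$, and the rows of $XW$ are i.i.d.\ $\mathcal N(0,W^\top\Sigma W)$ with $\|W^\top\Sigma W\|_2\le\|\Sigma\|_2$ and $\lambda_{\min}(W^\top\Sigma W)\ge\lambda_{\min}(\Sigma)\ge 1$ (as $\Sigma=B^*B^{*\top}+I_d\succeq I_d$). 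A standard Gaussian sample-covariance bound then gives that, when $n\gtrsim\|\Sigma\|^2 r\log(1/\delta)$, with probability at least $1-\delta$ one has $\|\tfrac1n W^\top X^\top X W-W^\top\Sigma W\|_2\le\tfrac12$; this implies $\lambda_{\min}(U^\top X^\top X U)\ge n/2$, hence $\|(U^\top X^\top X U)^{-1}\|_2\le 2/n$, and, using $U^\top\Sigma P^{\perp}U^*=U^\top(\Sigma-I_d)P^{\perp}U^*$ together with $\|\Sigma-I_d\|_2=\|B^*B^{*\top}\|_2\le\|\Sigma\|_2$, also $\|U^\top X^\top X P^{\perp}U^*\|_2\le n\|\Sigma\|_2\Delta+\tfrac n2\Delta$. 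Plugging these in gives $\|N\|_2\le 2\|\Sigma\|_2\Delta+2\Delta=\mathcal O(\|\Sigma\|_2\Delta)$, so $\|N\|_2^2=\mathcal O(\|\Sigma\|^2\Delta^2)$, which is the assertion. The only step that is not routine bookkeeping is obtaining the covariance concentration in the right geometry: one must work inside the $O(r)$-dimensional column space of $W$ rather than in $\R^d$, and exploit $\lambda_{\min}(\Sigma)\ge 1$ so that $(U^\top X^\top X U)^{-1}$ is well-controlled from only $O(r)$ samples; the randomness of $\hat U$ itself costs nothing thanks to the independence of the labeled and unlabeled data.
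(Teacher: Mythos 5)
Your argument is correct and hinges on the same central algebraic identity the paper uses, namely
$\hat U(\hat U^T X^T X\hat U)^{-1}\hat U^T X^T X U^* - U^* = \hat U(\hat U^T X^T X\hat U)^{-1}\hat U^T X^T X\hat U_\perp\hat U_\perp^T U^* - \hat U_\perp\hat U_\perp^T U^*$,
followed by an $O(r)$-dimensional Gaussian covariance concentration. What you do differently is package the concentration: the paper applies Lemma \ref{concentration} twice (once to $E=\hat U^T\tfrac{X^T X}{n}\hat U-\hat U^T\Sigma\hat U$, once to the cross term $E_1$) and then controls the inverse through the perturbation Lemma \ref{perturbation}, whereas you embed $\mathrm{col}(\hat U)+\mathrm{col}(\hat U_\perp\hat U_\perp^T U^*)$ into a single orthonormal $W\in\R^{d\times r'}$ with $r'\le 2r$, so that one concentration event $\|\tfrac1nW^T X^T X W-W^T\Sigma W\|_2\le\tfrac12$ simultaneously yields $\|(\hat U^T X^T X\hat U)^{-1}\|_2\le 2/n$ directly from $\lambda_{\min}(\Sigma)\ge 1$ (avoiding the perturbation lemma) and the bound on $\|\hat U^T X^T X\hat U_\perp\hat U_\perp^T U^*\|_2$. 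You also state explicitly up front that $\hat U$ is independent of $X$; the paper relies on this implicitly by treating $\hat U$ as fixed when it invokes Lemma \ref{concentration}. Your $W$-subspace packaging is a genuine streamlining (one concentration event, one fewer lemma), while the paper's version is deliberately modular, reusing Lemmas \ref{concentration} and \ref{perturbation} across the proofs of Lemmas \ref{variance_term} and \ref{Delta_lemma}. The sample-size assumption, the constants, and the final bound $\mathcal O(\|\Sigma\|^2\Delta^2)$ come out the same.
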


The second lemma considers the variance term:
\begin{lemma} \label{variance_term}
Let $\Sigma :=B^{*}B^{* T}+I_{d} $. If $n \gtrsim \|\Sigma\|^{2} r \log (1/\delta)$, then with probability at least $1-\delta$, 
\begin{align}
\|\hat{U}(\hat{U}^{T}X^{T}X\hat{U})^{-1}\hat{U}^{T}X^{T} W\|_{2}^{2} \leq \mathcal{O} \bigg(\frac{\sigma^{2}r \log(4/\delta)}{n}\bigg),    
\end{align}
where $\sigma^{2}:=\mathbb{E}(w^{2})=\epsilon^{2}+\|\beta^{*}\|_{2}^{2}-\beta^{* T}B^{* T}(B^{*}B^{* T}+I_{d})^{-1}B^{*}\beta^{*}$ is the variance of $w$.
\end{lemma}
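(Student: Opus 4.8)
The plan is to bound this variance term by conditioning on the pretrained estimator $\hat B$---and hence on $\hat U$, which is a deterministic function of $\hat B$ through the SVD $\hat C^{T}=\hat U\hat\Lambda\hat V^{T}$---and then exploiting that $\hat U$ is independent of the labeled data $(X,W)$. Because $\hat U$ has orthonormal columns ($\hat U^{T}\hat U=I_{r}$), the quantity of interest reduces to
\[
\big\|\hat U(\hat U^{T}X^{T}X\hat U)^{-1}\hat U^{T}X^{T}W\big\|_{2}=\big\|(\hat U^{T}X^{T}X\hat U)^{-1}\hat U^{T}X^{T}W\big\|_{2}=:\|A^{-1}b\|_{2},
\]
where $A:=\hat U^{T}X^{T}X\hat U=\sum_{j=1}^{n}v_{j}v_{j}^{T}$ with $v_{j}:=\hat U^{T}x_{j}$, and $b:=\hat U^{T}X^{T}W=\sum_{j=1}^{n}v_{j}w_{j}$.

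First I would lower bound $A$. Conditioned on $\hat U$, the vectors $v_{j}$ are i.i.d.\ $\mathcal{N}(0,\Sigma_{u})$ with $\Sigma_{u}:=\hat U^{T}\Sigma\hat U$; since $\Sigma=B^{*}B^{*T}+I_{d}\succeq I_{d}$ and $\hat U^{T}\hat U=I_{r}$, we have $I_{r}\preceq\Sigma_{u}\preceq\|\Sigma\|_{2}I_{r}$. Standard Gaussian covariance concentration in operator norm (see, e.g., \cite{vershynin2018high}) then yields $\|\tfrac{1}{n}A-\Sigma_{u}\|_{2}\le\tfrac{1}{2}$, hence $\sigma_{\min}(A)\ge n/2$, with probability at least $1-\delta/2$, provided $n\gtrsim\|\Sigma\|_{2}^{2}(r+\log(1/\delta))$---which is implied by the hypothesis $n\gtrsim\|\Sigma\|_{2}^{2}r\log(1/\delta)$. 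The factor $\|\Sigma\|_{2}^{2}$ enters because converting the additive lower bound $\sigma_{\min}(A)\ge n/2$ into a relative-accuracy requirement for estimating $\Sigma_{u}$ demands accuracy of order $1/\|\Sigma\|_{2}$.

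Next, condition additionally on $X$ (recall $W$ is independent of both $X$ and $\hat U$). Then $b=\sum_{j}v_{j}w_{j}$ is Gaussian with conditional covariance $\sigma^{2}A$, so we may write $b=\sigma A^{1/2}g$ with $g\sim\mathcal{N}(0,I_{r})$, giving $\|A^{-1}b\|_{2}^{2}=\sigma^{2}\,g^{T}A^{-1}g\le\sigma^{2}\|A^{-1}\|_{2}\|g\|_{2}^{2}\le(2\sigma^{2}/n)\|g\|_{2}^{2}$ on the event of the previous paragraph. A $\chi^{2}_{r}$ tail bound (Laurent--Massart) gives $\|g\|_{2}^{2}\lesssim r+\log(1/\delta)\lesssim r\log(4/\delta)$ with probability at least $1-\delta/2$, uniformly over $X$ since $g\sim\mathcal{N}(0,I_{r})$ regardless of $X$. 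A union bound over the two events then yields $\|\hat U(\hat U^{T}X^{T}X\hat U)^{-1}\hat U^{T}X^{T}W\|_{2}^{2}\le\mathcal{O}(\sigma^{2}r\log(4/\delta)/n)$ with probability at least $1-\delta$, as claimed; all bounds are uniform over $\hat U$, so the statement holds unconditionally.

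I expect the covariance-concentration step to be the main obstacle: one must track the dependence on $\|\Sigma\|_{2}$ carefully so the sample-size requirement comes out with $\|\Sigma\|_{2}^{2}$ rather than a single power, and one must order the two layers of conditioning---first on $\hat U$, then on $X$---so that $\hat U^{T}X^{T}W$ is genuinely Gaussian with the stated conditional covariance and the tail bounds hold uniformly over the conditioning variables. The orthonormal reduction in the first display and the $\chi^{2}$ tail bound are routine.
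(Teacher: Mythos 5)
Your proof is correct, and it is a genuinely different argument from the paper's. The paper writes the variance term as a quadratic form $W^{T}AW$ with $A=\tfrac{1}{n}\tfrac{X\hat U}{\sqrt n}\big(\hat U^{T}\tfrac{X^{T}X}{n}\hat U\big)^{-2}\tfrac{\hat U^{T}X^{T}}{\sqrt n}$ and applies the Hanson--Wright inequality, which requires separately bounding $\operatorname{Tr}(A)$, $\|A\|_{\rF}$, and $\|A\|_{2}$; all three reduce to bounding $\|(\hat U^{T}\tfrac{X^{T}X}{n}\hat U)^{-1}\|_{2}$, which is reused from the matrix-perturbation bound $\|F\|\le 1/3$ already established in the bias-term proof. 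You instead make the Gaussian change of variables $b=\hat U^{T}X^{T}W=\sigma A^{1/2}g$ (valid conditionally on $X,\hat U$ since $W\perp (X,\hat U)$), observe $\|A^{-1}b\|_{2}^{2}=\sigma^{2}g^{T}A^{-1}g\le\sigma^{2}\|A^{-1}\|_{2}\|g\|_{2}^{2}$, and finish with a $\chi^{2}_{r}$ tail bound. This avoids Hanson--Wright entirely and only requires a lower bound on $\sigma_{\min}(A)$, which you obtain directly from Gaussian covariance concentration for $v_{j}=\hat U^{T}x_{j}\sim\mN(0,\hat U^{T}\Sigma\hat U)$ together with $\hat U^{T}\Sigma\hat U\succeq I_{r}$; the sample-size requirement $n\gtrsim\|\Sigma\|^{2}(r+\log(1/\delta))$ is implied by the hypothesis since $r+\log(1/\delta)\lesssim r\log(1/\delta)$. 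The two approaches are of comparable length; yours is arguably cleaner because it does not require tracking $\operatorname{Tr}(A)$ and $\|A\|_{\rF}$, while the paper's has the minor advantage of reusing the already-proved perturbation bound rather than re-invoking covariance concentration. You correctly flagged the need to condition in the right order (first on $\hat U$, then on $X$) so that $b$ is conditionally Gaussian with covariance $\sigma^{2}A$ and the $\chi^{2}$ bound for $\|g\|_{2}^{2}$ holds uniformly over the conditioning; this is indeed the one place where carelessness could introduce a gap.
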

Using this two lemmas together with the decomposition (\ref{bias_variance_decom}), we have 
\begin{align}
\|\hat{C}^{T}\hat{\beta}- C^{* T}\beta^{*}\|_{2}^{2} \leq \mathcal{O}\bigg(\|\beta^{*}\|^{2}\|\Lambda^{*}\|^2\|\Sigma\|^{2} \Delta^{2} + \frac{\sigma^{2}r \log(4/\delta)}{n } \bigg).    
\end{align}
Now it remains to control $\Delta$, which is related to the estimation error of the first step (MLE). The following lemma gives an upper bound for $\Delta$. 
\begin{lemma} \label{Delta_lemma}
If $m \gtrsim \|\Sigma\|^{2} d \log (1/\delta)$, then with probability at least $1-\delta$, 
\begin{align}
\Delta^{2} &\leq \mathcal{O} \bigg(\|\Sigma\|^{2}\frac{d \log (1/\delta)}{m} \lambda_{r}^{-2}(C^{* T}C^{*})\bigg),
\end{align}where $\lambda_{r}(C^{* T}C^{*})$ is the $r$-th (smallest) nonzero eigenvalue of $C^{* T}C^{*}$.
\end{lemma}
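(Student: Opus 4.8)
The plan is to recast $\Delta$ as the distance between two principal eigenspaces, control it by a Davis--Kahan $\sin\Theta$ bound, and then feed in a spectral concentration bound for the sample covariance produced by the pretraining MLE. Write $\Sigma := B^{*}B^{* T}+I_d$ as in Lemma \ref{bias_term}. First I would note that, since $C^{* T}=\Sigma^{-1}B^{*}$ and the column space of $B^{*}$ is $\Sigma$-invariant (because $\Sigma v = B^{*}B^{* T}v+v$ stays in $\mathrm{col}(B^{*})$ for $v\in\mathrm{col}(B^{*})$), one has $\mathrm{col}(C^{* T})=\mathrm{col}(B^{*})$; as the eigenvalues of $\Sigma$ on $\mathrm{col}(B^{*})$ are $\{\sigma_i^{*2}+1\}_{i=1}^r$ (all strictly above $1$) while those on the orthogonal complement all equal $1$, the factor $U^{*}$ in the SVD of $C^{* T}$ is exactly the top-$r$ eigenspace of $\Sigma$, and its relevant eigengap is $\lambda_r(\Sigma)-\lambda_{r+1}(\Sigma)=\sigma_{\min}^{*2}$. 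On the empirical side, with $\Phi=\R^{d\times r}$ the MLE $\hat B$ maximizes the Gaussian likelihood over covariances of the form $I_d+(\text{rank}\le r\text{ PSD})$, whose closed-form maximizer retains the top-$r$ eigenvectors of the sample covariance $\hat S:=\frac1m\sum_{i=1}^m x_ix_i^{T}$ (thresholding their eigenvalues at $1$) and sets the remaining $d-r$ directions to eigenvalue $1$; hence the factor $\hat U$ in the SVD of $\hat C^{T}=\hat B^{T}(\hat B\hat B^{T}+I_d)^{-1}$ equals the top-$r$ eigenspace of $\hat S$, and in particular is unaffected by the rotational non-identifiability of $\hat B$.

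Next I would invoke the Davis--Kahan $\sin\Theta$ theorem: since $\Delta=\mathrm{dist}(\hat U,U^{*})=\|\sin\Theta(\hat U,U^{*})\|$ and $\hat U$, $U^{*}$ are the top-$r$ eigenspaces of $\hat S$ and $\Sigma$, one obtains $\Delta\lesssim\|\hat S-\Sigma\|/\sigma_{\min}^{*2}$ provided $\|\hat S-\Sigma\|$ is below a constant fraction of $\sigma_{\min}^{*2}$. To control $\|\hat S-\Sigma\|$ I would use the standard sub-Gaussian covariance estimation bound (\citep{vershynin2018high}): with probability at least $1-\delta$, $\|\hat S-\Sigma\|\lesssim\|\Sigma\|\big(\sqrt{(d+\log(1/\delta))/m}+(d+\log(1/\delta))/m\big)$, which under $m\gtrsim\|\Sigma\|^2 d\log(1/\delta)$ is $\lesssim\|\Sigma\|\sqrt{d\log(1/\delta)/m}$, hence small enough (after absorbing the $\sigma_{\min}^{*}$ dependence into $\mathcal{O}(\cdot)$) to validate the $\sin\Theta$ step. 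Combining these gives $\Delta^2\lesssim\|\Sigma\|^2\frac{d\log(1/\delta)}{m}\sigma_{\min}^{*-4}$. Finally, the nonzero eigenvalues of $C^{* T}C^{*}=\Sigma^{-1}-\Sigma^{-2}$ are $\sigma_i^{*2}/(\sigma_i^{*2}+1)^2\le\sigma_i^{*2}$, so the smallest of them, which is $\lambda_r(C^{* T}C^{*})$ since $d>r$, satisfies $\lambda_r(C^{* T}C^{*})\le\sigma_{\min}^{*2}$, i.e.\ $\sigma_{\min}^{*-4}\le\lambda_r^{-2}(C^{* T}C^{*})$, yielding the stated bound.

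The step I expect to be the main obstacle is the first one: rigorously showing that the unconstrained MLE over $\R^{d\times r}$ returns the span of the top-$r$ eigenvectors of $\hat S$, which requires handling the thresholding of those eigenvalues at $1$, possible multiplicities in the spectrum of $\hat S$, and the degenerate event that $\hat S$ has fewer than $r$ eigenvalues exceeding $1$ (which is ruled out with high probability by the concentration bound together with $\sigma_{\min}^{*}>0$). Once this identification is in place, the remainder is routine matrix perturbation theory and a textbook Gaussian concentration inequality.
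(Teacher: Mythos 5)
Your route is correct and genuinely different from the paper's, and it is arguably cleaner. Let me compare.

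The paper never directly relates $\hat U$ and $U^*$ to the spectral decompositions of $\hat S:=\hat\Sigma$ and $\Sigma$; instead it bounds $\|\hat B\hat B^{T}-B^{*}B^{*T}\|$ (via the claim $\|\hat B\hat B^{T}-(\hat\Sigma-I_d)\|\le\|\hat\Sigma-\Sigma\|$), pushes this through Lemma \ref{perturbation} to control $\|(\hat B\hat B^{T}+I_d)^{-1}-(B^{*}B^{*T}+I_d)^{-1}\|$, then bounds $\|\hat C^{T}\hat C-C^{*T}C^{*}\|$, and finally applies Davis--Kahan to the pair $(\hat C^{T}\hat C, C^{*T}C^{*})$ with gap $\lambda_r(C^{*T}C^{*})$. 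You instead observe that $U^{*}$ is the top-$r$ eigenspace of $\Sigma$ (since $\mathrm{col}(C^{*T})=\mathrm{col}(B^{*})$ is $\Sigma$-invariant) and that the MLE makes $\hat U$ the top-$r$ eigenspace of $\hat S$, so Davis--Kahan can be applied directly to $(\hat S,\Sigma)$ with gap $\sigma_{\min}^{*2}$. This skips the $C$-matrix perturbation chain entirely and delivers $\Delta^{2}\lesssim\|\Sigma\|^{2}\sigma_{\min}^{*-4}\,d\log(1/\delta)/m$, which is at least as tight as the stated $\lambda_{r}^{-2}(C^{*T}C^{*})$ bound because $\lambda_r(C^{*T}C^{*})=\min_i\sigma_i^{*2}/(\sigma_i^{*2}+1)^2\le\sigma_{\min}^{*2}$, exactly as you argue.

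The one thing you defer -- that the unconstrained MLE over $\R^{d\times r}$ produces $\hat B\hat B^{T}+I_d=\hat U_S\,\mathrm{diag}(\hat\lambda_1,\dots,\hat\lambda_r,1,\dots,1)\,\hat U_S^{T}$ with $\hat U_S$ the top-$r$ eigenvectors of $\hat S$ -- is the substantive step, and the paper does prove it (inside the proof of claim (\ref{claim_rankk_approx}), via Ruhe's trace inequality to reduce the optimization to the eigenvalues, plus Weyl's theorem). Note also that the thresholding-at-one issue you flag (the case $\hat\lambda_r<1$) is not explicitly excluded in the paper either; both arguments tacitly rely on the high-probability event $\|\hat S-\Sigma\|\lesssim\sigma_{\min}^{*2}$ to ensure all top-$r$ empirical eigenvalues exceed one, and both invoke Davis--Kahan in a form that requires the perturbation to be small relative to the gap. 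So your proposal is at the same level of rigor as the paper's proof, via a shorter and slightly sharper route, once the MLE-characterization lemma is supplied.
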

\begin{proof}[Proof for Theorem \ref{factor_fast_rate}]
By Lemma \ref{bias_term}, \ref{variance_term}, \ref{Delta_lemma}, we have 
\begin{align}
{\rm Error}_{\ell}(\hat B,\hat\beta) &\leq  \|\Sigma\|  \|\hat{C}^{T}\hat{\beta}- C^{* T}\beta^{*}\|_{2}^{2} \notag \\
&\leq \mathcal{O}(\|\beta^{*}\|^{2}\|\Lambda^{*}\|^2\|\Sigma\|^{3} \Delta^{2} + \|\Sigma\|\frac{\sigma^{2}r \log(4/\delta)}{n } ).    \notag \\
&\leq \mathcal{O}(\|\beta^{*}\|^{2}\|\Lambda^{*}\|^2\|\Sigma\|^{5} \lambda_{r}^{-2}(C^{* T}C^{*}) \frac{d \log (1/\delta)}{m} + \|\Sigma\|\frac{\sigma^{2}r \log(4/\delta)}{n } ).    \notag \\
\end{align}
Using the assumptions that $\|\beta^{*}\| \leq D$ and $\|B^{*}\| \leq D$, we can bound these terms by $D$ and quantities related to ground truth. First notice that
$\Sigma$ have eigenvalues $\sigma_{1}^{* 2}+1 \geq \sigma_{2}^{* 2}+1 \geq \cdots \geq \sigma_{r}^{* 2}+1 \geq 1 = \cdots = 1$, where $\sigma_{i}^{*}$ are singular values of $B^{*}$, therefore $\|\Sigma\| \leq D^{2}+1$. Also, since
\begin{align}
C^{* T}C^{*} &= (B^{*}B^{* T} + I_{d})^{-1} B^{*}B^{* T}   (B^{*}B^{* T} + I_{d})^{-1} \notag \\
&= (B^{*}B^{* T} + I_{d})^{-1} - (B^{*}B^{* T} + I_{d})^{-2} \notag \\
&= \Sigma^{-1} - \Sigma^{-2},
\end{align} 
we know that $C^{* T}C^{*}$ has $r$ nonzero eigenvalues $\{(\sigma_{i}^{*}+\sigma_{i}^{* -1})^{-2}\}_{i=1}^{r}$. Therefore $\|\Lambda^{*}\|^2 = \|C^{* T}C^{*}\| \leq 1/4$, 
\begin{align}
\lambda_{r}^{-2}(C^{* T}C^{*}) &\leq \max ((\sigma_{1}^{*}+\sigma_{1}^{* -1})^{4},(\sigma_{r}^{*}+\sigma_{r}^{* -1})^{4}) \notag \\
& \leq \mathcal{O} (D^{4} + \sigma_{r}^{* -4}).
\end{align}
For $\sigma^{2}$, we have
\begin{align}
\sigma^{2}&=\epsilon^{2}+\|\beta^{*}\|_{2}^{2}-\beta^{* T}B^{* T}(B^{*}B^{* T}+I_{d})^{-1}B^{*}\beta^{*} \notag \\
&\leq 1 +\|\beta^{*}\|^{2} \|I_{r}- B^{* T}(B^{*}B^{* T}+I_{d})^{-1}B^{*}\| \notag \\
&\leq 1 +D^{2}.
\end{align}
Combine all this bounds, we have
\begin{align}
{\rm Error}_{\ell}(\hat B,\hat\beta) &\leq \mathcal{O}(\|\beta^{*}\|^{2}\|\Lambda^{*}\|^2\|\Sigma\|^{5} \lambda_{r}^{-2}(C^{* T}C^{*}) \frac{d \log (1/\delta)}{m} + \|\Sigma\|\frac{\sigma^{2}r \log(4/\delta)}{n } ).    \notag \\
&\leq \mathcal{O} ((D^{2}+1)^{6}(D^{4}+\sigma_{min}^{* -4})\frac{d \log (1/\delta)}{m} + (D^{2}+1)^{2}\frac{r \log(4/\delta)}{n }).
\end{align}
\end{proof}

In the sequel, we give the proofs of Lemma \ref{bias_term}, \ref{variance_term} and \ref{Delta_lemma}. We first prove some additional technical lemmas. The following lemma, which is a simple corollary of \citet{tripuraneni2021provable} Lemma 20, shows the concentration property of empirical covariance matrix.
\begin{lemma} \label{concentration}
Let $\Sigma \in \mathbb{R}^{d}$ be a positive definite matrix. Let $\{x_{i}\}_{i=1}^{n}$ be $d-$dimensional Gaussian random vectors i.i.d. sample from $ N(0,\Sigma)$, $X=(x_{1},\cdots,x_{n})^{T}\in \mathbb{R}^{n \times d}$. Then for any $A,B \in \mathbb{R}^{d \times r}$, we have with probability at least $1-\delta$
\begin{align}
\|A^{T}(\frac{X^{T}X}{n})B - A^{T} \Sigma B \|_{2} \leq \mathcal{O} (\|A\| \|B\| \|\Sigma\| (\sqrt{\frac{r}{n}}+ \frac{r}{n} +\sqrt{ \frac{\log(1/\delta)}{n}}+ \frac{\log(1/\delta)}{n}).
\end{align}
\end{lemma}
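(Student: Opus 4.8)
The plan is to reduce to the case of an isotropic Gaussian design and then invoke the cited concentration inequality. Since $\Sigma \succ 0$, write $x_i = \Sigma^{1/2} g_i$ with $g_1,\dots,g_n$ i.i.d.\ $\mathcal{N}(0, I_d)$, and collect $G := (g_1,\dots,g_n)^\top \in \mathbb{R}^{n\times d}$, so that $X = G\,\Sigma^{1/2}$ (in distribution). Then
\[
A^\top \Big(\tfrac{X^\top X}{n}\Big) B - A^\top \Sigma B \;=\; (\Sigma^{1/2}A)^\top \Big(\tfrac{G^\top G}{n} - I_d\Big)(\Sigma^{1/2}B) \;=\; \tilde A^\top \Big(\tfrac{G^\top G}{n} - I_d\Big)\tilde B ,
\]
where $\tilde A := \Sigma^{1/2}A$ and $\tilde B := \Sigma^{1/2}B$ are $d\times r$ matrices with $\|\tilde A\|_2 \le \|\Sigma\|_2^{1/2}\|A\|_2$ and $\|\tilde B\|_2 \le \|\Sigma\|_2^{1/2}\|B\|_2$ by submultiplicativity of the operator norm.

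Next I would apply Lemma~20 of \citet{tripuraneni2021provable} to the isotropic sample matrix $G$ together with $\tilde A$ and $\tilde B$. By bilinearity of the left-hand side in $(A,B)$ we may rescale so that $\|\tilde A\|_2 = \|\tilde B\|_2 = 1$; that lemma then yields, with probability at least $1-\delta$,
\[
\Big\| \tilde A^\top \Big(\tfrac{G^\top G}{n} - I_d\Big)\tilde B \Big\|_2 \;\le\; \mathcal{O}\!\Big( \sqrt{\tfrac{r}{n}} + \tfrac{r}{n} + \sqrt{\tfrac{\log(1/\delta)}{n}} + \tfrac{\log(1/\delta)}{n} \Big).
\]
Undoing the rescaling multiplies the right-hand side by $\|\tilde A\|_2\|\tilde B\|_2 \le \|\Sigma\|_2 \|A\|_2\|B\|_2$, which is precisely the claimed bound.

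The reduction is routine; the only points requiring attention are that $\Sigma^{1/2}$ exists (guaranteed by $\Sigma \succeq 0$ — invertibility is not actually needed here), that the operator-norm factors are attached to the correct side, and that Lemma~20 of \citet{tripuraneni2021provable} is indeed stated for arbitrary $d\times r$ matrices of operator norm at most one with the tail profile above. The main (minor) obstacle is thus bookkeeping: verifying the exact form and absolute constants of the cited statement, and — should the cited version differ slightly (e.g.\ a combined $(r+\log(1/\delta))/n$ term, or a requirement of orthonormal columns) — re-deriving it by a standard $\varepsilon$-net argument over the unit spheres in $\mathbb{R}^r$ on each side, using polarization to reduce the bilinear deviation to quadratic ones and then the chi-squared (Hanson–Wright) tail for $\|Gw\|_2^2$. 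No idea beyond whitening is needed.
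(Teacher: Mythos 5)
Your approach is essentially the same as the paper's: both reduce to Lemma~20 of \citet{tripuraneni2021provable} and both peel off $\|A\|\,\|B\|$ by submultiplicativity. The differences are cosmetic but worth noting. You whiten first, writing $X = G\Sigma^{1/2}$ and moving $\Sigma^{1/2}$ into $\tilde A = \Sigma^{1/2}A$, $\tilde B = \Sigma^{1/2}B$, so that the cited lemma is applied in the isotropic case; the paper skips whitening and applies the lemma directly with the general covariance $\Sigma$, since that lemma is in fact stated for arbitrary $\Sigma$ and orthonormal $U_1,U_2$. Your whitening therefore costs an extra factor-bookkeeping step ($\|\tilde A\|\,\|\tilde B\|\le\|\Sigma\|\,\|A\|\,\|B\|$) for no real gain, though it is harmless.

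The one genuine loose end is your reduction. You rescale so that $\|\tilde A\|_2=\|\tilde B\|_2=1$ and then invoke the lemma, but unit operator norm is weaker than having orthonormal columns, which is the hypothesis the cited lemma actually uses (it is stated for ``projection matrices'' $U_1,U_2 \in \R^{d\times r}$). The clean fix --- and the one the paper uses --- is to take the SVD $\tilde A = U_1\Lambda_1 V_1^{T}$, $\tilde B = U_2\Lambda_2 V_2^{T}$ and bound
\[
\bigl\|\tilde A^{T}\bigl(\tfrac{G^{T}G}{n}-I_d\bigr)\tilde B\bigr\|_2 \le \|\tilde A\|_2\,\|\tilde B\|_2\,\bigl\|U_1^{T}\bigl(\tfrac{G^{T}G}{n}-I_d\bigr)U_2\bigr\|_2,
\]
so that the lemma is applied to the orthonormal $U_1,U_2$. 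You flagged exactly this possibility and proposed re-deriving the lemma by an $\varepsilon$-net argument, which would also work but is unnecessary; the SVD reduction is the intended step and makes your argument fully rigorous with no new concentration work.
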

\begin{proof}
We write the SVD of $A$ and $B$: $A=U_{1}\Lambda_{1}V_{1}^{T}$, $B=U_{2}\Lambda_{2}V_{2}^{T}$, where $U_{1}, U_{2} \in \mathbb{R}^{d \times r}$, $\Lambda_{1},\Lambda_{2},V_{1},V_{2} \in \mathbb{R}^{r \times r}$. Then 
\begin{align}
\|A^{T}(\frac{X^{T}X}{n})B - A^{T} \Sigma B \|_{2} &= \|V_{1}\Lambda_{1}U_{1}^{T}(\frac{X^{T}X}{n})U_{2}\Lambda_{2}V_{2}^{T} - V_{1}\Lambda_{1}U_{1}^{T}\Sigma U_{2}\Lambda_{2}V_{2}^{T} \|_{2}    \notag \\
&\leq \|V_{1}\Lambda_{1}\| \|U_{1}^{T}(\frac{X^{T}X}{n})U_{2}-U_{1}^{T}\Sigma U_{2}\| \|\Lambda_{2}V_{2}^{T}\| \notag \\
& \leq \|A\| \|B\| \|U_{1}^{T}(\frac{X^{T}X}{n})U_{2}-U_{1}^{T}\Sigma U_{2}\|.
\end{align}
Now since $U_{1},U_{2} \in \mathbb{R}^{d \times r}$ are projection matrices, we can apply \citet{tripuraneni2021provable} Lemma 20, therefore
\begin{align}
\|U_{1}^{T}(\frac{X^{T}X}{n})U_{2}-U_{1}^{T}\Sigma U_{2}\| \leq \mathcal{O} (\|\Sigma\| (\sqrt{\frac{r}{n}}+ \frac{r}{n} +\sqrt{ \frac{\log(1/\delta)}{n}}+ \frac{\log(1/\delta)}{n}))    
\end{align}
which gives what we want.
\end{proof}
The following lemma is a basic matrix perturbation result (see \citet{tripuraneni2021provable} Lemma 25).
\begin{lemma} \label{perturbation}
Let $A$ be a positive definite matrix and $E$ another matrix which satisfies $\|EA^{-1}\|\leq \frac{1}{4}$, then $F:=(A+E)^{-1}-A^{-1}$ satisfies $\|F\| \leq \frac{4}{3} \|A^{-1}\| \|EA^{-1}\|$.
\end{lemma}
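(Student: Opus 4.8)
The plan is to reduce the claim to the Neumann (geometric) series for $(I + EA^{-1})^{-1}$, taking care to write the factorization so that the perturbation appears in the position $EA^{-1}$ — exactly the quantity the hypothesis controls — rather than $A^{-1}E$.

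First I would use positive-definiteness of $A$ only to the extent that $A$ is invertible, and factor $A + E = (I + EA^{-1})A$, hence $(A+E)^{-1} = A^{-1}(I + EA^{-1})^{-1}$. The hypothesis $\|EA^{-1}\| \le \tfrac14 < 1$ guarantees that $I + EA^{-1}$ is invertible with $(I+EA^{-1})^{-1} = \sum_{k\ge 0}(-EA^{-1})^{k}$, the series converging in operator norm. Next I would derive the resolvent-type identity
\[
F = (A+E)^{-1} - A^{-1} = A^{-1}\big[(I+EA^{-1})^{-1} - I\big] = -\,A^{-1}\,(EA^{-1})\,(I+EA^{-1})^{-1},
\]
where the last equality comes from multiplying out $(I+EA^{-1})(I+EA^{-1})^{-1} = I$ to get $(I+EA^{-1})^{-1} - I = -(EA^{-1})(I+EA^{-1})^{-1}$. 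Submultiplicativity of the operator norm then gives $\|F\| \le \|A^{-1}\|\,\|EA^{-1}\|\,\|(I+EA^{-1})^{-1}\|$, and the geometric estimate $\|(I+EA^{-1})^{-1}\| \le \sum_{k\ge 0}\|EA^{-1}\|^{k} = (1-\|EA^{-1}\|)^{-1} \le (1-\tfrac14)^{-1} = \tfrac43$ yields $\|F\| \le \tfrac43\,\|A^{-1}\|\,\|EA^{-1}\|$, as claimed.

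There is essentially no obstacle here; the only point requiring care is the sidedness of the factorization: choosing $A+E = A(I + A^{-1}E)$ instead would lead to $\|A^{-1}E\|$ in the final bound rather than $\|EA^{-1}\|$, which is not the quantity assumed small. Everything else is routine manipulation of a convergent operator-norm series, and the constant $\tfrac43$ is simply $(1-\tfrac14)^{-1}$.
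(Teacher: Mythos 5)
Your proof is correct. The paper itself does not supply a proof of this lemma; it simply cites Lemma~25 of \citet{tripuraneni2021provable} as a ``basic matrix perturbation result,'' so there is nothing to compare against. Your argument is the standard one: factor $A+E=(I+EA^{-1})A$ so that the perturbation sits on the controlled side, invoke the Neumann series for $(I+EA^{-1})^{-1}$ (valid since $\|EA^{-1}\|\le \tfrac14<1$), derive $F=-A^{-1}(EA^{-1})(I+EA^{-1})^{-1}$, and bound $\|(I+EA^{-1})^{-1}\|\le (1-\tfrac14)^{-1}=\tfrac43$. The remark about sidedness of the factorization is exactly the right thing to flag, and positive-definiteness of $A$ is indeed used only to ensure invertibility, matching the generality of the cited result.
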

With these two technical lemmas, we are able to prove Lemma \ref{bias_term}, \ref{variance_term}.
\begin{proof}[Proof of Lemma \ref{bias_term}] 
We consider $\hat{U} \in \mathbb{R}^{d \times r}$ and $\hat{U}_{\perp}^{T} \in \mathbb{R}^{d \times (d-r)}$ be orthonormal projection matrices spanning orthogonal subspaces which are rank $r$ and rank $d-r$ respectively, so that $ \operatorname{range}(\hat{U}) \oplus \operatorname{range}(\hat{U}_{\perp})=\mathbb{R}^d.$ Then $\Delta = dist (\hat{U},U^{*}) = \|\hat{U}_{\perp}^{T}U^{*}\|_{2}$ (see \citet{9584021} Lemma 2.5). Notice that $I_{d}=\hat{U}\hat{U}^{T} + \hat{U}_{\perp}\hat{U}_{\perp}^{T}$, we have 
\begin{align}
& \quad \hat{U}(\hat{U}^{T}X^{T}X\hat{U})^{-1}\hat{U}^{T}X^{T}X U^{*} - U^{*} \notag \\ 
&= \hat{U}(\hat{U}^{T}X^{T}X\hat{U})^{-1}\hat{U}^{T}X^{T}X (\hat{U}\hat{U}^{T} + \hat{U}_{\perp}\hat{U}_{\perp}^{T})U^{*} - U^{*} \notag \\
&= \hat{U}(\hat{U}^{T}X^{T}X\hat{U})^{-1}\hat{U}^{T}X^{T}X  \hat{U}\hat{U}^{T}U^{*} + \hat{U}(\hat{U}^{T}X^{T}X\hat{U})^{-1}\hat{U}^{T}X^{T}X\hat{U}_{\perp}\hat{U}_{\perp}^{T}U^{*} - U^{*} \notag \\
&=\hat{U}(\hat{U}^{T}X^{T}X\hat{U})^{-1}\hat{U}^{T}X^{T}X\hat{U}_{\perp}\hat{U}_{\perp}^{T}U^{*} + \hat{U}\hat{U}^{T}U^{*} - U^{*} \notag \\
&=\hat{U}(\hat{U}^{T}X^{T}X\hat{U})^{-1}\hat{U}^{T}X^{T}X\hat{U}_{\perp}\hat{U}_{\perp}^{T}U^{*} -\hat{U}_{\perp}\hat{U}_{\perp}^{T} U^{*} 
\end{align}
Therefore 
\begin{align} \label{decomposition2}
\|\hat{U}(\hat{U}^{T}X^{T}X\hat{U})^{-1}\hat{U}^{T}X^{T}X U^{*} - U^{*}\|_{2}^{2} \leq 2\|\hat{U}(\hat{U}^{T}X^{T}X\hat{U})^{-1}\hat{U}^{T}X^{T}X\hat{U}_{\perp}\hat{U}_{\perp}^{T}U^{*}\|_{2}^{2} + 2\|\hat{U}_{\perp}\hat{U}_{\perp}^{T} U^{*} \|_{2}^{2}.
\end{align}
For the second term, 
\begin{align} \label{second_term}
\|\hat{U}_{\perp}\hat{U}_{\perp}^{T} U^{*} \|_{2}^{2} \leq \|\hat{U}_{\perp}|^{2}   \|\hat{U}_{\perp}^{T} U^{*}\|^{2} \leq \Delta^{2}.
\end{align}
For the first term,
\begin{align} \label{decomposition1}
& \quad \|\hat{U}(\hat{U}^{T}X^{T}X\hat{U})^{-1}\hat{U}^{T}X^{T}X\hat{U}_{\perp}\hat{U}_{\perp}^{T}U^{*}\| \notag \\    
& = \|\hat{U}(\hat{U}^{T}\frac{X^{T}X}{n}\hat{U})^{-1}\hat{U}^{T}\frac{X^{T}X}{n}\hat{U}_{\perp}\hat{U}_{\perp}^{T}U^{*}\| \notag \\ 
& = \|\hat{U}((\hat{U}^{T}\Sigma\hat{U})^{-1}+F)(\hat{U}^{T}\Sigma\hat{U}_{\perp}\hat{U}_{\perp}^{T}U^{*}+E_{1})\| \notag \\ 
& \leq \|(\hat{U}^{T}\Sigma\hat{U})^{-1}(\hat{U}^{T}\Sigma\hat{U}_{\perp}\hat{U}_{\perp}^{T}U^{*})\|+\|(\hat{U}^{T}\Sigma\hat{U})^{-1}E_{1}\|+ \|F\hat{U}^{T}\Sigma\hat{U}_{\perp}\hat{U}_{\perp}^{T}U^{*}\| + \|FE_{1}\|,
\end{align}
where $E_{1}=\hat{U}^{T}\frac{X^{T}X}{n}\hat{U}_{\perp}\hat{U}_{\perp}^{T}U^{*} - \hat{U}^{T}\Sigma\hat{U}_{\perp}\hat{U}_{\perp}^{T}U^{*}$, $F =(\hat{U}^{T}\frac{X^{T}X}{n}\hat{U})^{-1} -(\hat{U}^{T}\Sigma\hat{U})^{-1}$. In order to bound $\|F\|$, let $E=\hat{U}^{T}\frac{X^{T}X}{n}\hat{U} -\hat{U}^{T}\Sigma\hat{U}$, then by Lemma \ref{concentration}, with probability at least $1-\delta$, \begin{align}
\|E\|\leq \mathcal{O}(\|\Sigma\|(\sqrt{\frac{r}{n}}+ \frac{r}{n} +\sqrt{ \frac{\log(1/\delta)}{n}}+ \frac{\log(1/\delta)}{n})).    
\end{align}
Therefore, since $\lambda_{min}(\Sigma)=1$,
\begin{align}
\|E(\hat{U}^{T}\Sigma\hat{U})^{-1}\| &\leq \|E\|  \|(\hat{U}^{T}\Sigma\hat{U})^{-1}\|   \notag \\
& \leq \|E\| \lambda_{min}(\Sigma)^{-1}\notag \\
& \leq \mathcal{O}(\|\Sigma\|(\sqrt{\frac{r}{n}}+ \frac{r}{n} +\sqrt{ \frac{\log(1/\delta)}{n}}+ \frac{\log(1/\delta)}{n}))
\end{align}
Notice that $n \gtrsim \|\Sigma\|^{2} r \log (1/\delta)$ implies $\sqrt{\frac{r}{n}}+ \frac{r}{n} +\sqrt{ \frac{\log(1/\delta)}{n}}+ \frac{\log(1/\delta)}{n}\lesssim \|\Sigma\|^{-1}$. Thus, we show that when $n$ is large enough, we have $\|E(\hat{U}^{T}\Sigma\hat{U})^{-1}\| \leq \frac{1}{4}$. Therefore we can apply Lemma \ref{perturbation}, which gives 
\begin{align} \label{F}
\|F\| &\leq \frac{4}{3} \|E(\hat{U}^{T}\Sigma\hat{U})^{-1}\|  \|(\hat{U}^{T}\Sigma\hat{U})^{-1}\|   \notag \\
&\leq \frac{4}{3} \times \frac{1}{4} \|(\hat{U}^{T}\Sigma\hat{U})^{-1}\| \notag \\
&\leq \frac{1}{3}.
\end{align}
As for $\|E_{1}\|$, directly applying Lemma \ref{concentration}, using $n \gtrsim \|\Sigma\|^{2} r \log (1/\delta)$, we get 
\begin{align} \label{E_1}
\|E_{1}\| &\leq \mathcal{O}(\|\Sigma\|\|\hat{U}_{\perp}\hat{U}_{\perp}^{T}U^{*}\|(\sqrt{\frac{r}{n}}+ \frac{r}{n} +\sqrt{ \frac{\log(1/\delta)}{n}}+ \frac{\log(1/\delta)}{n})) \notag \\
& \leq \mathcal{O}(\|\Sigma\|\Delta \|\Sigma\|^{-1}) \notag \\
& \leq \mathcal{O} (\Delta)
\end{align}
Combining (\ref{decomposition1}),(\ref{F})and(\ref{E_1}), we have 
\begin{align} \label{first_term}
& \quad \|\hat{U}(\hat{U}^{T}X^{T}X\hat{U})^{-1}\hat{U}^{T}X^{T}X\hat{U}_{\perp}\hat{U}_{\perp}^{T}U^{*}\| \notag \\
&\leq  \|(\hat{U}^{T}\Sigma\hat{U})^{-1}(\hat{U}^{T}\Sigma\hat{U}_{\perp}\hat{U}_{\perp}^{T}U^{*})\|+\|(\hat{U}^{T}\Sigma\hat{U})^{-1}E_{1}\|+ \|F\hat{U}^{T}\Sigma\hat{U}_{\perp}\hat{U}_{\perp}^{T}U^{*}\| + \|FE_{1}\|   \notag \\
&\leq  \|(\hat{U}^{T}\Sigma\hat{U})^{-1}\|\|(\hat{U}^{T}\Sigma\hat{U}_{\perp}\hat{U}_{\perp}^{T}U^{*})\|+\|(\hat{U}^{T}\Sigma\hat{U})^{-1}\|\|E_{1}\|+ \|F\|\|\hat{U}^{T}\Sigma\hat{U}_{\perp}\hat{U}_{\perp}^{T}U^{*}\| + \|F\|\|E_{1}\|   \notag \\
&\leq  \lambda_{min}(\Sigma)^{-1}\|\Sigma\|\|\hat{U}_{\perp}^{T}U^{*}\|+\lambda_{min}(\Sigma)^{-1}\|E_{1}\|+ \|F\|\|\Sigma\|\|\hat{U}_{\perp}^{T}U^{*}\| + \|F\|\|E_{1}\|   \notag \\
&\leq  \lambda_{min}(\Sigma)^{-1}\|\Sigma\| \Delta +\lambda_{min}(\Sigma)^{-1}\mathcal{O} (\lambda_{min}(\Sigma) \Delta)+ \frac{1}{3} \lambda_{min}(\Sigma)^{-1}\|\Sigma\|\Delta + \frac{1}{3} \lambda_{min}(\Sigma)^{-1}\mathcal{O} (\lambda_{min}(\Sigma) \Delta)   \notag \\
&\leq \mathcal{O} (\|\Sigma\| \Delta)
\end{align}
Finally, combining (\ref{decomposition2}),(\ref{second_term}) and (\ref{first_term}), we get 
\begin{align}
\|(\hat{U}(\hat{U}^{T}X^{T}X\hat{U})^{-1}\hat{U}^{T}X^{T}X U^{*} - U^{*})\|_{2}^{2} \leq \mathcal{O} (\|\Sigma\|^2\Delta^{2}),    
\end{align}
with probability at least $1-\delta$, which is what we want.
\end{proof}

\begin{proof}[Proof of Lemma \ref{variance_term}]
\begin{align}
\|\hat{U}(\hat{U}^{T}X^{T}X\hat{U})^{-1}\hat{U}^{T}X^{T} W\|_{2}^{2} &\leq \|(\hat{U}^{T}X^{T}X\hat{U})^{-1}\hat{U}^{T}X^{T} W\|_{2}^{2} \notag \\
&=((\hat{U}^{T}X^{T}X\hat{U})^{-1}\hat{U}^{T}X^{T} W)^{T}((\hat{U}^{T}X^{T}X\hat{U})^{-1}\hat{U}^{T}X^{T} W) \notag \\
&=W^{T} (\frac{1}{n}\frac{X\hat{U}}{\sqrt{n}} (\hat{U}^{T}\frac{X^{T}X}{n}\hat{U} )^{-2}\frac{\hat{U}^{T}X^{T}}{\sqrt{n}})W.
\end{align}
Let $A=\frac{1}{n}\frac{X\hat{U}}{\sqrt{n}} (\hat{U}^{T}\frac{X^{T}X}{n}\hat{U} )^{-2}\frac{\hat{U}^{T}X^{T}}{\sqrt{n}}$, $W=\sigma V$, then $V \sim N(0,I_{n})$. By Hanson-Wright inequality (see \citet{vershynin2018high} Theorem 6.2.1), 
\begin{align}
\mathbb{P}(|V^{T}AV-\mathbb{E}[V^{T}AV]| \geq t) \leq 2 \exp (-c \min(\frac{t^{2}}{\|A\|_{F}^{2}},\frac{t}{\|A\|_{2}})).   
\end{align}
Hence with probability at least $1-\delta$,
\begin{align}
V^{T}AV \leq \mathbb{E}[V^{T}AV] + \mathcal{O}(\|A\|_{F}\sqrt{\log\frac{2}{\delta}})  + \mathcal{O}(\|A\|_{2}\log\frac{2}{\delta}).
\end{align}
Notice that $\mathbb{E}[V^{T}AV]=\text{Tr}(A)$, therefore it remains to bound $\text{Tr}(A)$, $\|A\|_{F}$ and $\|A\|_{2}$. If we define $B=\frac{X\hat{U}}{\sqrt{n}} \in \mathbb{R}^{n \times r}$, then $A=\frac{1}{n}B(B^{T}B)^{-2}B^{T}$. Therefore 
\begin{align}
\text{Tr}(A) &= \text{Tr}(\frac{1}{n}B(B^{T}B)^{-2}B^{T}) \notag \\   
&= \frac{1}{n}\text{Tr}((B^{T}B)^{-2}B^{T}B) \notag \\
&= \frac{1}{n}\text{Tr}((B^{T}B)^{-1}) \notag \\
& \leq \frac{r}{n} \|(B^{T}B)^{-1}\|_{2}
\end{align}
Let the SVD of $B$ be $B=PMQ^{T}$, where $P \in \mathbb{R}^{n \times r}$, $M,Q \in \mathbb{R}^{r \times r}$, then
\begin{align}
\|A\|_{2} &=   \frac{1}{n} \|B(B^{T}B)^{-2}B^{T}\|_{2} \notag \\  
&=   \frac{1}{n} \|PMQ^{T}(QM^{2}Q^{T})^{-2}QMP^{T}\|_{2} \notag \\ 
&=   \frac{1}{n} \|PM^{-2}P^{T}\|_{2} \notag \\ 
& \leq \frac{1}{n}  \|M^{-2}\|_{2} \notag \\
&= \frac{1}{n} \|(B^{T}B)^{-1}\|_{2}
\end{align}
Also notice that $A$ is rank $r$, therefore $\|A\|_{F} \leq \sqrt{r} \|A\|_{2}$. Thus it remains to bound $\|(B^{T}B)^{-1}\|_{2} = \|(\hat{U}^{T}\frac{X^{T}X}{n}\hat{U} )^{-1}\|_{2}$. Let $F =(\hat{U}^{T}\frac{X^{T}X}{n}\hat{U})^{-1} -(\hat{U}^{T}\Sigma\hat{U})^{-1}$. Recall (\ref{F}), which states that with probability at least $1-\delta$, we have 
$\|F\|\leq \frac{1}{3}\lambda_{min}(\Sigma)^{-1}$. Therefore 
\begin{align}
\|(\hat{U}^{T}\frac{X^{T}X}{n}\hat{U} )^{-1}\|  &= \|(\hat{U}^{T}\Sigma \hat{U})^{-1}+F\|  \notag \\
&\leq \|(\hat{U}^{T}\Sigma \hat{U})^{-1}\|+\|F\|  \notag \\
&\leq \mathcal{O}(\lambda_{min}(\Sigma)^{-1}). 
\end{align}
Thus $\|A\| \leq \mathcal{O}(\frac{1}{n}\lambda_{min}(\Sigma)^{-1})$, $\|A\|_{F} \leq \mathcal{O}(\frac{\sqrt{r}}{n}\lambda_{min}(\Sigma)^{-1})$, 
$\text{Tr}(A) \leq \mathcal{O}(\frac{r}{n}\lambda_{min}(\Sigma)^{-1})$. Therefore with probability at least $1-2\delta$,
\begin{align}
V^{T}AV &\leq \mathbb{E}[V^{T}AV] + \mathcal{O}(\|A\|_{F}\sqrt{\log\frac{2}{\delta}})  + \mathcal{O}(\|A\|_{2}\log\frac{2}{\delta})  \notag \\
& \leq \mathcal{O}(\frac{r}{n}\lambda_{min}(\Sigma)^{-1}) + \mathcal{O}(\frac{\sqrt{r}}{n}\lambda_{min}(\Sigma)^{-1}\sqrt{\log \frac{2}{\delta}}) + \mathcal{O}(\frac{1}{n}\lambda_{min}(\Sigma)^{-1}\log \frac{2}{\delta}) \notag \\
& \leq \mathcal{O}(\frac{r}{n}\lambda_{min}(\Sigma)^{-1}\log \frac{2}{\delta}) \notag \\
&= \mathcal{O}(\frac{r}{n}\log \frac{2}{\delta}).
\end{align}
The last line holds since $\lambda_{min}(\Sigma)=1$.
Recall 
\begin{align}
\|\hat{U}(\hat{U}^{T}X^{T}X\hat{U})^{-1}\hat{U}^{T}X^{T} W\|_{2}^{2} = W^{T}AW =\sigma^{2} V^{T}AV,
\end{align}
combining this with the above bound for $V^{T}AV$ yields our desired result.
\end{proof}

Finally we prove Lemma \ref{Delta_lemma} in the following.
\begin{proof}[Proof of Lemma \ref{Delta_lemma}]
In the first step, we have $m$ unlabeled data $\{x_{i}\}_{i=1}^{m}$ i.i.d. sample from $N (0,\Sigma)$. Let $\hat{\Sigma}=\frac{1}{m}\sum_{i=1}^{m}x_{i}x_{i}^{T}$ be the empirical covariance matrix. Then by Lemma \ref{concentration}, with probability at least $1-\delta$, 
\begin{align}
\|\Sigma- \hat{\Sigma}\| \leq \mathcal{O}(\|\Sigma\|(\sqrt{\frac{d}{m}}+\frac{d}{m}+ \sqrt{\frac{\log (1/\delta)}{m}} + \frac{\log (1/\delta)}{m}))    
\end{align}
We claim that
\begin{align}\label{claim_rankk_approx}
\|\hat{B}\hat{B}^{T} - (\hat{\Sigma}-I_{d})\|_{2} \leq \|\hat{\Sigma}-\Sigma\|,   
\end{align}
and the proof of this claim will be at the end of this section.
With the claim,
\begin{align}
\|\hat{B}\hat{B}^{T} - B^{*}B^{* T}\| &= \|\hat{B}\hat{B}^{T} - (\hat{\Sigma}-I_{d}) + (\hat{\Sigma}-I_{d}) - (\Sigma - I_{d})\|  \notag \\
& \leq \|\hat{B}\hat{B}^{T} - (\hat{\Sigma}-I_{d})\| + \|\Sigma - \hat{\Sigma}\| \notag \\
& \leq 2\|\Sigma-\hat{\Sigma}\|.
\end{align}
Notice that
\begin{align}
C^{* T}C^{*} &= (B^{*}B^{* T} + I_{d})^{-1} B^{*}B^{* T}   (B^{*}B^{* T} + I_{d})^{-1} \notag \\
&= (B^{*}B^{* T} + I_{d})^{-1} - (B^{*}B^{* T} + I_{d})^{-2}
\end{align}
Similarly
\begin{align}
\hat{C}^{T}\hat{C} = (\hat{B}\hat{B}^{T} + I_{d})^{-1} - (\hat{B}\hat{B}^{T} + I_{d})^{-2}.
\end{align}
Let $E_{2}=(\hat{B}\hat{B}^{T} + I_{d}) - (B^{*}B^{* T} + I_{d})$, $F_{2} = (\hat{B}\hat{B}^{T} + I_{d})^{-1} - (B^{*}B^{* T} + I_{d})^{-1}$. Then 
\begin{align}
\|E_{2}\| \leq 2\|\Sigma-\hat{\Sigma}\| \leq \mathcal{O}(\|\Sigma\|(\sqrt{\frac{d}{m}}+\frac{d}{m}+ \sqrt{\frac{\log (1/\delta)}{m}} + \frac{\log (1/\delta)}{m})).
\end{align}
Therefore when $m \gtrsim \|\Sigma\|^{2} d \log (1/\delta)$, $\|E_{2}\| \leq \mathcal{O}(\|\Sigma\|\sqrt{\frac{d \log (1/\delta)}{m}})$, $\|E_{2}\Sigma^{-1}\| \leq \|E_{2}\|\|\Sigma^{-1}\| \leq 1/4$. Then we can apply Lemma \ref{perturbation}, which gives 
\begin{align}
\|F_{2}\| &\leq \frac{4}{3}    \|\Sigma^{-1}\| \|E_{2}\Sigma^{-1}\| \notag \\
& \leq \frac{4}{3}    \|\Sigma^{-1}\|^{2} \|E_{2}\| \notag \\
&\leq \mathcal{O}(\lambda_{min}^{-2}(\Sigma)\|\Sigma\|\sqrt{\frac{d \log (1/\delta)}{m}}) \notag \\
&= \mathcal{O}(\|\Sigma\|\sqrt{\frac{d \log (1/\delta)}{m}}).
\end{align}
The last line holds since $\lambda_{min}(\Sigma)=1$.
Thus 
\begin{align}
\|C^{* T}C^{*}- \hat{C}^{T}\hat{C}\| &= \|(\Sigma^{-1}+F_{2})-(\Sigma^{-1}+F_{2})^{2} - (\Sigma^{-1}-\Sigma^{-2})\| \notag \\
&= \|F_{2}- \Sigma^{-1}F_{2}-F_{2}\Sigma^{-1} - F_{2}^{2}\| \notag \\
& \leq \|F_{2}\| + 2\|\Sigma^{-1}\|\|F_{2}\| + \|F_{2}\|^{2} \notag \\
& \leq \mathcal{O}(\|\Sigma\|\sqrt{\frac{d \log (1/\delta)}{m}}).
\end{align}
Therefore by Davis-Kahan theorem, 
\begin{align}
\Delta = dist (U^{*},\hat{U}) &\leq \mathcal{O} (\lambda_{r}^{-1}(C^{* T}C^{*})\|C^{* T}C^{*}- \hat{C}^{T}\hat{C}\|).
\end{align}
Combining the above three inequalities, we have 
\begin{align}
\Delta^{2} &\leq \mathcal{O} (\|\Sigma\|^{2}\frac{d \log (1/\delta)}{m} \lambda_{r}^{-2}(C^{* T}C^{*})). 
\end{align}
Finally we will need to prove the claim (\ref{claim_rankk_approx}).
Notice that the MLE estimator $\hat{B}$ is given by 
\begin{align}
\hat{B} &= \argmax_{B\in \mathbb{R}^{d \times r}} \sum_{i=1}^{m}p_{B}(x_{i})    \notag \\
& =  \argmax_{B\in \mathbb{R}^{d \times r}} (-\log \det (BB^{T}+I_{d})-\text{Tr}(\hat{\Sigma}(BB^{T}+I_{d})^{-1})) \notag \\
& = \argmin_{B\in \mathbb{R}^{d \times r}} (\log \det (BB^{T}+I_{d})+\text{Tr}(\hat{\Sigma}(BB^{T}+I_{d})^{-1}))
\end{align}
Let $\hat{\Sigma}=\hat U\hat{\Lambda}\hat{U}^{T}$ and  $(BB^{T}+I_{d})=U \Lambda U^{T}$, where $\hat U$ and $U$ are orthogonal matrices, $\hat{\Lambda}=\text{diag}(\hat{\lambda}_{1},\cdots, \hat{\lambda}_{d})$, $\Lambda=\text{diag}(\lambda_{1},\cdots, \lambda_{d})$ and $\hat\lambda_1\geq\ldots\geq\hat\lambda_d$, $\lambda_1\geq\ldots\geq\lambda_d$. Since $\operatorname{rank}(BB^T)\leq r$, we have $\lambda_{r+1}= \ldots \lambda_{d}=1$. By Ruhe’s trace inequality (see P341 of \citet{marshall11}), we have 
\begin{align}
\text{Tr}(\hat{\Sigma}(BB^{T}+I_{d})^{-1})) \geq \  \sum_{j=1}^{d}   \lambda_{j}^{-1}\hat{\lambda}_{j}, 
\end{align}
and the equality holds only when the two matrices have simultaneous ordered spectral decomposition, i.e., $U=\hat{U}$. Therefore
\begin{align}
& \quad \min_{B\in \mathbb{R}^{d \times r}} (\log \det (BB^{T}+I_{d})+\text{Tr}(\hat{\Sigma}(BB^{T}+I_{d})^{-1})) \notag \\
&= \min_{\{\lambda_{j}\}_{j=1}^{d}} \sum_{j=1}^{d}( \log \lambda_{j} + \lambda_{j}^{-1}\hat{\lambda}_{j} ) \quad \text{subject to } \lambda_{1} \geq  \cdots \geq \lambda_{r} \geq \lambda_{r+1}=\cdots =\lambda_{d}=1
\end{align}
and the minimum is achieved when $\lambda_{j}=\hat{\lambda}_{j}$, for $j=1, \cdots, r$. Therefore the MLE estimator $\hat{B}$ satisfies $(\hat{B}\hat{B}^{T}+I_{d})=\hat U \Lambda \hat{U}^{T}$ 
where $\Lambda=\text{diag}(\hat{\lambda}_{1},\cdots, \hat{\lambda}_{r},1,\cdots,1)$. Thus, we have $\hat{B}\hat{B}^{T}=\hat U (\Lambda-I_{d}) \hat{U}^{T}$, which implies
\begin{align}
& \quad \|\hat{B}\hat{B}^{T} - (\hat{\Sigma}-I_{d})\|_{2} \notag \\
&= \|\hat U (\Lambda-I_{d}) \hat{U}^{T} - \hat U (\hat{\Lambda}-I_{d}) \hat{U}^{T}\|  \notag \\
& \leq \|\Lambda-\hat{\Lambda}\| \notag \\
& = \max_{j=r+1,\cdots, d} |\hat{\lambda}_{j}-1|\notag \\
& \leq \max_{j=1,\cdots, d} |\hat{\lambda}_{j}-\lambda_{j}(\Sigma)|\notag \\
&\leq \|\hat{\Sigma}-\Sigma\|.
\end{align}
Here the last inequality follows from Weyl's Theorem. Thus, we prove claim (\ref{claim_rankk_approx}).
\end{proof}
\section{Proofs for Section \ref{gmm}}
In Section \ref{gmm1}, we show that GMM with classification as downstream tasks has $c_2$-transferability for some absolute constants $c_2$ (Lemma \ref{gmm_ti}). In Section \ref{gmm2} and Section \ref{gmm3}, we prove two lemmas that will be used in the proof of Theorem \ref{gmm_main}. To be specific, in Section \ref{gmm2}, we upper bound the bracketing number of the set $\mP(\mathcal{U})$ by using $\epsilon$-discretization (Lemma \ref{gmm_bn}). In Section \ref{gmm3}, we prove Lemma \ref{gmm_rc}, which will be used to upper bound the Rademacher complexity of the function class $\ell\circ\mathcal{G}_{\bu,\Psi}$. Finally, in Section \ref{gmm4}, we prove Theorem \ref{gmm_main}.

\subsection{Proofs for Lemma \ref{gmm_ti}}\label{gmm1}
Before going to the proof of this theorem, we first state some basic definitions and useful lemmas. We define the balls of radius $8\sqrt{d\log K}$ around each $u_{i}^{*}$ and $u_{i}$ as
\%
\Omega_{i}^{*}:=\left\{ x \in \R^{d} \mid \|x-u_{i}^{*}\| \leq 8 \sqrt{d\log K}\right\}
\%
\%
\Omega_{i}:=\left\{ x \in \R^{d} \mid \|x-u_{i}\| \leq 8 \sqrt{d\log K}\right\}
\%
We denote the p.d.f of $\mathcal{N}\left(u_i, I_d\right)$ and $\mathcal{N}\left(u_i^{*}, I_d\right)$ by $P_{i}$ and $P_{i}^{*}$ respectively.

\begin{lemma} \label{initial_match}
If 
\%
d_{\mathrm{TV}}\left(p_{\mathbf{u}}(x), p_{\mathbf{u}^* }(x)\right) \leq \frac{1}{4K},
\%
then there exists a permutation of $\boldsymbol{u}$ such that $\|u_{i}^{*}-u_{i}\| \leq 16\sqrt{d\log{K}}$ holds for every $1\leq i \leq K$.
\end{lemma}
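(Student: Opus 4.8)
The plan is to show that the total variation bound forces, for each ground-truth cluster $i$, the existence of \emph{some} estimated center lying within $16\sqrt{d\log K}$ of $u_i^*$, and then to argue that the induced assignment $i \mapsto$ (that estimated center) is in fact a permutation. The geometric engine is the separation condition (Assumption \ref{gmm_separation}), which spreads the true centers $100\sqrt{d\log K}$ apart, together with a standard Gaussian concentration fact: a draw from $\mathcal{N}(u_i^*, I_d)$ lands in the ball $\Omega_i^*$ of radius $8\sqrt{d\log K}$ with probability at least $1 - K^{-c}$ for a suitable constant (since $\|x - u_i^*\|^2$ is a $\chi^2_d$ variable and $64 d\log K \gg d$). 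Consequently the true mixture $p_{\mathbf{u}^*}$ places mass at least $\tfrac{1}{K}(1 - K^{-c})$ on each $\Omega_i^*$, and by separation the balls $\Omega_i^*$ are pairwise disjoint and in fact $84\sqrt{d\log K}$-separated.

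First I would fix $i$ and suppose, for contradiction, that \emph{every} estimated center $u_j$ satisfies $\|u_j - u_i^*\| > 16\sqrt{d\log K}$. Then for each $j$, the ball $\Omega_i^*$ (radius $8\sqrt{d\log K}$ about $u_i^*$) is disjoint from the ball of radius $8\sqrt{d\log K}$ about $u_j$, so a $\mathcal{N}(u_j, I_d)$ draw lands in $\Omega_i^*$ only with probability at most $K^{-c}$ by the same concentration bound. Summing over the $K$ components with weights $1/K$, the estimated mixture $p_{\mathbf{u}}$ assigns mass at most $K^{-c}$ to $\Omega_i^*$. Hence
\[
d_{\mathrm{TV}}\big(p_{\mathbf{u}}(x), p_{\mathbf{u}^*}(x)\big) \ \ge\ \P_{\mathbf{u}^*}(\Omega_i^*) - \P_{\mathbf{u}}(\Omega_i^*) \ \ge\ \tfrac{1}{K}(1 - K^{-c}) - K^{-c} \ >\ \tfrac{1}{4K}
\]
once $K$ is not too small, contradicting the hypothesis. (The small-$K$ corner can be absorbed by choosing the concentration constant, or handled separately.) This proves that for every $i$ there is at least one index $\pi(i)$ with $\|u_{\pi(i)} - u_i^*\| \le 16\sqrt{d\log K}$.

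Second I would check that $\pi$ can be taken to be a permutation. Injectivity is immediate from separation: if $\pi(i) = \pi(i')$ with $i \neq i'$, then $\|u_i^* - u_{i'}^*\| \le \|u_i^* - u_{\pi(i)}\| + \|u_{\pi(i)} - u_{i'}^*\| \le 32\sqrt{d\log K} < 100\sqrt{d\log K}$, violating Assumption \ref{gmm_separation}. An injection from $[K]$ to $[K]$ is a bijection, so after relabeling the estimated centers by $\pi^{-1}$ we obtain $\|u_i^* - u_i\| \le 16\sqrt{d\log K}$ for all $i$, as claimed. The main obstacle is purely quantitative bookkeeping: pinning down the concentration constant so that the radius $8\sqrt{d\log K}$ simultaneously (i) captures all but a $\ll 1/K$ fraction of each true component's mass and (ii) is small enough relative to the $100\sqrt{d\log K}$ separation that the disjointness arguments in both the contradiction step and the injectivity step go through with the clean constants $16$ and $8$; everything else is routine triangle-inequality and union-bound reasoning.
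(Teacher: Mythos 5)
Your proof is correct and takes essentially the same approach as the paper: both isolate a ball $\Omega_i^*$ of radius $8\sqrt{d\log K}$ around a problematic true center, use subGaussian norm concentration to show the true mixture puts $\gtrsim 1/K$ mass there while the estimated mixture puts $\ll 1/K$ mass there, and then use the $100\sqrt{d\log K}$ separation to upgrade ``each $u_i^*$ has a nearby $u_{\pi(i)}$'' to a genuine permutation via injectivity. The only difference is cosmetic ordering (you prove existence for all $i$ then injectivity; the paper phrases it as a single contrapositive), and the constant bookkeeping you flag as ``routine'' indeed closes with the paper's choice of $c=4$ in the tail bound.
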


Before proving Lemma \ref{initial_match}, we first state a useful result of Gaussian norm concentration.
\begin{lemma} \label{norm_subGaussian}
Let $X \sim \mathcal{N}\left(0, I_d\right) $, then 
\%
\P (\|X\| \geq t) \leq 2\exp(-\frac{t^{2}}{16d}).
\%
\end{lemma}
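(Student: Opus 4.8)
The plan is to reduce Lemma~\ref{norm_subGaussian} to a Chernoff bound on the chi-squared variable $\|X\|^2=\sum_{i=1}^d X_i^2$, and then split into a trivial small-$t$ regime and a genuine tail regime where the (deliberately loose) constant $1/16$ buys enough room.

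First I would record the moment generating function: for $Z\sim\mathcal{N}(0,1)$ and $\lambda\in[0,1/2)$ one has $\E[e^{\lambda Z^2}]=(1-2\lambda)^{-1/2}$, so by independence of the coordinates $\E[e^{\lambda\|X\|^2}]=(1-2\lambda)^{-d/2}$. An elementary calculus check shows $(1-2\lambda)^{-1/2}\le e^{2\lambda}$ on $[0,1/4]$: the function $g(\lambda)=2\lambda+\tfrac12\log(1-2\lambda)$ satisfies $g(0)=0$ and $g'(\lambda)=2-(1-2\lambda)^{-1}\ge 0$ for $\lambda\in[0,1/4]$, hence $g\ge 0$ there, which is exactly the claimed inequality. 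Consequently $\E[e^{\lambda\|X\|^2}]\le e^{2\lambda d}$ for all $\lambda\in[0,1/4]$.

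Next I would split on the size of $t$. If $t^2<4d$, then $t^2/(16d)<1/4<\log 2$, so $2e^{-t^2/(16d)}>1\ge\P(\|X\|\ge t)$ and the bound holds trivially. If $t^2\ge 4d$, apply Markov's inequality to $e^{\|X\|^2/4}$ (i.e.\ take $\lambda=1/4$):
\begin{align*}
\P(\|X\|\ge t)=\P(\|X\|^2\ge t^2)\le e^{-t^2/4}\,\E\big[e^{\|X\|^2/4}\big]\le e^{-t^2/4+d/2}\le e^{-t^2/8},
\end{align*}
where the last inequality uses $d/2\le t^2/8$, which is the case assumption $t^2\ge 4d$. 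Finally $e^{-t^2/8}\le e^{-t^2/(16d)}\le 2e^{-t^2/(16d)}$ since $d\ge 1$, which completes the proof.

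There is no real obstacle here --- this is the standard sub-exponential tail of a chi-squared random variable --- and the only point to get right is the bookkeeping of constants, so that the factor $2$ and the extra $d$ in the exponent absorb the trivial small-$t$ range; the case split above is designed precisely for that. An equally short alternative is to invoke that $x\mapsto\|x\|$ is $1$-Lipschitz together with the Gaussian concentration inequality and $\E\|X\|\le\sqrt{\E\|X\|^2}=\sqrt d$, but I prefer the self-contained moment-generating-function argument since it avoids citing external results.
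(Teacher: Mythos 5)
Your proof is correct and verifies all the arithmetic: the calculus check that $(1-2\lambda)^{-1/2}\le e^{2\lambda}$ on $[0,1/4]$ is sound ($g'(\lambda)=2-(1-2\lambda)^{-1}\ge 0$ exactly on that range), the Chernoff step at $\lambda=1/4$ gives $\P(\|X\|^2\ge t^2)\le e^{-t^2/4+d/2}\le e^{-t^2/8}$ when $t^2\ge 4d$, and the small-$t$ regime $t^2<4d$ is indeed absorbed by the prefactor $2$ since $t^2/(16d)<1/4<\log 2$. However, your route is genuinely different from the paper's: the paper does not give a self-contained argument at all --- it simply observes that the coordinates of $X$ are $1$-subGaussian and invokes an external result (Lemma 1.3 of Jin et al.'s note on concentration for norm-subGaussian random vectors) with parameter $\sigma=\sqrt{d}$. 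What your approach buys is independence from that citation: a direct moment-generating-function bound on the chi-squared variable $\|X\|^2$, together with a case split so that the loose constant $1/16$ and the factor $2$ absorb the trivial range. What the paper's approach buys is brevity and alignment with a black-box lemma that is stated in exactly the normalization used elsewhere. Both are correct; yours is more elementary and self-contained, which you already note is deliberate.
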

\begin{proof}
This is a simple application of \cite{jin2019short} Lemma 1.3. Notice that $X$ is $1$-subGaussian, therefore taking $\sigma=\sqrt{d}$ in \cite{jin2019short} Lemma 1.3 yields what we want.
\end{proof}

\begin{proof}[Proof of Lemma \ref{initial_match}]
We prove by contradiction. If the statement is not true, since the separation satisfies $100\sqrt{d\log{K}} \geq 2 \cdot 16\sqrt{d\log{K}},$
there must exist a $u_{i}^{*}$ (W.L.O.G., denote it by $u_{1}^{*}$), such that $\|u_{1}^{*}-u_{j}\| > 16\sqrt{d\log{K}} $
for any $1 \leq j \leq K$.  Then 
\begin{align}
2 d_{\mathrm{TV}}\left(p_{\mathbf{u}}(x), p_{\mathbf{u}^* }(x)\right) 
&= \int_{\R^{d}} \bigg|\frac{1}{K}\sum_{i=1}^{K} P_{i}^{*}- \frac{1}{K}\sum_{i=1}^{K} P_{i}  \bigg| \mathrm{d}x \notag\\
& \geq \int_{\Omega_{1}^{*}} \bigg|\frac{1}{K}\sum_{i=1}^{K} P_{i}^{*}- \frac{1}{K}\sum_{i=1}^{K} P_{i}  \bigg| \mathrm{d}x \notag\\
& \geq \int_{\Omega_{1}^{*}} \frac{1}{K}\sum_{i=1}^{K} P_{i}^{*}\mathrm{d}x - \int_{\Omega_{1}^{*}} \frac{1}{K}\sum_{i=1}^{K} P_{i}\mathrm{d}x \notag\\
& \geq \int_{\Omega_{1}^{*}} \frac{1}{K} P_{1}^{*} \mathrm{d}x - \frac{1}{K}\sum_{i=1}^{K} \int_{\Omega_{1}^{*}}  P_{i}\mathrm{d}x \notag\\
& = \frac{1}{K} \P (\mathcal{N}\left(u_1^{*}, I_d\right) \in \Omega_{1}^{*}) - \frac{1}{K}\sum_{i=1}^{K} \P (\mathcal{N}\left(u_i, I_d\right) \in \Omega_{1}^{*})
\end{align}
Since $\|u_{1}^{*}-u_{i}\| > 16\sqrt{d\log{K}} $, therefore $\Omega_{1}^{*} \cap \Omega_{i} = \emptyset$, which implies (by Lemma \ref{norm_subGaussian})
\%
\P (\mathcal{N}\left(u_i, I_d\right) \in \Omega_{1}^{*}) \leq \P (\mathcal{N}\left(u_i, I_d\right) \in \Omega_{i}^{\mathbf{C}}) \leq 2\exp (-\frac{(8\sqrt{d\log K})^{2}}{16d}) = 2e^{-4\log K}
\%
Also, by Lemma \ref{norm_subGaussian},
\%
\P (\mathcal{N}\left(u_1^{*}, I_d\right) \in \Omega_{1}^{*}) \geq 1- 2\exp (-\frac{(8\sqrt{d\log K})^{2}}{16d}) = 1- 2e^{-4\log K}
\%
Therefore,
\begin{align}
2 d_{\mathrm{TV}}\left(p_{\mathbf{u}}(x), p_{\mathbf{u}^* }(x)\right)
& \geq \frac{1}{K} \P (\mathcal{N}\left(u_1^{*}, I_d\right) \in \Omega_{1}^{*}) - \frac{1}{K}\sum_{i=1}^{K} \P (\mathcal{N}\left(u_i, I_d\right) \in \Omega_{1}^{*}) \notag\\
& \geq \frac{1}{K} (1- 2e^{-4 \log K}) - \frac{1}{K}\sum_{i=1}^{K} 2e^{-4 \log K} \notag\\
& = \frac{1}{K} - (2+\frac{2}{K})e^{-4\log K} \notag\\
& \geq \frac{1}{K} - 3e^{-4\log K} \notag\\
& = \frac{1}{K} - 3(\frac{1}{K})^{4} \notag\\
& = \frac{1}{2K}
\end{align}
which is a contradiction.
\end{proof}

We then state the core lemmas of proving Lemma \ref{gmm_ti}. 

\begin{lemma} \label{lowerbound}
If for any i, $\|u_i-u_i^{*}\| \leq 16\sqrt{d \log K}$, then for $\Omega_{1}^{*}$ (corresponding results hold for each $\Omega_{i}^{*}$),
\%
\int_{\Omega_{1}^{*}}|P_{1}^{*}-P_{1}| \mathrm{d}x \geq c_{1} \min \left\{ \|u_{1}^{*}-u_{1}\|, 1 \right\},
\% where $c_{1}= \frac{1}{200}$.
\end{lemma}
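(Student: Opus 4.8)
Here is how I would attack Lemma~\ref{lowerbound}.

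The plan is to pass to coordinates aligned with the vector $u_1^*-u_1$, which turns the $d$-dimensional integral into a harmless $(d-1)$-dimensional Gaussian mass times a genuinely one-dimensional integral. If $u_1=u_1^*$ both sides vanish, so assume $\delta:=\|u_1^*-u_1\|>0$ and set $v:=(u_1^*-u_1)/\delta$. Writing each $x\in\R^d$ as $x=u_1^*+tv+w$ with $t\in\R$ and $w\perp v$, we get $\|x-u_1^*\|^2=t^2+\|w\|^2$ and $\|x-u_1\|^2=(t+\delta)^2+\|w\|^2$, hence $\Omega_1^*=\{t^2+\|w\|^2\le 64\,d\log K\}$ and
\[
P_1^*(x)=(2\pi)^{-d/2}e^{-(t^2+\|w\|^2)/2},\qquad P_1(x)=(2\pi)^{-d/2}e^{-((t+\delta)^2+\|w\|^2)/2}.
\]

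First I would shrink the domain to the box $\mathcal R:=\{\,|t|\le 2\sqrt{d\log K},\ \|w\|\le 6\sqrt{d\log K}\,\}$, which sits inside $\Omega_1^*$ because $4\,d\log K+36\,d\log K=40\,d\log K\le 64\,d\log K$. Since $\mathcal R$ is a product set, Fubini gives
\[
\int_{\Omega_1^*}|P_1^*-P_1|\,dx\;\ge\;\Big[(2\pi)^{-\frac{d-1}{2}}\!\!\int_{\|w\|\le 6\sqrt{d\log K}}\!\! e^{-\|w\|^2/2}\,dw\Big]\cdot\Big[(2\pi)^{-\frac12}\!\!\int_{|t|\le 2\sqrt{d\log K}}\!\!\big|e^{-t^2/2}-e^{-(t+\delta)^2/2}\big|\,dt\Big].
\]
The first bracket equals $\P_{w\sim\mathcal N(0,I_{d-1})}(\|w\|\le 6\sqrt{d\log K})$, which by Lemma~\ref{norm_subGaussian} is at least $1-2\exp\!\big(-\tfrac{36\,d\log K}{16(d-1)}\big)\ge 1-2K^{-9/4}\ge\tfrac12$ (using $d\ge2$, $K\ge2$). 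The second bracket is $g(\delta):=\int_{|t|\le T}|p(t)-p(t+\delta)|\,dt$, where $p$ is the standard normal density and $T:=2\sqrt{d\log K}\ge 2$. So it suffices to prove $g(\delta)\ge c\min\{1,\delta\}$ for an absolute constant $c$; combined with the factor $\tfrac12$ this yields $\int_{\Omega_1^*}|P_1^*-P_1|\ge\tfrac12 c\min\{1,\delta\}\ge\tfrac1{200}\min\{1,\delta\}$.

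To bound $g(\delta)$ below, use that $p$ is decreasing on $[0,\infty)$, that $p(t)>p(t+\delta)$ exactly when $t>-\delta/2$, and the identity $\int_a^b\!\big(p(t)-p(t+\delta)\big)\,dt=\int_a^b p-\int_{a+\delta}^{b+\delta}p$, which for $a\ge0$ equals $\big(\Phi(b)-\Phi(a)\big)-\big(\Phi(b+\delta)-\Phi(a+\delta)\big)$ and, when $\delta\le b-a$, telescopes to $\int_a^{a+\delta}p-\int_b^{b+\delta}p$; here $\Phi$ is the standard normal c.d.f. For $\delta\le\tfrac12$, taking $[a,b]=[\delta,T]$ gives $g(\delta)\ge\int_\delta^T\!\big(p(t)-p(t+\delta)\big)\,dt\ge\delta\,p(2\delta)-\delta\,p(T)\ge\delta\big(p(1)-p(2)\big)\ge 0.18\,\delta$. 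For $\tfrac12\le\delta\le1$, taking $[a,b]=[0,T]$ gives $g(\delta)\ge\big(\Phi(T)-\tfrac12\big)-\big(1-\Phi(\delta)\big)\ge\Phi(2)+\Phi(\tfrac12)-\tfrac32\ge 0.16\ge0.16\,\delta$. For $\delta\ge1$, again $[a,b]=[0,T]$ gives $g(\delta)\ge\big(\Phi(T)-\tfrac12\big)-\big(\Phi(T+\delta)-\Phi(\delta)\big)$ with $\Phi(T+\delta)-\Phi(\delta)\le 1-\Phi(\delta)\le 1-\Phi(1)$, so $g(\delta)\ge\Phi(2)-\tfrac12-(1-\Phi(1))\ge 0.31=0.31\min\{1,\delta\}$. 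Hence $c=0.16$ works, with room to spare.

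The one real obstacle is exactly this last, uniform-in-$\delta$ estimate on $g(\delta)$, and in particular the regime of very small $\delta$. The tempting shortcut of lower bounding $\int_{\Omega_1^*}|P_1^*-P_1|$ by the \emph{full-space} total variation $2\Phi(\delta/2)-1$ (Lemma~\ref{tv_norm}) minus the $P_1^*$-mass lying outside $\Omega_1^*$ (at most $2K^{-4}$, again by Lemma~\ref{norm_subGaussian}) fails once $\delta\lesssim K^{-4}$, since one is then subtracting a fixed additive error from a quantity of order $\delta$. The telescoping identity is what circumvents this: it isolates a sub-region on which $p(t)-p(t+\delta)$ is pointwise of order $\delta$, while the compensating mass $\int_b^{b+\delta}p$ is genuinely smaller. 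Everything else --- the coordinate change, Fubini on the box, and the $(d-1)$-dimensional norm tail bound --- is routine; the hypothesis $\|u_i-u_i^*\|\le 16\sqrt{d\log K}$ turns out not to be needed for this route.
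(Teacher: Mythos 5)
Your proof is correct and takes a genuinely different route from the paper. The paper shrinks $\Omega_1^*$ to the half-space intersection $S:=\{x\in\Omega_1^*: x_1\ge 1/10\}$ (in coordinates aligned with $u_1^*-u_1$), notes that $P_1^*\ge P_1$ pointwise there with ratio $P_1/P_1^*=e^{-x_1\Delta-\Delta^2/2}\le e^{-\Delta/10}$, so $\int_S|P_1^*-P_1|\ge(1-e^{-\Delta/10})\,\P(\mathcal N(0,I_d)\in S)$, then bounds $1-e^{-\Delta/10}\ge\frac1{20}\min\{\Delta,1\}$ by a derivative argument and $\P(S)\ge 1/10$ via an inscribed product set. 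You instead shrink to the box $\mathcal R$ directly, invoke Fubini to factor out the orthogonal $(d-1)$-dimensional Gaussian mass, and reduce everything to a one-dimensional shift estimate $g(\delta)$ that you bound by a telescoping c.d.f.\ identity. The paper's argument is compact because a single uniform pointwise ratio bound on $S$ carries the whole $\min\{\Delta,1\}$ dependence; yours is modular and isolates the essential 1D phenomenon, at the cost of a small case split in $\delta$. Both land comfortably above $c_1=1/200$. Your observation that the hypothesis $\|u_i-u_i^*\|\le16\sqrt{d\log K}$ is never used is also true of the paper's proof (it is stated but the upper bound on $\Delta$ never enters the inequalities). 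One small gloss: the claim $T=2\sqrt{d\log K}\ge 2$ needs $d\log K\ge 1$, i.e.\ $d\ge 2$, which is the same standing assumption you already invoke for the $(d-1)$-dimensional tail bound (and which the paper also tacitly needs elsewhere via Lemma~\ref{tv_norm}); with $d\ge 2, K\ge 2$ your numerics check out and the constant has slack.
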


\begin{lemma} \label{upperbound}
If for any i, $\|u_i-u_i^{*}\| \leq 16\sqrt{d \log K}$, then for $\Omega_{1}^{*}$ (corresponding results hold for each $\Omega_{i}^{*}$), then for every $j \neq 1$, 
\%
\int_{\Omega_{1}^{*}}|P_{j}^{*}-P_{j}| \mathrm{d}x \leq \frac{c_{2}}{K} \min \left\{ \|u_{j}^{*}-u_{j}\|, 1 \right\},
\% where $c_{2}= 2688 \left(\frac{1}{2}\right)^{69}.  $

\end{lemma}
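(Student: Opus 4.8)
Fix $j\neq 1$ and write $\varphi$ for the standard $d$-dimensional Gaussian density, so that $P_j^*=\varphi(\cdot-u_j^*)$ and $P_j=\varphi(\cdot-u_j)$. The plan is to treat separately the regimes $\|u_j-u_j^*\|\ge 1$ and $\|u_j-u_j^*\|<1$, using in both that the ball $\Omega_1^*$ (radius $8\sqrt{d\log K}$ about $u_1^*$) lies far from $u_j^*$ and from $u_j$. Indeed, Assumption~\ref{gmm_separation} gives $\|u_1^*-u_j^*\|\ge 100\sqrt{d\log K}$, and together with the hypothesis $\|u_i-u_i^*\|\le 16\sqrt{d\log K}$ also $\|u_1^*-u_j\|\ge 84\sqrt{d\log K}$; hence for every $x\in\Omega_1^*$ we have $\|x-u_j^*\|\ge 92\sqrt{d\log K}$, $\|x-u_j\|\ge 76\sqrt{d\log K}$, and more generally $\|x-u_t\|\ge 100\sqrt{d\log K}-8\sqrt{d\log K}-\|u_j-u_j^*\|$ along the segment $u_t:=(1-t)u_j^*+tu_j$, $t\in[0,1]$.

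In the regime $\|u_j-u_j^*\|\ge 1$ we have $\min\{\|u_j-u_j^*\|,1\}=1$, so it suffices to bound $\int_{\Omega_1^*}|P_j^*-P_j|$ by $c_2/K$. Since $\Omega_1^*$ has radius $8\sqrt{d\log K}$, the event $\mathcal N(u_j^*,I_d)\in\Omega_1^*$ forces $\|\mathcal N(0,I_d)\|\ge 92\sqrt{d\log K}$ and $\mathcal N(u_j,I_d)\in\Omega_1^*$ forces $\|\mathcal N(0,I_d)\|\ge 76\sqrt{d\log K}$, so using $|P_j^*-P_j|\le P_j^*+P_j$ and Lemma~\ref{norm_subGaussian},
\begin{align*}
\int_{\Omega_1^*}|P_j^*-P_j|\,dx
&\le \P\big(\|\mathcal N(0,I_d)\|\ge 92\sqrt{d\log K}\big)+\P\big(\|\mathcal N(0,I_d)\|\ge 76\sqrt{d\log K}\big)\\
&\le 2K^{-529}+2K^{-361}\ \le\ 4K^{-361}\ \le\ c_2/K\qquad(K\ge 2).
\end{align*}

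In the regime $\|u_j-u_j^*\|<1$ we have $\min\{\|u_j-u_j^*\|,1\}=\|u_j-u_j^*\|$, so we need a bound that is \emph{linear} in $\|u_j-u_j^*\|$ and carries an extra $1/K$. Differentiating $t\mapsto\varphi(x-u_t)$ and using $\nabla\varphi(z)=-z\varphi(z)$,
\begin{align*}
P_j(x)-P_j^*(x)=\int_0^1 (x-u_t)^{\top}(u_j-u_j^*)\,\varphi(x-u_t)\,dt,
\end{align*}
hence $|P_j^*(x)-P_j(x)|\le\|u_j-u_j^*\|\int_0^1\|x-u_t\|\varphi(x-u_t)\,dt$; integrating over $\Omega_1^*$ and swapping the order of integration reduces everything to showing $\int_{\Omega_1^*}\|x-u_t\|\varphi(x-u_t)\,dx\le c_2/K$ uniformly in $t$. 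Since $\|u_j-u_j^*\|<1$, on $\Omega_1^*$ one has $\|x-u_t\|\ge R:=90\sqrt{d\log K}$ (using $d\ge 1,K\ge 2$); as $R\ge 1$ and $s\mapsto se^{-s^2/2}$ decreases for $s\ge 1$, we obtain the pointwise bound $\|z\|\varphi(z)\le (2\pi)^{-d/2}Re^{-R^2/2}$ on $\{\|z\|\ge R\}$, so multiplying by $\mathrm{Vol}(\Omega_1^*)=\pi^{d/2}\Gamma(d/2+1)^{-1}(8\sqrt{d\log K})^d$ and invoking $\Gamma(d/2+1)\ge\sqrt{\pi d}\,(d/2e)^{d/2}$ gives
\begin{align*}
\int_{\Omega_1^*}\|x-u_t\|\varphi(x-u_t)\,dx
\ \le\ \frac{Re^{-R^2/2}(8\sqrt{d\log K})^d}{2^{d/2}\,\Gamma(d/2+1)}
\ \le\ \frac{90\sqrt{\log K}}{\sqrt\pi}\,\big(64e\log K\cdot K^{-8100}\big)^{d/2}.
\end{align*}
Since $64e\log K\cdot K^{-8100}<1$ for $K\ge 2$, the right-hand side is maximized at $d=1$, so it is at most $(720\sqrt e/\sqrt\pi)(\log K)K^{-4050}\le c_2/K$. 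Combining the two regimes proves the lemma.

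The only delicate point is the last display: the inner integral has to decay like a large negative power of $K$ with \emph{no} leftover polynomial factor in the ambient dimension $d$. Bounding $\|x-u_t\|\varphi(x-u_t)$ pointwise and pairing it with $\mathrm{Vol}(\Omega_1^*)$ is exactly what achieves this --- the Gaussian normalization $(2\pi)^{-d/2}$, the factor $e^{-R^2/2}=K^{-\Theta(d)}$ coming from $R=\Theta(\sqrt{d\log K})$, and the $d^{d/2}$ inside $\mathrm{Vol}(\Omega_1^*)$ all cancel against $\Gamma(d/2+1)$, so the $d$-dependence collapses into a base $<1$ raised to the power $d/2$. By contrast, estimating $\int_{\{\|y\|\ge R\}}\|y\|\varphi(y)\,dy$ through Cauchy--Schwarz as $\sqrt d\,\sqrt{\P(\|\mathcal N(0,I_d)\|\ge R)}$ would leave a spurious $\sqrt d$ that cannot be absorbed since $d$ is unbounded; steering around this is the crux of the argument.
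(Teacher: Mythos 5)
Your proof is correct, but it takes a genuinely different route from the paper's. The paper bounds $\sup_{x\in\Omega_1^*}\bigl|e^{-\|x-u_j^*\|^2/2}-e^{-\|x-u_j\|^2/2}\bigr|$ by a one-dimensional calculus argument on $a\mapsto e^{-a^2/2}-e^{-(a+\Delta)^2/2}$ (monotone for $a$ large, then evaluated at the boundary $a=76\sqrt{d\log K}$), and then---rather than multiplying by the volume of $\Omega_1^*$---dominates the resulting constant pointwise by a Gaussian density centered at an auxiliary point $u'=(50\sqrt{d\log K},0,\dots,0)$, converting the integral over $\Omega_1^*$ into a Gaussian tail probability handled by Lemma~\ref{norm_subGaussian}; this produces the linear factor $\min\{\|u_j-u_j^*\|,1\}$ and the $1/K$ factor in one pass and sidesteps any volume/Stirling bookkeeping. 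You instead split into the regime $\|u_j-u_j^*\|\ge 1$, where $|P_j^*-P_j|\le P_j^*+P_j$ and two tail bounds already give $4K^{-361}\le c_2/K$, and the regime $\|u_j-u_j^*\|<1$, where you linearize via the interpolation identity $P_j-P_j^*=\int_0^1(x-u_t)^\top(u_j-u_j^*)\varphi(x-u_t)\,dt$ and then bound $\sup_t\int_{\Omega_1^*}\|x-u_t\|\varphi(x-u_t)\,dx$ by a pointwise supremum times $\mathrm{Vol}(\Omega_1^*)$, with Stirling cancelling the dimension factors. Both arguments are rigorous and land at the same constant; your interpolation step is perhaps the more transparent way to see where the linear dependence on $\|u_j-u_j^*\|$ originates, and your closing remark correctly identifies the real subtlety---that a Cauchy--Schwarz estimate of $\int_{\{\|y\|\ge R\}}\|y\|\varphi(y)\,dy$ would leave an uncontrollable $\sqrt{d}$, so the supremum-times-volume pairing (or, in the paper's version, the auxiliary-Gaussian domination) is exactly what keeps the bound dimension-free.
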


With these lemmas, we are now able to prove Lemma \ref{gmm_ti}.
\begin{proof}[Proof of Lemma \ref{gmm_ti}]
By Lemma \ref{initial_match}, there exists a permutation of $\boldsymbol{u}$ such that $\|u_{i}^{*}-u_{i}\| \leq 16\sqrt{d\log{K}}$ holds for every $1\leq i \leq K$. Therefore Lemma \ref{lowerbound}, \ref{upperbound} can be applied. Notice that
\begin{align}
\int_{\Omega_{1}^{*}} |p_{\mathbf{u}}(x)- p_{\mathbf{u}^* }(x)| \mathrm{d}x 
&= \int_{\Omega_{1}^{*}} \bigg|\frac{1}{K}\sum_{i=1}^{K} P_{i}^{*}- \frac{1}{K}\sum_{i=1}^{K} P_{i}  \bigg| \mathrm{d}x \notag \\
& \geq \int_{\Omega_{1}^{*}} \bigg|\frac{1}{K} P_{1}^{*}- \frac{1}{K} P_{i}  \bigg| \mathrm{d}x -\int_{\Omega_{1}^{*}} \bigg|\frac{1}{K}\sum_{i=2}^{K} P_{i}^{*}- \frac{1}{K}\sum_{i=2}^{K} P_{i}  \bigg| \mathrm{d}x \notag \\
& \geq \frac{1}{K} \int_{\Omega_{1}^{*}} | P_{1}^{*}-  P_{i}  | \mathrm{d}x -\frac{1}{K}\sum_{i=2}^{K} \int_{\Omega_{1}^{*}} | P_{i}^{*}- P_{i}  | \mathrm{d}x \notag\\
& \geq \frac{c_{1}}{K} \min \left\{ \|u_{1}^{*}-u_{1}\|, 1 \right\} - \frac{c_{2}}{K^{2}} \sum_{i=2}^{K} \min \left\{ \|u_{i}^{*}-u_{i}\|, 1 \right\},
\end{align}
where the last line comes from Lemma \ref{lowerbound}, \ref{upperbound}. \\

Sum up all the equations above for corresponding $1 \leq i \leq K$, since $\{\Omega_{i}^{*}\}_{i=1}^{K}$ are disjoint, we have
\begin{align}
d_{\mathrm{TV}}\left(p_{\mathbf{u}}(x), p_{\mathbf{u}^* }(x)\right) &= \frac{1}{2} \int_{\R^{d}} |p_{\mathbf{u}}(x)- p_{\mathbf{u}^* }(x)| \mathrm{d}x \notag\\
&\geq \frac{1}{2}\sum_{i=1}^{K} \int_{\Omega_{i}^{*}} |p_{\mathbf{u}}(x)- p_{\mathbf{u}^* }(x)| \mathrm{d}x \notag \\
&\geq \frac{1}{2}\left( \frac{c_1}{K}- \frac{(K-1)c_{2}}{K^{2}} \right) \sum_{i=1}^{K} \min \left\{ \|u_{i}^{*}-u_{i}\|, 1 \right\} \notag \\
&\geq \frac{1}{2} \left( c_{1}- c_{2} \right) \cdot \frac{1}{K} \sum_{i=1}^{K} \min \left\{ \|u_{i}^{*}-u_{i}\|, 1 \right\} \notag \\
&=\frac{1}{2}\left(\frac{1}{200} - 2688 \left(\frac{1}{2}\right)^{69}\right) \cdot \frac{1}{K} \sum_{i=1}^{K} \min \left\{ \|u_{i}^{*}-u_{i}\|, 1 \right\} \notag \\
&\geq \frac{1}{500} \cdot \frac{1}{K} \sum_{i=1}^{K} \min \left\{ \|u_{i}^{*}-u_{i}\|, 1 \right\}.
\end{align}
In the end, we refer to Lemma \ref{tv_norm}, which states that 
\%
d_{\mathrm{TV}}(\mathcal{N}(u_{i}^{*}, I_{d}), \mathcal{N}(u_{i}, I_{d})) \leq \min ( \|u_{i}^{*}-u_{i}\|, 1).
\%
Take $\sigma(\boldsymbol{u})= \left\{u_i\right\}_{i=1}^K$,
\begin{align}
d_{\mathrm{TV}}\left(p_{\sigma(\mathbf{u})}(x, z), p_{\mathbf{u}^* }(x, z)\right) &= \sum_{i=1}^{K} \P (z=i) d_{\mathrm{TV}} (\mathcal{N}(u_{i}^{*},I_{d}),\mathcal{N}(u_{i},I_{d})) \notag \\
&\leq  \sum_{i=1}^{K} \frac{1}{K} \min ( \|u_{i}^{*}-u_{i}\|, 1) \notag \\
&\leq 500 d_{\mathrm{TV}}\left(p_{\mathbf{u}}(x), p_{\mathbf{u}^* }(x)\right).
\end{align}
\end{proof}
Finally we state the proof of Lemma \ref{lowerbound} and \ref{upperbound}. 

\begin{proof}[Proof of Lemma \ref{lowerbound}]
W.L.O.G.,let $u_{1}^{*}=0$, $\Delta:=\|u_{1}\| \leq 16\sqrt{d \log K}$, and $u_{1}=(-\Delta, 0, 0, \cdots, 0)$. The densities are given by
\%
P_{1}^{*}(x)=(\frac{1}{\sqrt{2\pi}})^{d} e^{-\frac{1}{2}\|x\|^{2}}
\%
\%
P_{1}(x)=(\frac{1}{\sqrt{2\pi}})^{d} e^{-\frac{1}{2}\|x-u_{1}\|^{2}}
\%
We consider an area $S \subset \Omega_{1}^{*}$:
\%
S:=\left\{ x=(x_{1},\cdots,x_{d}) \bigg| x \in \Omega_{1}^{*}, x_{1} \geq \frac{1}{10} \right\}
\%
Then for any $x \in S$, $\|x\|^{2} \leq \|x-u_{1}\|^{2}$, which implies $P_{1}^{*}(x) \geq P_{1}(x)$. Therefore
\begin{align}
\int_{\Omega_{1}^{*}} |P_{1}^{*}-P_{1}| \mathrm{d}x &\geq \int_{S}|P_{1}^{*}-P_{1}| \mathrm{d}x \notag \\
&= \int_{S} (\frac{1}{\sqrt{2\pi}})^{d} \left(e^{-\frac{1}{2}\|x\|^{2}} - e^{-\frac{1}{2}\|x-u_{1}\|^{2}} \right)\mathrm{d}x \notag\\
&=\int_{S} (\frac{1}{\sqrt{2\pi}})^{d} e^{-\frac{1}{2}\|x\|^{2}} \left(1- e^{\frac{1}{2}\|x\|^{2}-\frac{1}{2}\|x-u_{1}\|^{2}} \right) \mathrm{d}x \notag\\
&\geq \min_{x \in S} \left(1- e^{\frac{1}{2}\|x\|^{2}-\frac{1}{2}\|x-u_{1}\|^{2}} \right) \int_{S} (\frac{1}{\sqrt{2\pi}})^{d} e^{-\frac{1}{2}\|x\|^{2}} \mathrm{d}x \notag\\
&= \min_{x \in S} \left(1- e^{\frac{1}{2}\|x\|^{2}-\frac{1}{2}\|x-u_{1}\|^{2}} \right) \P(\mathcal{N}\left(0, I_d\right)\in S) 
\end{align}
For $\min_{x \in S} \left(1- e^{\frac{1}{2}\|x\|^{2}-\frac{1}{2}\|x-u_{1}\|^{2}} \right)$, notice that for any $x=(x_{1},\cdots,x_{d}) \in S$, 
\%
\frac{1}{2}\|x\|^{2}-\frac{1}{2}\|x-u_{1}\|^{2}=-x_1 \Delta - \frac{1}{2}\Delta^{2} \leq -\frac{1}{10}\Delta
\%
Thus 
\%
\min_{x \in S} \left(1- e^{\frac{1}{2}\|x\|^{2}-\frac{1}{2}\|x-u_{1}\|^{2}} \right) \geq 1-e^{-\frac{1}{10}\Delta}
\%
Take $c_{3}= \frac{1}{20}$. We claim that 
\%
\min_{x \in S} \left(1- e^{\frac{1}{2}\|x\|^{2}-\frac{1}{2}\|x-u_{1}\|^{2}} \right) \geq c_{3} \min\{\Delta,1\}.
\%
In fact, when $0 \leq \Delta \leq 1$,
\%
\min_{x \in S} \left(1- e^{\frac{1}{2}\|x\|^{2}-\frac{1}{2}\|x-u_{1}\|^{2}} \right) \geq 1-e^{-\frac{1}{10}\Delta} \geq \frac{1}{20} \Delta.
\%
The last inequality holds, since if we let $f(x)=e^{-\frac{1}{10}x}+\frac{1}{20}x-1$, Then $f(0)=0$,
\%
f'(x)=-\frac{1}{10}e^{-\frac{1}{10}x}+\frac{1}{20} \leq 0
\%
for any $x \in [0,10\log 2].$ Thus for any $\Delta\in [0,1]$, 
\%
e^{-\frac{1}{10}\Delta}+\frac{1}{20}\Delta-1 = f(\Delta) \leq f(0)=0.
\%
When $1\leq \Delta \leq 16\sqrt{d \log K}$, 
\%
\min_{x \in S} \left(1- e^{\frac{1}{2}\|x\|^{2}-\frac{1}{2}\|x-u_{1}\|^{2}} \right) \geq 1-e^{-\frac{1}{10}\Delta} \geq 1-e^{-\frac{1}{10}} \geq \frac{1}{20} \cdot 1
\%
Therefore we have shown that \%
\min_{x \in S} \left(1- e^{\frac{1}{2}\|x\|^{2}-\frac{1}{2}\|x-u_{1}\|^{2}} \right) \geq c_{3} \min\{\Delta,1\}.
\%
where $c_{3}=\frac{1}{20}$.\\\\
As for $\P(\mathcal{N}\left(0, I_d\right)\in S)$, take 
\%
S':=\left\{ x=(x_{1},\cdots,x_{d}) \bigg| 2\sqrt{d\log2} \geq x_{1} \geq \frac{1}{10}, x_{2}^{2}+ \cdots + x_{d}^{2} \leq 60d \log K \right\}.
\%
Then $S' \subset S$. Therefore 
\begin{align}
\P(\mathcal{N}\left(0, I_d\right)\in S) &\geq \P(\mathcal{N}\left(0, I_d\right)\in S') \notag \\
&= \P(2\sqrt{d\log2} \geq x_{1} \geq \frac{1}{10},x_{2}^{2}+ \cdots + x_{d}^{2} \leq 60d \log K,x \sim \mathcal{N}\left(0, I_d\right))\notag \\
&=\P\left(2\sqrt{d\log2} \geq \mathcal{N}(0, 1)\geq \frac{1}{10}\right) \P\left(\left\|\mathcal{N}(0, I_{d-1})\right\|^{2} \leq 60d \log K\right) \notag \\
& \geq  \P\left(2\sqrt{\log2} \geq \mathcal{N}(0, 1)\geq \frac{1}{10}\right) \P\left(\left\|\mathcal{N}(0, I_{d-1})\right\|^{2} \leq 60(d-1) \log 2\right) \notag \\
& > \P\left(2\sqrt{\log2} \geq \mathcal{N}(0, 1)\geq \frac{1}{10}\right) \cdot (1-2e^{-2}) \quad \text{(by Lemma \ref{norm_subGaussian})}\notag \\
& > \frac{1}{4} \cdot (1-2e^{-2}) \quad \notag \\
&> \frac{1}{10}
\end{align}
Combine all these results, we have
\begin{align}
\int_{\Omega_{1}^{*}} |P_{1}^{*}-P_{1}| \mathrm{d}x  &\geq \min_{x \in S} \left(1- e^{\frac{1}{2}\|x\|^{2}-\frac{1}{2}\|x-u_{1}\|^{2}} \right) \P(\mathcal{N}\left(0, I_d\right)\in S) \notag \\
&\geq c_{3} \min\{\Delta,1\} \cdot \frac{1}{10} \notag\\
&=\frac{1}{200} \min\{\|u_{1}^{*}-u_{1}\|,1\} 
\end{align}
\end{proof}

\begin{proof}[Proof of Lemma \ref{upperbound}]
For any $i \neq 1$, 
\%
\int_{\Omega_{1}^{*}} |P_{i}^{*}-P_{i}| \mathrm{d}x = \int_{\Omega_{1}^{*}} (\frac{1}{\sqrt{2\pi}})^{d} | e^{-\frac{1}{2}\|x-u_{i}^{*}\|^{2}} - e^{-\frac{1}{2}\|x-u_{i}\|^{2}}  | \mathrm{d}x.
\%
Notice that if we denote $a(x):= \|x-u_{i}^{*}\|$, $\delta(x):= \|x-u_{i}^{*}\|- \|x-u_{i}\|$ ,$\Delta:=\|u_{i}-u_{i}^{*}\|$, then $|\delta(x)| \leq \Delta \leq 16\sqrt{d\log K}$, and for any $x\in \Omega_{1}^{*}$, $a(x) \geq 92\sqrt{d \log K}$ (due to separation condition). Therefore
\begin{align}
& \max_{x \in \Omega_{1}^{*}} \bigg| e^{-\frac{1}{2}\|x-u_{i}^{*}\|^{2}} - e^{-\frac{1}{2}\|x-u_{i}\|^{2}}  \bigg| \notag\\
&= \max_{x \in \Omega_{1}^{*}} \bigg| e^{-\frac{1}{2}a(x)^{2}} -  e^{-\frac{1}{2}(a(x)-\delta (x))^{2}} \bigg| \notag\\
& \leq \max \left\{ \bigg| e^{-\frac{1}{2}a(x)^{2}} -  e^{-\frac{1}{2}(a(x)-\delta (x))^{2}} \bigg| \Bigg| a(x) \geq 92\sqrt{d \log K}, |\delta(x)| \leq \Delta   \right\} \notag\\
&\leq \max_{a \geq 92\sqrt{d \log K}} \{ \max ( |e^{-\frac{a^{2}}{2}}- e^{-\frac{(a-\Delta)^{2}}{2}}| , |e^{-\frac{a^{2}}{2}}- e^{-\frac{(a+\Delta)^{2}}{2}}|) \} \notag\\
&= \max_{a \geq 92\sqrt{d \log K}} \{ \max ( e^{-\frac{(a-\Delta)^{2}}{2}} - e^{-\frac{a^{2}}{2}} , e^{-\frac{a^{2}}{2}}- e^{-\frac{(a+\Delta)^{2}}{2}}) \} \notag\\
& \leq \max (\max_{a \geq 92\sqrt{d \log K}} e^{-\frac{(a-\Delta)^{2}}{2}} - e^{-\frac{a^{2}}{2}}, \max_{a \geq 92\sqrt{d \log K}} e^{-\frac{a^{2}}{2}}- e^{-\frac{(a+\Delta)^{2}}{2}})) \notag\\
& \leq \max (\max_{a \geq 76\sqrt{d \log K}} e^{-\frac{a^{2}}{2}}- e^{-\frac{(a+\Delta)^{2}}{2}}, \max_{a \geq 92\sqrt{d \log K}} e^{-\frac{a^{2}}{2}}- e^{-\frac{(a+\Delta)^{2}}{2}})) .
\end{align}
The last inequality holds since $\Delta \leq 16\sqrt{d\log K}$. For fixed $\Delta$, let $f(a)=e^{-\frac{a^{2}}{2}}- e^{-\frac{(a+\Delta)^{2}}{2}}$. Then 
\%
f'(a)=-a e^{-\frac{a^{2}}{2}} + (a+\Delta)e^{-\frac{(a+\Delta)^{2}}{2}}
\%
We first show that $f'(a) \leq 0$, for any $a\geq 76\sqrt{d\log K}$.
Notice that 
\begin{align}
f'(a) &=-a e^{-\frac{a^{2}}{2}} + (a+\Delta)e^{-\frac{(a+\Delta)^{2}}{2}} \leq 0 \notag\\
& \iff (a+\Delta)e^{-\frac{(a+\Delta)^{2}}{2}} \leq a e^{-\frac{a^{2}}{2}} \notag\\
& \iff 1+ \frac{\Delta}{a} \leq e^{a\Delta + \frac{1}{2}\Delta^{2}}
\end{align}
The last statement is true because 
\%
e^{a\Delta + \frac{1}{2}\Delta^{2}} \geq 1 + a\Delta + \frac{1}{2}\Delta^{2} \geq 1+ \frac{\Delta}{a}
\%
when $a\geq 76\sqrt{d\log K} > 1.$ \\
Since $f'(a) \leq 0$ for any $a\geq 76\sqrt{d\log K}$, we have
\begin{align}
f(a) &\leq f(76\sqrt{d \log{K}}) \notag\\
&= \exp (-\frac{1}{2} (76\sqrt{d \log{K}})^{2}) - \exp (-\frac{1}{2} (76\sqrt{d \log{K}} + \Delta )^{2})\notag\\
&= e^{-\frac{1}{2} (76\sqrt{d \log{K}})^{2}} (1- e^{-76\sqrt{d \log K} \Delta -\frac{1}{2}\Delta^{2} } ) \notag\\
&\leq e^{-\frac{1}{2} (76\sqrt{d \log{K}})^{2}} (76\sqrt{d \log K} \Delta +\frac{1}{2}\Delta^{2}) \notag\\
&\leq e^{-\frac{1}{2} (76\sqrt{d \log{K}})^{2}} \cdot 84\sqrt{d \log K} \Delta \quad \text{(since $\Delta \leq 16 \sqrt{d \log K}$)}.
\end{align}
Which shows 
\%
\max_{a \geq 76\sqrt{d \log K}} e^{-\frac{a^{2}}{2}}- e^{-\frac{(a+\Delta)^{2}}{2}} \leq e^{-\frac{1}{2} (76\sqrt{d \log{K}})^{2}} \cdot 84\sqrt{d \log K} \Delta
\%
Similarly
\%
\max_{a \geq 92\sqrt{d \log K}} e^{-\frac{a^{2}}{2}}- e^{-\frac{(a+\Delta)^{2}}{2}} \leq e^{-\frac{1}{2} (92\sqrt{d \log{K}})^{2}} \cdot 100\sqrt{d \log K} \Delta
\%
Therefore 
\%
\max_{x \in \Omega_{1}^{*}} \bigg| e^{-\frac{1}{2}\|x-u_{i}^{*}\|^{2}} - e^{-\frac{1}{2}\|x-u_{i}\|^{2}}  \bigg| \leq e^{-\frac{1}{2} (76\sqrt{d \log{K}})^{2}} \cdot 84\sqrt{d \log K} \Delta \leq c_4 \min \left\{ \|u_{i}^{*}-u_{i}\|, 1 \right\}
\% 
where $c_4=e^{-\frac{1}{2} (76\sqrt{d \log{K}})^{2}} \cdot 1344 d \log K$ (Since $\Delta \leq 16\sqrt{d \log K} \min \left\{ \Delta, 1 \right\}$).
Notice that 
\begin{align}
c_{4} &= e^{-\frac{1}{2} (76\sqrt{d \log{K}})^{2}} \cdot 1344 d \log K \notag \\
&\leq e^{-\frac{1}{2} (76\sqrt{d \log{K}})^{2}} \cdot 1344 k^{d} K \notag\\
&\leq e^{-\frac{1}{2} (76\sqrt{d \log{K}})^{2}} \cdot 1344 k^{2d}
\notag\\
&= 1344 e^{-2886d \log K} \notag \\
& \leq 1344 e ^{-\frac{1}{2} (70\sqrt{d \log{K}})^{2}}
\end{align}
W.L.O.G., let $u_{1}^{*}=0$, and define $ u'= (50\sqrt{d \log K},0, \cdots, 0)$, then 

\begin{align}
\int_{\Omega_{1}^{*}} |P_{i}^{*}-P_{i}| \mathrm{d}x &= \int_{\Omega_{1}^{*}} (\frac{1}{\sqrt{2\pi}})^{d} | e^{-\frac{1}{2}\|x-u_{i}^{*}\|^{2}} - e^{-\frac{1}{2}\|x-u_{i}\|^{2}}  | \mathrm{d}x \notag\\
&\leq \int_{\Omega_{1}^{*}} (\frac{1}{\sqrt{2\pi}})^{d} \max_{x\in \Omega_{1}^{*}} | e^{-\frac{1}{2}\|x-u_{i}^{*}\|^{2}} - e^{-\frac{1}{2}\|x-u_{i}\|^{2}}  | \mathrm{d}x \notag\\
&\leq \int_{\Omega_{1}^{*}} (\frac{1}{\sqrt{2\pi}})^{d} 1344 e ^{-\frac{1}{2} (70\sqrt{d \log{K}})^{2}} \min \left\{ \|u_{i}^{*}-u_{i}\|, 1 \right\} \mathrm{d}x\notag\\
& = \min \left\{ \|u_{i}^{*}-u_{i}\|, 1 \right\} \int_{\Omega_{1}^{*}} (\frac{1}{\sqrt{2\pi}})^{d} 1344 e ^{-\frac{1}{2} (70\sqrt{d \log{K}})^{2}}  \mathrm{d}x \notag\\
& \leq 1344 \min \left\{ \|u_{i}^{*}-u_{i}\|, 1 \right\} \int_{\Omega_{1}^{*}} (\frac{1}{\sqrt{2\pi}})^{d} e^{-\frac{1}{2}\|x-u'\|^{2}}\mathrm{d}x \notag\\
& \leq 1344 \min \left\{ \|u_{i}^{*}-u_{i}\|, 1 \right\} \P (\mathcal{N}\left(u', I_d\right)\in \Omega_{1}^{*}) \notag\\
& \leq 1344 \min \left\{ \|u_{i}^{*}-u_{i}\|, 1 \right\} \P (\|\mathcal{N}\left(u', I_d\right)-u'\| \geq 34\sqrt{d \log K}) \notag\\
& \leq 1344 \min \left\{ \|u_{i}^{*}-u_{i}\|, 1 \right\} \cdot 2\exp(-\frac{(34\sqrt{d \log K})^{2}}{16d})\quad\text{(by Lemma \ref{norm_subGaussian})} \notag\\
&\leq 1344 \min \left\{ \|u_{i}^{*}-u_{i}\|, 1 \right\} \cdot 2\exp(-70\log K)\notag\\
&= 2688 \min \left\{ \|u_{i}^{*}-u_{i}\|, 1 \right\} \left(\frac{1}{K}\right)^{70} \notag\\
&\leq 2688 \left(\frac{1}{2}\right)^{69} \left(\frac{1}{K}\right) \min \left\{ \|u_{i}^{*}-u_{i}\|, 1 \right\}
\end{align}

\end{proof}
\subsection{Bracketing Number}\label{gmm2}

We upper bound the bracketing number of $\mP_{\mathcal{X}}(\mathcal{U})$ as follows.
\begin{lemma}\label{gmm_bn}
Let
\$
\mP_{\mathcal{X}}(\mathcal{U}):=\bigg\{\sum^K_{i=1}\frac{1}{K}\mN(u_i,I_d)\,\bigg|\,\bu=\{u_i\}^K_{i=1}\in\mathcal{U}\bigg\}.
\$
We assume there exists $D>0$ such that for any $\mathbf{u}=\{u_i\}^K_{i=1}\in\mathcal{U}$, it holds that
\$
\|u_i\|_2\leq D\sqrt{d\log K},~\forall i\in[K].
\$
Then the entropy can be bounded as follows,
\$
\log N\big(\mP_{\mathcal{X}}(\mathcal{U}),1/m\big)\leq 2dK\log(6mdKD).
\$
\end{lemma}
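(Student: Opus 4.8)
The relevant quantity for the MLE guarantee (Theorem~\ref{tv_mle}) is the $\|\cdot\|_1$ bracketing number $N_{\b}(\mP_{\mathcal X}(\mathcal U),1/m)$, so that is what I would bound; the plan mirrors the $\epsilon$-discretization argument behind Lemma~\ref{factor_bn}, except that here the natural parameters to discretize are the $K$ cluster centers rather than a covariance matrix. Fix a resolution $\eta>0$ to be chosen at the end. Every admissible $\bu=\{u_i\}_{i=1}^K\in\mathcal U$ has $\|u_i\|_2\le D\sqrt{d\log K}$, so each of the $dK$ coordinates lies in $[-D\sqrt{d\log K},\,D\sqrt{d\log K}]$; rounding every coordinate to the nearest multiple of $\eta/\sqrt d$ produces a discretized tuple $\bar\bu=\{\bar u_i\}_{i=1}^K$ with $\|u_i-\bar u_i\|_2\le\eta$ for all $i$, and there are at most $\big(1+2dD\sqrt{\log K}/\eta\big)^{dK}$ such tuples. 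To each discretized tuple we will attach one upper bracket function.

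For the bracket I would use a mixture of componentwise Gaussian envelopes: given $\bar\bu$, set $\bar p_{\bar\bu}(x):=\tfrac1K\sum_{i=1}^K\bar p_i(x)$ with $\bar p_i(x):=p_{\bar u_i}(x)\,e^{\eta\|x-\bar u_i\|_2}$. The pointwise domination $\bar p_{\bar\bu}(x)\ge p_{\bu}(x)$ for every $\bu$ in the cell of $\bar\bu$ follows componentwise by completing the square: whenever $\|u_i-\bar u_i\|_2\le\eta$,
\[
\frac{p_{u_i}(x)}{p_{\bar u_i}(x)}=\exp\!\Big((x-\bar u_i)^{\!\top}(u_i-\bar u_i)-\tfrac12\|u_i-\bar u_i\|_2^2\Big)\le e^{\,\eta\|x-\bar u_i\|_2},
\]
so $\bar p_i\ge p_{u_i}$ and averaging with equal weights preserves the inequality. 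Since $\bar p_{\bar\bu}-p_{\bu}\ge0$, its $\|\cdot\|_1$ mass is $\int\bar p_{\bar\bu}\,dx-1=\tfrac1K\sum_i\big(\int\bar p_i\,dx-1\big)\le\max_i\big(\E_{Z\sim\mN(0,I_d)}[e^{\eta\|Z\|_2}]-1\big)$, and standard concentration of the Gaussian norm ($\|Z\|_2-\sqrt d$ is $O(1)$-sub-Gaussian) gives $\E[e^{\eta\|Z\|_2}]\le e^{\eta\sqrt d+O(\eta^2)}$, which is $\le 1+1/m$ as soon as $\eta\asymp 1/(m\sqrt d)$.

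With that choice the collection of $\bar p_{\bar\bu}$ is a $1/m$-bracket of $\mP_{\mathcal X}(\mathcal U)$ of cardinality at most $(Cm\,d^{3/2}D\sqrt{\log K})^{dK}$; taking logarithms and absorbing the stray $\log d$ and $\sqrt{\log K}\le K$ into the leading constant yields $\log N_{\b}(\mP_{\mathcal X}(\mathcal U),1/m)\le 2dK\log(6mdKD)$. The one genuinely delicate step is the construction of the component envelope $\bar p_i$: it must dominate \emph{every} Gaussian whose mean lies in the $\eta$-cell while still integrating to $1+O(1/m)$, which is exactly what forces the scaling $\eta\asymp 1/(m\sqrt d)$ and hence the final dependence on $m$. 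Everything else --- the grid count and the mixture bookkeeping --- is routine, since both the domination and the $\ell_1$ bound survive the equal-weight average, so no cross-component interaction has to be controlled.
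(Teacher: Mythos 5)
Your proposal is correct and follows the same overall strategy as the paper: coordinate-wise discretization of the cluster centers, a pointwise upper envelope for each Gaussian component, and the observation that both the domination and the $\|\cdot\|_1$ excess are preserved under the equal-weight average over $K$ components. The one place where you genuinely deviate is the \emph{form} of the single-Gaussian envelope. The paper keeps the envelope inside (a rescaling of) the Gaussian family, taking $\bar p_a(x)\propto\exp(-\tfrac{1-\epsilon}{2}\|x-\bar a\|_2^2)\cdot e^{c_2}$, i.e.\ a slightly inflated covariance together with a multiplicative fudge factor $e^{c_2}$; the $\|\cdot\|_1$ excess is then computed exactly from the Gaussian normalization constant. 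You instead take the exponential tilt $\bar p_i(x)=p_{\bar u_i}(x)\,e^{\eta\|x-\bar u_i\|_2}$, prove domination directly from Cauchy--Schwarz on the cross term, and control $\int\bar p_i-1=\E_{Z\sim\mN(0,I_d)}[e^{\eta\|Z\|_2}]-1$ via sub-Gaussian concentration of $\|Z\|_2$. Both choices land on $\eta$ (resp.\ $\epsilon$) of order $1/(m\sqrt d)$ per coordinate and hence on essentially the same grid cardinality and the same final entropy bound $\lesssim dK\log(mdKD)$; your tilt-based envelope is a little more self-contained in that it avoids tracking the normalization of an inflated-variance Gaussian, at the price of invoking Lipschitz concentration for the norm, while the paper's variant stays entirely elementary but has to juggle the variance-inflation constants. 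Either path is a valid proof of the stated lemma.
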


\begin{proof}[Proof of Lemma \ref{gmm_bn}]
First of all, we consider a set of standard Gaussian distribution
\$
\mP_{\mathcal{X}}(\mathcal{A}):=\bigg\{p_a(x)=\frac{1}{\sqrt{2\pi}}e^{-\frac{\|x-a\|^2_2}{2}}\,\bigg|\, a\in \mathcal{A}\bigg\},
\$
where $\mathcal{A}=\{a\in\R^d\,|\, \|a\|_2\leq D\sqrt{d\log K}\}$. Our goal is to find a $1/m$-bracket $\mN_{\b}(\mP_{\mathcal{X}}(\mathcal{A}),1/m)$ of $\mP_{\mathcal{X}}(\mathcal{A})$. In other words, for any $p_a(x)\in\mP_{\mathcal{X}}(\mathcal{A})$, we need to define $\bar p_a(x)\in\mN_{\b}(\mP_{\mathcal{X}}(\mathcal{A}),1/m)$ such that
\begin{itemize}
    \item $\bar p_a(x)\geq p_a(x),~\forall x\in\R^d$
    \item $\int |\bar p_a(x)-p_a(x)|\,dx\leq 1/m$.
\end{itemize}
We consider $\bar p_a(x)$ of the form
\$
\bar p_a(x)=\frac{1}{\sqrt{2\pi}}e^{-\frac{c_1\|x-\bar a\|^2_2}{2}+c_2}.
\$
We then specify $\bar a\in\R^d$, $c_1\in\R$ and $c_2\in\R$. Let $a=(a_1,\ldots,a_d)$ and $\epsilon>0$ be a parameter that will be chosen later. If $a_i\in[k\epsilon,(k+1)\epsilon)$ for some $k\in\mathbb{Z}$, we define $\bar a_i:=k\epsilon$ and $\bar a:=(\bar a_1,\ldots,\bar a_d)$, which implies 
\%\label{092201}
\|a-\bar a\|^2_2\leq d\epsilon^2.
\%
Note that $\bar p_a(x)\geq p_a(x)$ holds for any $x\in\R^d$ if and only if 
\$
(c_1-1)\bigg\|x+\frac{a-c_1\bar a}{c_1-1}\bigg\|^2_2+\frac{c_1}{1-c_1}\|a-\bar a\|^2_2\leq 2c_2,~\forall x\in\R^d.
\$
Let $c_1=1-\epsilon$. Then, we have $\bar p_a(x)\geq p_a(x)$ if and only if
\$
-\epsilon\bigg\|x+\frac{a-c_1\bar a}{c_1-1}\bigg\|^2_2+\frac{1-\epsilon}{\epsilon}\|a-\bar a\|^2_2\leq 2c_2,~\forall x\in\R^d.
\$
Note that
\$
-\epsilon\bigg\|x+\frac{a-c_1\bar a}{c_1-1}\bigg\|^2_2+\frac{1-\epsilon}{\epsilon}\|a-\bar a\|^2_2\leq \frac{1-\epsilon}{\epsilon}\|a-\bar a\|^2_2\leq d(1-\epsilon)\epsilon,
\$
where the last inequality follows from \eqref{092201}. Thus, by choosing $c_2=d(1-\epsilon)\epsilon/2$, we obtain $\bar p_a(x)\geq p_a(x)$ for any $x\in\R^d$. Note that
\$
\int |\bar p_a(x)-p_a(x)|\,dx=\frac{1}{\sqrt{c_1}}\cdot e^{c_2}-1=\frac{e^{\frac{d(1-\epsilon)\epsilon}{2}}}{\sqrt{1-\epsilon}}-1\leq \big(1+d(1-\epsilon)\epsilon\big)\cdot(1+\epsilon)-1\leq (1+2d)\epsilon.
\$
Here the first inequality follows from the fact that $e^x\leq 1+2x$ and $\frac{1}{\sqrt{1-x}}\leq 1+x$ for any $0<x<1/2$. Let $(1+2d)\epsilon=m^{-1}$. It then holds that
\$
\int |\bar p_a(x)-p_a(x)|\,dx\leq (1+2d)\epsilon=\frac{1}{m}.
\$
Recall that for any $a\in\mathcal{A}$, it holds that $\|a\|_2\leq D\sqrt{d\log K}$. Thus, we have
\$
N_{\b}(\mP_{\mathcal{X}}(\mathcal{A}),1/m)\leq \bigg(\frac{2D\sqrt{d\log K}}{\epsilon}\bigg)^d= \Big(2mD(1+2d)\sqrt{d\log K}\Big)^d.
\$
Then, we consider a set of Gaussian mixture model
\$
\mP_{\mathcal{X}}(\mathcal{U}):=\bigg\{\sum^K_{i=1}\frac{1}{K}\mN(u_i,I_d)\,\bigg|\,\bu=\{u_i\}^K_{i=1}\in\mathcal{U}\bigg\},
\$
where $\mathcal{U}=\{\{u_i\}^{K}_{i=1}\,|\,\|u_i\|_2\leq D\sqrt{d\log K}, \forall i\in [K]\}$. Our goal is to find a $1/m$-bracket $\mN(\mP_{\mathcal{X}}(\mathcal{U}),1/m)$ of $\mathcal{\mP_{\mathcal{X}}(\mathcal{U})}$. For any $p_{\bu}(x)\in\mathcal{\mP_{\mathcal{X}}(\mathcal{U})}$, it holds that
\$
p_{\bu}(x)=\sum^K_{i=1}\frac{1}{K}p_{u_i}(x),
\$
where $p_{u_i}(x)\in\mP_{\mathcal{X}}(\mathcal{A})$. Note that for any $i\in [K]$, there exists $\bar p_{u_i}(x)\in\mN_{\b}(\mP_{\mathcal{X}}(\mathcal{A}),1/m)$, such that
\begin{itemize}
    \item $\bar p_{u_i}(x)\geq p_{u_i}(x),~\forall x\in\R^d$
    \item $\int |\bar p_{u_i}(x)-p_{u_i}(x)|\,dx\leq 1/m$.
\end{itemize}
We define 
\$
\bar p_{\bu}(x)=\sum^K_{i=1}\frac{1}{K}\bar p_{u_i}(x).
\$
It then holds that
\$
\bar p_{\bu}(x)=\sum^K_{i=1}\frac{1}{K}\bar p_{u_i}(x)\geq \sum^K_{i=1}\frac{1}{K}p_{u_i}(x)=p_{\bu}(x),~\forall x\in\R^d
\$
and 
\$
\int |\bar p_{\bu}(x)-p_{\bu}(x)|\,dx\leq\sum^K_{i=1}\frac{1}{K}\int |\bar p_{u_i}(x)-p_{u_i}(x)|\,dx\leq \sum^K_{i=1}\frac{1}{mK}=\frac{1}{m}.
\$
Thus, we obtain that
\$
N_{\b}(\mP_{\mathcal{X}}(\mathcal{U}),1/m)\leq \Big(N_{\b}(\mP_{\mathcal{X}}(\mathcal{A}),1/m)\Big)^K\leq \Big(2mD(1+2d)\sqrt{d\log K}\Big)^{dK},
\$
which implies that
\$
\log N_{\b}(\mP_{\mathcal{X}}(\mathcal{U}),1/m)\leq dK\log\Big(2mD(1+2d)\sqrt{d\log K}\Big)\leq 2dK\log(6mdKD).
\$

\end{proof}

\subsection{Rademacher Complexity}\label{gmm3}

Given labeled data $\{x_j,y_j\}^{n}_{j=1}$ and the pretrained $\hat \bu$, the function class
\$
\big\{(\mathds{1}_{g_{\hat\bu,\psi}(x_1)\neq y_1},\ldots,\mathds{1}_{g_{\hat\bu,\psi}(x_n)\neq y_n})\,\big|\,\psi\in\Psi\big\}
\$
is a finite function class, whose Rademacher complexity can be bounded by the following lemma.

\begin{lemma}\label{gmm_rc}
Let $A = \{a^{1},\ldots,a^{N}\}$ be a finite set of vectors in $\R^n$. Then, the Rademacher complexity can be bounded as follows,
\$
R_n(A)\leq \max_{a\in A}\|a\|_2\cdot\frac{2\sqrt{2\log N}}{n}.
\$
\end{lemma}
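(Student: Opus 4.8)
The plan is to apply the standard Massart finite-class maximal inequality, whose proof combines Jensen's inequality, a union bound through exponential moments, and Hoeffding's lemma. First I would observe that since $A=\{a^1,\dots,a^N\}\subset\R^n$ is a fixed (deterministic) finite set, the outer expectation over the data in Definition~\ref{def:Rademacher} is vacuous, so that, writing $\sigma=(\sigma_1,\dots,\sigma_n)$ for a vector of i.i.d.\ Rademacher signs and $\langle\sigma,a\rangle=\sum_{i=1}^n\sigma_i a_i$, we have
\[
R_n(A)=\frac{2}{n}\,\E_\sigma\Big[\max_{a\in A}\langle\sigma,a\rangle\Big].
\]
Set $R:=\max_{a\in A}\|a\|_2$; the case $R=0$ is trivial (every $a$ is the zero vector, both sides vanish), so assume $R>0$.

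Next, for any $\lambda>0$ I would bound the expected maximum via the exponential-moment trick: by Jensen's inequality applied to $x\mapsto e^{\lambda x}$,
\[
\exp\!\Big(\lambda\,\E_\sigma\max_{a\in A}\langle\sigma,a\rangle\Big)\le\E_\sigma\exp\!\Big(\lambda\max_{a\in A}\langle\sigma,a\rangle\Big)=\E_\sigma\max_{a\in A}e^{\lambda\langle\sigma,a\rangle}\le\sum_{a\in A}\E_\sigma\,e^{\lambda\langle\sigma,a\rangle}.
\]
For each fixed $a$, the coordinates $\sigma_i a_i$ are independent and symmetric with $|\sigma_i a_i|\le|a_i|$, so Hoeffding's lemma gives $\E_{\sigma_i}e^{\lambda\sigma_i a_i}\le e^{\lambda^2 a_i^2/2}$, hence $\E_\sigma e^{\lambda\langle\sigma,a\rangle}\le e^{\lambda^2\|a\|_2^2/2}\le e^{\lambda^2 R^2/2}$. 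Combining, $\exp(\lambda\,\E_\sigma\max_a\langle\sigma,a\rangle)\le N e^{\lambda^2 R^2/2}$, and taking logarithms and dividing by $\lambda$ yields
\[
\E_\sigma\max_{a\in A}\langle\sigma,a\rangle\le\frac{\log N}{\lambda}+\frac{\lambda R^2}{2}.
\]

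Finally I would optimize over the free parameter, choosing $\lambda=\sqrt{2\log N}/R$, which gives $\E_\sigma\max_{a\in A}\langle\sigma,a\rangle\le R\sqrt{2\log N}$ and therefore $R_n(A)\le \tfrac{2}{n}\,R\sqrt{2\log N}=\max_{a\in A}\|a\|_2\cdot\tfrac{2\sqrt{2\log N}}{n}$, as claimed. There is no genuine obstacle in this argument; the only points requiring a little care are the degenerate case $R=0$ and keeping track of the constants so that the factor $2/n$ from the definition of $R_n$ together with the $\sqrt{2\log N}$ from the optimization reproduce exactly the stated bound.
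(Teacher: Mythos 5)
Your proposal is correct and follows essentially the same argument as the paper's proof: Jensen's inequality to move the expectation inside an exponential, a union bound over the $N$ elements, the sub-Gaussian bound $\E e^{\lambda\sigma_i a_i}\le e^{\lambda^2 a_i^2/2}$ for Rademacher variables, and optimizing over $\lambda$. The only cosmetic difference is that you factor out $2/n$ at the start and work with $\E_\sigma\max_a\langle\sigma,a\rangle$, whereas the paper keeps the $2/n$ inside the exponent and optimizes a rescaled $\lambda$; the two are algebraically equivalent.
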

\begin{proof}
Note that for any $\lambda>0$
\%\label{092204}
R_n(A)&=\E\bigg[\sup_{a\in A}\frac{2}{n}\sum^n_{i=1}\sigma_i a_i\bigg]\leq \frac{1}{\lambda}\log \E\Big[e^{\sup_{a\in A}\frac{2\lambda}{n}\sum^n_{i=1}\sigma_i a_i}\Big]\notag\\
&\leq \frac{1}{\lambda}\log \sum_{a\in A}\E\Big[e^{\frac{2\lambda}{n}\sum^n_{i=1}\sigma_i a_i}\Big]=\frac{1}{\lambda}\log \sum_{a\in A}\prod^n_{i=1}\E\Big[e^{\frac{2\lambda}{n}\sigma_i a_i}\Big],
\%
where the first inequality follows from Jensen's inequality. Recall that $\sigma_i$ is a Rademacher random variable. Thus, we have
\%\label{092205}
\E\Big[e^{\frac{2\lambda}{n}\sigma_i a_i}\Big]=\frac{1}{2}e^{\frac{2\lambda}{n} a_i}+\frac{1}{2}e^{-\frac{2\lambda}{n} a_i}\leq e^{\frac{2\lambda^2a_i^2}{n^2}},
\%
where the last inequality follows from the fact that $(e^x+e^{-x})/2\leq e^{x^2/2}$. By \eqref{092204} and \eqref{092205}, we have
\%
R_n(A)\leq \frac{1}{\lambda}\log \sum_{a\in A}e^{\frac{2\lambda^2\|a\|^2}{n^2}}\leq \frac{1}{\lambda}\log |A|e^{\frac{2\lambda^2}{n^2}\cdot\max_{a\in A} \|a\|^2}=\frac{1}{\lambda}\log N+\frac{2\lambda}{n^2}\cdot\max_{a\in A} \|a\|^2.
\%
Let $\lambda=\sqrt{n\log N/2\max_{a\in A} \|a\|^2}$. We obtain that
\$
R_n(A)\leq \max_{a\in A} \|a\|\cdot\frac{2
\sqrt{2\log N}}{n}.
\$

\end{proof}
\subsection{Proofs for Theorem \ref{gmm_main}}\label{gmm4}

In the sequel, we prove Theorem \ref{gmm_main}.

\begin{proof}
Let $\Phi=\mathcal{U}$ and $\Psi$ be the set of $2^K$ classifications. Recall that the loss function is defined as $\ell(x,y)=\mathds{1}_{\{x\neq y\}}$, which is upper bound by $1$. Let $m=\tilde{\Omega}(dK^3)$. By Theorem \ref{tv_mle} and Lemma \ref{gmm_bn}, it holds that
\$
\TV\big(\P_{\hat\phi}(x),\P_{\phi^* }(x)\big)\lesssim \sqrt{\frac{1}{m}\log\frac{N_{\b}(\mP_{\mathcal{X}}(\Phi),1/m)}{\delta}}\lesssim\sqrt{\frac{dK}{m}\log\frac{mdKD}{\delta}}\lesssim\frac{1}{K}.
\$
Then, by Lemma \ref{gmm_ti}, Assumption \ref{invariance} holds for Gaussian mixture models. By Theorem \ref{error_bound}, with probability at least $1-\delta$, we have the following excess risk bound,
\$
{\rm Error}_{\ell}(\hat\phi,\hat\psi)&\leq 2\max_{\phi\in\Phi} R_n(\ell\circ \mathcal{G}_{\phi,\Psi})+\sqrt{\frac{2}{n}\log\frac{4}{\delta}}+12\kappa\cdot\sqrt{\frac{1}{m}\log\frac{2N(\mP_{\mathcal{X}}(\Phi),1/m)}{\delta}},
\$
where $\kappa=c_2$ is some absolute constants that represents the transferability of the model. By Lemma \ref{gmm_bn}, we further have
\%\label{092206}
{\rm Error}_{\ell}(\hat\phi,\hat\psi)\leq2\max_{\phi\in\Phi} R_n(\ell\circ \mathcal{G}_{\phi,\Psi})+\sqrt{\frac{2}{n}\log\frac{4}{\delta}}+12\kappa\cdot\sqrt{\frac{2dK}{m}\log\frac{12mdKD}{\delta}}.
\%
For any $\phi\in\Phi$, we have
\%
R_n(\ell\circ \mathcal{G}_{\phi,\Psi})=\E\bigg[\sup_{\psi\in\Psi}\frac{1}{n}\sum^n_{i=1}\sigma_i\mathds{1}_{\{g_{\phi,\psi}(x_i)\neq y_i\}}\bigg].
\%
Note that $|\Psi|= 2^K$. By Lemma \ref{gmm_rc}, it holds for any $\phi\in\Phi$ that
\%\label{092207}
R_n(\ell\circ \mathcal{G}_{\phi,\Psi})\leq \sqrt{n}\cdot\frac{2\sqrt{2\log 2^{K}}}{n}=2\sqrt{\frac{2K\log 2}{n}}.
\%
By \eqref{092206} and \eqref{092207}, we have
\$
{\rm Error}_{\ell}(\hat\phi,\hat\psi)&\leq 4\sqrt{\frac{2K\log 2}{n}}+\sqrt{\frac{2}{n}\log\frac{4}{\delta}}+12\kappa\cdot\sqrt{\frac{2dK}{m}\log\frac{12mdKD}{\delta}}\\
&=\mathcal{O}\bigg(\sqrt{\frac{K\log \frac{1}{\delta}}{n}}+\kappa\sqrt{\frac{dK\log \frac{mdKD}{\delta}}{m}}\bigg)\\
&=\tilde{\mathcal{O}}\bigg(\sqrt{\frac{K}{n}}+\kappa\sqrt{\frac{dK}{m}}\bigg),
\$
where $\kappa=c_2$ is some absolute constants that represents the transferability of the model.
\end{proof}
Thus, we prove Theorem \ref{gmm_main}.
\section{Proofs for Section \ref{contrastive_learning}}\label{proof_contrastive_learning}
In Section \ref{proof_contrastive_ti}, we show that contrastive learning with linear regression as downstream tasks is $\kappa^{-1}$-weakly-informative by proving Lemma \ref{contrastive_ti}. In Section \ref{proof_contrastive_main}, we prove Theorem \ref{contrastive_main}.

\subsection{Proofs for Lemma \ref{contrastive_ti}}\label{proof_contrastive_ti}
Recall that in the setting of contrastive learning, we assume that $x$ and $x'$ are sampled independently from the same distribution $\P(x)$. And we assume the label $t$ that captures the similarity between $x$ and $x'$ satisfies
\$
&\P(t=1\,|\,x,x')=\frac{1}{1+e^{-f_{\theta^* }(x)^Tf_{\theta^* }(x')}},\notag\\
&\P(t=-1\,|\,x,x')=\frac{1}{1+e^{f_{\theta^* }(x)^Tf_{\theta^* }(x')}}.
\$
Lemma \ref{contrastive_ti} directly follows from the following lemma.
\begin{lemma}\label{contrastive_ti_weak}
There exists $O\in\R^{r\times r}$, $O^TO=OO^T=I_{r}$ such that
\$
\TV\big(\P_{Of_{\theta}}(x,z),\P_{f_{\theta^* }}(x,z)\big)\leq c\cdot\sqrt{\frac{1}{\sigma_{\min}(\E[f_{\theta^* }(x)f_{\theta^* }(x)^{T}])}}\cdot H\big(\P_{f_{\theta}}(x,x',t),\P_{f_{\theta^* }}(x,x',t)\big).
\$
Here $c$ is some absolute constants.
\end{lemma}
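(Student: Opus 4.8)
The plan is to peel off two ``common-marginal'' reductions and reduce to a single linear-algebra statement: two low-rank Gram kernels that are close in $L^2$ have feature maps that are close in $L^2$ after a rotation. Throughout write $g:=f_{\theta}$, $g^{*}:=f_{\theta^{*}}$ (so $\|g(x)\|_2,\|g^{*}(x)\|_2\le 1$ for all $x$), $G:=\E_x[g(x)g(x)^{T}]$, $G^{*}:=\E_x[g^{*}(x)g^{*}(x)^{T}]$, and $\lambda:=\sigma_{\min}(G^{*})$, the quantity appearing in the bound; we may assume $\lambda>0$, since otherwise the claim is vacuous. \emph{Step 1 (TV to feature distance).} Under both $\P_{Of_{\theta}}$ and $\P_{f_{\theta^{*}}}$ the marginal of $x$ is $\P(x)$ and, conditionally on $x$, $z$ is Gaussian with identity covariance centred at $Og(x)$, resp.\ $g^{*}(x)$. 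Hence $\TV(\P_{Of_{\theta}}(x,z),\P_{f_{\theta^{*}}}(x,z))=\E_x\big[\TV(\mathcal N(Og(x),I_r),\mathcal N(g^{*}(x),I_r))\big]$, which by Lemma \ref{tv_norm} is at most $\E_x[\min\{1,\|Og(x)-g^{*}(x)\|_2\}]\le\sqrt{\E_x\|Og(x)-g^{*}(x)\|_2^{2}}$ by Jensen. \emph{Step 2 (Hellinger to kernel distance).} The pair $(x,x')$ has product law $\P\otimes\P$ under both models, and $t\mid x,x'$ is a $\{\pm1\}$-valued Bernoulli variable whose success probability is the logistic transform of $g(x)^{T}g(x')$, resp.\ $g^{*}(x)^{T}g^{*}(x')$; by tensorization of the squared Hellinger distance, $H^{2}(\P_{f_{\theta}}(x,x',t),\P_{f_{\theta^{*}}}(x,x',t))=\E_{x,x'}\big[H^{2}(\mathrm{Bern}(p),\mathrm{Bern}(q))\big]$ with $p=1/(1+e^{-a})$, $q=1/(1+e^{-b})$, and $a=g(x)^{T}g(x')$, $b=g^{*}(x)^{T}g^{*}(x')\in[-1,1]$. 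Since the logistic function maps $[-1,1]$ into a compact subinterval of $(0,1)$ with derivative bounded below, an elementary estimate gives $H^{2}(\mathrm{Bern}(p),\mathrm{Bern}(q))\ge c_0(a-b)^{2}$ for an absolute $c_0>0$, so $H^{2}(\P_{f_{\theta}}(x,x',t),\P_{f_{\theta^{*}}}(x,x',t))\ge c_0 D$, where $D:=\E_{x,x'}\big[(g(x)^{T}g(x')-g^{*}(x)^{T}g^{*}(x'))^{2}\big]$.

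It thus suffices to exhibit an orthogonal $O$ with $\E_x\|Og(x)-g^{*}(x)\|_2^{2}\le c_1 D/\lambda$ for an absolute $c_1$: chaining with Steps 1--2 then yields the lemma with $c=\sqrt{c_1/c_0}$. \emph{Step 3 (feature alignment; the crux).} Observe that $D=\|T_g-T_{g^{*}}\|_{\mathrm{HS}}^{2}$, where $T_g,T_{g^{*}}$ are the integral operators on $L^{2}(\P)$ with kernels $g(\cdot)^{T}g(\cdot)$ and $g^{*}(\cdot)^{T}g^{*}(\cdot)$; these are positive semidefinite of rank $\le r$, with nonzero eigenvalues equal to those of $G$ and $G^{*}$ respectively, and both are supported on the (at most $2r$-dimensional) subspace $W:=\mathrm{span}\{g_i,g^{*}_i\}_{i\le r}\subseteq L^{2}(\P)$. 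Fixing an orthonormal basis of $W$ represents $g$ and $g^{*}$ by matrices $X,X^{*}\in\R^{(\dim W)\times r}$ with $X^{T}X=G$, $X^{*T}X^{*}=G^{*}$ (hence $\sigma_{\min}(X^{*})^{2}=\lambda$), with $XX^{T}-X^{*}X^{*T}$ realizing $T_g-T_{g^{*}}$, and with $\E_x\|Og(x)-g^{*}(x)\|_2^{2}=\|XO^{T}-X^{*}\|_F^{2}$. The target inequality is then exactly the standard low-rank matrix-factorization perturbation bound $\min_{O\in O(r)}\|XO-X^{*}\|_F^{2}\le c_1\,\sigma_{\min}(X^{*})^{-2}\,\|XX^{T}-X^{*}X^{*T}\|_F^{2}$; I would either invoke it or reprove it from the Davis--Kahan theorem together with Weyl's inequality (the latter to transfer $\lambda_{\min}(X^{*T}X^{*})$ to $\lambda_{\min}(X^{T}X)$).

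I expect Step 3 to be the main obstacle. The naive route---applying Davis--Kahan to align the column spaces of $X$ and $X^{*}$---produces only a $1/\lambda^{2}$ factor, which is too weak for the $\sqrt{1/\lambda}$ dependence claimed in $\kappa$; obtaining the sharp $1/\lambda$ requires the factorization lemma, whose proof hinges on the polar decomposition of $X^{T}X^{*}$ and a second-order expansion of $\|XX^{T}-X^{*}X^{*T}\|_F^{2}$ about the optimal rotation rather than a crude operator-norm estimate. The remaining ingredients are routine: the measure-theoretic identification of $\P_{Of_{\theta}}(x,z)$ as a genuine joint law, the scalar bound behind Lemma \ref{tv_norm} for Gaussian total variation, and the Bernoulli Hellinger lower bound used in Step 2.
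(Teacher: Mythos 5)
Your proof follows the paper's route: the Gaussian-TV and Bernoulli-Hellinger reductions in Steps 1--2 are identical to the paper's, and your Step 3 is precisely the content of the paper's Lemma \ref{ti_1} re-expressed as the Procrustes/matrix-factorization perturbation bound, after passing to the finite matrix representation of $g,g^*$ inside $L^2(\P)$, with $O$ taken to be the polar factor of $\E[g(x)g^*(x)^T]$ exactly as the paper chooses it. You are also right that a crude Davis--Kahan step only yields a $\sigma_{\min}^{-2}$ dependence and that the sharp $\sigma_{\min}^{-1}$ requires the polar-decomposition argument; the paper establishes this (with the same constant $2\sqrt 2-2$) through a direct algebraic expansion of $\E\big[(g(x)^Tg(x')-g^*(x)^Tg^*(x'))^2\big]$ rather than by citing the factorization lemma, so the two versions of Step 3 are mathematically equivalent.
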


We first prove the following lemma, which is the core of the proof of Lemma \ref{contrastive_ti_weak}.
\begin{lemma}\label{ti_1}
Suppose that $\E[f_{\theta}(x)f_{\theta^* }(x)^{T}]=\E[f_{\theta^* }(x)f_{\theta}(x)^{T}]$ are positive semi-definite matrices. Then we have
\$
\E\big[\big(f_{\theta}(x)^{T}f_{\theta}(x')-f_{\theta^* }(x)^{T}f_{\theta^* }(x')\big)^2\big]\geq (2\sqrt{2}-2)\sigma_{\min}\big(\E[f_{\theta^* }(x)f_{\theta^* }(x)^{T}]\big)\cdot\E[\|f_{\theta^* }(x)-f_{\theta}(x)\|^2_2].
\$
\end{lemma}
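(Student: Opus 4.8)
\textbf{Proof proposal for Lemma \ref{ti_1}.}
The plan is to rewrite all three quantities through second-moment matrices, use a clean algebraic identity to split the problem along the eigenvectors of one of them, and finish with a one-dimensional inequality that forces the constant $2\sqrt 2-2$.

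First I would set $A:=\E[f_\theta(x)f_\theta(x)^T]$, $B:=\E[f_{\theta^*}(x)f_{\theta^*}(x)^T]$, and $C:=\E[f_\theta(x)f_{\theta^*}(x)^T]$. Since $x,x'$ are i.i.d., expanding the square and using independence gives $\E[(f_\theta(x)^Tf_\theta(x')-f_{\theta^*}(x)^Tf_{\theta^*}(x'))^2]=\|A\|_{\rF}^2+\|B\|_{\rF}^2-2\|C\|_{\rF}^2$; the hypothesis says $C$ is symmetric and $C\succeq 0$; also $\E[\|f_{\theta^*}(x)-f_\theta(x)\|_2^2]=\Tr(M)$ with $M:=A-2C+B=\E[(f_\theta-f_{\theta^*})(f_\theta-f_{\theta^*})^T]\succeq 0$; $\sigma:=\sigma_{\min}(B)=\lambda_{\min}(B)$; and the block matrix $W:=\begin{pmatrix}A&C\\ C&B\end{pmatrix}$, being the second-moment matrix of the concatenation of $f_\theta(x)$ and $f_{\theta^*}(x)$, is positive semidefinite. (If $\sigma=0$ the claim is trivial, so assume $\sigma>0$.) The key identity, from $\Tr(X^2)-\Tr(Y^2)=\Tr((X-Y)(X+Y))$ for symmetric $X=A+B$, $Y=2C$, is
\[
\|A\|_{\rF}^2+\|B\|_{\rF}^2-2\|C\|_{\rF}^2=\tfrac12\Tr\big((A+B+2C)M\big)+\tfrac12\|A-B\|_{\rF}^2 .
\]

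Next I diagonalize $M=\sum_{i=1}^r\mu_i v_iv_i^T$ with $\{v_i\}$ orthonormal and $\mu_i\ge 0$, and set $a_i=v_i^TAv_i$, $b_i=v_i^TBv_i$, $c_i=v_i^TCv_i$. Then $\Tr((A+B+2C)M)=\sum_i\mu_i(a_i+b_i+2c_i)$ and $\|A-B\|_{\rF}^2\ge\sum_i(a_i-b_i)^2$ (keeping only the diagonal terms in the basis $\{v_i\}$), so it suffices to prove, for each $i$ and dropping subscripts,
\[
\mu(a+b+2c)+(a-b)^2\ \ge\ (4\sqrt2-4)\,\sigma\,\mu ,
\]
given $a,c\ge 0$ (from $A,C\succeq 0$), $c^2\le ab$ (the $2\times2$ principal minor of $W$ in direction $v_i$), $b\ge\sigma$ (since $\lambda_{\min}(B)=\sigma$), and $\mu=a-2c+b\ge 0$. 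Since $b\ge\sigma$ it is enough to prove the stronger inequality with $b$ in place of $\sigma$; using $a+b+2c=\mu+4c$ and $\mu(\mu+4c)=(a+b)^2-4c^2$ this collapses to the purely scalar statement
\[
a^2+b^2-2c^2\ \ge\ (2\sqrt2-2)\,b\,(a+b-2c),\qquad 0\le c\le\sqrt{ab}.
\]
Here the left-minus-right side is concave in $c$, hence minimized over $[0,\sqrt{ab}]$ at an endpoint: at $c=0$ it equals $(a-(\sqrt2-1)b)^2\ge0$, and at $c=\sqrt{ab}$ it equals $(\sqrt a-\sqrt b)^2\big((\sqrt a+\sqrt b)^2-(2\sqrt2-2)b\big)\ge0$ because $(\sqrt a+\sqrt b)^2\ge b\ge(2\sqrt2-2)b$. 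This proves the lemma.

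The part that takes some thought is the identity above: without it the left-hand side $\|A\|_{\rF}^2+\|B\|_{\rF}^2-2\|C\|_{\rF}^2$ does not obviously decompose along the eigenvectors of $M$, and one is left facing a matrix inequality in $A,B,C$ with the PSD block constraint. It is also worth noting that the hypothesis on $C$ is used essentially: symmetry of $C$ is what makes the expressions for the left-hand side and for $M$ clean, and $c\ge 0$, coming from $C\succeq 0$, is needed for the scalar inequality — it fails for negative $c$ with $a$ near $b$ — while the constant $2\sqrt2-2$ is exactly optimal, being attained at $c=0$, $a=(\sqrt2-1)b$.
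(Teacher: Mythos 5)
Your proof is correct, and it takes a genuinely different route from the paper's. The paper's argument sets $\Delta(x) := f_{\theta^*}(x) - f_\theta(x)$, expands $\E[(f_{\theta^*}(x)^T\Delta(x') + \Delta(x)^T f_{\theta^*}(x') - \Delta(x)^T\Delta(x'))^2]$ directly, and discovers a clever grouping into three nonnegative pieces with $\sqrt2$ coefficients appearing by hand: a term of the form $\Tr\big((\E[\Delta\Delta^T] - \sqrt2\,\E[\Delta f_{\theta^*}^T])^2\big)$ (nonnegative because the symmetry hypothesis makes this matrix symmetric), a term $(4-2\sqrt2)\Tr\big(\E[f_{\theta^*}f_\theta^T]\,\E[\Delta\Delta^T]\big)$ (nonnegative because both factors are PSD), and the remaining $(2\sqrt2-2)\Tr\big(\E[f_{\theta^*}f_{\theta^*}^T]\,\E[\Delta\Delta^T]\big)$, which is lower bounded by $\lambda_{\min}$ times $\Tr(M)$. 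You instead move entirely to the second-moment matrices $A,B,C$, use the identity $\|A\|_{\rF}^2+\|B\|_{\rF}^2-2\|C\|_{\rF}^2=\tfrac12\Tr((A+B+2C)M)+\tfrac12\|A-B\|_{\rF}^2$ (with $M=A-2C+B$), diagonalize $M$, and reduce to a one-variable quadratic inequality in $c$ under the $2\times2$-minor constraint $c^2\le ab$ from the joint PSD block matrix. The paper's decomposition is shorter once you know the magic grouping; yours is more systematic, makes the role of each hypothesis ($C$ symmetric, $C\succeq 0$, $B\succeq \sigma I$, and joint PSD) explicit, and shows exactly where the constant $2\sqrt2-2$ is tight — none of which is visible from the paper's expansion. (One small but worth-noting difference: the paper never invokes the joint PSD constraint $c^2\le ab$; you do, and it is genuinely necessary in your scalar reduction since the quadratic in $c$ must be bounded on the feasible interval.)
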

\begin{proof}[Proof of Lemma \ref{ti_1}]
For notation simplicity, we denote $\Delta(x):=f_{\theta^* }(x)-f_{\theta}(x)$. It then holds that
\%\label{101401}
&\E\big[\big(f_{\theta}(x)^{T}f_{\theta}(x')-f_{\theta^* }(x)^{T}f_{\theta^* }(x')\big)^2\big]\notag\\
&=\E\big[\big(f_{\theta^* }(x)^T\Delta(x')+\Delta(x)^Tf_{\theta^* }(x')-\Delta(x)^T\Delta(x')\big)^2\big]\notag\\
&=\E\big[\big(\Delta(x)^T\Delta(x')\big)^2-2\sqrt{2}\Delta(x)^T\Delta(x')f_{\theta^* }(x')^T\Delta(x)+2f_{\theta^* }(x)^T\Delta(x')f_{\theta^* }(x')^T\Delta(x)\big]\notag\\
&\quad +(4-2\sqrt{2})\E[f_{\theta}(x')^T\Delta(x)\Delta (x)^Tf_{\theta^* }(x')]+(2\sqrt{2}-2)\E[f_{\theta^* }(x')^T\Delta(x)\Delta (x)^Tf_{\theta^* }(x')].
\%

For the first term of \eqref{101401}, we have
\%
&\E\big[\big(\Delta(x)^T\Delta(x')\big)^2-2\sqrt{2}\Delta(x)^T\Delta(x')f_{\theta^* }(x')^T\Delta(x)+2f_{\theta^* }(x)^T\Delta(x')f_{\theta^* }(x')^T\Delta(x)\big]\notag\\
&=\Tr\Big(\E[\Delta(x')\Delta(x')^T\Delta(x)\Delta(x)^T-2\sqrt{2}\Delta(x')f_{\theta^* }(x')^T\Delta(x)\Delta(x)^T+2\Delta(x')f_{\theta^* }(x')^T\Delta(x)f_{\theta^* }(x)^T]\Big)\notag\\
&=\Tr\Big(\big(\E[\Delta(x)\Delta(x)^T]\big)^2-2\sqrt{2}\E[\Delta(x)f_{\theta^* }(x)^T]\cdot\E[\Delta(x)\Delta(x)^T]+2\big(\E[\Delta(x)f_{\theta^* }(x)^T]\big)^2\Big)\notag\\
&=\Tr\Big(\big(\E[\Delta(x)\Delta(x)^T]-\sqrt{2}\E[\Delta(x)f_{\theta^* }(x)^T]\big)^2\Big),
\%
where the second equation follows from our assumption that $x,x'$ are i.i.d. Note that $\E[f_{\theta}(x)f_{\theta^* }(x)^{T}]=\E[f_{\theta^* }(x)f_{\theta}(x)^{T}]$. Thus, we obtain
\%
\Big(\E[\Delta(x)\Delta(x)^T]-\sqrt{2}\E[\Delta(x)f_{\theta^* }(x)^T]\Big)^T&=\E[\Delta(x)\Delta(x)^T]-\sqrt{2}\E[f_{\theta^* }(x)\Delta(x)^T]\notag\\
&=\E[\Delta(x)\Delta(x)^T]-\sqrt{2}\E[\Delta(x)f_{\theta^* }(x)^T],
\%
which implies that $\E[\Delta(x)\Delta(x)^T]-\sqrt{2}\E[\Delta(x)f_{\theta^* }(x)^T]$ is symmetric. It then holds that
\%\label{101402}
&\E\big[\big(\Delta(x)^T\Delta(x')\big)^2-2\sqrt{2}\Delta(x)^T\Delta(x')f_{\theta^* }(x')\Delta(x)+2f_{\theta^* }(x)^T\Delta(x')f_{\theta^* }(x')^T\Delta(x)\big]\notag\\
&=\Tr\Big(\big(\E[\Delta(x)\Delta(x)^T]-\sqrt{2}\E[\Delta(x)f_{\theta^* }(x)^T]\big)^2\Big)\geq 0.
\%

For the second term of \eqref{101401}, we have
\%\label{101403}
\E[f_{\theta}(x')^T\Delta(x)\Delta (x)^Tf_{\theta^* }(x')]=\Tr\Big(\E[f_{\theta^* }(x')f_{\theta}(x')^{T}]\cdot\E[\Delta(x)\Delta(x)^T]\Big)\geq 0,
\%
where the inequality follows from the fact $\E[f_{\theta^* }(x')f_{\theta}(x')^{T}]\succcurlyeq 0$ and $\E[\Delta(x)\Delta(x)^T]\succcurlyeq 0$.

For the third term of \eqref{101401}, we have
\%\label{101404}
\E[f_{\theta^* }(x')^T\Delta(x)\Delta (x)^Tf_{\theta^* }(x')]&=\Tr\Big(\E[f_{\theta^* }(x)f_{\theta^* }(x)^T]\cdot\E[\Delta(x)\Delta (x)^T]\Big)\notag\\
&\geq \sigma_{\min}\big(\E[f_{\theta^* }(x)f_{\theta^* }(x)^T]\big)\Tr\Big(\E[\Delta(x)\Delta (x)^T]\Big)\notag\\
&=\sigma_{\min}\big(\E[f_{\theta^* }(x)f_{\theta^* }(x)^T]\big)\E[\|\Delta(x)\|^2_2].
\%
Combining \eqref{101401}, \eqref{101402}, \eqref{101403} and \eqref{101404}, we have
\%
\E\big[\big(f_{\theta}(x)^{T}f_{\theta}(x')-f_{\theta^* }(x)^{T}f_{\theta^* }(x')\big)^2\big]\geq (2\sqrt{2}-2)\sigma_{\min}\big(\E[f_{\theta^* }(x)f_{\theta^* }(x)^T]\big)\E[\|\Delta(x)\|^2_2]
\%
\end{proof}

With Lemma \ref{ti_1}, we prove Lemma \ref{contrastive_ti_weak} in the following.

\begin{proof}[Proof of Lemma \ref{contrastive_ti_weak}]
We consider the singular value decomposition (SVD) of $\E[f_{\theta}(x)f_{\theta^* }(x)^T]=U_1\Sigma_1 V^T_1$ and $\E[f_{\theta^* }(x)f_{\theta}(x)^{T}]=(\E[f_{\theta}(x)f_{\theta^* }(x)^T])^T=V_1\Sigma_1 U^T_1$. We define $O:=V_1 U^T_1\in\R^{r\times r}$, which satisfies $O^TO=OO^T=I_r$. It then holds that
\%
\E[Of_{\theta}(x)f_{\theta^* }(x)^T]=\E\big[f_{\theta^* }(x)\big(Of_{\theta}(x)\big)^{T}\big]=V_1\Sigma_1 V^T_1,
\%
which are positive semi-definite matrices. By Lemma \ref{ti_1}, we have
\%\label{101501}
&\E\big[\big(f_{\theta}(x)^{T}f_{\theta}(x')-f_{\theta^* }(x)^{T}f_{\theta^* }(x')\big)^2\big]\notag\\
&\quad\geq (2\sqrt{2}-2)\sigma_{\min}\big(\E[f_{\theta^* }(x)f_{\theta^* }(x)^{T}]\big)\cdot\E[\|f_{\theta^* }(x)-Of_{\theta}(x)\|^2_2].
\%
For Hellinger distance, we have
\%\label{101502}
&2H^2\big(\P_{f_{\theta}}(x,x',t),\P_{f_{\theta^* }}(x,x',t)\big)\notag\\
&=\int \Big(\sqrt{p_{f_{\theta}}(x,x',t)}-\sqrt{p_{f_{\theta^* }}(x,x',t)}\Big)^2\,dtdxdx'\notag\\
&=\int \Big(\sqrt{p_{f_{\theta}}(t=1\,|\,x,x')}-\sqrt{p_{f_{\theta^* }}(t=1\,|\,x,x')}\Big)^2p(x,x')\,dxdx'\notag\\
&\quad+\int \Big(\sqrt{p_{f_{\theta}}(t=0\,|\,x,x')}-\sqrt{p_{f_{\theta^* }}(t=0\,|\,x,x')}\Big)^2p(x,x')\,dxdx'
\%
For the first term of \eqref{101502}, we have
\%
&\int \Big(\sqrt{p_{f_{\theta}}(t=1\,|\,x,x')}-\sqrt{p_{f_{\theta^* }}(t=1\,|\,x,x')}\Big)^2p(x,x')\,dxdx'\notag\\
&\quad=\int \Big(\sqrt{h\big(f_{\theta}(x)^{T}f_{\theta}(x')\big)}-\sqrt{h\big(f_{\theta^* }(x)^{T}f_{\theta^* }(x')}\Big)^2p(x,x')\,dxdx',
\%
where 
\%
h(a):=\frac{1}{1+e^{-a}}.
\%
By Cauchy-Schwartz inequality, we have $|f_{\theta}(x)^{T}f_{\theta}(x')|\leq \|f_{\theta}(x)\|_2\|f_{\theta}(x')\|_2\leq 1$. Note that for any $a,b\in[-1,1]$, we have
\%
&\Big(\sqrt{h(a)}-\sqrt{h(b)}\Big)^2\notag\\
&=\frac{\big(h(a)-h(b)\big)^2}{\Big(\sqrt{h(a)}+\sqrt{h(b)}\Big)^2}\geq \frac{1}{4}\big(h(a)-h(b)\big)^2=\frac{1}{4}h'(\xi)^2(a-b)^2\geq \frac{1}{2+e+e^{-1}}(a-b)^2.
\%
Thus, it holds that
\%\label{101503}
&\int \Big(\sqrt{p_{f_{\theta}}(t=1\,|\,x,x')}-\sqrt{p_{f_{\theta^* }}(t=1\,|\,x,x')}\Big)^2p(x,x')\,dxdx'\notag\\
&\geq \frac{1}{2+e+e^{-1}}\int\big(f_{\theta}(x)^{T}f_{\theta}(x')-f_{\theta^* }(x)^{T}f_{\theta^* }(x')\big)^2p(x,x')\,dxdx'\notag\\
&=\frac{1}{2+e+e^{-1}}\E\big[\big(f_{\theta}(x)^{T}f_{\theta}(x')-f_{\theta^* }(x)^{T}f_{\theta^* }(x')\big)^2\big].
\%

Similarly, For the second term of \eqref{101502}, we have 
\%\label{101504}
&\int \Big(\sqrt{p_{f_{\theta}}(t=0\,|\,x,x')}-\sqrt{p_{f_{\theta^* }}(t=0\,|\,x,x')}\Big)^2p(x,x')\,dxdx'\notag\\
&\geq\frac{1}{2+e+e^{-1}}\E\big[\big(f_{\theta}(x)^{T}f_{\theta}(x')-f_{\theta^* }(x)^{T}f_{\theta^* }(x')\big)^2\big]
\%
Combining \eqref{101502}, \eqref{101503} and \eqref{101504}, we have
\%\label{101505}
H^2\big(\P_{f_{\theta}}(x,x',t),\P_{f_{\theta^* }}(x,x',t)\big)\geq \frac{1}{2+e+e^{-1}}\E\big[\big(f_{\theta}(x)^{T}f_{\theta}(x')-f_{\theta^* }(x)^{T}f_{\theta^* }(x')\big)^2\big].
\%

We choose $O\in\R^{r\times r}$ that satisfies \eqref{101501}. For the TV distance, we have
\%
\TV\big(\P_{Of_{\theta}}(x,z),\P_{f_{\theta^* }}(x,z)\big)=\frac{1}{2}\int |p_{Of_{\theta}}(z\,|\,x)-p_{f_{\theta^* }}(z\,|\,x)|p(x)\,dx.
\%
Note that $z\,|\,x\sim\mN(f_{\theta}(x),I_r)$. By Lemma \ref{tv_norm}, we have 
\%\label{101506}
\TV\big(\P_{Of_{\theta}}(x,z),\P_{f_{\theta^* }}(x,z)\big)&=\frac{1}{2}\int |p_{Of_{\theta}}(z\,|\,x)-p_{f_{\theta^* }}(z\,|\,x)|p(x)\,dx\notag\\
&\leq \frac{1}{2}\int\min\{1,\|Of_{\theta}(x)-f_{\theta^* }(x)\|_2\}p(x)\,dx\notag\\
&\leq \frac{1}{2}\min\bigg\{1, \int\|Of_{\theta}(x)-f_{\theta^* }(x)\|_2 p(x)\,dx\bigg\}\notag\\
&= \frac{1}{2}\min\big\{1,\E[\|Of_{\theta}(x)-f_{\theta^* }(x)\|_2]\big\}.
\%
Combining \eqref{101501}, \eqref{101505} and \eqref{101506}, we show that
\%
&\TV\big(\P_{Of_{\theta}}(x,z),\P_{f_{\theta^* }}(x,z)\big)\notag\\
&\leq \frac{1}{2}\E[\|Of_{\theta}(x)-f_{\theta^* }(x)\|_2]\notag\\
&\leq \frac{1}{2}\sqrt{\E[\|Of_{\theta}(x)-f_{\theta^* }(x)\|^2_2]}\notag\\
&\leq \frac{1}{2}\sqrt{\frac{1}{(2\sqrt{2}-2)\sigma_{\min}\big(\E[f_{\theta^* }(x)f_{\theta^* }(x)^{T}]\big)}\E\big[\big(f_{\theta}(x)^{T}f_{\theta}(x')-f_{\theta^* }(x)^{T}f_{\theta^* }(x')\big)^2\big]}\notag\\
&\leq\frac{1}{2}\sqrt{\frac{2+e+e^{-1}}{(2\sqrt{2}-2)\sigma_{\min}\big(\E[f_{\theta^* }(x)f_{\theta^* }(x)^{T}]\big)}} H\big(\P_{f_{\theta}}(x,x',t),\P_{f_{\theta^* }}(x,x',t)\big).
\%
Thus, we prove Lemma \ref{contrastive_ti_weak}.
\end{proof}

Lemma \ref{contrastive_ti_weak} directly implies Lemma \ref{contrastive_ti}.
\begin{proof}[Proof of Lemma \ref{contrastive_ti}]
For any $\theta\in\Theta$, we choose $O\in\R^{r\times r}$ that satisfies Lemma \ref{contrastive_ti_weak}. It then holds that
\$
\TV\big(\P_{f_{\theta},O^{T}\beta^* }(x,y),\P_{f_{\theta^* },\beta^* }(x,y)\big)&=\TV\big(\P_{Of_{\theta},\beta^* }(x,y),\P_{f_{\theta^* },\beta^* }(x,y)\big)\\
&\leq\TV\big(\P_{Of_{\theta}}(x,z),\P_{f_{\theta^* }}(x,z)\big)\notag\\
&\leq c\cdot\sqrt{\frac{1}{\sigma_{\min}(\E[f_{\theta^* }(x)f_{\theta^* }(x)^{T}])}} \cdot H\big(\P_{f_{\theta}}(x,x',t),\P_{f_{\theta^* }}(x,x',t)\big).
\$
Thus, we prove that the model is $\kappa^{-1}$-weakly-informative, where 
\%
\kappa=c\cdot\sqrt{\frac{1}{\sigma_{\min}(\E[f_{\theta^* }(x)f_{\theta^* }(x)^{T}])}}.
\%
Here $c$ is some absolute constants.

\end{proof}
\subsection{Proofs for Theorem \ref{contrastive_main}}\label{proof_contrastive_main}
In this section, we prove Theorem \ref{contrastive_main}. Suppose that $\hat\theta,\hat\beta$ are the outputs of Algorithm \ref{mle+erm}. Let $\ell$ be the squared loss and $\tilde{\ell}$ be its truncation with truncation level $L$. The optimal predictor defined in \eqref{opt_est} has the following closed form solution
\%
g_{\theta,\beta}(x)=\E_{\theta,\beta}[y\,|\,x]=\beta^T f_{\theta}(x).
\%

We have the following guarantees.

\begin{lemma}\label{contrastive_lemma1} Let the truncation level $L=36(D^2+1)\log n$. It then holds that
\%
\sup_{\theta,\beta}\big\{\E_{\theta^* ,\beta^* }\big[\ell\big(g_{\theta,\beta}(x),y\big)\big]-\E_{\theta^* ,\beta^* }\big[\tilde{\ell}\big(g_{\theta,\beta}(x),y\big)\big]\big\}\leq \sqrt{\frac{18(D^2+1)\log n}{\pi n}}.
\%
\end{lemma}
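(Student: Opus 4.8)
The plan is to mirror the argument of Lemma~\ref{truncation} from the factor-model section, with one extra step to absorb the fact that here the residual $y-g_{\theta,\beta}(x)$ has a nonzero conditional mean. Since $\tilde\ell(a,y)=\ell(a,y)$ whenever $(y-a)^2\le L$ and $\tilde\ell(a,y)=L\le(y-a)^2=\ell(a,y)$ otherwise, we have $\ell(a,y)-\tilde\ell(a,y)=\big((y-a)^2-L\big)\mathds{1}_{\{(y-a)^2>L\}}$, so it suffices to bound $\sup_{\theta,\beta}\E_{\theta^* ,\beta^* }\big[(R^2-L)\mathds{1}_{\{R^2>L\}}\big]$ where $R:=y-g_{\theta,\beta}(x)=\beta^{*T}f_{\theta^* }(x)+\beta^{*T}\mu+\nu-\beta^{T}f_{\theta}(x)$.

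Next I would identify the conditional law of $R$ given $x$ under $\P_{\theta^* ,\beta^* }$. As $\mu\sim\mN(0,I_r)$ and $\nu\sim\mN(0,1)$ are independent of $x$ and of each other, $R\mid x\sim\mN\big(m(x),v^2\big)$ with $m(x)=\beta^{*T}f_{\theta^* }(x)-\beta^{T}f_{\theta}(x)$ and $v^2=\|\beta^* \|_2^2+1$. The normalization $\|f_{\theta}(x)\|_2\le1$ for all $\theta$, together with $\|\beta\|_2,\|\beta^* \|_2\le D$, yields the uniform bounds $|m(x)|\le2D$ and $v^2\le D^2+1$, valid for every $(\theta,\beta)\in\Theta\times\mathcal{B}$ and every $x$. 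Writing $R=m(x)+W$ with $W\sim\mN(0,v^2)$ independent of $x$, on the event $\{R^2>L\}$ we have $L<R^2\le2m(x)^2+2W^2\le8D^2+2W^2$, so (for $n\ge2$, where $L>8D^2$) $\{R^2>L\}\subseteq\{W^2>\tilde L\}$ with $\tilde L:=(L-8D^2)/2$; moreover $(R^2-L)\mathds{1}_{\{R^2>L\}}\le R^2\mathds{1}_{\{R^2>L\}}\le(8D^2+2W^2)\mathds{1}_{\{W^2>\tilde L\}}$ pointwise.

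Taking expectations and using the standard one-dimensional Gaussian estimates $\P(|W|>t)\le2e^{-t^2/2v^2}$ and $\E\big[W^2\mathds{1}_{\{|W|>t\}}\big]\le\big(\tfrac{2vt}{\sqrt{2\pi}}+v^2\big)e^{-t^2/2v^2}$ (the latter by integration by parts, exactly as in the computation inside the proof of Lemma~\ref{truncation}), I would obtain a bound of the form $\mathrm{poly}(D,\sqrt{\log n})\cdot e^{-\tilde L/2v^2}$, uniform over $(\theta,\beta)$. Substituting $L=36(D^2+1)\log n$ and $v^2\le D^2+1$ gives $\tilde L/2v^2\ge9\log n-2$, so the exponential factor is at most $e^{2}n^{-9}$, which is far below the target rate $n^{-1/2}$ and dominates the polynomial prefactor; collecting constants then yields the claimed bound $\sqrt{18(D^2+1)\log n/(\pi n)}$.

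The main obstacle is precisely the nonzero conditional mean $m(x)$: in the factor-model analogue the residual was an exactly centered Gaussian, so the truncation integral $2\int_{\sqrt L}^{\infty}(r^2-L)\phi(r)\,dr$ evaluated cleanly, whereas here one must first neutralize the shift before invoking Gaussian tail bounds. The device $R^2\le8D^2+2W^2$ does this at the cost of mildly worse constants, after which it only remains to verify that the resulting estimate still fits under the explicit coefficient $18/\pi$; the large gap between $n^{-9}$ and $n^{-1/2}$ makes that final bookkeeping routine rather than delicate.
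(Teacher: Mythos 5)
Your argument is correct and takes a genuinely different route from the paper's. The paper identifies the conditional law of $g_{\theta,\beta}(x)-y$ given $x$ as a shifted one-dimensional Gaussian and evaluates the truncation integral directly after a change of variable, using $|c(x)|\le 2D$ to push the shift $c(x)=\beta^{T}f_{\theta}(x)-\beta^{*T}f_{\theta^*}(x)$ into the exponent. You instead write $y-g_{\theta,\beta}(x)=m(x)+W$ with $W$ the centered noise, apply the pointwise inequality $R^2\le 2m(x)^2+2W^2$ so that $\{R^2>L\}\subseteq\{W^2>\tilde L\}$ with $\tilde L=(L-8D^2)/2$, and finish with two centered Gaussian tail bounds---cruder in constants, but it avoids manipulating the shifted integrand. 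Your identification of the conditional variance as $v^2=\|\beta^*\|_2^2+1\le D^2+1$ (from $y=\beta^{*T}(f_{\theta^*}(x)+\mu)+\nu$ with $\mu,\nu$ independent of $x$) is the faithful one; the paper's displayed conditional ``$\mathcal{N}(c(x),1)$'' drops the $\|\beta^*\|_2^2$ contribution, though the choice $L=36(D^2+1)\log n$ clearly anticipates a variance of order $D^2+1$. The one soft spot is the closing ``collecting constants'' step: your bound has the form $\mathrm{poly}(D,\sqrt{\log n})\cdot e^{2}n^{-9}$, which decays far faster in $n$ than the target $\sqrt{18(D^2+1)\log n/(\pi n)}$ but carries a prefactor of order $D^2+1$ rather than $\sqrt{D^2+1}$, so matching the stated constant requires $n$ at least polynomial in $D$; for a clean statement it would be safer to carry out the arithmetic explicitly or to conclude with $C\sqrt{(D^2+1)\log n/n}$ for a possibly larger absolute constant $C$.
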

\begin{proof}[Proof of Lemma \ref{contrastive_lemma1}]
Note that
\%
\big(g_{\theta,\beta}(x)-y\big)\big|x=\big(\beta^T f_{\theta}(x)-y\big)\big|x\sim\mN\big(\beta^T f_{\theta}(x)-\beta^{* T} f_{\theta^* }(x),1\big)
\%
We denote by $c(x):=\beta^T f_{\theta}(x)-\beta^{* T} f_{\theta^* }(x)$. It holds that $|c(x)|\leq 2D$. Thus, it holds for any $\theta,\beta$ that
\%
&\E_{\theta^* ,\beta^* }\big[\ell\big(g_{\theta,\beta}(x),y\big)-\tilde{\ell}\big(g_{\theta,\beta}(x),y\big)\,\big|\, x\big]\notag\\
&=\E_{\theta^* ,\beta^* }\Big[\Big(\big(g_{\theta,\beta}(x)-y\big)^2-L\Big)\mathds{1}_{\{(g_{\theta,\beta}(x)-y)^2>L\}}\,\big|\,x\Big]\notag\\
&=\int^{{+\infty}}_{\sqrt{L}} (u^2-L)\cdot\frac{1}{\sqrt{2\pi}}e^{-\frac{\big(u-c(x)\big)^2}{2}}\,du\notag\\
&=\int^{{+\infty}}_{\sqrt{L}-c(x)}\big((u+c(x))^2-L\big)\cdot\frac{1}{\sqrt{2\pi}}e^{-\frac{u^2}{2}}\,du\notag\\
&=\frac{\sqrt{L}+c(x)}{\sqrt{2\pi}}e^{-\frac{\big(\sqrt{L}-c(x)\big)^2}{2}}+\frac{1+c(x)^2-L}{\sqrt{2\pi}}\int^{{+\infty}}_{\sqrt{L}-c(x)}e^{-\frac{u^2}{2}}\,du\notag\\
&\leq \frac{\sqrt{L}+c(x)}{\sqrt{2\pi}}e^{-\frac{\big(\sqrt{L}-c(x)\big)^2}{2}} \quad(L\geq 4D^2+1\geq c(x)^2+1)\notag\\
&\leq \frac{2(\sqrt{L}-c(x))}{\sqrt{2\pi}}e^{-\frac{\big(\sqrt{L}-c(x)\big)^2}{2}} \quad(L\geq 36D^2\geq (3c(x))^2)\notag\\
&\leq \frac{2(\sqrt{L}-2D)}{\sqrt{2\pi}}e^{-\frac{\big(\sqrt{L}-2D\big)^2}{2}}\notag\\
&\leq \frac{\sqrt{L}}{\sqrt{2\pi}}e^{-\frac{L}{8}}\quad(\sqrt{L}-2D\geq\frac{\sqrt{L}}{2})
\%
As a result, we show that
\%
&\sup_{\theta,\beta}\big\{\E_{\theta^* ,\beta^* }\big[\ell\big(g_{\theta,\beta}(x),y\big)\big]-\E_{\theta^* ,\beta^* }\big[\tilde{\ell}\big(g_{\theta,\beta}(x),y\big)\big]\big\}\notag\\
&\leq \E_{\theta^* ,\beta^* }\Big[\sup_{\theta,\beta}\E_{\theta^* ,\beta^* }\big[\ell\big(g_{\theta,\beta}(x),y\big)-\tilde{\ell}\big(g_{\theta,\beta}(x),y\big)\,\big|\, x\big]\Big]\notag\\
&\leq \frac{\sqrt{L}}{\sqrt{2\pi}}e^{-\frac{L}{8}}\notag\\
&\leq \sqrt{\frac{18(D^2+1)\log n}{\pi n}}. \quad(L=36(D^2+1)\log n)
\%
\end{proof}

\begin{lemma}\label{contrastive_lemma2} Suppose that $\hat\theta,\hat\beta$ are the outputs of Algorithm \ref{mle+erm}. Let $\tilde{\ell}$ be the truncated squared loss with truncation level $L$. Then there exists an absolute constant $c$ such that with probability at least $1-\delta$ that
\%
&\E_{\theta^* ,\beta^* }\big[\tilde{\ell}\big(g_{\hat\theta,\hat\beta}(x),y\big)\big]-\E_{\theta^* ,\beta^* }\big[\tilde{\ell}\big(g_{\theta^* ,\beta^* }(x),y\big)\big]\notag\\
&\leq c\kappa L\cdot\sqrt{\frac{1}{m}\log\frac{N_{\b}\big(\mP_{\mathcal{X}\times\mathcal{S}}(\mathcal{F}_{\theta}),1/m^2\big)}{\delta}}+cL\sqrt{\frac{\log 1/\delta}{n}}+c\sqrt{L}\sup_{\theta\in\Theta}R_n(\mathcal{G}_{\theta,\mathcal{B}}),
\%
where
\$
\kappa=c_3\sqrt{\frac{1}{\sigma_{\min}\big(\E[f_{\theta^* }(x)f_{\theta^* }(x)^{T}]\big)}}
\$
for some absolute constants $c_3$. Here $R_n(\mathcal{G}_{\theta,\mathcal{B}})$ is the Rademacher complexity defined as
\%
R_n(\mathcal{G}_{\theta,\mathcal{B}})=\E\bigg[\sup_{\beta\in\mathcal{B}}\frac{2}{n}\sum^n_{i=1}\sigma_ig_{\theta,\beta}(x_i)\bigg],
\%
where $\sigma_i$ are Rademacher random variables.
\end{lemma}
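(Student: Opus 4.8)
The plan is to deduce Lemma \ref{contrastive_lemma2} from the generic weakly-informative guarantee of Theorem \ref{weak_error_bound}, applied with the truncated squared loss $\tilde\ell$ in place of the abstract $L$-bounded loss, and then to convert the Rademacher term it produces on $\tilde\ell\circ\mathcal{G}_{\theta,\mathcal{B}}$ into one on $\mathcal{G}_{\theta,\mathcal{B}}$ by contraction.

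First I would verify that truncation leaves the optimal predictor unchanged, exactly as in Lemma \ref{same_prediction}: under $\P_{\theta,\beta}$ the conditional law of $y$ given $x$ is Gaussian with mean $g_{\theta,\beta}(x)=\beta^Tf_\theta(x)=\E_{\P_{\theta,\beta}}[y\mid x]$ and a variance irrelevant to the argument, so the map $a\mapsto\E_{\theta,\beta}[\tilde\ell(a,y)\mid x]$ is strictly decreasing then increasing with unique minimizer the conditional mean. Consequently $\tilde\ell$ is a bona fide $L$-bounded loss for which the quantity on the left-hand side of the lemma is precisely ${\rm Error}_{\tilde\ell}(\hat\theta,\hat\beta)$; I should also note that $\tilde\ell\circ\mathcal{G}_{\theta,\mathcal{B}}$ takes values in $[0,L]\subset[-L,L]$, so the boundedness hypothesis of Theorem \ref{weak_error_bound} is met.

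Next, since Lemma \ref{contrastive_ti} establishes that the model is $\kappa^{-1}$-weakly-informative with $\kappa=c_3\sqrt{1/\sigma_{\min}(\E[f_{\theta^*}(x)f_{\theta^*}(x)^T])}$, Theorem \ref{weak_error_bound} (whose proof invokes the Hellinger-distance MLE bound of Lemma \ref{mle_hellinger} at bracketing scale $1/m^2$) gives, with probability at least $1-\delta$,
\[
{\rm Error}_{\tilde\ell}(\hat\theta,\hat\beta)\le 2\max_{\theta\in\Theta}R_n(\tilde\ell\circ\mathcal{G}_{\theta,\mathcal{B}})+2L\sqrt{\tfrac{2\log(4/\delta)}{n}}+4\kappa L\sqrt{\tfrac{2}{m}\log\tfrac{2N_{\b}(\mP_{\mathcal{X}\times\mathcal{S}}(\mathcal{F}_{\theta}),1/m^2)}{\delta}}.
\]
Because $\tilde\ell$ is $2\sqrt{L}$-Lipschitz in its first argument, the Ledoux--Talagrand contraction principle (Theorem 4.12 of \cite{ledoux1991probability}, used in the same way in the proof of Theorem \ref{factor_main}) yields $R_n(\tilde\ell\circ\mathcal{G}_{\theta,\mathcal{B}})\le 2\sqrt{L}\,R_n(\mathcal{G}_{\theta,\mathcal{B}})$ for every $\theta$. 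Plugging this in, using $\log\frac{2N_{\b}}{\delta}\lesssim\log\frac{N_{\b}}{\delta}$, and collapsing the numerical constants into a single absolute $c$ produces the three advertised terms $c\kappa L\sqrt{m^{-1}\log(N_{\b}/\delta)}$, $cL\sqrt{n^{-1}\log(1/\delta)}$, and $c\sqrt{L}\sup_{\theta\in\Theta}R_n(\mathcal{G}_{\theta,\mathcal{B}})$.

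I expect little conceptual obstacle: every ingredient --- the optimal-predictor invariance, the weakly-informative excess-risk bound, and the contraction step --- is already available, and the care needed is mostly in (i) confirming that Lemma \ref{same_prediction}'s monotonicity computation transfers verbatim to this model and (ii) bookkeeping of the constants $2\sqrt2$, $4$, $8$ that appear along the way. The one slightly delicate point is making sure the chain ``Hellinger MLE guarantee $\Rightarrow$ weakly-informative conversion $\Rightarrow$ TV bound on $\P_{\hat\theta,\psi}(x,y)$'' is invoked with the side information $s=(x',t)$ correctly identified, which is exactly what Lemma \ref{contrastive_ti} supplies.
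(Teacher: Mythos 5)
Your proposal is correct and follows exactly the same route as the paper's own (one-line) proof: invoke Lemma \ref{same_prediction} to identify the left-hand side with ${\rm Error}_{\tilde\ell}(\hat\theta,\hat\beta)$, apply Theorem \ref{weak_error_bound} (enabled by Lemma \ref{contrastive_ti} for the $\kappa^{-1}$-weakly-informative condition and the Hellinger MLE bound at bracketing scale $1/m^2$), and then use the $2\sqrt{L}$-Lipschitz contraction to peel $\tilde\ell$ off the Rademacher term. The details you supply (monotonicity of the truncated conditional risk, boundedness, and constant bookkeeping) are all sound.
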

\begin{proof}[Proof of Lemma \ref{contrastive_lemma2}]
With Lemma \ref{same_prediction} and Lemma \ref{contrastive_ti} in hand, Lemma \ref{contrastive_lemma2} follows directly from Theorem \ref{weak_error_bound} and the fact that $\tilde\ell$ is $2\sqrt{L}$-Lipschitz.

\end{proof}

With Lemma \ref{contrastive_lemma1} and Lemma \ref{contrastive_lemma2} in hand, we are now ready to prove Theorem \ref{contrastive_main}.

\begin{proof}[Proof of Theorem \ref{contrastive_main}]
Note that
\%
{\rm Error}_{\ell}(\hat\theta,\hat\beta)&=\E_{\theta^* ,\beta^* }\big[\ell\big(g_{\hat\theta,\hat\beta}(x),y\big)\big]-\E_{\theta^* ,\beta^* }\big[\ell\big(g_{\theta^* ,\beta^* }(x),y\big)\big]\notag\\
&=\E_{\theta^* ,\beta^* }\big[\ell\big(g_{\hat\theta,\hat\beta}(x),y\big)\big]-\E_{\theta^* ,\beta^* }\big[\tilde{\ell}\big(g_{\hat\theta,\hat\beta}(x),y\big)\big]\notag\\
&\quad+\E_{\theta^* ,\beta^* }\big[\tilde\ell\big(g_{\hat\theta,\hat\beta}(x),y\big)\big]-\E_{\theta^* ,\beta^* }\big[\tilde\ell\big(g_{\theta^* ,\beta^* }(x),y\big)\big]\notag\\
&\quad+\E_{\theta^* ,\beta^* }\big[\tilde\ell\big(g_{\theta^* ,\beta^* }(x),y\big)\big]-\E_{\theta^* ,\beta^* }\big[\ell\big(g_{\theta^* ,\beta^* }(x),y\big)\big]\notag\\
&\leq \sup_{\theta,\beta}\big\{\E_{\theta^* ,\beta^* }\big[\ell\big(g_{\theta,\beta}(x),y\big)\big]-\E_{\theta^* ,\beta^* }\big[\tilde{\ell}\big(g_{\theta,\beta}(x),y\big)\big]\big\}\notag\\
&\quad +\E_{\theta^* ,\beta^* }\big[\tilde\ell\big(g_{\hat\theta,\hat\beta}(x),y\big)\big]-\E_{\theta^* ,\beta^* }\big[\tilde\ell\big(g_{\theta^* ,\beta^* }(x),y\big)\big].
\%
Let the truncation level be $L=36(D^2+1)\log n$. By Lemma \ref{contrastive_lemma1} and Lemma \ref{contrastive_lemma2}, we have
\%\label{101510}
&{\rm Error}(\hat\theta,\hat\beta)\notag\\
&\leq c\kappa L\cdot\sqrt{\frac{1}{m}\log\frac{N_{\b}\big(\mP_{\mathcal{X}\times\mathcal{S}}(\mathcal{F}_{\theta}),1/m^2\big)}{\delta}}+cL\sqrt{\frac{\log 1/\delta}{n}}+c\sqrt{L}\sup_{\theta\in\Theta}R_n(\mathcal{G}_{\theta,\mathcal{B}})\notag\\
&\quad+\sqrt{\frac{18(D^2+1)\log n}{\pi n}}.
\%
For the Rademacher complexity, we have
\%\label{101511}
R_n(\mathcal{G}_{\theta,\mathcal{B}})&=\E\bigg[\sup_{\beta\in\mathcal{B}}\frac{2}{n}\sum^n_{i=1}\sigma_i g_{\theta,\beta}(x_i)\bigg]\notag\\
&=\E\bigg[\sup_{\beta\in\mathcal{B}}\frac{2}{n}\sum^n_{i=1}\sigma_i \beta^T f_{\theta}(x_i)\bigg]\notag\\
&\leq \frac{2D}{\sqrt{n}},
\%
where the last inequality follows from Lemma \ref{factor_rc}. Combining \eqref{101510} and \eqref{101511}, we have
\%
&{\rm Error}(\hat\theta,\hat\beta)\notag\\
&\leq c\kappa L\cdot\sqrt{\frac{1}{m}\log\frac{N_{\b}\big(\mP_{\mathcal{X}\times\mathcal{S}}(\mathcal{F}_{\theta}),1/m^2\big)}{\delta}}+cL\sqrt{\frac{\log 1/\delta}{n}}+2cD\sqrt{\frac{L}{n}}\notag\\
&\quad+\sqrt{\frac{18(D^2+1)\log n}{\pi n}}\notag\\
&=\tilde{\mathcal{O}}\bigg(\kappa L\sqrt{\frac{\log N_{\b}\big(\mP_{\mathcal{X}\times\mathcal{S}}(\mathcal{F}_{\theta}),1/m^2\big)}{m}}+L\sqrt{\frac{1}{n}}\bigg),
\%
where $L=36(D^2+1)\log n$ and 
\$
\kappa=c_3\sqrt{\frac{1}{\sigma_{\min}\big(\E[f_{\theta^* }(x)f_{\theta^* }(x)^{T}]\big)}}
\$
for some absolute constants $c_3$.
\end{proof}

\section{Failure of Two-Phase MLE}\label{counter_example}

For simplicity, in the sequel, we consider the case where no side information is available, i.e., we have access to unlabeled data $\{x_i\}^m_{i=1}$ and labeled data $\{x_j,y_j\}^n_{j=1}$.  Another natural scheme is to use a two-phase MLE (Algorithm \ref{mle+mle}). To be specific, in the first phase, we use MLE to estimate $\phi^* $ based on the unlabeled data $\{x_i\}^m_{i=1}$. In the second phase, we use MLE again to estimate $\psi^* $ based on pretrained $\hat\phi$ and the labeled data $\{x_j,y_j\}^{n}_{j=1}$. 


\begin{algorithm}[H]
\caption{Two-phase MLE}\label{mle+mle}
\begin{algorithmic}[1]
\STATE {\bf Input:} $\{x_i\}^m_{i=1}$, $\{(x_j,y_j)\}^n_{j=1}$
\STATE Use unlabeled data $\{x_i\}^m_{i=1}$ to learn $\hat\phi$ via MLE:
\$
\hat\phi\leftarrow\argmax_{\phi\in\Phi}\sum^m_{i=1}\log p_{\phi}(x_i).
\$
\STATE Fix $\hat\phi$ and use labeled data $\{(x_j,y_j)\}^n_{j=1}$ to learn $\hat\psi$ via MLE:
\$
\hat\psi\leftarrow\argmax_{\psi\in\Psi}\sum^n_{j=1}\log p_{\hat\phi,\psi}(x_j,y_j).
\$
\STATE {\bf Output:} $\hat\phi$ and $\hat\psi$.
\end{algorithmic}
\end{algorithm}

Note that the two-phase MLE does not directly associate the learning process with the loss function. Thus, the only way to evaluate the excess risk is to study the total variation distance between $\P_{\hat\phi, \hat\psi}(x,y)$ and $\P_{\phi^* ,\psi^* }(x,y)$. In the pretraining phase, MLE guarantees that the estimator $\mathbb{P}_{\hat{\phi}}$ is close to $\mathbb{P}_{\phi^{*}}$ in the sense of total variation distance (Theorem \ref{tv_mle}). However, it's still possible that for some $x$, $\mathbb{P}_{\hat{\phi}}(x)=0$ while $\mathbb{P}_{\phi^{*}}(x) \neq 0$. This phenomenon may result in $\log p_{\hat\phi,\psi^* }(x_j,y_j)=-\infty$ for some labeled data in the learning of downstream tasks, which will dramatically influence the behaviour of MLE for estimating $\psi^{*}$ and finally lead to the failure of the second phase. Inspired by this idea, we give the following theorem.

\begin{theorem}\label{counter}
There exists $\Phi, \Psi, \phi^{*}\in{\Phi}, \psi^{*}\in{\Psi}$, such that for any constant $c>0$, there exists $m,n \geq c$ such that with probability at least $\frac{1}{2}(1-e^{-1})e^{-1}$, we have
\$
\TV\big(\P_{\hat\phi,\hat\psi}(x,y),\P_{\phi^* ,\psi^* }(x,y)\big)\geq \frac{1}{8},
\$
where $\hat\phi$ and $\hat\psi$ are the outputs of Algorithm \ref{mle+mle}.
\end{theorem}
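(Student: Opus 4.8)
The plan is to build an explicit pair of classes in which the second (MLE) phase is destabilized by the ``missing support'' phenomenon. Take $\mathcal{X}=\mathcal{Z}=[0,1]$, $\mathcal{Y}=\{0,1\}$, no side information, and let $\Phi=\{\phi_t:t\in(0,1]\}$, where under $\phi_t$ one has $x\sim\mathrm{Unif}[0,t]$ and $z=x$; put $\phi^* :=\phi_1$. For the downstream class take $\Psi=\{\psi^* ,\psi'\}$ with $\P_{\psi^* }(y=1\mid z)=\mathds{1}[z>\tfrac12]$ and $\P_{\psi'}(y=1\mid z)=\mathds{1}[z\le\tfrac12]$. Two structural observations drive everything. (i) Given unlabeled $x_1,\dots,x_m\sim\mathrm{Unif}[0,1]$, the likelihood of $\phi_t$ is $t^{-m}\mathds{1}[t\ge\max_i x_i]$, which is strictly decreasing on $[\max_i x_i,1]$; hence the first-phase MLE is $\hat\phi=\phi_{\hat t}$ with $\hat t=\max_i x_i<1$ almost surely, so $p_{\hat\phi}(x)=0$ for all $x\in(\hat t,1]$. (ii) For any $\hat t\in(0,1)$ the ``crossed'' law $\P_{\phi_{\hat t},\psi'}(x,y)$ and the truth $\P_{\phi^* ,\psi^* }(x,y)$ are mutually singular (the former puts its $y=1$ mass on $\{x\le\tfrac12\}$, the latter on $\{x>\tfrac12\}$, and symmetrically for $y=0$), so $\TV(\P_{\phi_{\hat t},\psi'},\P_{\phi^* ,\psi^* })=1$.

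I would then isolate the bad event. Set $\epsilon:=1-e^{-1/m}$ and let $A:=\{\hat t\le 1-\epsilon\}$, which depends only on the unlabeled sample and has $\P(A)=(1-\epsilon)^m=e^{-1}$, and $B:=\{\exists\,j\le n:\ x_j>1-\epsilon\}$, which depends only on the labeled sample and has $\P(B)=1-(1-\epsilon)^n\ge 1-e^{-1}$ once $n\ge m$; $A$ and $B$ are independent since the two samples are. On $A\cap B$ some labeled point satisfies $x_j>1-\epsilon\ge\hat t$, so $p_{\hat\phi}(x_j)=0$ and therefore $p_{\hat\phi,\psi}(x_j,y_j)=0$ for \emph{both} $\psi\in\Psi$; hence the second-phase objective $\sum_{j'}\log p_{\hat\phi,\psi}(x_{j'},y_{j'})$ equals $-\infty$ for every $\psi\in\Psi$, i.e.\ the line-3 $\argmax$ of Algorithm \ref{mle+mle} is a total tie. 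Breaking ties uniformly at random (this is the source of the $\tfrac12$ in the stated bound), $\hat\psi=\psi'$ with conditional probability $\tfrac12$, and then $\TV(\P_{\hat\phi,\hat\psi},\P_{\phi^* ,\psi^* })=1\ge\tfrac18$ by (ii). Multiplying the three independent contributions gives $\P(\TV\ge\tfrac18)\ge\tfrac12\,e^{-1}(1-e^{-1})$; given any $c>0$, take $m=n=\lceil c\rceil$. (One may instead skip $\epsilon$ and use $\P(\max_{j\le n}x_j>\max_{i\le m}x_i)=n/(m+n)$, which for $m=n$ already yields failure probability $\tfrac14>\tfrac12(1-e^{-1})e^{-1}$.)

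The one delicate point is arranging that the second-phase MLE becomes genuinely ill-posed (objective $\equiv-\infty$) rather than merely biased: this needs $\Phi$ chosen so the first-phase MLE is \emph{forced} to clip part of the support of $\phi^* $---here the monotonicity of $t\mapsto t^{-m}$ does this and pins $\hat t=\max_i x_i$ exactly---together with $m$ comparable to $n$ so that the labeled sample lands in the clipped interval $(\hat t,1]$ with constant probability. This also explains why the result is consistent with the $m\gg n$ regime studied elsewhere: it only asserts the \emph{absence} of a worst-case guarantee for two-phase MLE. For bookkeeping one checks that on $B^c$ every labeled $x_j$ lies in $[0,\hat t]$, the objective is finite ($=-n\log\hat t$) at $\psi^* $ and $-\infty$ at $\psi'$, so the second phase then correctly returns $\psi^* $; thus the failure event is exactly $A\cap B\cap\{\text{tie-break picks }\psi'\}$. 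An entirely discrete variant (a fixed $\phi^* $ on $\mathbb{Z}_{\ge 0}$ with geometric tail and $\Phi$ its truncations $\phi^{(K)}$) reproduces the same argument and may read more cleanly.
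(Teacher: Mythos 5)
Your construction implements the same strategy as the paper's: build $\Phi$ so that the first-phase MLE necessarily shrinks the support of $x$ relative to $\P_{\phi^*}$, and then observe that with constant probability a labeled pair lands in the clipped region and makes the second-phase likelihood degenerate. The paper does this with a discrete family on $\N_{+}$ in which the MLE zeroes out the smallest index absent from the unlabeled sample; you do it with $x\sim\mathrm{Unif}[0,t]$, where $\hat t=\max_i x_i$ is pinned down by monotonicity of $t^{-m}$ --- a cleaner first-phase argument that avoids the ``smallest missing index'' bookkeeping. Your secondary observation $\P(\max_{j\le n}x_j>\max_{i\le m}x_i)=n/(m+n)$ also gives a tidier quantitative bound than either $\epsilon$-based calculation. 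So this is the same route with a different, arguably more transparent, instantiation.

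One point deserves emphasis: the tie-break you invoke (when the second-phase objective is $-\infty$ for every $\psi\in\Psi$) is not a weakness peculiar to your version --- the paper's construction has exactly the same degeneracy. On its bad event $\mathcal{E}$ there is a labeled pair with $y_j=i$; since $x_j=z_j=y_j$ under $(\phi_1,\psi_1)$, this forces $x_j=i$, and then $p_{\phi_i}(x_j)=0$, so $p_{\hat\phi,\psi}(x_j,y_j)=0$ for \emph{both} $\psi_1$ and $\psi_2$. The paper's assertion ``$\hat\psi=\psi_2$'' therefore also rests on a tacit convention for resolving an all-$-\infty$ $\argmax$, and you are at least as rigorous by stating yours explicitly (the paper's $\tfrac{1}{2}$ comes from asymptotic slack in $\P(\mathcal{E})$, whereas yours comes from the tie-break; both account correctly for the stated constant). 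In both models the degeneracy is structurally forced because $z$ is a deterministic function of $x$: any labeled point that $\hat\phi$ cannot generate in $x$ is automatically inexplicable under every $\psi$. Removing the tie-break entirely would require breaking that determinism so that $\hat\phi$ can still support the observed $x_j$ while no latent value it permits can reproduce $y_j$ under $\psi^*$ yet some can under the alternative --- a construction neither you nor the paper attempts.
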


\begin{proof}[Proof of Theorem \ref{counter}]
We construct the counter example as follows. Let $(x,y,z)\in\N_{+}\times\N_{+}\times\N_{+}$. We assume that the true parameter $(\phi^* ,\psi^* )=(\phi_1,\psi_1)$, which satisfies
\$
&\P_{\phi_1}(x=k,z=k)=\frac{1}{2^{k}}~~\forall k\in\N_+,\quad \P_{\phi_1}(x=m,z=n)=0 ~~\forall m\neq n,\notag\\
&\P_{\psi_1}(y=k|z=k)=1,~\forall k\in\N_{+}.
\$
For $i\geq 2$, we define $\P_{\phi_i}$ as follows,
\$
&\P_{\phi_i}(x=1,z=1)=\frac{1}{2}+\frac{1}{2^i},\quad\P_{\phi_i}(x=k,z=k)=\frac{1}{2^k}~\forall k\notin\{1,i\}\notag\\
&\P_{\phi_i}(x=m,z=n)=0~\forall m\neq n~{\rm or}~ m=n=i.
\$
We define $\P_{\psi_2}$ as follows, for any $k\in\N_+$,
\$
&\P_{\psi_2}(y=1|z=k)=\frac{1}{4},\quad\P_{\psi_2}(y=2|z=k)=\frac{1}{2}\notag\\
&\P_{\psi_2}(y=j|z=k)=\frac{1}{2^j}~\forall j\notin\{1,2\}.
\$
We denote $
\Phi:=\{\phi_i\,|\, i\in\N_+\}$ and $\Psi:=\{\psi_1,\psi_2\}$. In the sequel, we show that Algorithm \ref{mle+mle} fails on this case. Recall that we denote by $\{x_i\}^m_{i=1}$ and $\{x_j,y_j\}^n_{j=1}$ the unlabeled data and labeled data, respectively. We have the following observations:
\begin{itemize}
    \item We define $i:=\min\{k\neq 1\,|\, k\notin\{x_i\}^m_{i=1}\}$. If we have $1\in\{x_i\}^m_{i=1}$, then the maximizer of likelihood function $\hat\phi$ satisfies $\hat\phi=\phi_i$.
    \item Suppose that $\hat\phi=\phi_i$ for some $i\neq 1$ and $i\in\{y_j\}^n_{j=1}$. We then have $\hat\psi=\psi_2$.
\end{itemize}

We define the event $\mathcal{E}:=\{\exists i\neq 1, \text{ such that } \hat\phi=\phi_i\text{ and }i\in\{y_j\}^n_{j=1}\}$.
Under event $\mathcal{E}$, we have $\hat\phi=\phi_i$ for some $i\neq 1$ and $\hat\psi=\psi_2$, which implies
\%
\TV\big(\P_{\hat\phi,\hat\psi}(x,y),\P_{\phi^* ,\psi^* }(x,y)\big)&=\frac{1}{2}\int\int |p_{\phi_i,\psi_2}(x,y)-p_{\phi_1,\psi_1}(x,y)|\,dxdy\notag\\
&\geq \frac{1}{2}\int\bigg|\int p_{\phi_i,\psi_2}(x,y)-p_{\phi_1,\psi_1}(x,y)\,dx\bigg|\,dy\notag\\
&=\frac{1}{2}\int |p_{\phi_i,\psi_2}(y)-p_{\phi_1,\psi_1}(y)|\,dy\notag\\
&\geq \frac{1}{2}|\P_{\phi_i,\psi_2}(y=2)-\P_{\phi_1,\psi_1}(y=2)|=\frac{1}{8}
\%
In the following, we only need to lower bound the probability of event $\mathcal{E}$. Note that
\%
\P(\mathcal{E})&=\P\big(\cup^{\infty}_{i=2}\big\{\hat\phi=\phi, i\in\{y_j\}^n_{j=1}\big\}\big)\notag\\
&=\sum^{\infty}_{i=2}\P\big(\hat\phi=\phi_i, i\in\{y_j\}^n_{j=1}\big)\notag\\
&=\sum^{\infty}_{i=2}\P(\hat\phi=\phi_i)\cdot\P\big(i\in\{y_j\}^n_{j=1}\big)\notag\\
&=\sum^{\infty}_{i=2}\bigg(1-\bigg(1-\frac{1}{2^i}\bigg)^n\bigg)\cdot\P(\hat\phi=\phi_i).
\%
Thus, it holds for any $L\geq 2$ that
\%\label{092110}
\P(\mathcal{E})&\geq\sum^{L}_{i=2}\bigg(1-\bigg(1-\frac{1}{2^i}\bigg)^n\bigg)\cdot\P(\hat\phi=\phi_i)\notag\\
&\geq \bigg(1-\bigg(1-\frac{1}{2^L}\bigg)^n\bigg)\cdot\P\big(\exists 2\leq i\leq L, \hat\phi=\phi_i\big).
\%
Note that
\%\label{092111}
&\P\big(\exists 2\leq i\leq L, \hat\phi=\phi_i\big)\notag\\
&=\P\Big(\big\{1\in\{x_i\}^m_{i=1}\big\}\cap\big\{\exists 2\leq i\leq L, i\notin\{x_i\}^m_{i=1}\big\}\Big)\notag\\
&\geq\P\Big(\big\{1\in\{x_i\}^m_{i=1}\big\}\cap\big\{ L\notin\{x_i\}^m_{i=1}\big\}\Big)\notag\\
&\geq \P\big(1\in\{x_i\}^m_{i=1}\big)+\P\big(L\notin\{x_i\}^m_{i=1}\big)-1\notag\\
&=\P\big(L\notin\{x_i\}^m_{i=1}\big)-\P\big(1\notin\{x_i\}^m_{i=1}\big)\notag\\
&=\bigg(1-\frac{1}{2^L}\bigg)^m-\frac{1}{2^m}.
\%
Combining \eqref{092110} and \eqref{092111}, we have for any $L\geq 2$ 
\%
\P(\mathcal{E})\geq\bigg(1-\bigg(1-\frac{1}{2^L}\bigg)^n\bigg)\cdot\bigg(\bigg(1-\frac{1}{2^L}\bigg)^m-\frac{1}{2^m}\bigg).
\%
Setting $2^L=m=n$, we obtain that
\%
\P(\mathcal{E})\geq\bigg(1-\bigg(1-\frac{1}{m}\bigg)^m\bigg)\cdot\bigg(\bigg(1-\frac{1}{m}\bigg)^m-\frac{1}{2^m}\bigg)\rightarrow (1-e^{-1})\cdot e^{-1},\text{ as }m\rightarrow \infty.
\%
Thus, for any $c>0$, there exists $m,n\geq c$ such that 
\$
\P(\mathcal{E})\geq\frac{1}{2}(1-e^{-1})\cdot e^{-1}.
\$
\end{proof}

\end{document}